\newcommand{\PreserveBackslash}[1]{\let\temp=\\#1\let\\=\temp}
\newcolumntype{C}[1]{>{\PreserveBackslash\centering}p{#1}}
\newcolumntype{R}[1]{>{\PreserveBackslash\raggedleft}p{#1}}
\newcolumntype{L}[1]{>{\PreserveBackslash\raggedright}p{#1}}
\theoremstyle{plain}
\newtheorem{theorem}{Theorem}[section]
\newtheorem{lemma}[theorem]{Lemma}
\newtheorem{corollary}[theorem]{Corollary}
\newtheorem{proposition}[theorem]{Proposition}
\theoremstyle{remark}
\newtheorem{definition}[theorem]{Definition}
\newtheorem{example}{Example}
\newtheorem{remark}{Remark}[section]
\DeclareMathOperator*{\argmin}{arg\,min}
\newcommand{\norm}[1]{\left|\left| #1 \right|\right|}
\newcommand{\Rad}{\mathtt{Rad}}
\newcommand{\Haus}{\mathtt{Haus}}
\newcommand{\Exp}{\mathtt{Exp}}
\newcommand{\Diag}{\mathtt{Diag}}
\newcommand{\rank}{\mathtt{rank}}
\newcommand{\grad}{\mathtt{grad}\,}
\renewcommand{\hat}{\widehat}
\renewcommand{\tilde}{\widetilde}
\newenvironment{customthm}[1]
{\innercustomthm}
{\endinnercustomthm}
\newenvironment{customlem}[1]
{\innercustomlem}
{\endinnercustomlem}
\newenvironment{customprop}[1]
{\innercustomprop}
{\endinnercustomprop}
\begin{document}

\begin{frontmatter}
\title{Linear Convergence of the Subspace Constrained Mean Shift Algorithm: From Euclidean to Directional Data}
\runtitle{Linear Convergence of the Euclidean and Directional SCMS Algorithms}

\begin{aug}
\author[A]{\fnms{Yikun} \snm{Zhang}\ead[label=e1,mark]{yikun@uw.edu}}
\and
\author[A]{\fnms{Yen-Chi} \snm{Chen}\ead[label=e2,mark]{yenchic@uw.edu}}
\address[A]{Department of Statistics,
University of Washington\\
\printead{e1,e2}}
\end{aug}

\begin{abstract}
This paper studies the linear convergence of the subspace constrained mean shift (SCMS) algorithm, a well-known algorithm for identifying a density ridge defined by a kernel density estimator. By arguing that the SCMS algorithm is a special variant of a subspace constrained gradient ascent (SCGA) algorithm with an adaptive step size, we derive the linear convergence of such SCGA algorithm. While the existing research focuses mainly on density ridges in the Euclidean space, we generalize density ridges and the SCMS algorithm to directional data. In particular, we establish the stability theorem of density ridges with directional data and prove the linear convergence of our proposed directional SCMS algorithm.
\end{abstract}

\begin{keyword}[class=MSC2020]
\kwd[Primary ]{62G05}
\kwd[; secondary ]{49Q12, 62H11}
\end{keyword}

\begin{keyword}
	\kwd{Ridges}
	\kwd{subspace constrained mean shift}
	\kwd{directional data}
	\kwd{optimization on a manifold}
\end{keyword}

\end{frontmatter}
\tableofcontents

\section{Introduction}
\label{Sec:Intro}

Identifying meaningful lower dimensional structures from a point cloud has long been a popular research topic in Statistics and Machine Learning \citep{Manifold_Learning2012,TDA_review2018}. One reliable characterization of such a low-dimensional structure is the \emph{density ridge}, which can be feasibly estimated by a kernel density estimator (KDE) from point cloud data \citep{Eberly1996ridges,Non_ridge_est2014}. Loosely speaking, an estimated density ridge signifies a high-density curve or surface in a point cloud; see the left panel of Figure~\ref{fig:ridges_Eu_Dir}. 
Let $p$ be the underlying probability density function that generates the data in the Euclidean space $\mathbb{R}^D$.
Its order-$d$ density ridge $R_d$ with $0\leq d < D$ is the set of points defined as:
\begin{equation}
\label{ridge}
R_d = \left\{\bm{x}\in \mathbb{R}^D: V_d(\bm{x})^T \nabla p(\bm{x})=\bm{0}, \lambda_{d+1}(\bm{x}) <0 \right\},
\end{equation}
where $\lambda_1(\bm{x})\geq \cdots \geq \lambda_D(\bm{x})$ are the eigenvalues of Hessian $\nabla\nabla p(\bm{x})$ and $V_d(\bm{x}) \in \mathbb{R}^{D\times (D-d)}$ has its columns as the last $D-d$ orthonormal eigenvectors.
The notion of density ridges has appeared in various scientific fields, such as medical imaging \citep{Vessel_curves2011}, seismology \citep{Ridge_Direct_Ratio2017}, and astronomy \citep{Sousbie_2007,Cosmic_Web_Catal2016}. To locate an estimated density ridge defined by (Euclidean) KDE, \cite{Principal_curve_surf2011} proposed a practical method called \emph{subspace constrained mean shift (SCMS) algorithm}.

\begin{figure}[t]
	\captionsetup[subfigure]{justification=centering}
	\centering
	\begin{subfigure}[t]{.49\textwidth}
		\centering
		\includegraphics[width=1\linewidth]{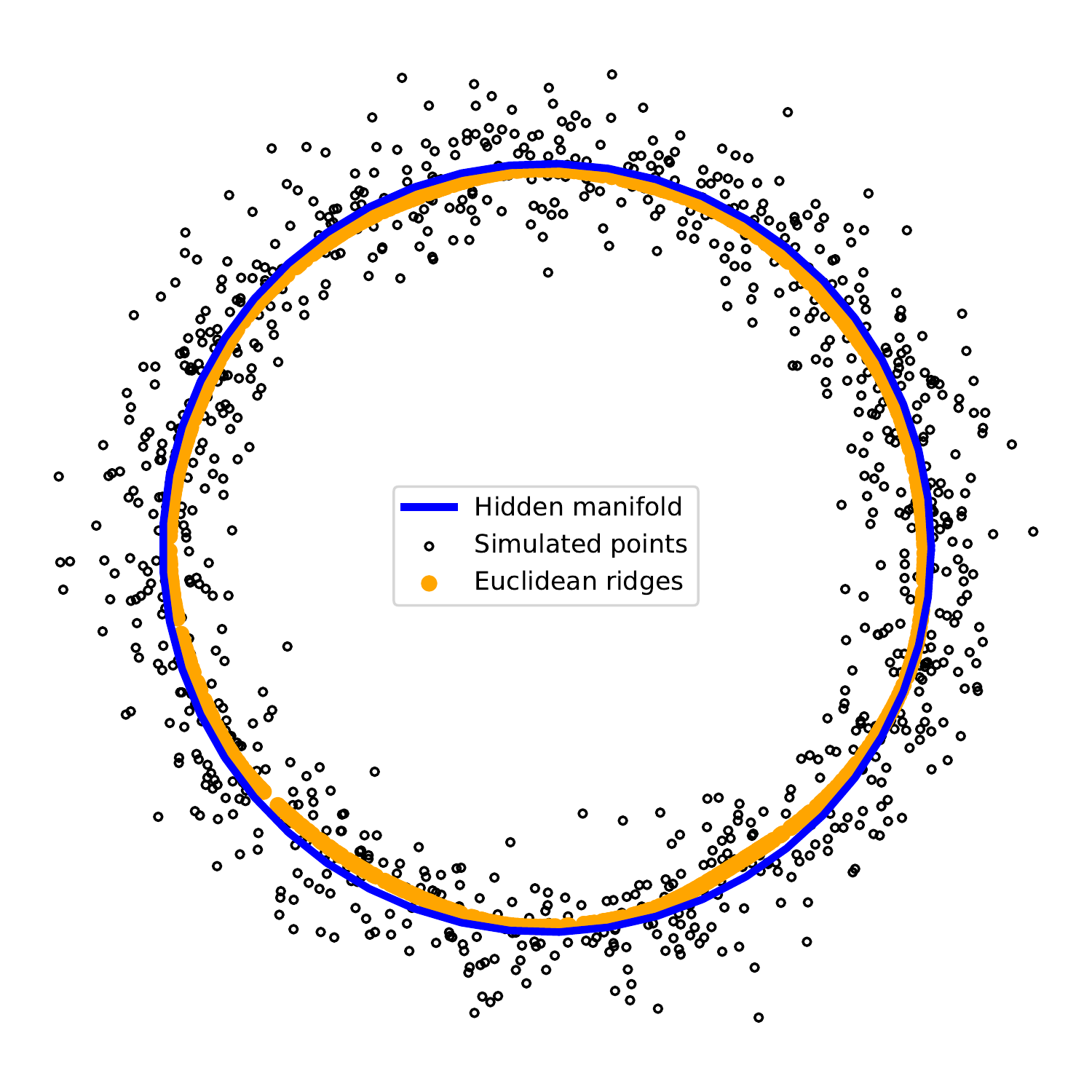}
	\end{subfigure}
	\hfil
	\begin{subfigure}[t]{.49\textwidth}
		\centering
		\includegraphics[width=1\linewidth,height=6.5cm]{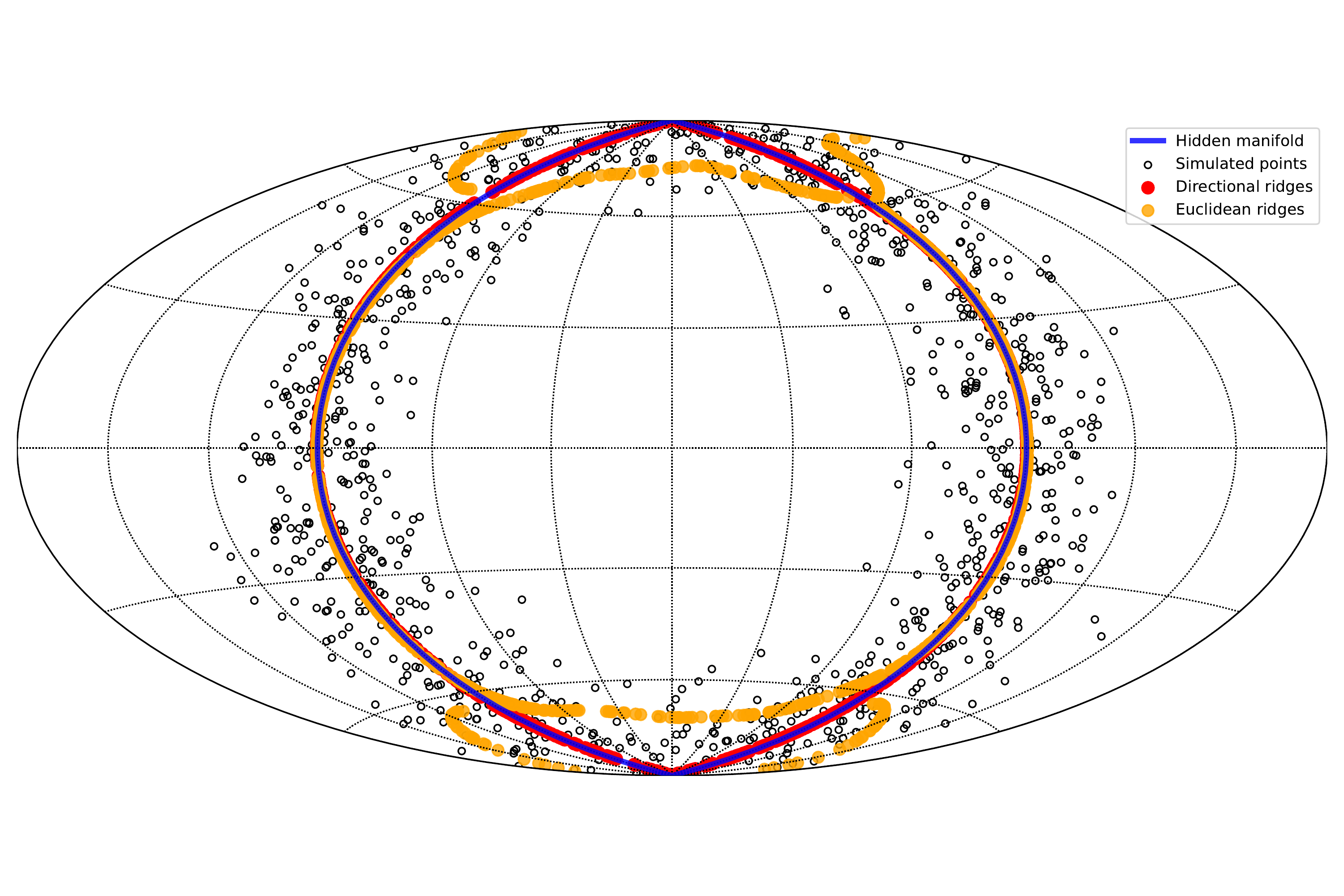}
	\end{subfigure}
	\caption{Density ridges estimated by Euclidean and directional SCMS algorithms on two synthetic datasets (drawn as black points) with hidden circular manifold structures (indicated by blue curves) on $\mathbb{R}^2$ and the unit sphere $\Omega_2 \subset \mathbb{R}^3$, respectively. {\bf Left}: The orange points indicate the estimated ridge obtained by the Euclidean SCMS algorithm from the dataset on $\mathbb{R}^2$. {\bf Right}: The red points represent the estimated directional ridge identified by our directional SCMS algorithm, while the orange points indicate the estimated ridge obtained by the Euclidean SCMS algorithm from the dataset on $\Omega_2$. This panel is presented under the Hammer projection; see Appendix~\ref{Sec:Drawback_Euc} for more details.
	}
	\label{fig:ridges_Eu_Dir}
\end{figure}

While the statistical estimation and asymptotic theories of density ridges in $\mathbb{R}^D$ have been well-studied \citep{Non_ridge_est2014,Asymp_ridge2015,Qiao2016,Op_Ridge_Detect2015,qiao2021asymptotic}, the literature falls short of addressing the algorithmic properties of the ridge-finding method, \emph{i.e.}, the SCMS algorithm. 
To the extent of our knowledge, \cite{SCMS_conv2013,SCMS_conv_mod2020} were the only available works to investigate the SCMS algorithm and its modified version from an algorithmic perspective. However, they only proved a non-decreasing property of density estimates and the validity of two stopping criteria for the SCMS algorithm. 
The algorithmic convergence of the SCMS algorithm remains an open question.
There are two challenges to answering this question. First, because every iteration of the SCMS algorithm involves a projection matrix defined by the (estimated) Hessian, it is no longer a conventional first-order method in optimization. Second, estimating a density ridge in practice is a nonconvex/nonconcave optimization problem. Thus, the first objective of this paper is to provide a theoretical study on the algorithmic convergence and its associated (linear) rate of convergence for the SCMS algorithm. 

In stark contrast to abundant research papers about density ridges in the Euclidean space, little work has been done to examine the statistical properties and any practical algorithm of estimating density ridges on the unit hypersphere $\Omega_q= \{\bm{x}\in\mathbb{R}^{q+1}:\|\bm{x}\|_2=1\} \subset \mathbb{R}^{q+1}$. Nevertheless, data on $\Omega_q$ are ubiquitous in many scientific fields of study, such as seismology (\emph{e.g.}, longitudes and latitudes of the epicenters of earthquakes) and astronomy (\emph{e.g.}, right ascensions and declinations of astronomical objects). Such data are generally known as \emph{directional data} in the statistical literature \citep{Mardia2000directional,ley2017modern}. Hence, the second objective of this paper is to generalize density ridges and the SCMS algorithm to directional data. 

More importantly, identifying an estimated density ridge from directional data on $\Omega_2$ by the Euclidean SCMS algorithm always suffers from high bias near the two poles of $\Omega_2$. Consider a synthetic dataset with independently and identically distributed (i.i.d.) observations $\left\{\bm{X}_1,...,\bm{X}_{1000}\right\}$ from a great circle connecting the North and South Poles of $\Omega_2$ with additive noises. We apply both the Euclidean and directional SCMS algorithms to this simulated dataset. While the estimated ridges by the Euclidean SCMS algorithm fail to recover the desired great circle in high latitude regions, the ridges identified by our proposed directional SCMS algorithm align well with the underlying circular structure; see the right panel in Figure~\ref{fig:ridges_Eu_Dir} for a preview and Appendix~\ref{Sec:Drawback_Euc} for a more detailed discussion.

{\em Main Results}. The main contributions of this paper are summarized as follows: 

\noindent $\bullet$ We present the convergence analysis of the SCMS and the general SCGA algorithms and prove their linear convergence properties with Euclidean data (Theorem~\ref{SCGA_LC}, Corollary~\ref{SCMS_LC}, and related discussion in Section~\ref{Sec:LC_SCGA}):
$$
\norm{\hat{\bm{x}}^{(t)} - \hat{\bm{x}}^*}_2 \leq \Upsilon^t \norm{\hat{\bm{x}}^{(0)} - \hat{\bm{x}}^*}_2,
$$
where $\big\{ \hat{\bm{x}}^{(t)} \big\}_{t=0}^{\infty}$ is a sequence of points generated by the SCGA or SCMS algorithm in $\mathbb{R}^D$, $\hat{\bm{x}}^*$ is the limit point of the sequence, and $0<\Upsilon<1$ is a constant. 

\noindent $\bullet$ We generalize density ridges and the SCMS algorithm to directional data on $\Omega_q$ (Section~\ref{Sec:Dir_ridge_SCMS}). 

\noindent $\bullet$ We prove the statistical convergence rate of a ridge estimator on the sphere $\Omega_q$ defined by the directional KDE (Theorem~\ref{Dir_ridge_stability}):
$$\Haus(\underline{R}_d, \underline{\hat{R}}_d) = O\left(h^2\right) +O_P\left(\sqrt{\frac{|\log h|}{nh^{q+4}}}\right),$$
where $\underline{R}_d$ and $\underline{\hat{R}}_d$ are the population and estimated directional density ridges, respectively, $\Haus$ is the Hausdorff distance, and $q$ is the dimension of $\Omega_q$.

\noindent $\bullet$ We establish the convergence of the SCMS and the general SCGA algorithms with directional data and derive their linear convergence results (Theorem~\ref{LC_SCGA_Dir}, Corollary~\ref{LC_Dir_SCMS}, and related expositions in Section~\ref{Sec:Dir_SCGA_LC}):
$$d_g(\hat{\underline{\bm{x}}}^{(t)}, \hat{\underline{\bm{x}}}^*) \leq \underline{\Upsilon}^t \cdot d_g(\hat{\underline{\bm{x}}}^{(0)}, \hat{\underline{\bm{x}}}^*),$$ 
where $\big\{\hat{\underline{\bm{x}}}^{(t)} \big\}_{t=0}^{\infty}$ is the sequence of points generated by the directional SCGA or SCMS algorithm, $\hat{\underline{\bm{x}}}^*$ is the convergence point, $0<\underline{\Upsilon}<1$ is a constant, and $d_g$ is the geodesic distance on $\Omega_q$.


{\em Other Related Literature}. The problem of density ridge estimation has its unique standing in both the computer science and statistics literature; see \cite{hall1992,Eberly1996ridges,Damon1999,hall2001} and references therein. Among various definition of density ridges \citep{norgard2012second,peikert2013comment}, our definition follows from \cite{Eberly1996ridges,Non_ridge_est2014,Asymp_ridge2015}, because its statistical estimation theory has been well-established and it is feasible to be directly generalized to directional densities.
Practically, the SCMS algorithm for identifying an estimated density ridge first appeared in the field of computer vision \citep{saragih2009face} before its introduction to the statistical community by \cite{Principal_curve_surf2011}.  
More recently, \cite{qiao2021algorithms} proposed alternative methods to the SCMS algorithm for finding density ridges, which are based on a gradient descent of the ridgeness and have connections to solution manifolds \citep{YC2020}. They presented the convergence analysis on continuous versions of their proposed methods and discretized them via Euler's method.
Our directional SCMS algorithm is extended from the directional mean shift algorithm \citep{Multi_Clu_Gene2005,DMS_topology2010,kobayashi2010mises,MSC_Dir2014,DirMS2020,DMS_EM2021}. As we cast the (directional) SCMS algorithms into subspace constrained gradient ascent (SCGA) algorithms (on a hypersphere), it is worth mentioning that one should not confuse the SCGA algorithm here with the projected gradient ascent/descent method for a constrained problem in the standard optimization theory; see Section 3.2 in \cite{SB2015} for some references of the latter one. The SCGA algorithm discussed in this paper is a gradient ascent algorithm but with a subspace constrained gradient. When the subspace coincides with alternating one-dimensional coordinate spaces, the SCGA algorithm reduces to the well-known coordinate ascent/descent method \citep{wright2015coordinate}. Some linear convergence results of the coordinate descent algorithms were previously established by \cite{luo1992convergence,beck2013convergence}. Other related work includes \cite{kozak2019stochastic,kozak2020stochastic}, though, in their problem setups, the projection matrix onto the subspace is random and has its expectation equal to the identity matrix. Our interested SCGA algorithm always has a deterministic constrained subspace defined by the eigenspace associated with the last several eigenvalues of the Hessian of the density $p$.


{\em Outlines and Notations}. Section~\ref{Sec:Prelim} introduces the definitions of Euclidean and directional KDEs and reviews some preliminary concepts of differential geometry on $\Omega_q$. We discuss the assumptions on the Euclidean density ridges and establish the (linear) convergence results of the SCGA and SCMS algorithms in Section~\ref{Sec:Euc_Ridges_SCMS}. In Section~\ref{Sec:Dir_ridge_SCMS}, we generalize the definition of density ridges to the directional data scenario and prove the (linear) convergence properties of the SCGA and SCMS algorithms on $\Omega_q$. Some simulation studies and real-world applications of Euclidean and directional SCMS algorithms are presented in Section~\ref{App:Experiments}, whose code is available at \url{https://github.com/zhangyk8/EuDirSCMS}. We conclude the paper and discuss some potential impacts in Section~\ref{sec::discussion}. 

Throughout the paper we use $d$ as the intrinsic dimension of density ridges, whose ambient spaces are $\mathbb{R}^D$ in the Euclidean data case and $\Omega_q =\left\{\bm{x}\in \mathbb{R}^{q+1}: \norm{\bm{x}}_2=1 \right\}$ in the directional data case. Notice that \emph{a quantity under the directional data setting that has its counterpart in the Euclidean data case will be denoted by the same notation with an extra underline}. For instance, $R_d$ is a ridge of the density $p$ in the Euclidean space $\mathbb{R}^D$ while $\underline{R}_d$ refers to a ridge of the directional density $f$ on the sphere $\Omega_q$.

Let $f:\mathbb{R}^D \to \mathbb{R}$ be a smooth function and $[\alpha]=(\alpha_1,...,\alpha_D)$ be a multi-index (that is, $\alpha_1,...,\alpha_D$ are nonnegative integers and $|[\alpha]|=\sum_{i=1}^D \alpha_i$). Define $D^{[\alpha]} = \frac{\partial^{\alpha_1}}{\partial x_1^{\alpha_1}} \cdots \frac{\partial^{\alpha_D}}{\partial x_D^{\alpha_D}}$ as the $|[\alpha]|$-th order partial derivative operator, where $D^{[\alpha]} f$ is often written as $f^{(\alpha)}$. For $j=0,1,...$, we define the functional norms 
$$\norm{f}_{\infty}^{(j)} = \max_{\alpha:|[\alpha]|=j} \sup_{\bm{x}\in \mathbb{R}^D} |f^{(\alpha)}(\bm{x})|.$$
When $j=0$, this becomes the infinity norm of $f$; for $j>0$, the above norms are indeed some semi-norms. We also define $\norm{f}_{\infty,k}^*=\max_{j=0,...,k} \norm{f}_{\infty}^{(j)}$.

The (total) gradient and Hessian of $f$ are defined as $\nabla f(\bm{x}) = \Big(\frac{\partial f(\bm{x})}{\partial x_1},..., \frac{\partial f(\bm{x})}{\partial x_D} \Big)^T$ and $\nabla\nabla f(\bm{x}) = \Big(\frac{\partial^2 f(\bm{x})}{\partial x_i \partial x_j} \Big)_{1\leq i,j \leq D}$. Inductively, the third derivative of $f(\bm{x})$ is a $D\times D\times D$ array given by $\nabla^3 f(\bm{x})= \Big(\frac{\partial^3}{\partial x_i\partial x_j \partial x_k}f(\bm{x}) \Big)_{1\leq i,j,k \leq D}$. When $f$ is a directional density supported on $\Omega_q$, the preceding functional norms are defined via the Riemannian gradient, Hessian, and high-order derivatives of $f(\bm{x})$ within the tangent space $T_{\bm{x}}$ at $\bm{x}\in \Omega_q$, and the supremum will be taken over $\Omega_q$ instead of $\mathbb{R}^D$. They are equivalent to the derivatives of $f$ with respect to the local coordinate chart on $\Omega_q$; see Section~\ref{Sec:DG_sphere} for a review.

Let $A_{jk}$ denote the $(j,k)$ entry of a matrix $A\in \mathbb{R}^{m\times n}$. Then, the Frobenius norm is $\norm{A}_F = \sqrt{\sum_{j,k} A_{jk}^2} = \mathtt{tr}(A^TA)$, where $\mathtt{tr}(A^TA)$ is the trace of the square matrix $A^TA$, and the operator norm is $\norm{A} = \sup_{||\bm{x}||=1} \norm{A\bm{x}}$. In most cases, we consider the $L_2$ (operator) norm $\norm{\cdot}_2$. We define $\norm{A}_{\max} = \max_{j,k} |A_{jk}|$. The inequality relationships between the above matrix norms are $\norm{A}_2\leq \norm{A}_F \leq \sqrt{n} \norm{A}_2$, $\norm{A}_{\max}\leq \norm{A}_2 \leq \sqrt{mn}\norm{A}_{\max}$, and $\norm{A}_F \leq \sqrt{mn}\norm{A}_{\max}$.

We use the big-O notation $h(\bm{x})=O(g(\bm{x}))$ if the absolute value of $h(\bm{x})$ is upper bounded by a positive constant multiple of $g(\bm{x})$ for all sufficiently large $\bm{x}$. In contrast, $h(\bm{x})=o(g(\bm{x}))$ when $\lim_{\norm{\bm{x}}_2\to\infty} \frac{|h(\bm{x})|}{g(\bm{x})}=0$. For random vectors, the notation $o_P(1)$ is short for a sequence of random vectors that converges to zero in probability. The expression $O_P(1)$ denotes the sequence that is bounded in probability; see Section 2.2 of \cite{VDV1998} for details.

\section{Preliminaries}
\label{Sec:Prelim}

In this section, we review the KDE with Euclidean and directional data as well as some differential geometry concepts on $\Omega_q$.

\subsection{Kernel Density Estimation with Euclidean Data}
\label{Sec:KDE}

Let $\{\bm{X}_1,...,\bm{X}_n \}$ be a random sample from a distribution $P$ with density $p$ supported on the Euclidean space $\mathbb{R}^D$. 
We call such random sample $\{\bm{X}_1,...,\bm{X}_n \}$ Euclidean data in the sequel. The (Euclidean) KDE at point $\bm{x}\in \mathbb{R}^D$ with a kernel function $K$ and bandwidth parameter $h\equiv h(n)$ is written as \citep{All_nonpara2006,Scott2015,KDE_t}:
\begin{equation}
\label{KDE_simp}
\hat{p}_n(\bm{x}) = \frac{1}{nh^D} \sum_{i=1}^n K\left(\frac{\bm{x}-\bm{X}_i}{h} \right).
\end{equation}
The kernel $K:\mathbb{R}^D \to \mathbb{R}$ is generally a unimodal function satisfying the following properties:
\begin{itemize}
	\item (K1) $\int_{\mathbb{R}^D} K(\bm{x}) d\bm{x}=1$.
	\item (K2) $K(\bm{x})$ is (radially) symmetric, \emph{i.e.}, $\int_{\mathbb{R}^D} \bm{x} \, K(\bm{x})\, d\bm{x} = 0$. 
	\item (K3) $\lim\limits_{\norm{\bm{x}}_2\to\infty} \norm{\bm{x}}_2^D K(\bm{x}) =0$ and $\int_{\mathbb{R}^D} \norm{\bm{x}}_2^2 K(\bm{x}) d\bm{x} <\infty$, where $\norm{\cdot}_2$ is the usual $L_2$ norm in $\mathbb{R}^D$.
\end{itemize}

One possible approach to construct a multivariate kernel $K(\bm{x})$ with the above properties is to derive it from a kernel profile as follows:
\begin{equation}
\label{Radial_kernel}
K(\bm{x}) = c_{k,D} \cdot k\left(\norm{\bm{x}}_2^2 \right),
\end{equation}
where $c_{k,D}$ is the normalizing constant such that $K$ satisfies (K1) and the function $k:[0,\infty) \to [0,\infty)$ is called the \emph{profile} of the kernel. This kernel form is generally used in deriving (subspace constrained) mean shift algorithms; see Section~\ref{Sec:Eu_MS_SCMS}. An important example of the profile function is $k_N(x)=\exp\left(-\frac{x}{2} \right)$ for $x\geq 0$, leading to the multivariate Gaussian kernel $K_N(\bm{x}) = \frac{1}{(2\pi)^{\frac{D}{2}}} \exp\left(-\frac{\norm{\bm{x}}_2^2}{2} \right)$.

Another approach of designing a multivariate kernel function is to leverage the product kernel technique as $K(\bm{x}) = K_1(x_1) \cdots K_D(x_D)$,
where $K_1,...,K_D$ are kernels function defined on $\mathbb{R}$ satisfying the properties (K1-3). This leads to a multivariate KDE as:
\begin{equation}
\label{prod_kernel_KDE}
\hat{p}_n(\bm{x}) = \frac{1}{nh^D} \sum_{i=1}^n K_1\left(\frac{x_1-X_{i,1}}{h} \right) \cdots K_D\left(\frac{x_D-X_{i,D}}{h} \right).
\end{equation}
In fact, the multivariate Gaussian kernel $K_N$ can be obtained by defining its kernel profile as $k_N(x)=\exp\left(-\frac{x}{2} \right)$ for $x\geq 0$ or taking $K_1(x)=\cdots =K_D(x)=\frac{1}{\sqrt{2\pi}} \exp\left(-\frac{x^2}{2} \right)$. In practice, the multivariate KDE \eqref{KDE_simp} with Gaussian kernel is the most popular nonparametric density estimator with Euclidean data. 

The most crucial part in applying the KDE is to select the bandwidth parameter $h$. Common methods in the literature aim at minimizing the mean integrated square error (MISE):
	$$\text{MISE} = \mathbb{E} \int_{\mathbb{R}^D} \left[\hat{p}_n(\bm{x}) -p(\bm{x}) \right]^2 d\bm{x}$$
	or its asymptotic part through the rule of thumb \citep{Silverman1986}, cross validation \citep{rudemo1982empirical,bowman1984alternative,hall1983large,stone1984asymptotically}, and plug-in methods \citep{sheather1991reliable}.
	As choosing the bandwidth is not the main focus of this paper, we refer the interested reader to \cite{survey_bw1996,Sheather2004} and Chapter 6.5 of \cite{Scott2015} for comprehensive reviews.

\subsection{Kernel Density Estimation with Directional Data}
\label{Sec:Dir_KDE}

The Euclidean KDE \eqref{KDE_simp} exhibits some salient drawbacks in dealing with directional data samples; see Appendix~\ref{Sec:Drawback_Euc} for a detailed exposition. Fortunately, the theory of kernel density estimation with directional data has been well-studied since late 1970s \citep{Beran1979,KDE_Sphe1987,KDE_direct1988,Zhao2001,Exact_Risk_bw2013,pewsey2021recent}. Let $\bm{X}_1,...,\bm{X}_n \in \Omega_q\subset\mathbb{R}^{q+1}$ be a random sample generated from an underlying directional density function $f$ on $\Omega_q$ with $\int_{\Omega_q} f(\bm{x}) \, \omega_q(d\bm{x})=1,$
where $\omega_q$ is the Lebesgue measure on $\Omega_q$.  
The directional KDE is given by:
\begin{equation}
\label{Dir_KDE}
\hat{f}_h(\bm{x}) = \frac{c_{L,q}(h)}{n} \sum_{i=1}^n L\left(\frac{1-\bm{x}^T \bm{X}_i}{h^2} \right) = \frac{c_{L,q}(h)}{n} \sum_{i=1}^n L\left(\frac{1}{2}\norm{\frac{\bm{x}-\bm{X}_i}{h}}_2^2 \right),
\end{equation}
where $L$ is a directional kernel (\emph{i.e.}, a rapidly decaying function with nonnegative values and defined on $(-\delta_L,\infty) \subset \mathbb{R}$ for some constant $\delta_L >0$), 
$h>0$ is the bandwidth parameter, and $c_{L,q}(h)$ is a normalizing constant satisfying $c_{L,q}(h)^{-1} = \int_{\Omega_q} L\left(\frac{1-\bm{x}^T \bm{y}}{h^2} \right) \omega_q(d\bm{y}) =O(h^q)$.

\begin{remark}
	The distance metric used by the directional KDE \eqref{Dir_KDE} on $\Omega_q$ is identical to the standard Euclidean metric in the ambient space $\mathbb{R}^{q+1}$. This is because the standard Euclidean metric $\norm{\cdot}_2$ of $\mathbb{R}^{q+1}$ is topologically equivalent (but not strongly equivalent) to the geodesic distance $d_g(\cdot,\cdot)$ on $\Omega_q$ due to the following equality:
	\begin{equation}
		\label{Geo_Eu_dist_eq}
		\norm{\bm{x}-\bm{y}}_2 =2\sin\left(\frac{d_g(\bm{x},\bm{y})}{2} \right).
	\end{equation}
	See Section C.1.5 in \cite{ok2007real} for the definition of equivalence of metrics. Hence, the distance metric in \eqref{Dir_KDE} is indeed intrinsic on $\Omega_q$ and adaptive to its geometry.
\end{remark}

As in the applications of Euclidean KDEs, the bandwidth selection is a critical part in determining the performances of directional KDEs \citep{KDE_Sphe1987,KDE_direct1988,Auto_bw_cir2008,KDE_torus2011,Oliveira2012,Exact_Risk_bw2013,Nonp_Dir_HDR2020}. On the contrary, the choice of the kernel is less crucial; see, \emph{e.g.}, Page 72 of \cite{All_nonpara2006} and Section 6.3.2 in \cite{Scott2015} for the reasoning. A popular candidate is the so-called von Mises kernel $L(r) = e^{-r}$, which serves as a counterpart of the Gaussian kernel for directional KDEs. Its name originates from the famous $q$-von Mises-Fisher distribution on $\Omega_q$, which is denoted by $\text{vMF}(\bm{\mu}, \nu)$ and has the density as:
\begin{equation}
\label{vMF_density}
f_{\text{vMF}}(\bm{x};\bm{\mu},\nu) = C_q(\nu) \cdot \exp(\nu \bm{\mu}^T \bm{x}) \quad \text{ with } \quad C_q(\nu) = \frac{\nu^{\frac{q-1}{2}}}{(2\pi)^{\frac{q+1}{2}} \mathcal{I}_{\frac{q-1}{2}}(\nu)},
\end{equation}
where $\bm{\mu} \in \Omega_q$ is the directional mean, $\nu \geq 0$ is the concentration parameter, and $\mathcal{I}_{\alpha}(\nu)$ is the modified Bessel function of the first kind at order $\nu$. For more details on statistical properties of the von Mises-Fisher distribution and directional KDE, we refer the interested reader to \cite{Mardia2000directional,spherical_EM,Dir_Linear2013}.

\subsection{Riemannian Gradient, Hessian, and Exponential Map on $\Omega_q$}
\label{Sec:DG_sphere}

Given that the unit hypersphere $\Omega_q$ is a nonlinear manifold, the Riemannian gradient and Hessian of a smooth function $f$ on $\Omega_q$ are defined within its tangent spaces. They are different from but also interconnected with the total gradient and Hessian of $f$ in the ambient Euclidean space $\mathbb{R}^{q+1}$.

\noindent $\bullet$ {\bf Riemannian Gradient on $\Omega_q$}. Let $T_{\bm{x}}$ be the tangent space of $\Omega_q$ at point $\bm{x}\in \Omega_q$, which consists of all the vectors starting from $\bm{x}$ and tangent to $\Omega_q$. Given a smooth function $f:\Omega_q\to \mathbb{R}$, its \emph{Riemannian gradient} $\grad f(\bm{x}) \in T_{\bm{x}}$ is defined as:
\begin{equation}
\label{Riem_grad}
\left\langle \bm{v}, \grad f(\bm{x}) \right\rangle_{\bm{x}} = df_{\bm{x}}(\bm{v})
\end{equation}
for any (unit) vector $\bm{v} \in T_{\bm{x}}$, where $\langle \cdot,\cdot \rangle_{\bm{x}}$ is the inner product (or Riemannian metric) in $T_{\bm{x}}$ and $df_{\bm{x}}: T_{\bm{x}} \to \mathbb{R}$ is the differential operator of $f$ at $\bm{x}\in \Omega_q$; see, \emph{e.g.}, Section 3.1 in \cite{Morse_Homology2004} for more details. Note that the Riemannian metric on $\Omega_q$ coincides with the standard inner product $\langle\cdot,\cdot \rangle$ in the ambient space $\mathbb{R}^{q+1}$; see Section 3.6.1 in \cite{Op_algo_mat_manifold2008}. If $f$ is smooth in an open neighborhood containing $\Omega_q$ and we consider $\grad f(\bm{x}), \bm{v} \in T_{\bm x}$ as vectors in $\mathbb{R}^{q+1}$, then the inner product in $T_{\bm{x}}$ reduces to the usual one in $\mathbb{R}^{q+1}$ and the Riemannian gradient $\grad f(\bm{x})$ can be expressed in terms of the total gradient $\nabla f(\bm{x})$ as:
\begin{equation}
\label{Riem_grad2}
\grad f(\bm{x}) = \left(\bm{I}_{q+1}-\bm{x}\bm{x}^T \right) \nabla f(\bm{x}),
\end{equation}
where $\bm{I}_{q+1}\in \mathbb{R}^{(q+1)\times (q+1)}$ is the identity matrix. The left-hand side of \eqref{Riem_grad2} is the projection of the total gradient $\nabla f(\bm{x})$ onto the tangent space $T_{\bm{x}}$ at $\bm{x}\in \Omega_q$.

\noindent $\bullet$ {\bf Riemannian Hessian on $\Omega_q$}. The \emph{Riemannian Hessian} $\mathcal{H}f(\bm{x})$ at point $\bm{x}\in \Omega_q$ is a symmetric bilinear map from the tangent space $T_{\bm{x}}$ into itself defined as:
	\begin{equation}
	\label{Riem_Hess}
	\mathcal{H}f(\bm{x})\left[\bm{v}\right]= \bar{\nabla}_{\bm{v}} \grad f(\bm{x})
	\end{equation}
	for any $\bm{v} \in T_{\bm{x}}$, where $\bar{\nabla}_{\bm{v}}$ is the Riemannian connection on $\Omega_q$. Similar to $\grad f(\bm{x})$, the Riemannian Hessian $\mathcal{H} f(\bm{x})$ has the following explicit formula when viewed in the ambient Euclidean space $\mathbb{R}^{q+1}$:
\begin{equation}
\label{Riem_Hess2}
\mathcal{H} f(\bm x) = (\bm{I}_{q+1} - \bm{x}\bm{x}^T) \left[\nabla\nabla f(\bm{x}) - \nabla f(\bm{x})^T \bm{x} \cdot \bm{I}_{q+1} \right](\bm{I}_{q+1} - \bm{x}\bm{x}^T),
\end{equation}
where $\nabla f(\bm{x})$ and $\nabla\nabla f(\bm{x})$ are the total gradient and Hessian of $f$ in $\mathbb{R}^{q+1}$. This formula can be derived via the Riemannian connection and Weingarten map on $\Omega_q$ (\citealt{Extrinsic_Look_Riem_Manifold} and Section 5.5 in \citealt{Op_algo_mat_manifold2008}) or geodesics on $\Omega_q$ \citep{DirMS2020}.

\noindent $\bullet$ {\bf Exponential Map}. An \emph{exponential map} $\Exp_{\bm x}: T_{\bm x} \to \Omega_q$ at $\bm x\in \Omega_q$ is a mapping that takes a vector $\bm{v}\in T_{\bm{x}}$ to a point $\bm y:=\Exp_{\bm x}(\bm v) \in \Omega_q$ along the curve $\varphi$ with $\varphi(0) =\bm x, \varphi(1)=\bm y$ and $\varphi'(0)=\bm v$. Here, $\varphi: [0,1] \to \Omega_q$ is a curve of minimum length between $\bm{x}$ and $\bm{y}$ (\emph{i.e.}, the so-called geodesic on $\Omega_q$). An intuitive way of thinking of the exponential map $\Exp_{\bm x}$ evaluated at $\bm{v}$ on $\Omega_q$ is that starting at point $\bm x$, we identify another point $\bm y$ on $\Omega_q$ along the geodesic (or great circle) in the direction of $\bm{v}$ so that the geodesic distance between $\bm{x}$ and $\bm{y}$ is $\norm{\bm{v}}_2$. 
As $\Omega_q$ is a compact Riemannian manifold, the exponential map $\Exp_{\bm x}$ is a diffeomorphism (smooth bijection) from a neighborhood of $\bm{0}\in T_{\bm{x}}$ to its image on $\Omega_q$; see Lemma 6.16 in \cite{Lee2018Riem_man}.
The inverse of an exponential map (or logarithmic map) is defined within a neighborhood $U \subset \Omega_q$ around $\bm{x}$ as a mapping $\Exp_{\bm x}^{-1}: U \to T_{\bm x}$ such that $\Exp_{\bm x}^{-1}(\bm y)$ represents the vector in $T_{\bm x}$ starting at $\bm{x}$, pointing to $\bm{y}$, and with its length equal to the geodesic distance between $\bm{x}$ and $\bm{y}$.

\section{Linear Convergence of the SCMS Algorithm With Euclidean Data}
\label{Sec:Euc_Ridges_SCMS}

Given the definition of a order-$d$ ridge $R_d$ in \eqref{ridge} of the (smooth) density $p$ on the Euclidean space $\mathbb{R}^D$, we introduce, in this section, some commonly assumed conditions to regularize $R_d$ and its stability theorem. After revisiting the frameworks of the Euclidean mean shift and SCMS algorithms as well as deriving the SCMS algorithm as the SCGA algorithm with an adaptive step size, we present our (linear) convergence analysis on the SCGA and SCMS algorithms.

\subsection{Assumptions and Stability of Euclidean Density Ridges}
\label{Sec:Assump}

Under the spectral decomposition on the Hessian $\nabla\nabla p(\bm{x})$ as $\nabla\nabla p(\bm{x}) = V(\bm{x}) \Lambda(\bm{x}) V(\bm{x})^T$, we know that $V(\bm{x}) = [\bm{v}_1(\bm{x}),...,\bm{v}_D(\bm{x})]\in \mathbb{R}^{D\times D}$ is a real orthogonal matrix with the eigenvectors of $\nabla\nabla p(\bm{x})$ as its columns and $\Lambda(\bm{x}) = \Diag\left[\lambda_1(\bm{x}),...,\lambda_D(\bm{x}) \right]\in \mathbb{R}^{D\times D}$ is a diagonal matrix with $\lambda_1(\bm{x}) \geq \cdots \geq \lambda_D(\bm{x})$. Given that $V_d(\bm{x}) = \left[\bm{v}_{d+1}(\bm{x}),...,\bm{v}_D(\bm{x}) \right] \in \mathbb{R}^{D\times (D-d)}$, we let $U_d(\bm{x}) \equiv V_d(\bm{x}) V_d(\bm{x})^T$ be the projection matrix onto the column space of $V_d(\bm{x})$ and $U_d^{\perp}(\bm{x}) = \bm{I}_D -V_d(\bm{x}) V_d(\bm{x})^T = V_{\diamond}(\bm{x}) V_{\diamond}(\bm{x})^T$ be the projection matrix onto the complement space, where $V_{\diamond}(\bm{x})=[\bm{v}_1(\bm{x}),...,\bm{v}_d(\bm{x})] \in \mathbb{R}^{D\times d}$ and $\bm{I}_D$ is the identity matrix in $\mathbb{R}^{D\times D}$. Then, the order-$d$ principal gradient $G_d(\bm{x})$ (or projected gradient in \citealt{Non_ridge_est2014,Asymp_ridge2015}) is defined as:
\begin{equation}
\label{proj_grad}
G_d(\bm{x}) = U_d(\bm{x}) \nabla p(\bm{x}) = V_d(\bm{x}) V_d(\bm{x})^T \nabla p(\bm{x}),
\end{equation}
and $U_d^{\perp}(\bm{x}) \nabla p(\bm{x})$ will be called the residual gradient. The order-$d$ density ridge can be equivalently defined as:
\begin{equation}
\label{ridge_equivalent}
R_d = \left\{\bm{x}\in \mathbb{R}^D: G_d(\bm{x})=\bm{0}, \lambda_{d+1}(\bm{x}) <0 \right\}.
\end{equation} 

It follows that the 0-ridge $R_0$ is the set of local modes of $p$, whose statistical properties and practical estimation algorithm have been well-studied in \cite{MS_consist2016,Mode_clu2016}. Thus, we only consider the case when $1\leq d<D$ in the sequel. We define the projection from point $\bm{x}\in \mathbb{R}^D$ onto a ridge $R_d$ by $\pi_{R_d}(\bm{x}) = \argmin_{\bm{y}\in R_d} \norm{\bm{x}-\bm{y}}_2$ and the distance from point $\bm{x}$ to $R_d$ by $d_E(\bm{x},R_d) = \norm{\bm{x}-\pi_{R_d}(\bm{x})}_2 = \min_{\bm{y}\in R_d} \norm{\bm{x}-\bm{y}}_2$. Note that the projection from point $\bm{x}\in \mathbb{R}^D$ to $R_d$ may not be unique. To guarantee the uniqueness of the projection, we introduce a concept called the \emph{reach} \citep{Federer1959,cuevas2009set} as:
\begin{equation}
\label{reach}
\mathtt{reach}(R_d) = \inf\left\{\delta>0: \forall \bm{x}\in R_d\oplus \delta, \bm{x} \text{ has a unique projection onto }R_d \right\},
\end{equation}
where $R_d \oplus \delta = \cup_{\bm{x}\in R_d} \text{Ball}_D(\bm{x},\delta)$ and $\text{Ball}_D(\bm{x},\delta) = \left\{\bm{z}\in \mathbb{R}^D: \norm{\bm{z}-\bm{x}}_2 \leq \delta \right\}$ is a $D$-dimensional ball of radius $\delta$ centered at $\bm{x}$. To obtain a well-behaved ridge $R_d$, some assumptions need imposing on the underlying density $p$ around a small neighborhood of $R_d$.

\begin{itemize}
	\item {\bf (A1)} (\emph{Differentiability}) We assume that $p$ is bounded and at least four times differentiable with bounded partial derivatives up to the fourth order for every $\bm{x}\in \mathbb{R}^D$.
	\item {\bf (A2)} (\emph{Eigengap}) We assume that there exist constants $\rho>0$ and $\beta_0>0$ such that $\lambda_{d+1}(\bm{y}) \leq -\beta_0$ and $\lambda_d(\bm{y}) -\lambda_{d+1}(\bm{y}) \geq \beta_0$ for any $\bm{y} \in R_d\oplus \rho$.
	\item {\bf (A3)} (\emph{Path Smoothness}) Under the same $\rho, \beta_0>0$ in (A2), we assume that there exists another constant $\beta_1 \in (0,\beta_0)$ such that
	\begin{align*}
	D^{\frac{3}{2}}\norm{U_d^{\perp}(\bm{y}) \nabla p(\bm{y})}_2 \norm{\nabla^3 p(\bm{y})}_{\max} &\leq \frac{\beta_0^2}{2},\\
	d \cdot D^{\frac{3}{2}} \norm{\nabla p(\bm{x})}_2 \norm{\nabla^3 p(\bm{x})}_{\max} & \leq \beta_0(\beta_0-\beta_1)
	\end{align*}
	for all $\bm{y} \in R_d\oplus \rho$ and $\bm{x}\in R_d$. 
\end{itemize}

Condition (A1) is a natural differentiability assumption under the context of ridge estimation. Condition (A2) is a curvature assumption on the true density $p$, ensuring that $p$ is ``strongly concave'' around $R_d$ inside the $(D-d)$-dimensional linear space spanned by the columns of $V_d(\bm{y})$. We call this property ``subspace constrained strong concavity''. It is one of the most important components in establishing the linear convergence of the SCGA and SCMS algorithms; see Remark~\ref{SC_remark} for the reasoning. Condition (A3) regularizes the gradient and third order derivatives of $p$ from being too steep around the ridge $R_d$. They are also imposed by \cite{Non_ridge_est2014} for characterizing a quadratic behavior of $p$ around $R_d$ and ensuring the stability of $R_d$, as well as by \cite{Asymp_ridge2015} to avoid the degenerate normal spaces of $R_d$. Consequently, $R_d$ is a $d$-dimensional manifold that contains neither intersections nor endpoints; see also Lemma~\ref{normal_reach_prop} in the Appendix.
Notice that the inequality assumptions in (A3) depend on both the ambient dimension $D$ and the intrinsic dimension $d$ of the ridge $R_d$. The larger the dimensions $D$ and $d$ are, the harder the assumptions will hold. This phenomenon, in some sense, reflects the \emph{curse of dimensionality} in nonparametric ridge estimation.

Given conditions (A1-3), the ridge $R_d$ will be stable under small perturbations of the underlying density $p$ and its derivatives, which is summarized in the following lemma. The stability of $R_d$ is generally measured by the Hausdorff distance defined as:
\begin{equation}
\label{Haus_dist}
\Haus(A,B) = \inf\left\{\epsilon>0: A\subset B\oplus \epsilon \text{ and } B\subset A\oplus \epsilon \right\},
\end{equation}
where $A,B$ are two sets in $\mathbb{R}^D$.

\begin{lemma}[Theorem 4 in \citealt{Non_ridge_est2014}]
	\label{stability_ridge}
	Assume conditions (A1-3) for two densities $p_1,p_2$. When $\norm{p_1-p_2}_{\infty,3}^*$ is sufficiently small, we have 
	$$\Haus(R_{d,1},R_{d,2}) = O\left(\norm{p_1-p_2}_{\infty,2}^*\right),$$
	where $R_{d,1}$ and $R_{d,2}$ are the $d$-ridges of $p_1$ and $p_2$, respectively.
\end{lemma}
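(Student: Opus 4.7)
The plan is to bound $\Haus(R_{d,1},R_{d,2})$ by showing, for every $\bm{x}_1\in R_{d,1}$, the existence of some $\bm{x}_2\in R_{d,2}$ with $\norm{\bm{x}_1-\bm{x}_2}_2 = O(\norm{p_1-p_2}_{\infty,2}^*)$; the reverse inclusion will follow by symmetry. I will construct $\bm{x}_2$ by starting at $\bm{x}_1$ and moving along the normal slice $\bm{x}_1 + V_{d,2}(\bm{x}_1)\,\mathbb{R}^{D-d}$ until the defining equation $V_{d,2}(\bm{x})^T\nabla p_2(\bm{x})=\bm{0}$ of $R_{d,2}$ is satisfied, then verify that the eigenvalue condition $\lambda_{d+1,2}(\bm{x}_2)<0$ is preserved.

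Set $\epsilon := \norm{p_1-p_2}_{\infty,2}^*$. The first step is to show that $\bm{x}_1$ is already an approximate ridge point of $p_2$. Pointwise bounds yield $\norm{\nabla p_1(\bm{x}_1)-\nabla p_2(\bm{x}_1)}_2 = O(\epsilon)$ and $\norm{\nabla\nabla p_1(\bm{x}_1)-\nabla\nabla p_2(\bm{x}_1)}_F = O(\epsilon)$. Combined with the eigengap $\lambda_d-\lambda_{d+1}\geq \beta_0$ from (A2), the Davis--Kahan $\sin\Theta$ theorem gives $\norm{U_{d,1}(\bm{x}_1)-U_{d,2}(\bm{x}_1)}_F = O(\epsilon/\beta_0)$ for the normal-space projectors. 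Since $G_{d,1}(\bm{x}_1) = U_{d,1}(\bm{x}_1)\nabla p_1(\bm{x}_1) = \bm{0}$, the splitting $U_{d,2}\nabla p_2 - U_{d,1}\nabla p_1 = U_{d,2}(\nabla p_2-\nabla p_1) + (U_{d,2}-U_{d,1})\nabla p_1$ and the boundedness of $\norm{\nabla p_1(\bm{x}_1)}_2$ (via (A1)) give $\norm{G_{d,2}(\bm{x}_1)}_2 = O(\epsilon)$.

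The second step is to find a zero of the reduced map $\Psi(\bm{u}) := V_{d,2}(\bm{x}_1+V_{d,2}(\bm{x}_1)\bm{u})^T \nabla p_2(\bm{x}_1+V_{d,2}(\bm{x}_1)\bm{u})$ from $\mathbb{R}^{D-d}$ to $\mathbb{R}^{D-d}$, for which $\norm{\Psi(\bm{0})}_2 = \norm{G_{d,2}(\bm{x}_1)}_2 = O(\epsilon)$. Its Jacobian at $\bm{u}=\bm{0}$ decomposes into a main term $V_{d,2}(\bm{x}_1)^T\nabla\nabla p_2(\bm{x}_1)V_{d,2}(\bm{x}_1)$, whose eigenvalues equal $\lambda_{d+1,2}(\bm{x}_1),\dots,\lambda_{D,2}(\bm{x}_1)$ and lie below $-\beta_0/2$ by (A2) and Weyl's inequality for small $\epsilon$, plus a correction arising from differentiating $V_{d,2}(\cdot)$ along $V_{d,2}(\bm{x}_1)$. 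The correction is an inner product of $\nabla p_2(\bm{x}_1)$ with the derivative of the eigenprojector, which by standard perturbation formulas scales as $\norm{U_{d,2}^\perp(\bm{x}_1)\nabla p_2(\bm{x}_1)}_2\norm{\nabla^3 p_2(\bm{x}_1)}_{\max}/\beta_0$. Assumption (A3) bounds it below $\beta_0/4$, so $D\Psi(\bm{0})$ is invertible with $\norm{D\Psi(\bm{0})^{-1}}_2\leq 4/\beta_0$. A Newton / Banach fixed-point argument on $\bm{u}\mapsto \bm{u}-D\Psi(\bm{0})^{-1}\Psi(\bm{u})$ then produces a root $\bm{u}^*$ with $\norm{\bm{u}^*}_2 = O(\epsilon/\beta_0)$, and $\bm{x}_2 := \bm{x}_1+V_{d,2}(\bm{x}_1)\bm{u}^*$ belongs to $R_{d,2}$ (the eigenvalue condition following by continuity and the buffer in (A2)).

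The main obstacle is the correction term in the Jacobian. Because $V_{d,2}(\bm{x})$ is defined implicitly through the spectral decomposition of $\nabla\nabla p_2(\bm{x})$, its derivative is not a simple quantity but inherits contributions from both $\nabla^3 p_2$ and the inverse eigengap; without a uniform bound on these factors along the tube $R_{d,1}\oplus \rho$, the correction could destroy invertibility and inflate the displacement to non-linear order in $\epsilon$. The quantitative inequalities in (A3) are precisely what keeps this correction uniformly subordinate to the restricted-Hessian main term, preserving the subspace constrained strong concavity throughout the iteration and yielding the claimed linear bound $\Haus(R_{d,1},R_{d,2}) = O(\epsilon)$.
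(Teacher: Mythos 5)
Lemma~\ref{stability_ridge} is cited directly from Theorem~4 of \cite{Non_ridge_est2014}; the paper does not reprove it, though it does prove the directional analogue (Theorem~\ref{Dir_ridge_stability}) in Appendix~\ref{App:Stability_Dir_Ridge} by the same underlying strategy: one constructs the subspace constrained gradient flow $\varpi'(t)=G_d(\varpi(t))$, reparametrizes by arc length, lifts it to the scalar potential $\xi(s)=p(\gamma(0))-p(\gamma(s))$, proves $\xi''\geq\beta_0/2$ using the first inequality in (A3), and then a one-variable Taylor expansion of $\xi'$ converts the bound $\norm{G_{d,2}(\bm{x}_1)}_2=O(\epsilon)$ into an arc-length bound $s=O(\epsilon/\beta_0)$. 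Your argument is a genuinely different route: instead of integrating a flow you restrict the defining equation of $R_{d,2}$ to the \emph{fixed} affine normal slice through $\bm{x}_1$ and run a Newton/Banach fixed-point iteration. The key ingredient for you is invertibility of the restricted Jacobian $D\Psi(\bm{0})=M_2(\bm{x}_1)^T V_{d,2}(\bm{x}_1)$, which the paper establishes in Lemma~\ref{normal_reach_prop}(d)--(e), but is used there for a different purpose (lower-bounding the reach) rather than for the Hausdorff bound. Your method is essentially a discretized version of ``Method B'' in the proof of Theorem~\ref{Dir_ridge_stability} (the normal flow $\phi'=-\nabla h$), except that the slice direction is frozen at $\bm{x}_1$ rather than re-evaluated along the flow; both buy the same one-step, linear-in-$\epsilon$ estimate, though the flow version avoids having to control how $V_{d,2}$ turns within a neighborhood of $\bm{x}_1$.

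There are a few places where your bookkeeping is off and would need repair. First, you invoke the \emph{first} inequality of (A3), which controls $\norm{U_d^{\perp}(\cdot)\nabla p}_2\norm{\nabla^3 p}_{\max}$ in a tube, to bound the Jacobian correction. But the correction to $M^T V_d$ is $\sum_{i=1}^d \Lambda_i V_d^T T_i^T V_d$ with $T_i(\bm{x})=\left[\bm{v}_i(\bm{x})^T\nabla p(\bm{x})\right]\bm{v}_i(\bm{x})^T\nabla^3 p(\bm{x})$, and the paper bounds it via the \emph{second} inequality $d\, D^{3/2}\norm{\nabla p}_2\norm{\nabla^3 p}_{\max}\leq\beta_0(\beta_0-\beta_1)$, which holds only for $\bm{x}\in R_d$ and yields $\norm{\text{correction}}_2\leq\beta_0-\beta_1$, hence singular values of $D\Psi(\bm{0})$ bounded below by $\beta_1$, not by $\beta_0/4$. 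Your first-inequality version loses a factor of $d$ (or $\sqrt{d}$ after Cauchy--Schwarz) and gives only $\leq d\beta_0/2$, which does not guarantee invertibility unless $d=1$. Second, you apply this bound for $p_2$ at $\bm{x}_1\in R_{d,1}$, but (A3) for $p_2$ holds only on $R_{d,2}$ or its tube; you need an explicit transfer step (using $\norm{p_1-p_2}_{\infty,3}^*$ small) to move the estimate from $p_1$ at $\bm{x}_1$ to $p_2$ at $\bm{x}_1$. Both issues are repairable -- use the second inequality, carry the $d$, and transfer -- but as written the invertibility step does not quite close.
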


When the true density $p$ that generates the Euclidean data $\left\{\bm{X}_1,...,\bm{X}_n \right\} \subset \mathbb{R}^D$ is replaced by the Euclidean KDE $\hat{p}_n$ in the definition \eqref{ridge} of density ridges, we obtain a natural (plug-in) estimator of the true ridge $R_d$ as:
$$\hat{R}_d=\left\{\bm{x}\in \mathbb{R}^D: \hat{V}_d(\bm{x})^T \nabla \hat{p}_n(\bm{x})=\bm{0}, \hat{\lambda}_{d+1}(\bm{x})<0 \right\}.$$
To regularize statistical behaviors of the estimated ridge $\hat{R}_d$, we make the following assumptions on the kernel of its form \eqref{Radial_kernel} as:
\begin{itemize}
	\item {\bf (E1)} We assume that the kernel profile $k:[0,\infty) \to [0,\infty)$ is non-increasing and at least three times continuously differentiable with bounded fourth order partial derivatives as well as
	$$\int_{\mathbb{R}^D} \norm{\bm{x}}_2^2\cdot k\left(\norm{\bm{x}}_2^2 \right)d\bm{x}, \quad \int_{\mathbb{R}^d} -k'\left(\norm{\bm{x}}_2^2 \right) d\bm{x} <\infty,\, \text{ and } \, \int_{\mathbb{R}^d} \left|k^{(\alpha)}\left(\norm{\bm{x}}_2^2 \right) \right|^2 d\bm{x} <\infty$$
		with $\alpha=0,1,2,3$.
	\item {\bf (E2)} Let 
	$$\mathcal{K}_E = \left\{\bm{y} \mapsto K^{(\alpha)}\left(\frac{\bm{x} - \bm{y}}{h} \right) = D^{[\alpha]}\left[c_{k,D} \cdot k\left(\norm{\frac{\bm{x} - \bm{y}}{h}}_2^2 \right) \right]: \bm{x}\in \mathbb{R}^D, |[\alpha]|=0,1,2,3 \right\}.$$
	We assume that $\mathcal{K}_E$ is a bounded VC (subgraph) class of measurable functions on $\mathbb{R}^D$; that is, there exist constants $A, \upsilon >0$ such that for any $0< \epsilon <1$,
	$$\sup_Q N\left(\mathcal{K}_E, L_2(Q), \epsilon \norm{F}_{L_2(Q)} \right) \leq \left(\frac{A}{\epsilon} \right)^{\upsilon},$$
	where $N(\mathcal{K}_E,L_2(Q), \epsilon)$ is the $\epsilon$-covering number of the normed space $\left(\mathcal{K}_E, \norm{\cdot}_{L_2(Q)} \right)$, $Q$ is any probability measure on $\mathbb{R}^D$, and $F$ is an envelope function of $\mathcal{K}_E$. Here, the norm $\norm{F}_{L_2(Q)}$ is defined as $\left[\int_{\mathbb{R}^D} |F(\bm{x})|^2 dQ(\bm{x}) \right]^{\frac{1}{2}}$.
\end{itemize}

\begin{remark}
		Recall that the \emph{$\epsilon$-covering number} $N(\mathcal{F},L_2(Q), \epsilon)$ is defined as the minimal number of $L_2(Q)$-balls $\left\{g: \norm{g-f}_{L_2(Q)} < \epsilon\right\}$ of radius $\epsilon$ needed to cover the (function) class $\mathcal{F}$. 
		One popular concept for controlling uniform covering number $\sup_Q N\left(\mathcal{F}, L_2(Q), \epsilon \norm{F}_{L_2(Q)} \right)$ is the notion of Vapnik-\v{C}ervonenkis (subgraph) classes, or simply VC classes.  Starting from collections of sets, we say that a collection $\mathcal{C}$ of subsets of the sample space $\mathcal{X}$ \emph{picks out} a certain subset of the finite set $\{x_1,...,x_n\} \subset \mathcal{X}$ if it can be written as $C \cap \{x_1,...,x_n\}$ for some $C\in \mathcal{C}$. The collection is said to \emph{shatter} $\{x_1,...,x_n\}$ if $\mathcal{C}$ picks out each of its $2^n$ subsets. The \emph{VC-index} $V(\mathcal{C})$ of $\mathcal{C}$ is the smallest $n$ for which no set of size $n$ is shattered by $\mathcal{C}$. A collection $\mathcal{C}$ of measurable sets is called a \emph{VC class} if its index $V(\mathcal{C})$ is finite. To generalize this concept to a class $\mathcal{F}$ of real-valued and measurable functions defined on $\mathcal{X}$, we say that $\mathcal{F}$ is a \emph{VC subgraph class} if the collection of all subgraphs of the functions in $\mathcal{F}$ forms a VC class of sets in $\mathcal{X}\times \mathbb{R}$. An important property of VC (subgraph) classes is that their $\epsilon$-covering numbers grow polynomially in $\frac{1}{\epsilon}$ as what condition (E2) is stated; see Theorem 2.6.4 in \cite{van1996weak}. More in-depth discussion on VC classes can be found in Chapter 2.6 of the same book.
\end{remark}

Condition (E1) can be relaxed such that the kernel profile $k$ is three times continuously differentiable except for finite number of points on $[0,\infty)$. Such relaxation allows us to include the Epanechnikov and other compactly supported kernel. The integrability assumption on $k$ in condition (E1) is similar to the conditions (K1) and (K3) in Section~\ref{Sec:KDE} for the purpose of bounding the expectations and variances of the KDE $\nabla \hat{p}_n(\bm{x})$ and its (partial) derivatives. Condition (E2) regularizes the complexity of the kernel and its (partial) derivatives, which is essential in establishing the uniform consistency of $\hat{p}_n$ and its derivatives to the corresponding quantities of $p$ as in \eqref{unif_bound} below. 

Given conditions (E1) and (E2), the techniques in \cite{Gine2002,Einmahl2005,Asymp_deri_KDE2011} can be utilized to show the uniform consistency of the Euclidean KDE $\hat{p}_n$ and its derivatives as:
\begin{align}
\label{unif_bound}
\begin{split}
\norm{\hat{p}_n -p}_{\infty}^{(k)} = O(h^2) + O_P\left(\sqrt{\frac{|\log h|}{nh^{D+2k}}} \right)\quad \text{ for } k=0,...,3.
\end{split}
\end{align}

\subsection{Mean Shift and SCMS Algorithms with Euclidean Data}
\label{Sec:Eu_MS_SCMS}

We begin with a quick review on the Euclidean mean shift algorithm, as the SCMS algorithm is built on top of such formulation. Given condition (E1) and the Euclidean KDE $\hat{p}_n(\bm{x}) = \frac{c_{k,D}}{nh^D} \sum\limits_{i=1}^n k\left(\norm{\frac{\bm{x}-\bm{X}_i}{h}}_2^2 \right)$ with kernel \eqref{Radial_kernel}, its gradient estimator takes the form as:
\begin{align}
\label{KDE_grad_Eu}
\begin{split}
\nabla \hat{p}_n(\bm{x}) &= \frac{2c_{k,D}}{nh^{D+2}} \sum_{i=1}^n (\bm{x}-\bm{X}_i) \cdot k'\left(\norm{\frac{\bm{x}-\bm{X}_i}{h}}_2^2 \right)\\
&= \frac{2c_{k,D}}{nh^{D+2}} \left[\sum_{i=1}^n -k'\left(\norm{\frac{\bm{x}-\bm{X}_i}{h}}_2^2 \right) \right] \left[\frac{\sum_{i=1}^n \bm{X}_i k'\left(\norm{\frac{\bm{x}-\bm{X}_i}{h}}_2^2 \right)}{\sum_{i=1}^n k'\left(\norm{\frac{\bm{x}-\bm{X}_i}{h}}_2^2 \right)} -\bm{x}\right],
\end{split}
\end{align}
where the first term is a variant of KDEs and the second term is the \emph{mean shift} vector
\begin{equation}
\label{MS}
\Xi_h(\bm{x}) = \frac{\sum_{i=1}^n \bm{X}_i k'\left(\norm{\frac{\bm{x}-\bm{X}_i}{h}}_2^2 \right)}{\sum_{i=1}^n k'\left(\norm{\frac{\bm{x}-\bm{X}_i}{h}}_2^2 \right)} -\bm{x}.
\end{equation}
This factorization suggests that the mean shift vector aligns with the direction of maximum increase in $\hat{p}_n$. Thus, moving a point along its mean shift vector successively yields an ascending path to a local mode \citep{MS1995,MS2002,MS2007_pf}. Let $\left\{\hat{\bm{x}}^{(t)} \right\}_{t=0}^{\infty}$ be the mean shift sequence with the Euclidean KDE $\hat{p}_n$. Then, one step iteration of the mean shift algorithm is written as:
\begin{align}
\label{MS_iteration}
\begin{split}
\hat{\bm{x}}^{(t+1)} \gets \hat{\bm{x}}^{(t)} + \Xi_h\left(\hat{\bm{x}}^{(t)} \right) &= \frac{\sum_{i=1}^n \bm{X}_i k'\left(\norm{\frac{\hat{\bm{x}}^{(t)}-\bm{X}_i}{h}}_2^2 \right)}{\sum_{i=1}^n k'\left(\norm{\frac{\hat{\bm{x}}^{(t)} -\bm{X}_i}{h}}_2^2 \right)}\\
&= \hat{\bm{x}}^{(t)} + \frac{1}{\frac{2 c_{k,D}}{nh^{D+2}} \sum\limits_{i=1}^n -k'\left(\norm{\frac{\hat{\bm{x}}^{(t)}-\bm{X}_i}{h}}_2^2 \right)} \cdot \nabla \hat{p}_n(\hat{\bm{x}}^{(t)}),
\end{split}
\end{align}
showing that the mean shift algorithm is a gradient ascent method with an adaptive step size 
\begin{equation}
\label{MS_stepsize}
\eta_{n,h}^{(t)} = \frac{1}{\frac{2 c_{k,D}}{nh^{D+2}} \sum\limits_{i=1}^n -k'\left(\norm{\frac{\hat{\bm{x}}^{(t)}-\bm{X}_i}{h}}_2^2 \right)} = \frac{1}{\hat{g}_n\left(\hat{\bm{x}}^{(t)} \right)}.
\end{equation}
Here, we denote by $\hat{g}_n\left(\hat{\bm{x}}^{(t)} \right) = \frac{2 c_{k,D}}{nh^{D+2}} \sum\limits_{i=1}^n -k'\left(\norm{\frac{\hat{\bm{x}}^{(t)}-\bm{X}_i}{h}}_2^2 \right)$ the denominator of the adaptive step size $\eta_{n,h}^{(t)}$.
Lemma~\ref{limit_step_MS} below shows that under condition (E1) and the differentiability assumption on $p$, $h^2\hat{g}_n\left(\bm{x} \right)$ tends to a fixed constant with probability tending to 1 for any $\bm{x}\in \mathbb{R}^D$ as $nh^D \to \infty$ and $h\to 0$. Therefore, the step size $\eta_{n,h}^{(t)}$ has its asymptotic rate as $O(h^2)$ and tends to zero as $nh^D \to \infty$ and $h\to 0$ as well. The proof of Lemma~\ref{limit_step_MS} can be found in Appendix~\ref{App:Proofs_Sec3}.

\begin{lemma}
	\label{limit_step_MS}
	Assume conditions (A1) and (E1). The convergence rate of $\hat{g}_n(\bm{x})=\frac{2c_{k,D}}{nh^{D+2}} \sum\limits_{i=1}^n -k'\left(\norm{\frac{\bm{x}-\bm{X}_i}{h}}_2^2 \right)$ is
	\begin{align*}
	h^2 \hat{g}_n(\bm{x}) &= -2c_{k,D} \cdot p(\bm{x}) \int_{\mathbb{R}^D} k'\left(\norm{\bm{u}}_2^2 \right)d\bm{u} +O\left(h^2 \right) + O_P\left(\sqrt{\frac{1}{nh^D}} \right) \\
	&= O(1) + O\left(h^2 \right) + O_P\left(\sqrt{\frac{1}{nh^D}} \right)
	\end{align*}
	for any $\bm{x}\in \mathbb{R}^D$ as $nh^D \to \infty$ and $h\to 0$.
\end{lemma}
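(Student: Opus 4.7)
The plan is to decompose $h^2\hat{g}_n(\bm{x})$ into a bias term (its expectation) and a stochastic fluctuation, and bound each using standard kernel smoothing arguments. Rewriting
$$h^2\hat{g}_n(\bm{x}) = \frac{2c_{k,D}}{nh^D}\sum_{i=1}^n -k'\!\left(\Bigl\|\tfrac{\bm{x}-\bm{X}_i}{h}\Bigr\|_2^2\right)$$
exhibits the quantity as an average of i.i.d.\ random variables, so the natural split is $h^2\hat{g}_n(\bm{x}) = \mathbb{E}[h^2\hat{g}_n(\bm{x})] + \bigl(h^2\hat{g}_n(\bm{x}) - \mathbb{E}[h^2\hat{g}_n(\bm{x})]\bigr)$.

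For the bias piece, I would apply the change of variable $\bm{u}=(\bm{y}-\bm{x})/h$ to obtain
$$\mathbb{E}\bigl[h^2\hat{g}_n(\bm{x})\bigr] = -2c_{k,D}\int_{\mathbb{R}^D} k'(\|\bm{u}\|_2^2)\,p(\bm{x}+h\bm{u})\,d\bm{u},$$
and then Taylor-expand $p(\bm{x}+h\bm{u})$ around $\bm{x}$ to second order, which is justified by the differentiability assumption (A1). The zeroth-order term reproduces the leading constant $-2c_{k,D}\,p(\bm{x})\int_{\mathbb{R}^D} k'(\|\bm{u}\|_2^2)\,d\bm{u}$; the first-order term $h\bm{u}^T\nabla p(\bm{x})$ vanishes by symmetry because $\bm{u}\mapsto k'(\|\bm{u}\|_2^2)$ is radial (so $\bm{u}\,k'(\|\bm{u}\|_2^2)$ is odd); and the second-order remainder contributes at most $O(h^2)$, with constant controlled by $\|p\|_{\infty}^{(2)}$ together with the finite moment $\int \|\bm{u}\|_2^2\,|k'(\|\bm{u}\|_2^2)|\,d\bm{u}$ guaranteed by (E1).

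For the stochastic piece, I would bound the variance as
$$\mathrm{Var}\bigl(h^2\hat{g}_n(\bm{x})\bigr) \le \frac{(2c_{k,D})^2}{nh^{2D}}\,\mathbb{E}\!\left[k'\!\left(\Bigl\|\tfrac{\bm{x}-\bm{X}_1}{h}\Bigr\|_2^2\right)^2\right],$$
and, after the same change of variable, use boundedness of $p$ in (A1) together with the $L^2$ integrability of $k'(\|\cdot\|_2^2)$ from (E1) to conclude that this expectation equals $O(h^D)$. Hence the variance is $O\bigl(1/(nh^D)\bigr)$, and Chebyshev's inequality yields $h^2\hat{g}_n(\bm{x}) - \mathbb{E}\bigl[h^2\hat{g}_n(\bm{x})\bigr] = O_P\!\left(\sqrt{1/(nh^D)}\right)$. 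Adding the bias and the stochastic pieces gives the first displayed equality, and the second ($O(1)+O(h^2)+O_P(\sqrt{1/(nh^D)})$) follows by bounding the leading deterministic constant in absolute value.

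The argument is standard in kernel density estimation, so I do not expect a real obstacle; the main point of care is verifying that each integral appearing in the bias expansion and in the variance bound is finite, which is precisely what the integrability clauses of (E1) were designed to ensure.
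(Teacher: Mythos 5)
Your proof is correct and follows essentially the same route as the paper's: split $h^2\hat{g}_n(\bm{x})$ into bias and fluctuation, compute the bias by the change of variables $\bm{u}=(\bm{y}-\bm{x})/h$ with a Taylor expansion (the symmetry of $k'(\|\bm{u}\|_2^2)$ killing the first-order term), and bound the fluctuation via a variance calculation of order $1/(nh^D)$. The only cosmetic difference is that you invoke Chebyshev for the $O_P$ bound where the paper invokes the CLT; both deliver the same rate.
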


As the mean shift algorithm is not a main focus of this paper, we will make an abuse of notation and denote by $\big\{\hat{\bm{x}}^{(t)} \big\}_{t=0}^{\infty}$ the sequence produced by the SCMS or SCGA algorithm in the sequel. Compared to the mean shift iteration \eqref{MS_iteration}, the SCMS algorithm updates the sequence $\big\{\hat{\bm{x}}^{(t)} \big\}_{t=0}^{\infty}$ through the subspace constrained mean shift vector $\hat{V}_d(\hat{\bm{x}}^{(t)}) \hat{V}_d(\hat{\bm{x}}^{(t)})^T \Xi_h(\hat{\bm{x}}^{(t)})$ as:
\begin{align}
\label{SCMS_update}
\begin{split}
\hat{\bm{x}}^{(t+1)} &\gets \hat{\bm{x}}^{(t)} + \hat{V}_d(\hat{\bm{x}}^{(t)}) \hat{V}_d(\hat{\bm{x}}^{(t)})^T \Xi_h(\hat{\bm{x}}^{(t)})\\
& = \hat{\bm{x}}^{(t)} + \frac{1}{\frac{2 c_{k,D}}{nh^{D+2}} \sum\limits_{i=1}^n -k'\left(\norm{\frac{\hat{\bm{x}}^{(t)}-\bm{X}_i}{h}}^2 \right)} \cdot \hat{V}_d(\hat{\bm{x}}^{(t)}) \hat{V}_d(\hat{\bm{x}}^{(t)})^T \nabla \hat{p}_n(\hat{\bm{x}}^{(t)}).
\end{split}
\end{align}
See Algorithm~\ref{Algo:SCMS} in Appendix~\ref{App:Algo} for the entire procedure. This also implies that the SCMS algorithm can be viewed as a sample-based SCGA method as:
\begin{equation}
\label{SCGA_update_sample}
\hat{\bm{x}}^{(t+1)} \gets \hat{\bm{x}}^{(t)} + \eta_{n,h}^{(t)} \cdot \hat{V}_d(\hat{\bm{x}}^{(t)}) \hat{V}_d(\hat{\bm{x}}^{(t)})^T \nabla \hat{p}_n(\hat{\bm{x}}^{(t)})
\end{equation}
with the same adaptive step size $\eta_{n,h}^{(t)}$ as the Euclidean mean shift algorithm in \eqref{MS_iteration}. The formulation \eqref{SCGA_update_sample} sheds light on some (linear) convergence properties of the SCMS algorithm as we will demonstrate in the next subsection.

\subsection{Linear Convergence of Population and Sample-Based SCGA Algorithms}
\label{Sec:LC_SCGA}

We have shown in \eqref{SCGA_update_sample} that the (usual/Euclidean) SCMS algorithm is a variant of the sample-based SCGA algorithm in $\mathbb{R}^D$ with an adaptive step size $\eta_{n,h}^{(t)}$. To establish the (linear) convergence results of the SCMS algorithm with Euclidean KDE $\hat{p}_n$, it suffices to study the (linear) convergence of the sample-based SCGA algorithm with objective function $\hat{p}_n$. To this end, we begin by studying the convergence of the population SCGA algorithm whose objective function is the underlying density $p$.

Let $\big\{\bm{x}^{(t)}\big\}_{t=0}^{\infty}$ be the sequence defined by the population SCGA algorithm and $\big\{\hat{\bm{x}}^{(t)}\big\}_{t=0}^{\infty}$ be the sequence defined by the sample-based SCGA algorithm. The population SCGA algorithm is defined by its iterative formula as:
	\begin{equation}
	\label{SCGA_update_pop}
	\bm{x}^{(t+1)} = \bm{x}^{(t)} + \eta \cdot V_d(\bm{x}^{(t)}) V_d(\bm{x}^{(t)})^T \nabla p(\bm{x}^{(t)}),
	\end{equation}
	where $\eta >0$ is a (fixed) step size. The sample-based SCGA algorithm has its iterative formula as \eqref{SCGA_update_sample}, except that the standard sample-based SCGA algorithm normally embraces a constant step size $\eta>0$. 
	
	\begin{remark}
		\label{SCMS_stepsize_remark}
		In \eqref{SCMS_update} and \eqref{SCGA_update_sample}, we consider the SCMS algorithm as a sample-based SCGA iteration with an adaptive step size $\eta_{n,h}^{(t)}$. Our Lemma~\ref{limit_step_MS} suggests that $\eta_{n,h}^{(t)}$ tends to zero in a rate $O(h^2)$ as $nh^D \to \infty$ and $h\to 0$. However, once the sample size $n$ is fixed and the bandwidth $h$ is chosen, the step size $\eta_{n,h}^{(t)}$ is not only upper bounded but also uniformly lower bounded away from zero with respect to the iteration number $t$ by the differentiability condition (E1) when the current iterative point $\hat{\bm{x}}^{(t)}$ lies within the compact neighborhood $R_d\oplus \rho$. Note that $R_d\oplus \rho$ is compact because $R_d$ is a finite union of connected and compact manifolds; see (d) of Lemma~\ref{normal_reach_prop}. More importantly, these upper and lower bounds of $\eta_{n,h}^{(t)}$ when $\hat{\bm{x}}^{(t)} \in R_d\oplus \rho$ are independent of the iteration number $t$. Therefore, conditioning on the case when the sample size $n$ is sufficiently large, one can always select a small bandwidth $h$ such that the adaptive step size $\eta_{n,h}^{(t)}$ of the SCMS algorithm is sufficiently small but not equal to zero.
	\end{remark}
	
	As revealed by the following proposition, our imposed conditions (A1-3) in Section~\ref{Sec:Assump} ensure that as long as the step size $\eta>0$ is small, the objective function $p$ along any population SCGA sequence $\big\{\bm{x}^{(t)}\big\}_{t=0}^{\infty}$ is non-decreasing and the sequence by itself converges to $R_d$ when it is initialized within a small neighborhood of $R_d$. 
	
	\begin{proposition}[Convergence of the SCGA Algorithm]
		\label{SCGA_conv}
		For any SCGA sequence $\big\{\bm{x}^{(t)}\big\}_{t=0}^{\infty}$ defined by \eqref{SCGA_update_pop} with $0<\eta < \frac{2}{D\norm{p}_{\infty}^{(2)}}$, the following properties hold.
		\begin{enumerate}[label=(\alph*)]
			\item Under condition (A1), the objective function sequence $\big\{p(\bm{x}^{(t)}) \big\}_{t=0}^{\infty}$ is non-decreasing and converges.
			
			\item Under condition (A1), $\lim_{t\to\infty} \norm{V_d(\bm{x}^{(t)})^T \nabla p(\bm{x}^{(t)})}_2 =\lim_{t\to\infty} \norm{\bm{x}^{(t+1)} -\bm{x}^{(t)}}_2 = 0$.
			
			\item Under conditions (A1-3), $\lim_{t\to\infty} d_E(\bm{x}^{(t)}, R_d) =0$ whenever $\bm{x}^{(0)} \in R_d\oplus r_1$ with the convergence radius $r_1$ satisfying
			$$0<r_1 < \min\left\{\frac{\rho}{2},\; \frac{\beta_1^2}{A_2 \left(\norm{p}_{\infty}^{(3)} + \norm{p}_{\infty}^{(4)} \right)},\; \frac{\beta_1}{A_4(p)} \right\},$$
			where $A_2>0$ is a constant defined in (h) of Lemma~\ref{normal_reach_prop} while $A_4(p) >0$ is a quantity depending on both the dimension $D$ and functional norm $\norm{p}_{\infty,4}^*$ up to the fourth-order (partial) derivatives of $p$. 
		\end{enumerate}
	\end{proposition}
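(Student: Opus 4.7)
The three parts form a standard gradient-ascent-style argument adapted to the subspace-constrained setting, with part (c) being the only delicate piece.

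\textbf{Part (a): descent lemma.} The plan is to apply a second-order Taylor expansion to $p$ between $\bm{x}^{(t)}$ and $\bm{x}^{(t+1)}$. Since (A1) gives boundedness of all second partial derivatives, the matrix-norm inequality $\norm{\nabla\nabla p(\bm{x})}_2\le D\cdot\norm{p}_\infty^{(2)}$ provides a global Lipschitz constant for $\nabla p$. Writing $\Delta^{(t)}=\bm{x}^{(t+1)}-\bm{x}^{(t)}=\eta\,G_d(\bm{x}^{(t)})$ and exploiting the idempotency of $V_d V_d^T$ so that $\nabla p(\bm{x}^{(t)})^T \Delta^{(t)}=\eta\norm{G_d(\bm{x}^{(t)})}_2^2$, the descent lemma yields
\begin{equation*}
p(\bm{x}^{(t+1)})-p(\bm{x}^{(t)})\ \geq\ \eta\bigl(1-\tfrac{D\eta\,\norm{p}_\infty^{(2)}}{2}\bigr)\norm{G_d(\bm{x}^{(t)})}_2^2,
\end{equation*}
which is nonnegative under the given bound on $\eta$. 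Monotonicity plus boundedness of $p$ then gives convergence of $\{p(\bm{x}^{(t)})\}$.

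\textbf{Part (b): telescoping.} Summing the inequality from part (a) and using convergence of $\{p(\bm{x}^{(t)})\}$, I get $\sum_{t\ge 0}\norm{G_d(\bm{x}^{(t)})}_2^2<\infty$, which forces $\norm{V_d(\bm{x}^{(t)})^T\nabla p(\bm{x}^{(t)})}_2=\norm{G_d(\bm{x}^{(t)})}_2\to 0$. The identity $\norm{\bm{x}^{(t+1)}-\bm{x}^{(t)}}_2=\eta\norm{G_d(\bm{x}^{(t)})}_2$ disposes of the second limit immediately.

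\textbf{Part (c): invariance plus subsequential limits.} The plan has two stages: first, prove that if $\bm{x}^{(0)}\in R_d\oplus r_1$ then the whole sequence stays inside $R_d\oplus\rho$; second, combine this with part (b) to conclude $d_E(\bm{x}^{(t)},R_d)\to 0$. For the second stage, once confinement is known, $\{\bm{x}^{(t)}\}$ lies in the compact set $\overline{R_d\oplus\rho}$ (compactness of $R_d$ from Lemma~\ref{normal_reach_prop}(d)); the eigengap in (A2) makes $V_d$ continuous there, so any accumulation point $\bm{x}^*$ satisfies $V_d(\bm{x}^*)^T\nabla p(\bm{x}^*)=\bm{0}$ and $\lambda_{d+1}(\bm{x}^*)\le -\beta_0<0$, hence $\bm{x}^*\in R_d$. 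A standard subsequential-limit argument then upgrades this to $d_E(\bm{x}^{(t)},R_d)\to 0$.

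\textbf{The main obstacle: the invariance step.} This is where the three pieces of the bound on $r_1$ get used. The idea is to track $r^{(t)}:=d_E(\bm{x}^{(t)},R_d)$ and show it cannot escape $\rho$, via a one-step bound of the form $r^{(t+1)}\le (1-c\eta)r^{(t)}+(\text{higher order})$ valid as long as $\bm{x}^{(t)}\in R_d\oplus r_1$. To get this, I write $\bm{y}^{(t)}=\pi_{R_d}(\bm{x}^{(t)})$, decompose $\bm{x}^{(t)}-\bm{y}^{(t)}$ against $V_d(\bm{y}^{(t)})$ (valid because $r_1<\rho/2<\mathtt{reach}(R_d)$), Taylor-expand $V_d(\bm{x}^{(t)})^T\nabla p(\bm{x}^{(t)})$ around $\bm{y}^{(t)}$, and use that $V_d(\bm{y}^{(t)})^T\nabla p(\bm{y}^{(t)})=\bm{0}$ together with the eigengap $\lambda_d-\lambda_{d+1}\ge\beta_0$ to identify the leading term as $\Lambda_d(\bm{y}^{(t)})(\bm{x}^{(t)}-\bm{y}^{(t)})$ with eigenvalues $\le-\beta_0$. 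The third-derivative bound in (A3) (with the geometric constant $A_2$ from Lemma~\ref{normal_reach_prop}) controls the quadratic remainder, yielding the condition $r_1<\beta_1^2/[A_2(\norm{p}_\infty^{(3)}+\norm{p}_\infty^{(4)})]$; perturbing $V_d(\bm{x})$ to $V_d(\bm{y})$ via a Davis--Kahan-type bound and controlling the tangential component of $\nabla p$ produces the extra term $r_1<\beta_1/A_4(p)$. The slack $\beta_0-\beta_1$ in (A3) provides the strict contraction needed for the invariance to propagate indefinitely. This Taylor-plus-perturbation bookkeeping is the technical heart of the proof; parts (a)--(b) and the concluding subsequential argument are essentially automatic once it is in place.
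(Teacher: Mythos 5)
Parts (a) and (b) match the paper exactly, including the descent-lemma constant, the telescoping argument, and the identity $\norm{\bm{x}^{(t+1)}-\bm{x}^{(t)}}_2=\eta\norm{G_d(\bm{x}^{(t)})}_2$.

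Part (c) is where your route diverges from the paper's. The paper's proof does \emph{not} use a subsequential-limit/compactness argument; instead it establishes the quantitative error bound
$\norm{V_d(\bm{x}^{(t)})^T\nabla p(\bm{x}^{(t)})}_2 \ge \frac{\beta_1}{2}\,d_E(\bm{x}^{(t)},R_d)$
for $\bm{x}^{(t)}\in R_d\oplus r_1$, by Taylor-expanding the map $\bm{x}\mapsto V_d(\bm{x})^T\nabla p(\bm{x})$ around $\bm{z}^{(t)}=\pi_{R_d}(\bm{x}^{(t)})$, invoking Lemma~\ref{normal_reach_prop}(e) to lower-bound the leading term via the singular values of $M(\bm{z}^{(t)})$, and absorbing the integral Taylor remainder — whose size is $\sup\norm{\nabla M^T}_2 \equiv A_4(p)$ — through the condition $r_1\le\beta_1/A_4(p)$. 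Once this bound is in hand, $d_E(\bm{x}^{(t)},R_d)\to 0$ follows immediately from part (b). So two small corrections to your reading of the constants: $A_4(p)$ comes from the derivative of $M$ (i.e.\ second derivatives of $V_d^T\nabla p$, hence fourth derivatives of $p$), not from a Davis–Kahan perturbation of $V_d(\bm{x})$ to $V_d(\bm{y})$; and there is no tangential-gradient term to control in the paper's computation, because $\bm{x}^{(t)}-\bm{z}^{(t)}$ already lies in the row space of $M(\bm{z}^{(t)})^T$.

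Your two-stage plan (prove confinement, then use compactness and continuity of $V_dV_d^T\nabla p$ on $\overline{R_d\oplus\rho}$ to force accumulation points into $R_d$) is a valid and more elementary finish once confinement is secured, and it is correct that both your route and the paper's error-bound route implicitly require the sequence to remain in $R_d\oplus r_1$ — the paper's phrase ``the result follows from (b)'' quietly presupposes this. Flagging that gap is genuinely useful. However, be careful about the invariance mechanism you propose: a one-step contraction $r^{(t+1)}\le(1-c\eta)r^{(t)}+(\text{h.o.})$ is essentially the R-linear bound of Theorem~\ref{SCGA_LC}(b), which in the paper requires the extra condition (A4); Proposition~\ref{SCGA_conv}(c) assumes only (A1–3), so you would need to check that the residual $U_d^\perp(\bm{x}^{(t)})(\bm{y}^{(t)}-\bm{x}^{(t)})$ taken against the \emph{projection} $\bm{y}^{(t)}$ (rather than the limit point $\bm{x}^*$ as in (A4)) can be controlled from (A3) alone — plausible, since $\bm{y}^{(t)}-\bm{x}^{(t)}$ is normal to $R_d$ at $\bm{y}^{(t)}$ and the normal space is close to $\mathrm{span}\,V_d(\bm{y}^{(t)})$ under (A3), but not spelled out in either your sketch or the paper.
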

	
	The proof of Proposition~\ref{SCGA_conv} can be found in Appendix~\ref{App:Proofs_Sec3}. We make two comments on the choice of the convergence radius $r_1$ in (c) of Proposition~\ref{SCGA_conv}. The first two quantities in the upper bound of $r_1$ ensure that $r_1\leq \mathtt{reach}(R_d)$ and therefore, the projection of $\bm{x}^{(t)} \in R_d\oplus r_1$ onto $R_d$ is well-defined. The last quantity in the upper bound of $r_1$ is critical to guarantee that the distances $\left\{d_E(\bm{x}^{(t)}, R_d) \right\}_{t=0}^{\infty}$ from the SCGA sequence $\big\{\bm{x}^{(t)}\big\}_{t=0}^{\infty}$ to the ridge $R_d$ can be controlled by the norms $\norm{V_d(\bm{x}^{(t)})^T \nabla p(\bm{x}^{(t)})}_2$ of order-$d$ principal gradients for $t=0,1,...$.
	
	\begin{corollary}[Convergence of the SCMS Algorithm]
		\label{SCMS_conv}
		When the fixed sample size $n$ is sufficiently large and the fixed bandwidth $h$ is chosen to be sufficiently small, the following properties hold for the SCMS sequence $\big\{\hat{\bm{x}}^{(t)} \big\}_{t=0}^{\infty}$ with high probability under conditions (A1-3) and (E1-2).
		\begin{enumerate}[label=(\alph*)]
			\item The Euclidean KDE sequence $\left\{\hat{p}_n(\hat{\bm{x}}^{(t)}) \right\}$ is non-decreasing and thus converges.
			
			\item $\lim_{t\to\infty} \norm{\hat{V}_d(\hat{\bm{x}}^{(t)})^T \nabla \hat{p}_n(\hat{\bm{x}}^{(t)})}_2 = \lim_{t\to\infty} \norm{\hat{\bm{x}}^{(t+1)} -\hat{\bm{x}}^{(t)}}_2=0$.
			
			\item $\lim_{t\to\infty} d_E(\hat{\bm{x}}^{(t)},\hat{R}_d)=0$ whenever $\hat{\bm{x}}^{(0)} \in \hat{R}_d \oplus r_1$ with the convergence radius $r_1>0$ defined in (c) of Proposition~\ref{SCGA_conv}.
		\end{enumerate}
	\end{corollary}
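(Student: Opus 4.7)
The plan is to reduce Corollary~\ref{SCMS_conv} to Proposition~\ref{SCGA_conv} by replacing the objective function $p$ with $\hat{p}_n$ and treating the SCMS iteration \eqref{SCGA_update_sample} as a sample-based SCGA iteration with the adaptive step size $\eta_{n,h}^{(t)}$. The key enabler is the uniform consistency bound \eqref{unif_bound}: under (E1--2), $\norm{\hat{p}_n - p}_{\infty,3}^{*} = O(h^2) + O_P\!\left(\sqrt{|\log h|/(nh^{D+6})}\right)$, which tends to zero in probability under the usual bandwidth scaling. Hence, with probability tending to one, $\hat{p}_n$ satisfies (A1--3) on a slightly shrunk neighborhood of $R_d$ with slightly degraded constants (e.g.\ $\beta_0$ replaced by $\beta_0/2$ and a slightly smaller reach). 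Combined with Lemma~\ref{stability_ridge}, we also have $\Haus(R_d,\hat{R}_d) \to 0$ in probability, so that $\hat{R}_d\oplus r_1 \subset R_d\oplus (r_1 + o_P(1))$ and the estimated ridge inherits the same geometric regularity as $R_d$.

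Next I would dispatch the adaptive step size. By Lemma~\ref{limit_step_MS} and Remark~\ref{SCMS_stepsize_remark}, once $n$ is large and $h$ is small, there exist (random but deterministic given the sample) constants $0 < \underline{\eta} \leq \overline{\eta}$, independent of the iteration index $t$, such that $\underline{\eta} \leq \eta_{n,h}^{(t)} \leq \overline{\eta}$ whenever $\hat{\bm{x}}^{(t)} \in \hat{R}_d \oplus \rho$. Because $\eta_{n,h}^{(t)} = O(h^2)$, I can choose $h$ small enough so that $\overline{\eta} < \frac{2}{D\,\|\hat{p}_n\|_\infty^{(2)}}$, which is the admissibility threshold required by Proposition~\ref{SCGA_conv}. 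This pointwise-in-$t$ uniform control is what lets the fixed-step-size arguments carry over.

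With these ingredients in place I would then run the three parts of Proposition~\ref{SCGA_conv} verbatim with $\hat{p}_n$ replacing $p$ and $\eta_{n,h}^{(t)}$ replacing $\eta$. For (a), a second-order Taylor expansion of $\hat{p}_n$ together with the admissibility $\eta_{n,h}^{(t)} < \frac{2}{D\,\|\hat{p}_n\|_\infty^{(2)}}$ gives a per-step increment of the form $\hat{p}_n(\hat{\bm{x}}^{(t+1)}) - \hat{p}_n(\hat{\bm{x}}^{(t)}) \geq c_t \, \norm{\hat{V}_d(\hat{\bm{x}}^{(t)})^T \nabla \hat{p}_n(\hat{\bm{x}}^{(t)})}_2^2$ with $c_t>0$ uniformly bounded below; combined with boundedness of $\hat{p}_n$ (from the kernel integrability), the increments are summable and $\hat{p}_n(\hat{\bm{x}}^{(t)})$ converges. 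For (b), summability of the increments together with the lower bound $\underline{\eta}$ on the step size forces $\norm{\hat{V}_d(\hat{\bm{x}}^{(t)})^T \nabla \hat{p}_n(\hat{\bm{x}}^{(t)})}_2 \to 0$, and the upper bound $\overline{\eta}$ then yields $\norm{\hat{\bm{x}}^{(t+1)} - \hat{\bm{x}}^{(t)}}_2 \to 0$. For (c), once $\hat{\bm{x}}^{(t)} \in \hat{R}_d \oplus r_1$, the path-smoothness condition (A3) applied to $\hat{p}_n$ gives a quadratic lower bound relating $d_E(\hat{\bm{x}}^{(t)}, \hat{R}_d)$ to $\norm{\hat{V}_d(\hat{\bm{x}}^{(t)})^T \nabla \hat{p}_n(\hat{\bm{x}}^{(t)})}_2$, exactly as in Proposition~\ref{SCGA_conv}(c); passing to the limit in (b) yields the claim.

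The main obstacle I expect is the \emph{invariance} step: showing that the SCMS iterates, when initialized in $\hat{R}_d \oplus r_1$, never leave this neighborhood (so that the step-size and (A1--3) controls remain in force throughout). In the population case this is built into the choice of $r_1$ through the constant $A_2$ and the bound on $A_4(p)$ in Proposition~\ref{SCGA_conv}(c); in the sample case the same argument works provided the relevant bounds on $\hat{p}_n$ and its derivatives stay within $O(1)$ perturbations of the bounds on $p$, which is exactly what \eqref{unif_bound} provides with probability tending to one. The delicate bookkeeping is to choose $h$ small enough that (i) the residual gradient of $\hat{p}_n$ produces a one-step contraction of $d_E(\cdot, \hat{R}_d)$ on $\hat{R}_d\oplus r_1$, and (ii) the adaptive step size stays inside the admissible window uniformly in $t$; both can be arranged simultaneously because the two requirements only demand $h$ small and $n$ large, without any competing lower bound on $h$.
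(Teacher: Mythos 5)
Your proposal is correct and mirrors the paper's argument exactly: the paper also reduces Corollary~\ref{SCMS_conv} to Proposition~\ref{SCGA_conv} by invoking the uniform bounds \eqref{unif_bound} together with Lemma~\ref{stability_ridge} to transfer (A1--3) to $\hat{p}_n$ and $\hat{R}_d$ with high probability, and then uses Lemma~\ref{limit_step_MS} and Remark~\ref{SCMS_stepsize_remark} to place the adaptive step size $\eta_{n,h}^{(t)}$ below the admissibility threshold while keeping it uniformly bounded away from zero over $t$. Your explicit remark on the invariance of the iterates in $\hat{R}_d \oplus r_1$ is a welcome piece of bookkeeping that the paper leaves implicit, but it does not change the underlying reduction.
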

	
	Corollary~\ref{SCMS_conv} is the sample-based version of Proposition~\ref{SCGA_conv}. On the one hand, when $\frac{nh^{D+6}}{|\log h|}$ is sufficiently large and $h$ is small enough, the estimated ridge $\hat{R}_d$ also satisfies conditions (A1-3) with high probability; see Lemma~\ref{stability_ridge} and the uniform bounds \eqref{unif_bound} of $\hat{p}_n$. On the other hand, the adaptive step size $\eta_{n,h}^{(t)}$ of the SCMS algorithm can be always smaller than the threshold $\frac{2}{D\norm{p}_{\infty}^{(2)}}$ when the sample size $n$ is sufficiently large and $h$ is small; see Remark~\ref{SCMS_stepsize_remark}. Consequently, our arguments in Proposition~\ref{SCGA_conv} can be applied to establish the (local) convergence of the SCMS sequence here. In addition, we point out that Proposition 2 in \cite{SCMS_conv2013} also proved the results (a-b) of Corollary~\ref{SCMS_conv} under condition (E1) and the convexity assumption on the kernel profile $k$. The difference is that our arguments hold when $n$ is large and $h$ is small while the extra convexity assumption in \cite{SCMS_conv2013} enables the authors to prove the results (a-b) universally for any choice of the bandwidth $h$.
	
	By Proposition~\ref{SCGA_conv} and Corollary~\ref{SCMS_conv}, it is now reasonable to denote the limiting points of the population and sample-based SCGA sequences $\big\{\bm{x}^{(t)}\big\}_{t=0}^{\infty}$ and $\big\{\hat{\bm{x}}^{(t)}\big\}_{t=0}^{\infty}$ by $\bm{x}^*\in R_d$ and $\hat{\bm{x}}^* \in \hat{R}_d$, respectively.
Before stating our main linear convergence results, we introduce the concepts of Q-linear and R-linear convergence from optimization literature; see, \emph{e.g.}, Appendix A2 in \cite{Nocedal2006Numerical}.

\begin{definition}[Linear Rate of Convergence]
	\label{Q_Linear_conv}
	We say that the convergence of the sequence $\big\{\bm{x}^{(t)}\big\}_{t=0}^{\infty}$ to $\bm{x}^*$ is \emph{Q-linear} if there exists a constant $\Upsilon \in (0,1)$ such that 
	$$\frac{\norm{\bm{x}^{(t+1)}-\bm{x}^*}_2}{\norm{\bm{x}^{(t)}-\bm{x}^*}_2} \leq \Upsilon \quad \text{ for all }t \text{ sufficiently large}.$$
	We say that the convergence is \emph{R-linear} if there is a sequence of nonnegative scalars $\left\{\epsilon_t\right\}_{t=0}^{\infty}$ such that
	$$\norm{\bm{x}^{(t)}-\bm{x}^*}_2 \leq \epsilon_t \text{ for all }t, \text{ and } \left\{\epsilon_t\right\}_{t=0}^{\infty} \text{ converges Q-linearly to zero}.$$
\end{definition}

	The linear convergence of the SCGA sequence $\big\{\bm{x}^{(t)} \big\}_{t=0}^{\infty}$ will be established under the following local condition.
	\begin{itemize}
		\item {\bf (A4)} (\emph{Quadratic Behaviors of Residual Vectors}) We assume that the SCGA sequence $\big\{\bm{x}^{(t)}\big\}_{t=0}^{\infty}$ with step size $0<\eta \leq \min\left\{\frac{4}{\beta_0},\frac{1}{D\norm{p}_{\infty}^{(2)}} \right\}$ and $\bm{x}^*\in R_d$ as its limiting point satisfies
		\begin{align*}
		\nabla p(\bm{x}^{(t)})^T U_d^{\perp}(\bm{x}^{(t)}) (\bm{x}^*-\bm{x}^{(t)}) &\leq \frac{\beta_0}{4} \norm{\bm{x}^*-\bm{x}^{(t)}}_2^2,\\
		\norm{U_d^{\perp}(\bm{x}^{(t)}) (\bm{x}^*-\bm{x}^{(t)})}_2 &\leq \beta_2\norm{\bm{x}^*-\bm{x}^{(t)}}_2^2
		\end{align*}
		for some constant $\beta_2>0$, where $\beta_0>0$ is the constant defined in condition (A2).
	\end{itemize}
	
	\begin{figure}
		\centering
		\includegraphics[width=0.7\linewidth]{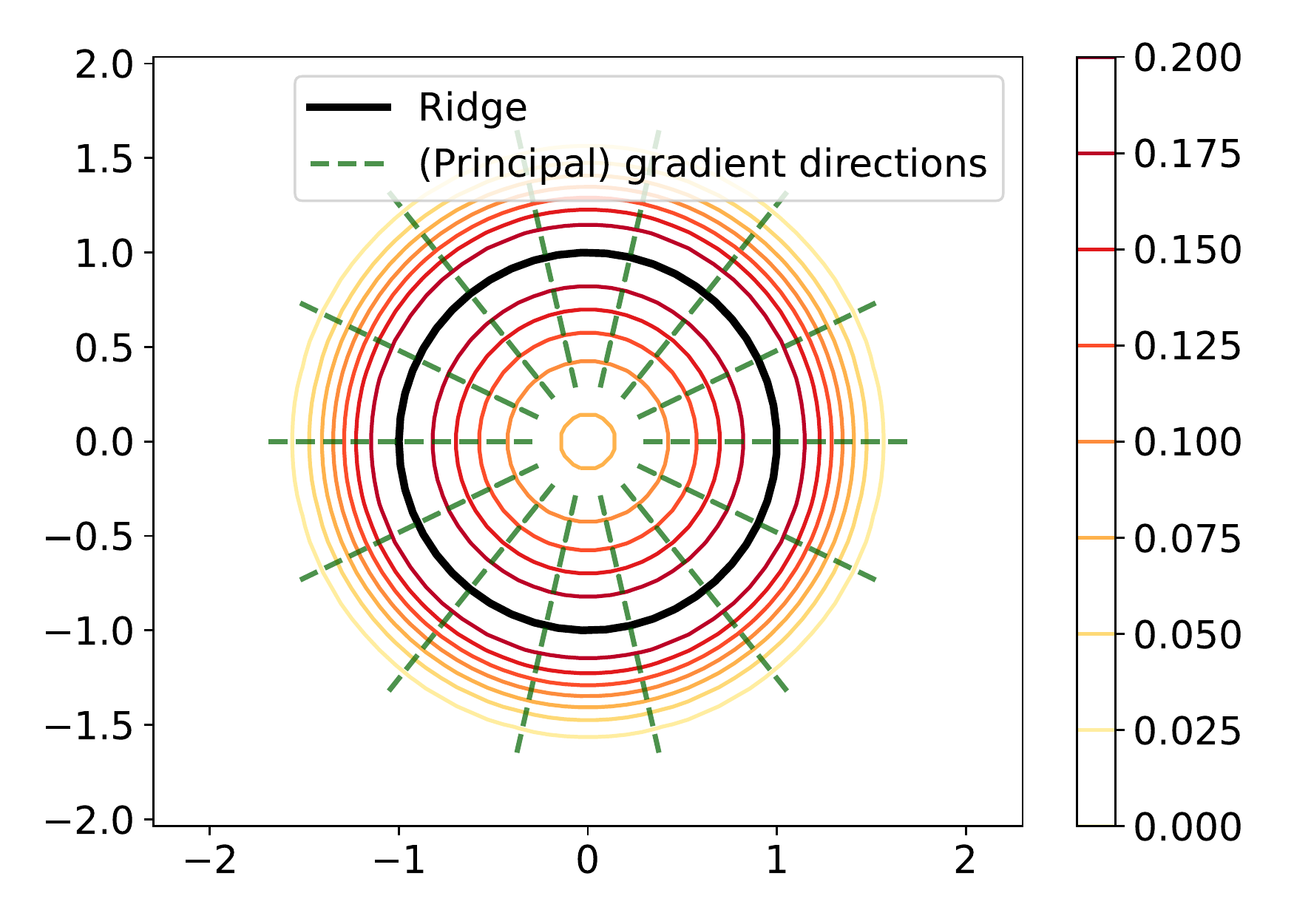}
		\caption{Contour lines of the density function \eqref{den_example} and its principal gradient flows.}
		\label{fig:den_example}
	\end{figure}
	
	Condition (A4) imposes a direct assumption on the SCGA sequence $\big\{\bm{x}^{(t)}\big\}_{t=0}^{\infty}$, under which the residual vector $U_d^{\perp}(\bm{x}^{(t)}) (\bm{x}^*-\bm{x}^{(t)})$ and its inner product with the residual gradient $U^{\perp}(\bm{x}^{(t)}) \nabla p(\bm{x}^{(t)})$ are upper bounded by a quadratic term $O\left(\norm{\bm{x}^*-\bm{x}^{(t)}}_2^2\right)$. This condition is imposed to guarantee that $p$ is ``subspace constrained strongly concave'' around $R_d$; see also Remark~\ref{SC_remark}. Our proof of Theorem~\ref{SCGA_LC} suggests that the residual vector $U_d^{\perp}(\bm{x}^{(t)}) (\bm{x}^*-\bm{x}^{(t)})$ is only required to be smaller than the first-order term $o\left(\norm{\bm{x}^{(t)}-\bm{x}^*}_2 \right)$. For the simplicity, we require it to be quadratic. When condition (A4) fails to hold, the associated SCGA sequence can only converge sublinearly to $R_d$. Therefore, it is an essential element in the linear convergence of the SCGA algorithm, and we discuss some potentially weaker assumptions that implicate condition (A4) in Appendix~\ref{App:dis_A4}. Intuitively, the SCGA path converges to $R_d$ following the direction of principal gradient $V_d(\bm{x}^{(t)}) V_d(\bm{x}^{(t)})^T \nabla p(\bm{x}^{(t)})$. To further gain more insights into the correctness of condition (A4), we consider a special density function 
	\begin{equation}
	\label{den_example}
	p(u,v) = C_{p,2}\cdot \exp\left[-\left(u^2+v^2-1 \right)^2 \right] \quad \text{ with } \quad C_{p,2}=\pi\left(\frac{\sqrt{\pi}}{2} + \int_0^1 e^{-t^2} dt \right)
	\end{equation} 
	on $\mathbb{R}^2$, whose one-dimensional ridge is $R_1=\left\{(u,v)\in \mathbb{R}^2: u^2+v^2=1 \right\}$ by the definition \eqref{ridge}. Some careful calculations suggest that its principal gradient $G_1(u,v)$ points towards the ridge $R_1$ in the direction $(u,v)$ when $\frac{2-\sqrt{2}}{2}<u^2+v^2<1$ and in the direction $(-u,-v)$ when $1<u^2+v^2<\frac{2+\sqrt{2}}{2}$; see Figure~\ref{fig:den_example} for a graphical illustration. Furthermore, the smallest eigenvalue of $\nabla\nabla p(u,v)$ is negative whenever $u^2+v^2>\frac{1}{2}$. Hence, the residual gradient $U_1^{\perp}(u,v) \nabla p(u,v)$ is perpendicular to the SCGA direction, and condition (A4) naturally holds. 

We now present our linear convergence results for the population and sample-based SCGA algorithms.

\begin{theorem}[Linear Convergence of the SCGA Algorithm]
	\label{SCGA_LC}
	Assume conditions (A1-4) throughout the theorem.
	\begin{enumerate}[label=(\alph*)]
		\item \textbf{Q-Linear convergence of $\norm{\bm{x}^{(t)}-\bm{x}^*}_2$}: Consider a convergence radius $r_2>0$ satisfying
		$$0<r_2< \min\left\{\frac{\rho}{2}, \frac{\beta_1^2}{A_2\left(\norm{p}_{\infty}^{(3)} + \norm{p}_{\infty}^{(4)} \right)}, \frac{\beta_1}{A_4(p)}, \frac{3\beta_0}{4D\left(6\norm{p}_{\infty}^{(2)}\beta_2^2 \rho + D^{\frac{1}{2}}\norm{p}_{\infty}^{(3)} \right)} \right\},$$
		where $A_2 >0$ is the constant defined in (h) of Lemma~\ref{normal_reach_prop} and $A_4(p) >0$ is a quantity defined in (c) of Proposition~\ref{SCGA_conv} that depends on both the dimension $D$ and the functional norm $\norm{p}_{\infty,4}^*$ up to the fourth-order derivative of $p$. Whenever $0<\eta \leq \min\left\{\frac{4}{\beta_0}, \frac{1}{D\norm{p}_{\infty}^{(2)}} \right\}$ and the initial point $\bm{x}^{(0)} \in \text{Ball}_D(\bm{x}^*, r_2)$ with $\bm{x}^* \in R_d$, we have that
		$$\norm{\bm{x}^{(t)}-\bm{x}^*}_2 \leq \Upsilon^t \norm{\bm{x}^{(0)}-\bm{x}^*}_2 \quad \text{ with } \quad \Upsilon = \sqrt{1-\frac{\beta_0\eta}{4}}.$$
		\item \textbf{R-Linear convergence of $d_E(\bm{x}^{(t)}, R_d)$}: Under the same radius $r_2>0$ in (a), we have that whenever $0<\eta \leq \min\left\{\frac{4}{\beta_0},\frac{1}{D\norm{p}_{\infty}^{(2)}}\right\}$ and the initial point $\bm{x}^{(0)} \in \text{Ball}_D(\bm{x}^*, r_2)$ with $\bm{x}^* \in R_d$,
		$$d_E(\bm{x}^{(t)}, R_d) \leq \Upsilon^t \norm{\bm{x}^{(0)}-\bm{x}^*}_2 \quad \text{ with } \quad \Upsilon=\sqrt{1-\frac{\beta_0 \eta}{4}}.$$
	\end{enumerate}
	We further assume conditions (E1-2) in the rest of statements. If $h \to 0$ and $\frac{nh^{D+4}}{|\log h|} \to \infty$,
	\begin{enumerate}[label=(c)]
		\item \textbf{Q-Linear convergence of $\norm{\hat{\bm{x}}^{(t)}-\bm{x}^*}_2$}: under the same radius $r_2>0$ and $\Upsilon=\sqrt{1-\frac{\beta_0 \eta}{4}}$ in (a), we have that
		$$\norm{\hat{\bm{x}}^{(t)} -\bm{x}^*}_2 \leq \Upsilon^t \norm{\hat{\bm{x}}^{(0)} -\bm{x}^*}_2 + O(h^2) + O_P\left(\sqrt{\frac{|\log h|}{nh^{D+4}}}\right)$$
		with probability tending to 1 whenever $0<\eta \leq \min\left\{\frac{4}{\beta_0}, \frac{1}{D\norm{p}_{\infty}^{(2)}} \right\}$ and the initial point $\hat{\bm{x}}^{(0)} \in \text{Ball}_D(\bm{x}^*, r_2)$ with $\bm{x}^* \in R_d$.
	\end{enumerate}
	\begin{enumerate}[label=(d)]
		\item \textbf{R-Linear convergence of $d_E(\hat{\bm{x}}^{(t)}, R_d)$}: under the same radius $r_2>0$ and $\Upsilon=\sqrt{1-\frac{\beta_0 \eta}{4}}$ in (a), we have that
		$$d_E(\hat{\bm{x}}^{(t)},R_d) \leq \Upsilon^t \norm{\hat{\bm{x}}^{(0)} -\bm{x}^*}_2 + O(h^2) + O_P\left(\sqrt{\frac{|\log h|}{nh^{D+4}}} \right)$$
		with probability tending to 1 whenever $0< \eta \leq \min\left\{\frac{4}{\beta_0}, \frac{1}{D\norm{p}_{\infty}^{(2)}} \right\}$ and the initial point $\hat{\bm{x}}^{(0)} \in \text{Ball}_D(\bm{x}^*, r_2)$ with $\bm{x}^* \in R_d$.
	\end{enumerate}
\end{theorem}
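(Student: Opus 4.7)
The heart of the argument is part (a); parts (b)--(d) will follow by a triangle inequality and a standard perturbed-contraction argument. Set $\bm{\Delta}_t = \bm{x}^* - \bm{x}^{(t)}$ and expand the iterate \eqref{SCGA_update_pop}:
\begin{equation*}
\|\bm{\Delta}_{t+1}\|_2^2 = \|\bm{\Delta}_t\|_2^2 - 2\eta\, \bm{\Delta}_t^T U_d(\bm{x}^{(t)})\nabla p(\bm{x}^{(t)}) + \eta^2\|U_d(\bm{x}^{(t)})\nabla p(\bm{x}^{(t)})\|_2^2.
\end{equation*}
The plan is to lower bound the cross term by roughly $(\beta_0/2)\|\bm{\Delta}_t\|_2^2$ (minus genuinely higher-order residuals) and upper bound the squared-gradient term by $D\|p\|_\infty^{(2)}\|\bm{\Delta}_t\|_2$, so that the restriction $\eta \le 1/(D\|p\|_\infty^{(2)})$ forces the $\eta^2$ term to be dominated, leaving a net multiplicative factor of $(1-\beta_0\eta/4)$. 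Taking square roots yields $\Upsilon=\sqrt{1-\beta_0\eta/4}$. Once (a) is established, part (b) is immediate since $\bm{x}^*\in R_d$ gives $d_E(\bm{x}^{(t)},R_d)\le\|\bm{\Delta}_t\|_2$.

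The cross-term lower bound is assembled from three complementary ingredients. First, Taylor expanding $\nabla p(\bm{x}^{(t)}) = \nabla p(\bm{x}^*) - \int_0^1 \nabla\nabla p(\bm{x}^{(t)} + s\bm{\Delta}_t)\bm{\Delta}_t\,ds$ and using the eigenblock identity $U_d(\bm{x}^{(t)})\nabla\nabla p(\bm{x}^{(t)}) = V_d(\bm{x}^{(t)})\Lambda_d(\bm{x}^{(t)}) V_d(\bm{x}^{(t)})^T$, the eigengap condition (A2) yields $-\bm{\Delta}_t^T U_d(\bm{x}^{(t)})\nabla\nabla p(\bm{x}^{(t)})\bm{\Delta}_t \ge \beta_0\|U_d(\bm{x}^{(t)})\bm{\Delta}_t\|_2^2$; the Lipschitz smoothness of $\nabla\nabla p$ from (A1) absorbs the integration discrepancy into an $O(\|\bm{\Delta}_t\|_2^3)$ remainder. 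Second, (A4) provides the residual-gradient inequality $\bm{\Delta}_t^T U_d^{\perp}(\bm{x}^{(t)})\nabla p(\bm{x}^{(t)}) \le (\beta_0/4)\|\bm{\Delta}_t\|_2^2$, while the companion residual-length bound $\|U_d^{\perp}(\bm{x}^{(t)})\bm{\Delta}_t\|_2 \le \beta_2\|\bm{\Delta}_t\|_2^2$ upgrades $\|U_d(\bm{x}^{(t)})\bm{\Delta}_t\|_2^2 = \|\bm{\Delta}_t\|_2^2 - \|U_d^{\perp}(\bm{x}^{(t)})\bm{\Delta}_t\|_2^2$ to $\|\bm{\Delta}_t\|_2^2$ modulo a quartic error. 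Third, the boundary term $U_d(\bm{x}^{(t)})\nabla p(\bm{x}^*)$ arising from the Taylor expansion is nonzero only through the projector mismatch $U_d(\bm{x}^{(t)})-U_d(\bm{x}^*)$ (since $U_d(\bm{x}^*)\nabla p(\bm{x}^*)=\bm{0}$): a Davis--Kahan estimate driven by Hessian smoothness gives $\|U_d(\bm{x}^{(t)})-U_d(\bm{x}^*)\|_2 = O(\|\bm{\Delta}_t\|_2/\beta_0)$, and (A3) at $\bm{x}^*\in R_d$ forces $\|\nabla p(\bm{x}^*)\|_2$ to be small enough relative to $\beta_0$ for this contribution to be absorbed. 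Choosing $\|\bm{\Delta}_t\|_2 < r_2$---in particular using the fourth term $\frac{3\beta_0}{4D(6\|p\|_\infty^{(2)}\beta_2^2\rho + D^{1/2}\|p\|_\infty^{(3)})}$ to dominate the cubic and quartic residuals against $(\beta_0/4)\|\bm{\Delta}_t\|_2^2$---closes the induction $\|\bm{\Delta}_{t+1}\|_2 \le \Upsilon\|\bm{\Delta}_t\|_2 < r_2$ and delivers (a).

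For parts (c) and (d), rewrite the sample iteration as a perturbed population iteration $\hat{\bm{x}}^{(t+1)} = T(\hat{\bm{x}}^{(t)}) + \bm{\xi}_t$ with $T(\bm{x}) := \bm{x} + \eta U_d(\bm{x})\nabla p(\bm{x})$ and noise
\[
\bm{\xi}_t = \eta\bigl[\hat{U}_d(\hat{\bm{x}}^{(t)})\nabla \hat{p}_n(\hat{\bm{x}}^{(t)}) - U_d(\hat{\bm{x}}^{(t)})\nabla p(\hat{\bm{x}}^{(t)})\bigr].
\]
The uniform consistency \eqref{unif_bound} applied to $\nabla\hat{p}_n$, combined with a Davis--Kahan bound on the Hessian perturbation $\nabla\nabla\hat{p}_n - \nabla\nabla p$ to control $\hat{U}_d - U_d$, yields $\sup_t\|\bm{\xi}_t\|_2 \le c\bigl[O(h^2) + O_P\bigl(\sqrt{|\log h|/(nh^{D+4})}\bigr)\bigr]$ with probability tending to $1$, uniformly in $\hat{\bm{x}}^{(t)}\in \text{Ball}_D(\bm{x}^*,r_2)$. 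Since part (a) makes $T$ a $\Upsilon$-contraction toward $\bm{x}^*$ on this ball, a telescoping induction gives
\[
\|\hat{\bm{x}}^{(t)} - \bm{x}^*\|_2 \le \Upsilon^t \|\hat{\bm{x}}^{(0)} - \bm{x}^*\|_2 + \frac{1}{1-\Upsilon}\sup_{s\le t-1}\|\bm{\xi}_s\|_2,
\]
which delivers (c); (d) follows from (c) and $d_E(\hat{\bm{x}}^{(t)}, R_d) \le \|\hat{\bm{x}}^{(t)}-\bm{x}^*\|_2$.

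\emph{Main obstacle.} The delicate step is the third ingredient above: quantifying the boundary term $U_d(\bm{x}^{(t)})\nabla p(\bm{x}^*)$. Because $\|\nabla p(\bm{x}^*)\|_2$ is generically $O(1)$, Hessian smoothness alone is insufficient; the argument must couple the $O(\|\bm{\Delta}_t\|_2/\beta_0)$ Davis--Kahan perturbation of $U_d$ with the a priori bound $\|\nabla p(\bm{x}^*)\|_2 \lesssim \beta_0(\beta_0-\beta_1)/(d\,D^{3/2}\|p\|_\infty^{(3)})$ extracted from (A3) at ridge points. Orchestrating this cascade so that every perturbation residual closes back into $(\beta_0\eta/4)\|\bm{\Delta}_t\|_2^2$ is precisely what pins down the four-term upper bound on $r_2$ stated in the theorem.
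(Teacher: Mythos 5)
Your expansion of $\|\bm{\Delta}_{t+1}\|_2^2$ and the handling of parts (b)--(d) (triangle inequality, Davis--Kahan on $\hat{G}_d - G_d$, perturbed contraction with telescoping) match the paper's argument. But the proposed route for part (a) has a genuine gap, and it is exactly the ``boundary term'' you flag as the main obstacle.

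You Taylor-expand $\nabla p(\bm{x}^{(t)})$ around $\bm{x}^*$ to lower bound the cross term $\bm{\Delta}_t^T U_d(\bm{x}^{(t)})\nabla p(\bm{x}^{(t)})$, which produces $\bm{\Delta}_t^T U_d(\bm{x}^{(t)})\nabla p(\bm{x}^*) = \bm{\Delta}_t^T\bigl[U_d(\bm{x}^{(t)})-U_d(\bm{x}^*)\bigr]\nabla p(\bm{x}^*)$. You then bound this via Davis--Kahan (which gives $\|U_d(\bm{x}^{(t)})-U_d(\bm{x}^*)\|_2 \lesssim D^{3/2}\|p\|_\infty^{(3)}\|\bm{\Delta}_t\|_2/\beta_0$) combined with the (A3) bound $\|\nabla p(\bm{x}^*)\|_2 \lesssim \beta_0(\beta_0-\beta_1)/(d\,D^{3/2}\|\nabla^3 p(\bm{x}^*)\|_{\max})$. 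When you multiply, the $D^{3/2}\|p\|_\infty^{(3)}$ and $\beta_0$ factors cancel against each other and you are left with a term of order $(\beta_0-\beta_1)\|\bm{\Delta}_t\|_2^2/d$. This is a \emph{quadratic} residual, not a cubic one. Shrinking $r_2$ only helps against cubic and higher-order residuals; it does nothing to a quadratic one with a fixed constant. And since (A1--4) permit any $\beta_1\in(0,\beta_0)$, the constant $(\beta_0-\beta_1)/d$ can be comparable to, or larger than, the $\beta_0/4$ you are trying to retain. The cascade therefore does not close, and the induction $\|\bm{\Delta}_{t+1}\|_2 \le \Upsilon\|\bm{\Delta}_t\|_2$ with $\Upsilon = \sqrt{1-\beta_0\eta/4}$ cannot be established by this route. (Equivalently: this direct route sees only the singular-value floor $\beta_1$ of the Jacobian $M(\bm{x}^*)$ of the projected gradient, which is strictly weaker than the Hessian eigengap $\beta_0$, so one would at best prove a rate $\sqrt{1-c\beta_1\eta}$.)

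The paper sidesteps this entirely by \emph{never Taylor-expanding around $\bm{x}^*$}. It expands $p(\bm{x}^*) - p(\bm{x}^{(t)})$ to second order around $\bm{x}^{(t)}$, so $\nabla p(\bm{x}^*)$ never appears. Splitting the gradient by $\bm{I}_D = V_dV_d^T + U_d^\perp$ and the Hessian quadratic form by eigenblocks, and invoking (A2) and (A4), yields the ``subspace constrained strong concavity''
\[
p(\bm{x}^*)-p(\bm{x}^{(t)}) \le \nabla p(\bm{x}^{(t)})^T V_d V_d^T(\bm{x}^*-\bm{x}^{(t)}) - \tfrac{\beta_0}{4}\|\bm{\Delta}_t\|_2^2 + O(\|\bm{\Delta}_t\|_2^3),
\]
with only a cubic residual. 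The $\eta^2\|V_d^T\nabla p\|_2^2$ term is then bounded by $2D\|p\|_\infty^{(2)}\bigl[p(\bm{x}^*)-p(\bm{x}^{(t)})\bigr]$ via the descent lemma applied to a unit step of size $1/(D\|p\|_\infty^{(2)})$, and the function gap $p(\bm{x}^*)-p(\bm{x}^{(t)})\ge 0$ (ascending property from Proposition~\ref{SCGA_conv}) lets the gap-proportional terms cancel exactly under $\eta\le 1/(D\|p\|_\infty^{(2)})$. That function-gap bookkeeping, which your sketch omits, is what delivers the stated step-size range and the clean rate $\sqrt{1-\beta_0\eta/4}$ without any $\beta_1$-dependent quadratic leakage.
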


The detailed proof of Theorem~\ref{SCGA_LC} can be found in Appendix~\ref{App:Proofs_Sec3}. Note that, as in (c) of Proposition~\ref{SCGA_conv}, we elucidate a threshold value for the convergence radius $r_2>0$ in (a), under which the population SCGA algorithm converges linearly to $R_d$. The first three quantities in the threshold value are directly adopted from the upper bound of the convergence radius $r_1$ in (c) of Proposition~\ref{SCGA_conv}, while the last term controls the ``subspace constrained strongly concavity'' \eqref{key_inequality} of $p$ within $R_d\oplus r_2$.

\begin{remark}
	\label{SC_remark}
	Notice that the standard strong concavity assumption on the objective function (or density function) $p$ is not sufficient to establish the linear convergence of the population SCGA algorithm \eqref{SCGA_update_pop}. This is because, under the (quasi-)strong concavity assumption \citep{necoara2019linear}, the objective function $p$ would satisfy
	\begin{equation}
	\label{SC}
	p(\bm{x}^*) -p(\bm{y}) \leq \nabla p(\bm{y})^T (\bm{x}^*-\bm{y}) - \frac{A_6}{2}\norm{\bm{x}^*-\bm{y}}_2^2
	\end{equation}
	for some constant $A_6>0$, and those standard proofs of the linear convergence of gradient ascent methods rely on this inequality; see Section 3.4 in \cite{SB2015}. However, as indicated in our proof of Theorem~\ref{SCGA_LC}, the linear convergence of the SCGA algorithm requires the following inequality instead:
	\begin{equation}
	\label{key_inequality}
	p(\bm{x}^*) -p(\bm{y}) \leq \nabla p(\bm{y})^T V_d(\bm{y}) V_d(\bm{y})^T (\bm{x}^*-\bm{y}) - \frac{A_7}{2}\norm{\bm{x}^*-\bm{y}}_2^2 + o\left(\norm{\bm{x}^*-\bm{y}}_2^2\right)
	\end{equation}
	for some constant $A_7>0$, where $\bm{y}$ is generally chosen to be $\bm{x}^{(t)}$. We call the function $p$ satisfying \eqref{key_inequality} to be ``subspace constrained strongly concave''. Since 
	$$\nabla p(\bm{y})^T (\bm{x}^*-\bm{y}) = \nabla p(\bm{y})^T V_d(\bm{y}) V_d(\bm{y})^T (\bm{x}^*-\bm{y}) + \nabla p(\bm{y})^T U_d^{\perp}(\bm{y}) (\bm{x}^*-\bm{y}),$$ 
	the strong concavity assumption \eqref{SC} will not imply the key inequality \eqref{key_inequality} for the linear convergence of the population SCGA algorithm unless the residual gradient term $\nabla p(\bm{y})^T U_d^{\perp}(\bm{y}) (\bm{x}^*-\bm{y})$ can be upper bounded by the second-order error term $O\left(\norm{\bm{x}^*-\bm{y}}_2^2 \right)$. The imposed eigengap condition (A2) as well as condition (A4) with its related discussion in Appendix~\ref{App:dis_A4} fill in this gap, ensuring that such a quadratic upper bound holds on the residual gradients along the SCGA sequence.
\end{remark}

	\begin{corollary}[Linear Convergence of the SCMS Algorithm]
		\label{SCMS_LC}
		Assume conditions (A1-4) and (E1-2). When the fixed sample size $n$ is sufficiently large and the bandwidth $h$ is chosen to be sufficiently small, there exists a convergence radius $r_3 \in (0, r_2)$ such that the SCMS sequence $\big\{\hat{\bm{x}}^{(t)}\big\}_{t=0}^{\infty}$ satisfies the following property with high probability:
		\begin{align*}
		&d_E(\hat{\bm{x}}^{(t)}, \hat{R}_d) \leq \norm{\hat{\bm{x}}^{(t)} -\hat{\bm{x}}^*}_2 \leq \Upsilon_{n,h}^t \norm{\hat{\bm{x}}^{(0)} -\hat{\bm{x}}^*}_2 \\
		&\text{ with }\quad \Upsilon_{n,h}=\sqrt{1-\frac{\beta_0\tilde{\eta}_{n,h}}{4}} \quad \text{ and }\quad \tilde{\eta}_{n,h} = \inf_t \eta_{n,h}^{(t)}
		\end{align*}
		whenever $0<\sup_t\eta_{n,h}^{(t)}\leq \min\left\{\frac{4}{\beta_0}, \frac{1}{D\norm{p}_{\infty}^{(2)}} \right\}$ and the initial point $\hat{\bm{x}}^{(0)} \in \hat{R}_d\oplus r_3$.
	\end{corollary}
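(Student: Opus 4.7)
The plan is to reduce Corollary~\ref{SCMS_LC} to the sample-based part of Theorem~\ref{SCGA_LC}, patched to accommodate the adaptive (rather than fixed) step size $\eta_{n,h}^{(t)}$ used by the SCMS algorithm. By \eqref{SCGA_update_sample}, one step of SCMS is exactly one step of a sample-based SCGA iteration on $\hat{p}_n$, with step size $\eta_{n,h}^{(t)} = 1/\hat{g}_n(\hat{\bm x}^{(t)})$. Therefore the work is (i) to transfer the regularity conditions (A1--A4) from $p$ to $\hat{p}_n$ on a neighborhood of $\hat{R}_d$, (ii) to control $\eta_{n,h}^{(t)}$ uniformly in $t$, and (iii) to adapt the single-step contraction estimate from the proof of Theorem~\ref{SCGA_LC} to a time-varying step size.

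First, I would establish a ``conditions hold for $\hat{p}_n$'' step. Using the uniform consistency \eqref{unif_bound} and Lemma~\ref{stability_ridge}, for $h$ small and $nh^{D+6}/|\log h|$ large, with probability tending to one the estimated ridge $\hat{R}_d$ is close to $R_d$ in Hausdorff distance, and $\hat{p}_n$ satisfies analogues of (A1)--(A3) on a neighborhood of $\hat{R}_d$ with constants $(\rho/2, \beta_0/2, \beta_1/2)$, say. Condition (A4) along the SCMS sequence for $\hat{p}_n$ and limit $\hat{\bm x}^*$ can be transferred similarly: the residual gradient at iterates of the SCMS sequence and the residual displacement to $\hat{\bm x}^*$ are small perturbations (uniformly) of the population counterparts, so the quadratic bounds persist (possibly with slightly worse constants). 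This step is where the qualifier ``with high probability'' in the corollary is absorbed.

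Second, I would apply Lemma~\ref{limit_step_MS} together with Remark~\ref{SCMS_stepsize_remark}: once $n$ is large and $h$ is small, the quantity $h^2 \hat{g}_n(\bm{x})$ is uniformly positive and bounded above on the compact set $\hat{R}_d \oplus \rho$, hence $\eta_{n,h}^{(t)}$ is sandwiched between two positive constants $0 < \tilde{\eta}_{n,h} \leq \sup_t \eta_{n,h}^{(t)}$ as long as $\hat{\bm x}^{(t)}$ stays in this set. Under the hypothesis $\sup_t \eta_{n,h}^{(t)} \leq \min\{4/\beta_0, 1/(D\|p\|_{\infty}^{(2)})\}$ (which, by continuity, implies the same bound for $\hat{p}_n$ with a slightly degraded constant), the step-size admissibility required by the proof of Theorem~\ref{SCGA_LC} holds at every iterate.

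Third, I would re-run the per-iterate contraction inequality that underlies part (a) of Theorem~\ref{SCGA_LC}, namely the step
\[
\norm{\hat{\bm x}^{(t+1)} - \hat{\bm x}^*}_2^2 \;\leq\; \Bigl(1 - \tfrac{\beta_0}{4}\,\eta_{n,h}^{(t)}\Bigr) \norm{\hat{\bm x}^{(t)} - \hat{\bm x}^*}_2^2,
\]
using the ``subspace constrained strong concavity'' \eqref{key_inequality} for $\hat{p}_n$ (valid inside $\hat{R}_d \oplus r_3$ with $r_3 \in (0, r_2)$ chosen small enough to absorb the perturbation from $p$ to $\hat{p}_n$). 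Replacing $\eta_{n,h}^{(t)}$ by the uniform lower bound $\tilde{\eta}_{n,h}$ yields the factor $\Upsilon_{n,h}^2 = 1 - \beta_0 \tilde{\eta}_{n,h}/4$. An inductive argument then gives $\norm{\hat{\bm x}^{(t)} - \hat{\bm x}^*}_2 \leq \Upsilon_{n,h}^t \norm{\hat{\bm x}^{(0)} - \hat{\bm x}^*}_2$, and the bound on $d_E(\hat{\bm x}^{(t)}, \hat{R}_d)$ follows immediately since $\hat{\bm x}^* \in \hat{R}_d$. An auxiliary invariance check is needed here: the contraction at step $t$ guarantees $\norm{\hat{\bm x}^{(t+1)} - \hat{\bm x}^*}_2 \leq \norm{\hat{\bm x}^{(t)} - \hat{\bm x}^*}_2$, so starting in the ball of radius $r_3$ around $\hat{\bm x}^*$ the whole sequence stays in $\hat{R}_d \oplus r_3$, validating the use of the constants at every iteration.

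The main obstacle I anticipate is the third step: carefully justifying the per-iterate contraction with the adaptive step size. In Theorem~\ref{SCGA_LC} the step size $\eta$ is fixed, so the bound of the form $1 - \beta_0 \eta/4$ is clean; with $\eta_{n,h}^{(t)}$ varying, I must check that the proof's quadratic expansion of $\hat{p}_n(\hat{\bm x}^{(t+1)}) - \hat{p}_n(\hat{\bm x}^*)$, combined with (A4) for $\hat{p}_n$, still yields a contraction coefficient that depends on $\eta_{n,h}^{(t)}$ only through a single factor, which can then be bounded using $\tilde{\eta}_{n,h}$. This also requires verifying that condition (A4) transfers uniformly along the (a priori unknown) SCMS sequence — a point that can be handled by noting that the proof only needs (A4) at iterates lying in $\hat{R}_d \oplus r_3$, and by invoking the uniform closeness of $\nabla \hat{p}_n, \hat{V}_d$ to $\nabla p, V_d$ on this compact set via \eqref{unif_bound} and a Davis--Kahan/eigenvector perturbation argument.
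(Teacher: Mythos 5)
Your proposal takes essentially the same route as the paper: transfer conditions (A1--4) to $\hat{p}_n$ via the uniform bounds \eqref{unif_bound}, invoke Lemma~\ref{limit_step_MS} and Remark~\ref{SCMS_stepsize_remark} to show that the adaptive step size $\eta_{n,h}^{(t)}$ is both below the admissibility threshold and uniformly bounded away from zero, and then run the contraction argument from Theorem~\ref{SCGA_LC}(a) with $\eta_{n,h}^{(t)}$ in place of $\eta$ and replace it by $\tilde{\eta}_{n,h}=\inf_t \eta_{n,h}^{(t)}$ to obtain $\Upsilon_{n,h}$. The paper's discussion after Corollary~\ref{SCMS_LC} states this argument more tersely; your write-up usefully makes explicit the per-iterate contraction $\|\hat{\bm x}^{(t+1)}-\hat{\bm x}^*\|_2^2 \leq (1-\tfrac{\beta_0}{4}\eta_{n,h}^{(t)})\|\hat{\bm x}^{(t)}-\hat{\bm x}^*\|_2^2$ and the invariance check keeping the iterates in $\hat R_d\oplus r_3$, both of which the paper leaves implicit but which are exactly what is needed.
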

	
	Corollary~\ref{SCMS_LC} should also be regarded as the linear convergence of the sample-based SCGA algorithm to the estimated ridge $\hat{R}_d$ defined by the Euclidean KDE $\hat{p}_n$. Based on conditions (E1-2) and the uniform bounds \eqref{unif_bound}, $\hat{p}_n$ together with its ridge $\hat{R}_d$ and sample-based SCGA sequence $\big\{\hat{\bm{x}}^{(t)} \big\}_{t=0}^{\infty}$ satisfy conditions (A1-4) with probability tending to 1 as $h\to 0$ and $\frac{nh^{D+6}}{|\log h|} \to \infty$. As a result, one can follow our argument in (a) of Theorem~\ref{SCGA_LC} to establish the linear convergence of the sample-based SCGA algorithm with a fixed step size $\eta$ satisfying $0<\eta\leq \min\left\{\frac{4}{\beta_0}, \frac{1}{D\norm{p}_{\infty}^{(2)}} \right\}$. Furthermore, when the fixed sample size $n$ is sufficiently large and the bandwidth $h$ is chosen to be small, the adaptive step size $\eta_{n,h}^{(t)}$ of the SCMS algorithm always falls below the threshold $\min\left\{\frac{4}{\beta_0}, \frac{1}{D\norm{p}_{\infty}^{(2)}} \right\}$ for linear convergence but is also uniformly bounded away from zero with respect to the iteration number $t$; see our Remark~\ref{SCMS_stepsize_remark}. By taking the infimum of the adaptive step size $\eta_{n,h}^{(t)}$ with respect to $t$, one can thus establish the linear convergence of the SCMS algorithm with its rate of convergence as $\Upsilon_{n,h}=\sqrt{1-\frac{\beta_0\tilde{\eta}_{n,h}}{4}}$ and $\tilde{\eta}_{n,h} = \inf_t \eta_{n,h}^{(t)}$.

\section{The SCMS Algorithm With Directional Data and Its Linear Convergence}
\label{Sec:Dir_ridge_SCMS}

In this section, we generalize the definition \eqref{ridge} of density ridges to directional densities on $\Omega_q$ and propose our directional SCMS algorithm to identify directional density ridges. In addition, we prove the linear convergence of our directional SCMS algorithm by adjusting the arguments in Section~\ref{Sec:LC_SCGA}. Throughout this section, $\left\{\bm{X}_1,...,\bm{X}_n \right\}$ denotes a random sample from a directional distribution with density $f$ supported on the unit hypersphere $\Omega_q$ that is embedded in the ambient Euclidean space $\mathbb{R}^{q+1}$.

\subsection{Definitions, Assumptions, and Stability of Directional Density Ridges}
\label{Sec:Dir_ridge_def}

To apply the matrix forms of the Riemannian gradient $\grad f(\bm{x})$ and Hessian $\mathcal{H} f(\bm{x})$ of a directional density $f$ in the ambient space $\mathbb{R}^{q+1}$, we first extend $f$ from its support $\Omega_q$ to $\mathbb{R}^{q+1}\setminus \left\{\bm{0}\right\}$ by defining 
	\begin{equation}
	\label{DirDensity_Ext}
	f(\bm{x}) \equiv f\left(\frac{\bm{x}}{||\bm{x}||_2} \right) \quad \text{ for all } \quad \bm{x}\in \mathbb{R}^{q+1} \setminus \{\bm{0} \}.
	\end{equation}
Now, given the expressions of $\grad f(\bm{x})$ and $\mathcal{H}f(\bm{x})$ defined in \eqref{Riem_grad2} and \eqref{Riem_Hess2}, we perform the spectral decomposition on $\mathcal{H} f(\bm{x})$ as $\mathcal{H} f(\bm{x}) = \underline{V}(\bm{x}) \underline{\Lambda}(\bm{x}) \underline{V}(\bm{x})^T$, where $\underline{V}(\bm{x})=\left[\bm{x},\underline{\bm{v}}_1(\bm{x}),...,\underline{\bm{v}}_q(\bm{x})\right] \in \mathbb{R}^{(q+1)\times (q+1)}$ is a real orthogonal matrix with columns $\underline{\bm{v}}_1(\bm{x}),...,\underline{\bm{v}}_q(\bm{x})$ as the eigenvectors of $\mathcal{H} f(\bm{x})$ that are associated with the eigenvalues $\underline{\lambda}_1(\bm{x}) \geq \cdots \geq \underline{\lambda}_q(\bm{x})$ and lie within the tangent space $T_{\bm{x}}$ at $\bm{x}\in \Omega_q$, and $\underline{\Lambda}(\bm{x})=\Diag\left[0,\underline{\lambda}_1(\bm{x}),...,\underline{\lambda}_q(\bm{x})\right]$. Note that the Riemannian Hessian $\mathcal{H} f(\bm{x})$ has a unit eigenvector $\bm{x}$ that is orthogonal to $T_{\bm{x}}$ and corresponds to eigenvalue 0. 

Let $\underline{V}_d(\bm{x}) = \left[\underline{\bm{v}}_{d+1}(\bm{x}),...,\underline{\bm{v}}_q(\bm{x}) \right] \in \mathbb{R}^{(q+1)\times (q-d)}$ be the last $q-d$ columns of $\underline{V}(\bm{x})$, \emph{i.e.}, the unit eigenvectors inside the tangent space $T_{\bm{x}}$ corresponding to the $q-d$ smallest eigenvalues of $\mathcal{H} f(\bm{x})$. Let $\underline{U}_d(\bm{x}) = \underline{V}_d(\bm{x}) \underline{V}_d(\bm{x})^T$ be the projection matrix onto the linear space spanned by the columns of $\underline{V}_d(\bm{x})$, and $\underline{U}_d^{\perp}(\bm{x}) = \bm{I}_{q+1} -\underline{V}_d(\bm{x}) \underline{V}_d(\bm{x})^T$. We define the order-$d$ principal Riemannian gradient $\underline{G}_d(\bm{x})$ by:
\begin{align}
\label{proj_Riem_grad}
\begin{split}
\underline{G}_d(\bm{x}) = \underline{V}_d(\bm{x}) \underline{V}_d(\bm{x})^T \grad f(\bm{x}) &= \underline{V}_d(\bm{x}) \underline{V}_d(\bm{x})^T \left(\bm{I}_{q+1} -\bm{x}\bm{x}^T \right) \nabla f(\bm{x})\\
&= \underline{V}_d(\bm{x}) \underline{V}_d(\bm{x})^T \nabla f(\bm{x}),
\end{split}
\end{align}
where the last equality follows from the fact that the columns of $\underline{V}_d(\bm{x})$ are orthogonal to the unit vector $\bm{x}$. The order-$d$ density ridge on $\Omega_q$ (or \emph{directional density ridge}) is the set of points defined as:
\begin{align}
\label{Dir_ridge}
\begin{split}
\underline{R}_d &= \left\{\bm{x}\in \Omega_q: \underline{G}_d(\bm{x})=\bm{0}, \underline{\lambda}_{d+1}(\bm{x}) <0 \right\} \\
&= \left\{\bm{x}\in \Omega_q: \underline{V}_d(\bm{x})^T \grad f(\bm{x})=\bm{0}, \underline{\lambda}_{d+1}(\bm{x}) <0 \right\}.
\end{split}
\end{align}

Our definition of density ridges on $\Omega_q$ can be arguably generalized to any smooth function $f$ supported on an arbitrary Riemannian manifold. It also follows that the 0-ridge $\underline{R}_0$ is the set of local modes of $f$ on $\Omega_q$, whose statistical properties and practical estimation algorithm are discussed in \cite{DirMS2020}. Therefore, we only focus on the case when $1\leq d <q$ in this paper. 
To regularize the directional density ridge $\underline{R}_d$, we modify our assumptions on the Euclidean density ridge $R_d$ in Section~\ref{Sec:Assump} as follows:
\begin{itemize}
	\item {\bf (\underline{A1})} (\emph{Differentiability}) Under the extension \eqref{DirDensity_Ext} of the directional density $f$, we assume that the total gradient $\nabla f(\bm{x})$, total Hessian matrix $\nabla\nabla f(\bm{x})$, and third-order derivative tensor $\nabla^3 f(\bm{x})$ in $\mathbb{R}^{q+1}$ exist, and are continuous on $\mathbb{R}^{q+1} \setminus \{\bm{0} \}$ and square integrable on $\Omega_q$. We also assume that $f$ has bounded fourth order derivatives on $\Omega_q$. 
	
	\item {\bf (\underline{A2})} (\emph{Eigengap}) We assume that there exist constants $\underline{\rho}>0$ and $\underline{\beta}_0 >0$ such that $\underline{\lambda}_{d+1}(\bm{y}) \leq -\underline{\beta}_0$ and $\underline{\lambda}_d(\bm{y}) - \underline{\lambda}_{d+1}(\bm{y}) \geq \underline{\beta}_0$ for any $\bm{y}\in \left(\underline{R}_d \oplus \underline{\rho}\right) \cap \Omega_q$.
	
	\item {\bf (\underline{A3})} (\emph{Path Smoothness}) Under the same $\underline{\rho}, \underline{\beta}_0 >0$ in (\underline{A2}), we assume that there exists another constant $\underline{\beta}_1 \in \left(0,\underline{\beta}_0 \right)$ such that
	\begin{align*}
	\sqrt{2} \cdot q^{\frac{3}{2}} \norm{\underline{U}_d^{\perp}(\bm{y}) \grad f(\bm{y})}_2 \norm{\nabla^3 f(\bm{y})}_{\max} &\leq \frac{\underline{\beta}_0^2}{2},\\
	d \cdot q^{\frac{3}{2}} \norm{\nabla f(\bm{x})}_2 \cdot \norm{\nabla^3 f(\bm{x})}_{\max} & \leq \underline{\beta}_0\left(\underline{\beta}_0-\underline{\beta}_1 \right)
	\end{align*}
	for all $\bm{y} \in \left(\underline{R}_d\oplus \underline{\rho} \right)\cap \Omega_q$ and $\bm{x}\in \underline{R}_d$.
\end{itemize}

Recall that $\underline{R}_d\oplus \underline{\rho} = \cup_{\bm{x}\in \underline{R}_d}\text{Ball}_{q+1}\big(\bm{x},\underline{\rho}\big)$ is a $\underline{\rho}$-neighborhood of the directional ridge $\underline{R}_d$ in the ambient space $\mathbb{R}^{q+1}$. The discussions about conditions (A1-3) in Section~\ref{Sec:Assump} apply to their directional counterparts (\underline{A1-3}), except that the eigengap condition (\underline{A2}) is imposed on eigenvalues $\underline{\lambda}_1(\bm{x}) \geq \cdots \geq \underline{\lambda}_q(\bm{x})$ within the tangent space $T_{\bm{x}}$ at $\bm{x}\in \Omega_q$. However, since the only eigenvalue of $\mathcal{H} f(\bm{x})$ associated with the eigenvector outside the tangent space $T_{\bm{x}}$ is 0, the eigengap condition (\underline{A2}) is also valid to the entire spectrum of $\mathcal{H} f(\bm{x})$ in the ambient space $\mathbb{R}^{q+1}$.
The extension of $f$ in \eqref{DirDensity_Ext} has also been used by \cite{Zhao2001,Dir_Linear2013,Exact_Risk_bw2013}. Because the directional density $f$ remains unchanged along every radial direction of $\Omega_q$ under the extension \eqref{DirDensity_Ext}, the radial component of its total gradient is $\Rad\left(\nabla f(\bm{x}) \right)=0$ for all $\bm{x}\in \Omega_q$, and the Riemannian gradient \eqref{Riem_grad2} of $f$ on $\Omega_q$ becomes
\begin{equation}
\label{Riem_grad_new}
\grad (\bm{x}) = \left(\bm{I}_{q+1} -\bm{x}\bm{x}^T \right) \nabla f(\bm{x}) = \nabla f(\bm{x}).
\end{equation}
Similarly, the Riemannian Hessian \eqref{Riem_Hess2} of $f$ on $\Omega_q$ reduces to
\begin{equation}
\label{Riem_Hess_new}
\mathcal{H} f(\bm{x}) = \left(\bm{I}_{q+1} -\bm{x}\bm{x}^T \right) \nabla\nabla f(\bm{x}) \left(\bm{I}_{q+1} -\bm{x}\bm{x}^T \right).
\end{equation}
Both the Riemannian gradient and Hessian of $f$ on $\Omega_q$ are invariant under this extension.

\begin{remark}[\emph{Connection to Solution Manifolds}]
	\label{solution_manifold_remark}
	Example 4 in \cite{YC2020} showed that any Euclidean density ridge $R_d$ defined in \eqref{ridge} is a concrete example of a solution manifold $\mathcal{M}_S = \left\{\bm{x}\in \mathbb{R}^D: \Psi(\bm{x})=0 \right\}$ with $\Psi: \mathbb{R}^D\to \mathbb{R}^{D-d}$ being a vector-valued function. It is not difficult to verify that our defined directional density ridge $\underline{R}_d$ in \eqref{Dir_ridge} also belongs to the general form of the solution manifold $\mathcal{M}_S$, where we may rewrite $\underline{R}_d=\left\{\bm{x}\in \mathbb{R}^{q+1}: \Psi(\bm{x})=0 \right\}$ with $\Psi: \mathbb{R}^{q+1} \to \mathbb{R}^{q+1-d}$ defined by:
		\[
		\Psi(\bm{x})=
		\begin{bmatrix}
		\underline{\bm{v}}_{d+1}(\bm{x})^T \nabla f(\bm{x})\\
		\vdots\\
		\underline{\bm{v}}_q(\bm{x})^T \nabla f(\bm{x})\\
		\bm{x}^T\bm{x}-1
		\end{bmatrix},
		\]
		recalling that $\underline{\bm{v}}_{d+1}(\bm{x}),...,\underline{\bm{v}}_q(\bm{x})$ are the last $q-d$ eigenvectors of the Riemannian Hessian $\mathcal{H}f(\bm{x})$ of the directional density $f$. More importantly, our imposed conditions (A1-3) in the Euclidean ridge case and (\underline{A1-3}) in the directional ridge case imply all the required assumptions in \cite{YC2020}, \emph{i.e.}, the differentiability of $\Psi$ and non-degeneracy of the normal space of $\mathcal{M}_S$; see (d) of Lemmas~\ref{normal_reach_prop} and \ref{Dir_norm_reach_prop} in the Appendix.
		Therefore, the discussion about statistical properties and (normal) gradient flows of a generic solution manifold $\mathcal{M}_S$ apply to the (directional) density ridge $R_d$ or $\underline{R}_d$ here.
	\end{remark}

Similar to Euclidean density ridges, we establish the following stability theorem of directional density ridges. To measure the distance between two directional ridges $\underline{R}_d, \underline{\tilde{R}}_d \subset \Omega_q$ defined by the directional densities $f$ and $\tilde{f}$, we adopt the definition \eqref{Haus_dist} of Hausdorff distance between two sets in the ambient Euclidean space $\mathbb{R}^{q+1}$. Note that the Euclidean norm used in the definition \eqref{Haus_dist} is upper bounded by the geodesic distance when our interested sets lie on $\Omega_q$; see also \eqref{Geo_Eu_dist_eq}. We will leverage this property in our proof of Theorem~\ref{Dir_ridge_stability}; see Appendix~\ref{App:Stability_Dir_Ridge} for details.

\begin{theorem}
	\label{Dir_ridge_stability}
	Suppose that conditions (\underline{A1-3}) hold for the directional density $f$ and that condition (\underline{A1}) holds for $\tilde{f}$. When $\norm{f-\tilde{f}}_{\infty,3}^*$ is sufficiently small,
	\begin{enumerate}[label=(\alph*)]
		\item conditions (\underline{A2-3}) holds for $\tilde{f}$.
		\item $\Haus(\underline{R}_d, \underline{\tilde{R}}_d) = O\left(\norm{f-\tilde{f}}_{\infty,2}^* \right)$.
		\item $\mathtt{reach}(\underline{\tilde{R}}_d) \geq \min\left\{\underline{\rho}/2, \frac{\min\left\{\underline{\beta}_1, 1\right\}^2}{\underline{A}_2 \left(\norm{f}_{\infty}^{(3)} + \norm{f}_{\infty}^{(4)} \right)} \right\} + O\left(\norm{f-\tilde{f}}_{\infty,3}^* \right)$ for a constant $\underline{A}_2>0$.
	\end{enumerate}
\end{theorem}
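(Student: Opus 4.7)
The plan is to mirror the proof of Lemma~\ref{stability_ridge} in \cite{Non_ridge_est2014}, but with all derivatives, Hessians, and eigendecompositions interpreted in their Riemannian sense on $\Omega_q$ via the ambient representations \eqref{Riem_grad_new}--\eqref{Riem_Hess_new}. Because the extension \eqref{DirDensity_Ext} makes $f$ and $\tilde f$ smooth on a neighborhood of $\Omega_q$ in $\mathbb{R}^{q+1}\setminus\{\bm 0\}$, the norm $\norm{f-\tilde f}_{\infty,3}^{*}$ uniformly controls the distance between their total gradients, Hessians, and third-order derivatives on $\Omega_q$. Since $\grad f$ and $\mathcal{H}f$ are Lipschitz functions of these total derivatives and of the fixed projector $\bm I_{q+1}-\bm x\bm x^{T}$, the same bound propagates to the Riemannian quantities.

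For part (a), I would apply Weyl's inequality to $\mathcal{H}f(\bm y)-\mathcal{H}\tilde f(\bm y)$ to show that the eigenvalues $\underline{\tilde\lambda}_j$ lie within $O(\norm{f-\tilde f}_{\infty,2}^{*})$ of $\underline\lambda_j$, uniformly on $\Omega_q$. For $\norm{f-\tilde f}_{\infty,3}^{*}$ small enough, the strict inequalities $\underline{\tilde\lambda}_{d+1}\le-\underline\beta_0/2$ and $\underline{\tilde\lambda}_d-\underline{\tilde\lambda}_{d+1}\ge\underline\beta_0/2$ therefore persist on a slightly shrunken $(\underline R_d\oplus\underline\rho/2)\cap\Omega_q$, and this neighborhood contains $\underline{\tilde R}_d$ by part (b). Condition (\underline{A3}) transfers similarly, because the principal residual Riemannian gradient $\underline U_d^{\perp}\grad f$ and third-order tensor $\nabla^3 f$ each differ from their $\tilde f$-counterparts by $O(\norm{f-\tilde f}_{\infty,3}^{*})$, and the inequalities in (\underline{A3}) have strict slack after replacing $\underline\beta_1$ by a slightly smaller constant.

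The heart of the argument is part (b), for which I would use an implicit function / Banach fixed-point argument along the normal bundle of $\underline R_d$. Fix $\bm x\in\underline R_d$ and consider
\[
\Phi_{\bm x}(\bm u)\;:=\;\underline{\tilde V}_d\bigl(\Exp_{\bm x}(\bm u)\bigr)^{T}\grad\tilde f\bigl(\Exp_{\bm x}(\bm u)\bigr),
\]
where $\bm u$ ranges over a small ball in the column span of $\underline V_d(\bm x)$ (the normal directions to $\underline R_d$ inside $T_{\bm x}$). At $\bm u=\bm 0$, since $\underline V_d(\bm x)^{T}\grad f(\bm x)=\bm 0$, the value $\Phi_{\bm x}(\bm 0)=\underline{\tilde V}_d(\bm x)^{T}\grad\tilde f(\bm x)$ is of order $\norm{f-\tilde f}_{\infty,2}^{*}$, using a Davis--Kahan bound on $\underline{\tilde V}_d\underline{\tilde V}_d^{T}-\underline V_d\underline V_d^{T}$ supplied by the eigengap (\underline{A2}). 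Its Jacobian $D_{\bm u}\Phi_{\bm x}(\bm 0)$ is, to leading order, the restriction of $\mathcal{H}f(\bm x)$ to the span of $\underline V_d(\bm x)$, whose singular values are bounded below by $\underline\beta_0$ by (\underline{A2}). A Banach contraction / Newton iteration then yields a unique $\bm u^{*}$ with $\norm{\bm u^{*}}_2=O(\norm{f-\tilde f}_{\infty,2}^{*})$ such that $\Exp_{\bm x}(\bm u^{*})\in\underline{\tilde R}_d$. Swapping the roles of $f$ and $\tilde f$ (using part (a)) gives the reverse inclusion, and the Hausdorff bound follows after converting geodesic to ambient distance via \eqref{Geo_Eu_dist_eq}. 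For part (c), I would then apply the directional analog of Lemma~\ref{normal_reach_prop} (i.e.\ Lemma~\ref{Dir_norm_reach_prop}) to $\tilde f$, whose hypotheses hold by (a) with constants perturbed by $O(\norm{f-\tilde f}_{\infty,3}^{*})$; the reach bound in (c) is just the reach estimate for $\underline R_d$ itself plus this correction.

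The main obstacle I anticipate is the eigenspace perturbation step. The Riemannian Hessian $\mathcal{H}f(\bm x)$ always has a trivial zero eigenvalue along the radial direction $\bm x$, so one must carefully apply Davis--Kahan only to the tangential restriction in $T_{\bm x}\cong\mathbb{R}^{q}$, keeping the $d$-vs-$(q-d)$ split consistent with the ordering $\underline\lambda_1\ge\cdots\ge\underline\lambda_q$. Because $\bm x$ is simultaneously a $0$-eigenvector of both $\mathcal{H}f(\bm x)$ and $\mathcal{H}\tilde f(\bm x)$ under the extension \eqref{DirDensity_Ext}, this decoupling is clean, but it does require some bookkeeping to translate Weyl and Davis--Kahan, stated for symmetric matrices in $\mathbb{R}^{(q+1)\times(q+1)}$, into the intrinsic tangential picture used by $\underline V_d$ and $\underline{\tilde V}_d$.
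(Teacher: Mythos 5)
Your parts (a) and (c) follow essentially the same route as the paper. For part (b), however, you take a genuinely different approach: the paper bounds $\Haus(\underline R_d,\underline{\tilde R}_d)$ via \emph{flows} — either the subspace constrained gradient flow $\gamma$ lifted to $\xi$ (using $\xi'(0)=0$, $\xi''\ge\underline\beta_0/2$ from Lemma~\ref{quad_behave_Dir} to bound the arc length) or the normal flow $\phi_{\bm z}$ along $-\nabla h$ with $h(\bm x)=\norm{\underline V_d(\bm x)^T\nabla f(\bm x)}_2$ (using the lower bound on $\lambda_{\min}(\underline M^T\underline M)$ from Lemma~\ref{Dir_norm_reach_prop}(d)) — whereas you propose a Newton/Banach fixed-point argument along the normal fiber. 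Both hinge on the same nondegeneracy of the ridge's normal space, and your route is in principle workable and arguably more succinct.

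That said, there is an imprecision in your Jacobian step that would have to be repaired. You assert that $D_{\bm u}\Phi_{\bm x}(\bm 0)$ is, to leading order, the restriction of $\mathcal{H}f(\bm x)$ to $\mathrm{span}(\underline V_d(\bm x))$ and that its singular values are bounded below by $\underline\beta_0$ ``by (\underline{A2}).'' This is not quite the correct Jacobian: differentiating $\underline V_d(\cdot)^T\grad f(\cdot)$ also differentiates the eigenvector matrix $\underline V_d$, which contributes the term $\sum_{i=1}^d\underline T_i\underline V_d\underline\Lambda_i$ involving $\nabla^3 f$; the full Jacobian is $\underline M(\bm x)^T$ as given in \eqref{Dir_normal_rows_all}. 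Its nonzero singular values are bounded below by $\underline\beta_1$, \emph{not} $\underline\beta_0$, and establishing this lower bound requires \emph{both} the eigengap condition (\underline{A2}) and the path-smoothness condition (\underline{A3}) (which caps the size of the correction term at $\underline\beta_0-\underline\beta_1$); this is precisely Lemma~\ref{Dir_norm_reach_prop}(d). Once you replace ``$\mathcal{H}f$ restricted'' by $\underline M(\bm x)^T$, invoke (\underline{A3}) alongside (\underline{A2}), and use $\underline\beta_1$ as the contraction constant, your Banach fixed-point argument goes through; as written, though, the invertibility claim is not justified by (\underline{A2}) alone.
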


One natural estimator of the directional density ridge $\underline{R}_d$ can be obtained by plugging the directional KDE $\hat{f}_h$ into the definition \eqref{Dir_ridge} as:
$$\hat{\underline{R}}_d = \left\{\bm{x}\in \Omega_q: \hat{\underline{V}}_d(\bm{x})^T \grad \hat{f}_h(\bm{x}) = \bm{0}, \hat{\underline{\lambda}}_{d+1}(\bm{x})<0 \right\}.$$
To regularize the statistical behavior of the estimated directional ridge $\hat{\underline{R}}_d$, we consider the following assumptions that are generalized from conditions (E1-2):
\begin{itemize}
	\item {\bf (D1)} Assume that $L: (-\delta_L,\infty) \to [0,\infty)$ is a bounded and three times continuously differentiable function with a bounded fourth order derivative on $(-\delta_L,\infty) \subset \mathbb{R}$ for some constant $\delta_L>0$ such that 
	$$0< \int_0^{\infty} |L^{(\ell)}(r)|^k r^{\frac{q}{2}-1} dr < \infty \quad \text{ for all } q \geq 1, k=1,2, \text{ and } \ell=0,1,2,3.$$
	
	\item {\bf (D2)} Let 
	$$\mathcal{K}_D = \left\{ \bm{u}\mapsto K\left(\frac{\bm{z}-\bm{u}}{h} \right): \bm{u}, \bm{z}\in \Omega_q, h>0, K(\bm{x}) = D^{[\tau]} L\left(\frac{\norm{\bm{x}}_2^2}{2} \right), |[\tau]|=0,1,2,3 \right\}.$$
	We assume that $\mathcal{K}_D$ is a bounded VC (subgraph) class of measurable functions on $\Omega_q$; that is, there exist constants $A,\upsilon >0$ such that for any $0<\epsilon <1$,
	$$\sup_Q N\left(\mathcal{K}_D, L_2(Q), \epsilon ||F||_{L_2(Q)} \right) \leq \left(\frac{A}{\epsilon} \right)^{\upsilon},$$
	where $N(\mathcal{K}_D,L_2(Q),\epsilon)$ is the $\epsilon$-covering number of the normed space $\left(\mathcal{K}_D,\norm{\cdot}_{L_2(Q)}\right)$, $Q$ is any probability measure on $\Omega_q$, and $F$ is an envelope function of $\mathcal{K}_D$. Here, the norm $\norm{F}_{L_2(Q)}$ is defined as $\left[\int_{\Omega_q} |F(\bm{x})|^2 dQ(\bm{x}) \right]^{\frac{1}{2}}$.
\end{itemize}

The differentiability assumption in condition (D1) can be relaxed such that $L$ is (three times) continuously differentiable except for a set of points with Lebesgue measure $0$ on $[0,\infty)$. Conditions (D1) and (\underline{A1}) are generally required for establishing the pointwise convergence rates of the directional KDE and its derivatives \citep{KDE_Sphe1987,KLEMELA2000,Zhao2001,Dir_Linear2013,Exact_Risk_bw2013}. Under these two conditions, $\norm{\nabla \hat{f}_h(\bm{x})}_2$ appearing in the step sizes $\underline{\eta}_{n,h}^{(t)}$ or $\underline{\eta}_{n,h}^{(t)'}$ of the directional mean shift or SCMS algorithm can also be shown to diverge at the order $O(h^{-2}) + O_P\left(\frac{1}{nh^{q+2}} \right)$ as $nh^q\to \infty$ and $h\to 0$; see Section~\ref{Sec:Dir_MS_SCMS} for details.
Condition (D2) regularizes the complexity of kernel $L$ and its derivatives as in condition (E2) in order for the uniform convergence rates of the directional KDE and its derivatives; see \eqref{Dir_KDE_unif_bound} below. One can justify via integration by parts that the von-Mises kernel $L(r)=e^{-r}$ and many compactly supported kernels satisfy conditions (D1-2).

Given conditions (D1-2), the techniques in \cite{KDE_Sphe1987,KDE_direct1988,Zhao2001,Dir_Linear2013,Exact_Risk_bw2013,DirMS2020} can be utilized to demonstrate that
\begin{equation}
\label{Dir_KDE_unif_bound}
\norm{\hat{f}_h-f}_{\infty}^{(k)} = \sup_{\bm{x}\in \Omega_q} \norm{\bar{\nabla}^k \,\hat{f}_h(\bm{x}) -\bar{\nabla}^k\, f(\bm{x})}_{\max} = O(h^2) + O_P\left(\sqrt{\frac{|\log h|}{nh^{q+2k}}} \right),
\end{equation}
where $\bar{\nabla}\equiv \bar{\nabla}_{\bm{v}}$ is the Riemannian connection on $\Omega_q$ with $\bm{v}\in T_{\bm{x}}$ so that $\bar{\nabla} f(\bm{x})= \grad f(\bm{x})$, $\bar{\nabla}^2 f(\bm{x}) = \mathcal{H} f(\bm{x})$, and $\bar{\nabla}^3 f(\bm{x}) = \bar{\nabla} \mathcal{H} f(\bm{x})$; see Section 5.3 in \cite{Op_algo_mat_manifold2008} and Chapter 4 in \cite{Lee2018Riem_man}.

\subsection{Mean Shift and SCMS Algorithm with Directional Data}
\label{Sec:Dir_MS_SCMS}

Before deriving our directional SCMS algorithm, we first review the mean shift algorithm with directional data $\left\{\bm{X}_1,...,\bm{X}_n\right\} \subset \Omega_q$ \citep{Multi_Clu_Gene2005,DMS_topology2010,MSC_Dir2014}. The formal derivation can be found in Section 3 of \cite{DirMS2020}. Given the directional KDE $\hat{f}_h(\bm{x}) = \frac{c_{L,q}(h)}{n} \sum\limits_{i=1}^n L\left(\frac{1-\bm{x}^T\bm{X}_i}{h^2} \right)$ in \eqref{Dir_KDE}, the directional mean shift vector can be defined as:
\begin{equation}
\label{Dir_mean_shit_vec}
\underline{\Xi}_h(\bm{x}) =\frac{\sum_{i=1}^n \bm{X}_i  L'\left(\frac{1}{2} \norm{\frac{\bm{x} -\bm{X}_i}{h}}_2^2 \right)}{\sum_{i=1}^n L'\left(\frac{1}{2} \norm{\frac{\bm{x} -\bm{X}_i}{h}}_2^2 \right)} -\bm{x}= \frac{\sum_{i=1}^n \bm{X}_i L'\left(\frac{1-\bm{x}^T\bm{X}_i}{h^2} \right)}{\sum_{i=1}^n L'\left(\frac{1-\bm{x}^T\bm{X}_i}{h^2} \right)} -\bm{x}.
\end{equation}
Similar to the Euclidean mean shift vector \eqref{MS}, $\underline{\Xi}_h(\bm{x})$ also points toward the direction of maximum increase in $\hat{f}_h(\bm{x})$ after being projected onto the tangent space $T_{\bm{x}}$. Thus, the directional mean shift iteration translates a point $\bm{x}\in \Omega_q$ as $\bm{x}+\underline{\Xi}_h(\bm{x})$ with an extra projection $\frac{\bm{x}+\underline{\Xi}_h(\bm{x})}{\norm{\bm{x}+\underline{\Xi}_h(\bm{x})}_2}$ to draw the shifted point back to $\Omega_q$. 

Let $\{\underline{\hat{\bm{x}}}^{(t)}\}_{t=0}^{\infty}$ denote the sequence defined by the above directional mean shift procedure. Later, by abuse of notation, we will use the same notation to denote the directional SCGA/SCMS sequence with $\hat{f}_h$. As $\nabla \hat{f}_h(\bm{x}) = -\frac{c_{L,q}(h)}{nh^2} \sum\limits_{i=1}^n \bm{X}_i L'\left(\frac{1-\bm{x}^T\bm{X}_i}{h^2} \right)$, some simple algebra shows that the directional mean shift algorithm can be written into the following fixed-point iteration formula:
\begin{equation}
\label{Dir_MS2}
\underline{\hat{\bm{x}}}^{(t+1)} = -\frac{\sum_{i=1}^n \bm{X}_i L'\left(\frac{1-\bm{X}_i^T\underline{\hat{\bm{x}}}^{(t)}}{h^2} \right) }{\norm{\sum_{i=1}^n \bm{X}_i L'\left(\frac{1-\bm{X}_i^T\underline{\hat{\bm{x}}}^{(t)}}{h^2} \right)}_2} \quad \text{ or } \quad \underline{\hat{\bm{x}}}^{(t+1)} = \frac{\nabla \hat{f}_h\left(\underline{\hat{\bm{x}}}^{(t)} \right)}{\norm{\nabla \hat{f}_h\left(\underline{\hat{\bm{x}}}^{(t)}\right)}_2}.
\end{equation}
From \eqref{Dir_MS2}, it is also possible to write the directional mean shift algorithm as a gradient ascent method on $\Omega_q$ with the iteration formula \citep{Geo_Convex_Op2016}:
\begin{equation}
\label{grad_ascent_Manifold}
\hat{\underline{\bm{x}}}^{(t+1)} = \Exp_{\hat{\underline{\bm{x}}}^{(t)}}\left(\underline{\eta}_{n,h}^{(t)} \cdot \grad \hat{f}_h(\hat{\underline{\bm{x}}}^{(t)}) \right),
\end{equation}
where the adaptive step size $\underline{\eta}_{n,h}^{(t)}$ is given by
\begin{equation}
\label{Dir_MS_stepsize}
\underline{\eta}_{n,h}^{(t)} = \arccos\left( \frac{\nabla \hat{f}_h\left(\underline{\hat{\bm{x}}}^{(t)} \right)^T \underline{\hat{\bm{x}}}^{(t)}}{\norm{\nabla \hat{f}_h\left(\underline{\hat{\bm{x}}}^{(t)}\right)}_2}\right) \cdot \frac{1}{\norm{\grad \hat{f}_h(\underline{\hat{\bm{x}}}^{(t)})}_2} = \frac{\theta_t}{\norm{\nabla \hat{f}_h(\underline{\hat{\bm{x}}}^{(t)})}_2 \cdot \sin\theta_t}.
\end{equation}
Here, we denote the angle between $\underline{\hat{\bm{x}}}^{(t+1)}$ and $\underline{\hat{\bm{x}}}^{(t)}$ (or equivalently, the angle between $\nabla \hat{f}_h(\underline{\hat{\bm{x}}}^{(t)})$ and $\underline{\hat{\bm{x}}}^{(t)}$) by $\theta_t$; see Section 5.2 in \cite{DirMS2020} for detailed derivations.
Within some small neighborhoods around local modes of $\hat{f}_h$, $\frac{\theta_t}{\sin \theta_t} \approx 1$ and the adaptive step size $\underline{\eta}_{n,h}^{(t)}$ will be dominated by $\norm{\nabla \hat{f}_h(\underline{\hat{\bm{x}}}^{(t)})}_2$. The following lemma characterizes the asymptotic behaviors of $\norm{\nabla \hat{f}_h(\bm{x})}_2$ on $\Omega_q$ and consequently, $\underline{\eta}_{n,h}^{(t)}$.

\begin{lemma}[Lemma 10 in \citealt{DirMS2020}]
	\label{norm_tot_grad}
	Assume conditions (D1) and (\underline{A1}). For any fixed $\bm{x} \in \Omega_q$, we have
	$$h^2 \cdot \norm{\nabla \hat{f}_h(\bm{x})}_2 = f(\bm{x}) \cdot C_{L,q} + o\left(1 \right) + O_P\left(\sqrt{\frac{1}{nh^q}} \right)$$
	as $nh^q \to \infty$ and $h\to 0$, where $C_{L,q} =-\frac{\int_0^{\infty} L'(r) r^{\frac{q}{2}-1} dr}{\int_0^{\infty} L(r) r^{\frac{q}{2}-1} dr}>0$ is a constant depending only on kernel $L$ and dimension $q$.
\end{lemma}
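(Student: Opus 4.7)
The plan is to decompose $\nabla \hat f_h(\bm x) = \mathbb{E}[\nabla \hat f_h(\bm x)] + V_n$, where $V_n$ is the centered stochastic fluctuation, and to handle the bias and stochastic parts separately before recombining through the reverse triangle inequality $\bigl|\,\|\nabla \hat f_h(\bm x)\|_2 - \|\mathbb{E}[\nabla \hat f_h(\bm x)]\|_2\,\bigr| \le \|V_n\|_2$. The central computation is the bias analysis, which relies on a tangent-normal parameterization of $\Omega_q$ around $\bm x$.

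For the bias, start from
\[
\nabla \hat f_h(\bm x) = -\frac{c_{L,q}(h)}{n h^2} \sum_{i=1}^n \bm X_i\, L'\!\left(\frac{1-\bm x^T \bm X_i}{h^2}\right),
\]
so $\mathbb{E}[\nabla \hat f_h(\bm x)] = -\frac{c_{L,q}(h)}{h^2} \int_{\Omega_q} \bm y\, L'\!\bigl((1-\bm x^T \bm y)/h^2\bigr)\, f(\bm y)\,\omega_q(d\bm y)$. I would introduce $r = (1-\bm x^T \bm y)/h^2$ and the polar parameterization $\bm y = (1-rh^2)\bm x + h\sqrt{r(2-rh^2)}\,\bm\xi$ with $\bm\xi$ ranging over the unit sphere $\Omega_{q-1}$ of the tangent space $T_{\bm x}$. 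Under this substitution, the surface measure becomes $\omega_q(d\bm y) = h^q[r(2-rh^2)]^{(q-2)/2}\,dr\,d\bm\xi$. The same substitution applied to $c_{L,q}(h)^{-1}$ yields $c_{L,q}(h) = [h^q\,\omega_{q-1}(\Omega_{q-1})\,2^{(q-2)/2}\int_0^\infty L(r) r^{q/2-1}\,dr]^{-1}(1+O(h^2))$ by dominated convergence using condition (D1). Taylor expanding $f(\bm y(r,\bm\xi))$ around $\bm x$ and noting that, by the extension \eqref{DirDensity_Ext}, $\nabla f(\bm x)^T \bm x = 0$, the first-order-in-$\bm\xi$ contributions in both $\bm y(r,\bm\xi)$ and $f(\bm y(r,\bm\xi))$ integrate to zero by the antipodal symmetry $\int_{\Omega_{q-1}} \bm\xi\,d\bm\xi = \bm 0$. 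Consequently, the surviving leading term is purely radial:
\[
\mathbb{E}[\nabla \hat f_h(\bm x)] = h^{-2}\, f(\bm x)\,\bm x \cdot C_{L,q} + O(1),
\]
and so $\|\mathbb{E}[\nabla \hat f_h(\bm x)]\|_2 = h^{-2} f(\bm x)\,C_{L,q} + O(1)$ since $\|\bm x\|_2 = 1$.

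For the stochastic part, each coordinate $V_{n,j}$ is a centered mean of $n$ i.i.d.\ bounded variables, and using $|X_{1,j}|\le 1$ one gets $\mathrm{Var}(V_{n,j}) \le \frac{c_{L,q}(h)^2}{nh^4}\,\mathbb{E}\bigl[L'((1-\bm x^T\bm X_1)/h^2)^2\bigr]$. The same change of variables gives $\mathbb{E}[L'(\cdot)^2] = O(h^q)$ thanks to the integrability $\int_0^\infty L'(r)^2 r^{q/2-1}dr < \infty$ in (D1), whence $\mathrm{Var}(V_{n,j}) = O(1/(nh^{q+4}))$ for every $j = 1,\dots,q+1$, and Chebyshev yields $\|V_n\|_2 = O_P((nh^{q+4})^{-1/2})$. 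Combining with the reverse triangle inequality and multiplying through by $h^2$ produces $h^2\,\|\nabla \hat f_h(\bm x)\|_2 = f(\bm x) C_{L,q} + o(1) + O_P((nh^q)^{-1/2})$.

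The main obstacle is the bias identification: one must carefully track that the leading contribution is the $O(h^{-2})$ radial term, verifying both that (i) all tangential pieces of order $O(h^{-1})$ cancel by symmetry of the integral over $\Omega_{q-1}$ and (ii) the residual from replacing $(1-rh^2)\to 1$ and $[r(2-rh^2)]^{(q-2)/2}\to (2r)^{(q-2)/2}$ inside the integrand integrates to an $O(1)$ correction rather than $O(h^{-2})$ once multiplied by the prefactor $c_{L,q}(h)h^{q-2}=O(h^{-2})$. Once the leading radial magnitude $h^{-2}f(\bm x)C_{L,q}$ is isolated, the remainder of the argument is a routine combination of the reverse triangle inequality and Chebyshev's bound.
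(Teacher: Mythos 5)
The paper does not prove this lemma; it imports it as Lemma~10 from \citealt{DirMS2020}, so there is no internal proof to compare against. Judged on its own merits, your reconstruction is correct and follows the standard route for directional-KDE asymptotics. The three load-bearing steps — the tangent-normal change of variables $\bm{y}=(1-rh^2)\bm{x}+h\sqrt{r(2-rh^2)}\bm{\xi}$ with surface measure $h^q[r(2-rh^2)]^{(q-2)/2}\,dr\,\omega_{q-1}(d\bm{\xi})$, the antipodal cancellation $\int_{\Omega_{q-1}}\bm{\xi}\,d\bm{\xi}=\bm{0}$ which kills the $O(h^{-1})$ tangential piece and leaves the radial term $h^{-2}f(\bm{x})\bm{x}\,C_{L,q}$ as the dominant part of the bias, and the componentwise Chebyshev bound $\|V_n\|_2=O_P\bigl((nh^{q+4})^{-1/2}\bigr)$ using $\mathbb{E}[L'(\cdot)^2]=O(h^q)$ from (D1) — all check out, and the reverse triangle inequality correctly transfers the result from the vector to its norm. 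Your observation that the leading term is purely radial (so the norm is driven by the normal component, which the projected tangential gradient estimator discards) is exactly the point the paper emphasizes in the paragraph after the lemma.

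Two small things worth tightening. First, in the derivation of $c_{L,q}(h)$, the exponent $(q-2)/2$ is negative for $q=1$, so $(2-rh^2)^{(q-2)/2}$ is not dominated by $2^{(q-2)/2}$ near $r=2/h^2$; to justify the limit one should split off the region $r>M$ (negligible uniformly in $h$ because $L$ is rapidly decaying per (D1)) and only apply dominated convergence on $[0,M]$. This does not change the conclusion, but the blanket invocation of DCT is slightly imprecise. Second, you should note explicitly that the $O(h^2)$ remainder for the bias (rather than merely $o(1)$) uses the integrability $\int_0^\infty L(r)r^{q/2}\,dr<\infty$, which is available because (D1) is posed for all $q\ge 1$ — this is where the stronger-than-needed conclusion $O(h^2)\subset o(1)$ comes from, and it is worth flagging since the lemma as stated only requires $o(1)$.
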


Lemma~\ref{norm_tot_grad} indicates that $\norm{\nabla \hat{f}_h(\bm{x})}_2 \to \infty$ with probability tending to 1 as $h\to 0$ and $nh^q \to \infty$ for any $\bm{x}\in \Omega_q$. The conclusion may seem counterintuitive at the first glance, but one should be aware that the consistency of $\nabla \hat{f}_h(\bm{x})$ holds only on its tangent component; see \eqref{Dir_KDE_unif_bound}. The radial component of $\nabla \hat{f}_h(\bm{x})$ that is perpendicular to $\Omega_q$ diverges, despite the fact that the true directional density $f$ does not have any radial component. Using Lemma~\ref{norm_tot_grad}, one can argue that the adaptive step size $\underline{\eta}_{n,h}^{(t)}$ in \eqref{Dir_MS_stepsize} of the directional mean shift algorithm as a gradient ascent method on $\Omega_q$ tends to zero at the rate $O(h^2)$ as $h\to0$ and $nh^D\to \infty$.

In the sequel, we denote by $\{\underline{\hat{\bm{x}}}^{(t)}\}_{t=0,1,...} \subset \Omega_q$ the iterative sequence generated by our directional SCMS algorithm. There are two different methods of defining a directional SCMS iteration, while we will demonstrate that one of them is superior.

\noindent $\bullet$ {\bf Method 1}: As in the Euclidean SCMS algorithm, one can define the directional SCMS sequence by the directional mean shift vector \eqref{Dir_mean_shit_vec} as:
\begin{align}
\label{Dir_SCMS_update}
\begin{split}
\underline{\hat{\bm{x}}}^{(t+1)} &\gets \underline{\hat{\bm{x}}}^{(t)} + \underline{\hat{V}}_d\left(\underline{\hat{\bm{x}}}^{(t)} \right) \underline{\hat{V}}_d\left(\underline{\hat{\bm{x}}}^{(t)} \right)^T \underline{\Xi}_h\left(\underline{\hat{\bm{x}}}^{(t)} \right)\\
&\overset{\text{(*)}}{=} \underline{\hat{\bm{x}}}^{(t)} + \underline{\hat{V}}_d(\underline{\hat{\bm{x}}}^{(t)}) \underline{\hat{V}}_d(\underline{\hat{\bm{x}}}^{(t)})^T \left[\frac{\sum_{i=1}^n \bm{X}_i L'\left(\frac{1-\bm{X}_i^T \underline{\hat{\bm{x}}}^{(t)}}{h^2} \right)}{\sum_{i=1}^n L'\left(\frac{1-\bm{X}_i^T \underline{\hat{\bm{x}}}^{(t)}}{h^2} \right)} \right]\\
&= \underline{\hat{\bm{x}}}^{(t)} + \underline{\hat{V}}_d(\underline{\hat{\bm{x}}}^{(t)}) \underline{\hat{V}}_d(\underline{\hat{\bm{x}}}^{(t)})^T \cdot \frac{\nabla \hat{f}_h(\underline{\hat{\bm{x}}}^{(t)})}{\hat{\underline{g}}_h(\underline{\hat{\bm{x}}}^{(t)})},
\end{split}
\end{align}
where $\hat{\underline{g}}_h(\bm{x})= -\frac{c_{L,q}(h)}{nh^2} \sum\limits_{i=1}^n L'\left(\frac{1-\bm{x}^T \bm{X}_i}{h^2} \right)$, $\nabla \hat{f}_h(\bm{x}) = -\frac{c_{L,q}(h)}{nh^2} \sum\limits_{i=1}^n \bm{X}_iL'\left(\frac{1-\bm{x}^T \bm{X}_i}{h^2} \right)$, and
$\underline{\hat{V}}_d(\bm{x})$ is the estimated version of $\underline{{V}}_d(\bm{x})$ defined by the directional KDE $\hat{f}_h$. Here, we plug in \eqref{Dir_mean_shit_vec} and leverage the orthogonality between the columns of $\underline{\hat{V}}_d(\underline{\hat{\bm{x}}}^{(t)})$ and $\underline{\hat{\bm{x}}}^{(t)}\in \Omega_q$ in (*).

Unlike the Euclidean SCMS algorithm, we need an extra standardization step $\underline{\hat{\bm{x}}}^{(t+1)} \gets \frac{\underline{\hat{\bm{x}}}^{(t+1)}}{\norm{\underline{\hat{\bm{x}}}^{(t+1)}}_2}$ to project the updated point back to $\Omega_q$,
which leads to the following fixed-point iteration:
\begin{align}
\label{Dir_SCMS_fix_point}
\begin{split}
\underline{\hat{\bm{x}}}^{(t+1)} &= \frac{\underline{\hat{V}}_d(\underline{\hat{\bm{x}}}^{(t)}) \underline{\hat{V}}_d(\underline{\hat{\bm{x}}}^{(t)})^T \nabla \hat{f}_h(\underline{\hat{\bm{x}}}^{(t)}) + \underline{\hat{g}}_h(\underline{\hat{\bm{x}}}^{(t)}) \cdot \underline{\hat{\bm{x}}}^{(t)}}{\norm{\underline{\hat{V}}_d(\underline{\hat{\bm{x}}}^{(t)}) \underline{\hat{V}}_d(\underline{\hat{\bm{x}}}^{(t)})^T \nabla \hat{f}_h(\underline{\hat{\bm{x}}}^{(t)}) + \underline{\hat{g}}_h(\underline{\hat{\bm{x}}}^{(t)}) \cdot \underline{\hat{\bm{x}}}^{(t)}}_2},
\end{split}
\end{align}
where the components $\underline{\hat{V}}_d(\bm{x})\underline{\hat{V}}_d(\bm{x})^T \nabla \hat{f}_h(\bm{x})$ and $\underline{\hat{g}}_h(\bm{x}) \cdot \bm{x}$ are always orthogonal for any $\bm{x} \in \Omega_q$; see Figure~\ref{fig:SCMS_One_Step} for a graphical illustration.

\begin{figure}
	\centering
	\includegraphics[width=0.75\linewidth]{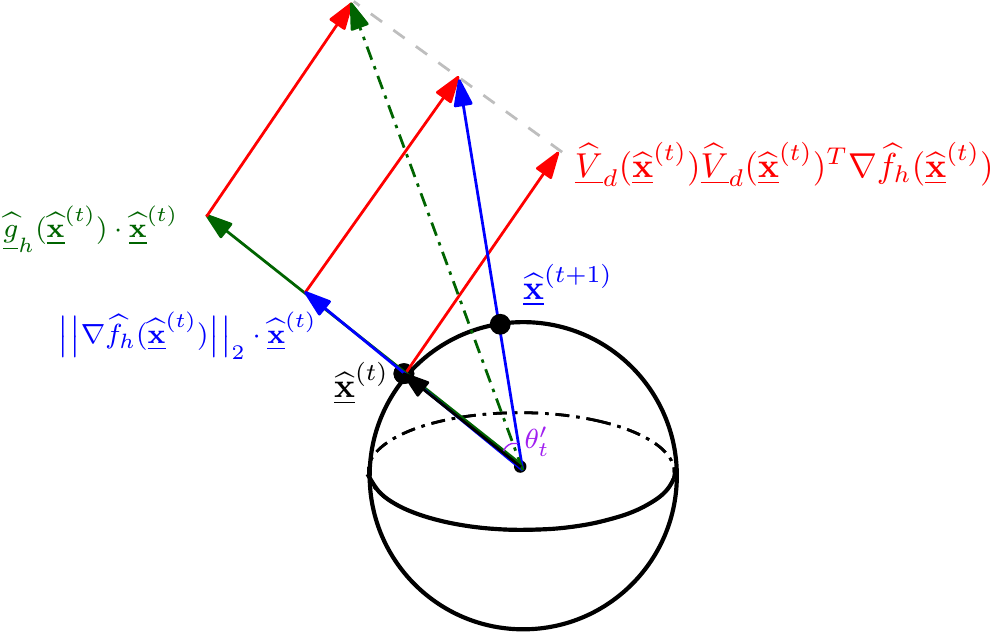}
	\caption{An illustration of one-step iterations under two candidate directional SCMS algorithms}
	\label{fig:SCMS_One_Step}
\end{figure}

\noindent $\bullet$ {\bf Method 2}: The fixed-point iteration formula \eqref{Dir_MS2} of the directional mean shift algorithm suggests a more efficient formulation of the directional SCMS algorithm as:
\begin{equation}
\label{Dir_SCMS_update_new}
\underline{\hat{\bm{x}}}^{(t+1)} \gets \underline{\hat{\bm{x}}}^{(t)} + \hat{\underline{V}}_d(\underline{\hat{\bm{x}}}^{(t)}) \hat{\underline{V}}_d(\underline{\hat{\bm{x}}}^{(t)})^T \cdot \frac{\nabla \hat{f}_h(\underline{\hat{\bm{x}}}^{(t)})}{\norm{\nabla \hat{f}_h(\underline{\hat{\bm{x}}}^{(t)})}_2} \quad \text{ and } \quad \underline{\hat{\bm{x}}}^{(t+1)} \gets \frac{\underline{\hat{\bm{x}}}^{(t+1)}}{\norm{\underline{\hat{\bm{x}}}^{(t+1)}}_2},
\end{equation}
where we replace the directional mean shift vector $\underline{\Xi}_h(\bm{x})$ with the standardized total gradient estimator $\frac{\nabla \hat{f}_h(\bm{x})}{\norm{\nabla \hat{f}_h(\bm{x})}_2}$ in \eqref{Dir_SCMS_update}. This directional SCMS is again a fixed-point iteration as:
\begin{equation}
\label{Dir_SCMS_fixed_point_new}
\underline{\hat{\bm{x}}}^{(t+1)} = \frac{\hat{\underline{V}}_d(\underline{\hat{\bm{x}}}^{(t)}) \hat{\underline{V}}_d(\underline{\hat{\bm{x}}}^{(t)})^T \nabla \hat{f}_h(\underline{\hat{\bm{x}}}^{(t)}) + \norm{\nabla \hat{f}_h(\underline{\hat{\bm{x}}}^{(t)})}_2 \cdot \underline{\hat{\bm{x}}}^{(t)}}{\norm{\hat{\underline{V}}_d(\underline{\hat{\bm{x}}}^{(t)}) \hat{\underline{V}}_d(\underline{\hat{\bm{x}}}^{(t)})^T \nabla \hat{f}_h(\underline{\hat{\bm{x}}}^{(t)}) + \norm{\nabla \hat{f}_h(\underline{\hat{\bm{x}}}^{(t)})}_2 \cdot \underline{\hat{\bm{x}}}^{(t)}}_2}.
\end{equation}

A direct computation demonstrates that, by the non-increasing property of kernel $L$ and the fact that $\norm{\bm{X}_i}_2=1$ for $i=1,...,n$,
\begin{align}
\label{norm_f_hat_g_n_ineq}
\begin{split}
\norm{\nabla \hat{f}_h(\bm{x})}_2 &= \norm{-\frac{c_{L,q}(h)}{nh^2} \sum_{i=1}^n \bm{X}_i L'\left(\frac{1-\bm{x}^T\bm{X}_i}{h^2} \right)}_2\\ &\leq \frac{c_{L,q}(h)}{nh^2} \sum_{i=1}^n \norm{\bm{X}_i L'\left(\frac{1-\bm{x}^T\bm{X}_i}{h^2} \right)}_2\\
&= -\frac{c_{L,q}(h)}{nh^2} \sum_{i=1}^n L'\left(\frac{1-\bm{x}^T\bm{X}_i}{h^2} \right) = \hat{\underline{g}}_h(\bm{x}).
\end{split}
\end{align}
Because the radial components $\underline{\hat{g}}_h(\underline{\hat{\bm{x}}}^{(t)}) \cdot \underline{\hat{\bm{x}}}^{(t)}$ and $\norm{\nabla \hat{f}_h(\underline{\hat{\bm{x}}}^{(t)})}_2 \cdot \underline{\hat{\bm{x}}}^{(t)}$ in directional SCMS iterative formulae \eqref{Dir_SCMS_fix_point} and \eqref{Dir_SCMS_fixed_point_new} respectively make no contributions to the iteration of point $\underline{\hat{\bm{x}}}^{(t)}$ on $\Omega_q$, the inequality \eqref{norm_f_hat_g_n_ineq} indicates that the directional SCMS algorithm with iterative formula \eqref{Dir_SCMS_fixed_point_new} takes a larger step size in moving the SCMS sequence $\big\{\underline{\hat{\bm{x}}}^{(t)} \big\}_{t=0}^{\infty}$ on $\Omega_q$. This helps accelerate the movements of those points that are far away from the ridge $\underline{\hat{R}}_d$ or lie in the regions with low density values of $\hat{f}_h$. In this sense, the directional SCMS algorithm with iterative formula \eqref{Dir_SCMS_fixed_point_new} will be superior to \eqref{Dir_SCMS_fix_point}; see Figure~\ref{fig:SCMS_One_Step} for a graphical demonstration. We thus choose {\bf Method 2} as our directional SCMS algorithm. 
Algorithm~\ref{Algo:Dir_SCMS} in Appendix~\ref{App:Algo} provides the detailed steps of implementing Method 2 in practice.

Inspired by Proposition 2 in \cite{SCMS_conv2013} for the Euclidean SCMS algorithm, we derive the ascending property of our directional SCMS algorithm \eqref{Dir_SCMS_update_new} and two convergent results for stopping the algorithm in the following proposition. The proof is deferred to Appendix~\ref{App:Proofs_Dir_SCMS}, in which our argument is similar to but logically different from the proof of Proposition 2 in \cite{SCMS_conv2013}.

\begin{proposition}
	\label{Dir_SCMS_prop}
	Assume that the directional kernel $L$ is non-increasing, twice continuously differentiable, and convex with $L(0) <\infty$. Given the directional KDE $\hat{f}_h(\bm{x})=\frac{c_{L,q}(h)}{n}\sum\limits_{i=1}^n L\left(\frac{1-\bm{x}^T\bm{X}_i}{h^2} \right)$ and the directional SCMS sequence $\big\{\underline{\hat{\bm{x}}}^{(t)} \big\}_{t=0}^{\infty} \subset \Omega_q$ defined by \eqref{Dir_SCMS_update_new} or \eqref{Dir_SCMS_fixed_point_new}, the following properties hold:
	\begin{enumerate}[label=(\alph*)]
		\item The estimated density sequence $\left\{\hat{f}_h(\underline{\hat{\bm{x}}}^{(t)}) \right\}_{t=0}^{\infty}$ is non-decreasing and thus converges.
		\item $\lim\limits_{t\to \infty} \norm{\hat{\underline{V}}_d(\underline{\hat{\bm{x}}}^{(t)})^T \nabla \hat{f}_h(\underline{\hat{\bm{x}}}^{(t)})}_2=0$.
		\item If the kernel $L$ is also strictly decreasing on $[0,\infty)$, then $\lim\limits_{t\to \infty} \norm{\underline{\hat{\bm{x}}}^{(t+1)} - \underline{\hat{\bm{x}}}^{(t)}}_2=0$.
	\end{enumerate}
\end{proposition}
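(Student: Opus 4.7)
The result mirrors Proposition 2 of \cite{SCMS_conv2013}, with an extra complication from the spherical renormalization step. Introduce the shorthand $\bm{g}_t := \nabla \hat{f}_h(\underline{\hat{\bm{x}}}^{(t)})$, $\hat{U}_t := \hat{\underline{V}}_d(\underline{\hat{\bm{x}}}^{(t)}) \hat{\underline{V}}_d(\underline{\hat{\bm{x}}}^{(t)})^T$, and $c_t := \norm{\hat{U}_t \bm{g}_t + \norm{\bm{g}_t}_2 \underline{\hat{\bm{x}}}^{(t)}}_2$, so \eqref{Dir_SCMS_fixed_point_new} reads $\underline{\hat{\bm{x}}}^{(t+1)} = (\hat{U}_t\bm{g}_t + \norm{\bm{g}_t}_2 \underline{\hat{\bm{x}}}^{(t)})/c_t$. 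Because the columns of $\hat{\underline{V}}_d(\underline{\hat{\bm{x}}}^{(t)})$ lie in the tangent space $T_{\underline{\hat{\bm{x}}}^{(t)}}$, the orthogonality $\hat{U}_t \underline{\hat{\bm{x}}}^{(t)} = \bm{0}$ gives the identity $c_t^2 = \norm{\hat{U}_t\bm{g}_t}_2^2 + \norm{\bm{g}_t}_2^2$, and hence $c_t \geq \norm{\bm{g}_t}_2$ together with $c_t - \norm{\bm{g}_t}_2 = \norm{\hat{U}_t\bm{g}_t}_2^2 / (c_t + \norm{\bm{g}_t}_2)$. These identities drive the entire argument.

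\textbf{Part (a).} Convexity of $L$ gives the first-order lower bound $L(a) \geq L(b) + L'(b)(a-b)$; plugging in $a = (1-\underline{\hat{\bm{x}}}^{(t+1),T}\bm{X}_i)/h^2$, $b = (1-\underline{\hat{\bm{x}}}^{(t),T}\bm{X}_i)/h^2$, summing with weight $c_{L,q}(h)/n$, and recognizing the resulting vector sum as $-h^2\bm{g}_t$ produces the subgradient-type inequality
$$\hat{f}_h(\underline{\hat{\bm{x}}}^{(t+1)}) - \hat{f}_h(\underline{\hat{\bm{x}}}^{(t)}) \;\geq\; \bigl(\underline{\hat{\bm{x}}}^{(t+1)} - \underline{\hat{\bm{x}}}^{(t)}\bigr)^T \bm{g}_t.$$
Substituting the explicit form of $\underline{\hat{\bm{x}}}^{(t+1)}$ and simplifying with the orthogonality identities above, the right-hand side evaluates to $(c_t - \norm{\bm{g}_t}_2)(c_t + \norm{\bm{g}_t}_2 - \underline{\hat{\bm{x}}}^{(t),T}\bm{g}_t)/c_t$, which is non-negative since $c_t \geq \norm{\bm{g}_t}_2$ and $\underline{\hat{\bm{x}}}^{(t),T}\bm{g}_t \leq \norm{\bm{g}_t}_2$ by Cauchy--Schwarz. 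Monotonicity plus boundedness of $\hat{f}_h$ on the compact set $\Omega_q$ yields convergence.

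\textbf{Part (b).} Using $c_t + \norm{\bm{g}_t}_2 - \underline{\hat{\bm{x}}}^{(t),T}\bm{g}_t \geq \norm{\bm{g}_t}_2$ and $c_t \leq 2\norm{\bm{g}_t}_2$ together with $c_t - \norm{\bm{g}_t}_2 = \norm{\hat{U}_t\bm{g}_t}_2^2/(c_t + \norm{\bm{g}_t}_2)$, the estimate in (a) sharpens to
$$\hat{f}_h(\underline{\hat{\bm{x}}}^{(t+1)}) - \hat{f}_h(\underline{\hat{\bm{x}}}^{(t)}) \;\geq\; \frac{\norm{\hat{U}_t\bm{g}_t}_2^2}{6\,\norm{\bm{g}_t}_2}.$$
Since $L$ is convex non-increasing, $|L'(r)| \leq |L'(0)| < \infty$ on $[0,\infty)$, so the triangle inequality gives $\norm{\bm{g}_t}_2 \leq M_h := c_{L,q}(h)|L'(0)|/h^2$. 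Telescoping against the finite limit from (a) forces $\sum_{t\geq 0}\norm{\hat{U}_t\bm{g}_t}_2^2 < \infty$ and hence $\norm{\hat{U}_t\bm{g}_t}_2 \to 0$. Orthonormality of the columns of $\hat{\underline{V}}_d$ then delivers $\norm{\hat{\underline{V}}_d(\underline{\hat{\bm{x}}}^{(t)})^T \nabla \hat{f}_h(\underline{\hat{\bm{x}}}^{(t)})}_2 \to 0$.

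\textbf{Part (c): the main obstacle.} A direct computation using the identities above gives the exact formula
$$\norm{\underline{\hat{\bm{x}}}^{(t+1)} - \underline{\hat{\bm{x}}}^{(t)}}_2^2 \;=\; \frac{2(c_t - \norm{\bm{g}_t}_2)}{c_t} \;=\; \frac{2\,\norm{\hat{U}_t\bm{g}_t}_2^2}{c_t(c_t + \norm{\bm{g}_t}_2)} \;\leq\; \frac{\norm{\hat{U}_t\bm{g}_t}_2^2}{\norm{\bm{g}_t}_2^2},$$
so in view of (b) the conclusion reduces to a uniform positive lower bound on $\norm{\bm{g}_t}_2$. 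This is where the extra strict-monotonicity assumption on $L$ enters: it implies $-L'(r) \geq c_0 > 0$ on the compact interval $[0,2/h^2]$ by continuity of $L'$, and hence the radial coefficient $\hat{\underline{g}}_h(\bm{x}) \geq c_{L,q}(h) c_0/h^2$ uniformly on $\Omega_q$. The technical difficulty is lifting this scalar lower bound to a lower bound on the vector norm $\norm{\nabla \hat{f}_h(\bm{x})}_2 = (c_{L,q}(h)/(nh^2))\norm{\sum_i \bm{X}_i |L'((1-\bm{x}^T\bm{X}_i)/h^2)|}_2$, since the $\bm{X}_i$ can a priori cancel on the sphere. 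I plan to resolve this by a compactness-and-continuity argument along the orbit: by part (b) together with continuity of $\hat{\underline{V}}_d$ and $\nabla \hat{f}_h$, every accumulation point $\bm{z}^*$ of $\{\underline{\hat{\bm{x}}}^{(t)}\}$ satisfies $\hat{U}(\bm{z}^*)\nabla \hat{f}_h(\bm{z}^*) = \bm{0}$; where $\norm{\nabla \hat{f}_h(\bm{z}^*)}_2 > 0$, the iteration map $T$ from \eqref{Dir_SCMS_fixed_point_new} is continuous and has $\bm{z}^*$ as a fixed point, forcing $\underline{\hat{\bm{x}}}^{(t_k+1)} - \underline{\hat{\bm{x}}}^{(t_k)} \to \bm{0}$ along each convergent subsequence, whence the full conclusion follows by the standard sub-subsequence argument on compact $\Omega_q$. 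Ruling out the degenerate case $\nabla \hat{f}_h(\bm{z}^*) = \bm{0}$ is the subtlest step and is where I expect the strict-decrease assumption to do the heavy lifting.
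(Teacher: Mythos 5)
Your parts (a) and (b) match the paper's argument essentially step for step. Convexity of $L$ yields the subgradient inequality $\hat f_h(\underline{\hat{\bm x}}^{(t+1)}) - \hat f_h(\underline{\hat{\bm x}}^{(t)}) \geq (\underline{\hat{\bm x}}^{(t+1)} - \underline{\hat{\bm x}}^{(t)})^T\nabla\hat f_h(\underline{\hat{\bm x}}^{(t)})$, and tangency ($\hat U_t\underline{\hat{\bm x}}^{(t)} = \bm 0$, hence $c_t^2 = \norm{\hat U_t\bm g_t}_2^2 + \norm{\bm g_t}_2^2$) drives the simplification. Your factorization $(c_t - \norm{\bm g_t}_2)(c_t + \norm{\bm g_t}_2 - \underline{\hat{\bm x}}^{(t)T}\bm g_t)/c_t$ is correct; the paper lands on the constant $1/(1+\sqrt 2)$ where you get $1/6$, but both are positive universal constants and both feed the same telescoping argument. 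For (c) you correctly derive the exact identity $\norm{\underline{\hat{\bm x}}^{(t+1)} - \underline{\hat{\bm x}}^{(t)}}_2^2 = 2(c_t - \norm{\bm g_t}_2)/c_t$ and reduce the claim to a uniform positive lower bound on $\norm{\bm g_t}_2$; the paper performs the same reduction.

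You have put your finger precisely on the crux. At the reduction point, the paper's own proof simply asserts that strict decrease and $C^2$-smoothness of $L$ imply that $\norm{\nabla\hat f_h(\bm x)}_2$ is bounded away from zero on $\Omega_q$, offering no argument, and your cancellation worry is well-founded: $\nabla\hat f_h(\bm x) = \tfrac{c_{L,q}(h)}{nh^2}\sum_i\bm X_i\,|L'((1-\bm x^T\bm X_i)/h^2)|$ is a strictly positive combination of unit vectors, and it vanishes identically for, e.g., $n=2$ with $\bm X_1 = -\bm X_2$ at any $\bm x$ on the great circle orthogonal to $\bm X_1$. Strict decrease does give your scalar lower bound on $\hat{\underline g}_h(\bm x)$, but inequality \eqref{norm_f_hat_g_n_ineq} runs in the wrong direction ($\norm{\nabla\hat f_h}_2 \leq \hat{\underline g}_h$), so the scalar bound does not lift to the vector norm. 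Your accumulation-point route is a sensible repair attempt, but as you acknowledge it still requires ruling out $\nabla\hat f_h(\bm z^*) = \bm 0$ at accumulation points $\bm z^*$, which is the same obstruction in a different guise. Making the statement airtight would require either a general-position hypothesis on $\{\bm X_i\}$ guaranteeing $\nabla\hat f_h$ has no zeros on $\Omega_q$ (a measure-one event for continuous data but not automatic), or a restriction on the initialization ensuring, via the monotonicity in (a), that the iterates stay away from the zero set of $\nabla\hat f_h$ — and to be fair, either addendum would equally repair the paper's proof.
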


\begin{remark}
	Our results (b) and (c) in Proposition~\ref{Dir_SCMS_prop} demonstrates that the stopping criterion of our directional SCMS algorithm can follow either the norm of the principal Riemannian gradient estimator $\norm{\hat{\underline{V}}_d(\underline{\hat{\bm{x}}}^{(t)})^T \nabla \hat{f}_h(\underline{\hat{\bm{x}}}^{(t)})}_2$ or the (Euclidean) distance $\norm{\underline{\hat{\bm{x}}}^{(t+1)} -\underline{\hat{\bm{x}}}^{(t)}}_2$ between two consecutive iterative points, where the latter one requires a strictly decreasing kernel such as the von Mises kernel $L(r)=e^{-r}$.
\end{remark}

Motivated by the iterative formula \eqref{grad_ascent_Manifold} for the gradient ascent algorithm on $\Omega_q$, we consider writing our directional SCMS algorithm as a variant of the SCGA algorithm on $\Omega_q$ with an iterative formula:
\begin{equation}
\label{SCGA_Dir_sample}
\hat{\underline{\bm{x}}}^{(t+1)} = \Exp_{\hat{\underline{\bm{x}}}^{(t)}}\left(\underline{\eta}_{n,h}^{(t)'} \cdot \hat{\underline{V}}_d(\hat{\underline{\bm{x}}}^{(t)}) \hat{\underline{V}}_d(\hat{\underline{\bm{x}}}^{(t)})^T \grad \hat{f}_h(\hat{\underline{\bm{x}}}^{(t)}) \right),
\end{equation}
where $\Exp_{\bm{x}}$ is the exponential map at $\bm{x}\in \Omega_q$ and $\underline{\eta}_{n,h}^{(t)'} >0$ is the adaptive step size. Analogous to the Euclidean SCMS algorithm and its SCGA representation \eqref{SCGA_update_sample}, the formulation \eqref{SCGA_Dir_sample} will reveal the (linear) convergence properties of our directional SCMS algorithm in the upcoming Section~\ref{Sec:Dir_SCGA_LC}.
To derive an explicit formula for $\underline{\eta}_{n,h}^{(t)'}$, we recall the fixed-point equation \eqref{Dir_SCMS_fixed_point_new} of our directional SCMS algorithm and compute the geodesic distance between $\underline{\hat{\bm{x}}}^{(t+1)}$ and $\underline{\hat{\bm{x}}}^{(t)}$ (one-step directional SCMS update) as:
\begin{align*}
\arccos\left(\left(\underline{\hat{\bm{x}}}^{(t+1)}\right)^T \underline{\hat{\bm{x}}}^{(t)} \right) &= \arccos\left(\frac{ \norm{\nabla \hat{f}_h(\underline{\hat{\bm{x}}}^{(t)})}_2}{\norm{\underline{\hat{V}}_d(\underline{\hat{\bm{x}}}^{(t)}) \underline{\hat{V}}_d(\underline{\hat{\bm{x}}}^{(t)})^T \nabla \hat{f}_h(\underline{\hat{\bm{x}}}^{(t)}) + \norm{\nabla \hat{f}_h(\underline{\hat{\bm{x}}}^{(t)})}_2 \cdot \underline{\hat{\bm{x}}}^{(t)}}_2} \right)\\
&= \norm{\underline{\eta}_{n,h}^{(t)'} \cdot \underline{\hat{V}}_d(\underline{\hat{\bm{x}}}^{(t)}) \underline{\hat{V}}_d(\underline{\hat{\bm{x}}}^{(t)})^T \grad \hat{f}_h(\underline{\hat{\bm{x}}}^{(t)})}_2,
\end{align*}
where, in the second equality, we equate the geodesic distance between $\underline{\hat{\bm{x}}}^{(t+1)}$ and $\underline{\hat{\bm{x}}}^{(t)}$ to the norm of the tangent vector inside the exponential map in \eqref{SCGA_Dir_sample}. This suggests that our directional SCMS algorithm is a (sample-based) SCGA algorithm on $\Omega_q$ with adaptive step size 
\begin{equation}
\label{Dir_SCMS_stepsize}
\underline{\eta}_{n,h}^{(t)'}
= \frac{\arccos\left(\frac{ \norm{\nabla \hat{f}_h(\underline{\hat{\bm{x}}}^{(t)})}_2}{\norm{\underline{\hat{V}}_d(\underline{\hat{\bm{x}}}^{(t)}) \underline{\hat{V}}_d(\underline{\hat{\bm{x}}}^{(t)})^T \nabla \hat{f}_h(\underline{\hat{\bm{x}}}^{(t)}) + \norm{\nabla \hat{f}_h(\underline{\hat{\bm{x}}}^{(t)})}_2 \cdot \underline{\hat{\bm{x}}}^{(t)}}_2} \right)}{\norm{\underline{\hat{V}}_d(\underline{\hat{\bm{x}}}^{(t)}) \underline{\hat{V}}_d(\underline{\hat{\bm{x}}}^{(t)})^T \nabla \hat{f}_h(\underline{\hat{\bm{x}}}^{(t)})}_2} = \frac{\theta_t'}{\norm{\nabla \hat{f}_h(\underline{\hat{\bm{x}}}^{(t)})}_2 \cdot \tan \theta_t'}
\end{equation}
for $t=0,1,...$, where $\theta_t'$ denotes the angle between $\underline{\hat{\bm{x}}}^{(t+1)}$ and $\underline{\hat{\bm{x}}}^{(t)}$. Note that the above derivation is based on the orthogonality between $\underline{\hat{\bm{x}}}^{(t)}$ and the order-$d$ principal Riemannian gradient estimator
$$\underline{\hat{G}}_d(\underline{\hat{\bm{x}}}^{(t)}) = \underline{\hat{V}}_d(\underline{\hat{\bm{x}}}^{(t)}) \underline{\hat{V}}_d(\underline{\hat{\bm{x}}}^{(t)})^T \grad \hat{f}_h(\underline{\hat{\bm{x}}}^{(t)}) = \underline{\hat{V}}_d(\underline{\hat{\bm{x}}}^{(t)}) \underline{\hat{V}}_d(\underline{\hat{\bm{x}}}^{(t)})^T \nabla \hat{f}_h(\underline{\hat{\bm{x}}}^{(t)});$$
see Figure~\ref{fig:SCMS_One_Step} for a graphical illustration. When our directional SCMS algorithm approaches the estimated ridge $\underline{\hat{R}}_d$, $\theta_t'$ tends to 0 and $\frac{\theta_t'}{\tan \theta_t'}$ is approximately equal to 1. Thus, the step size $\underline{\eta}_{n,h}^{(t)'}$ is also controlled by $\norm{\hat{f}_h(\underline{\hat{\bm{x}}}^{(t)})}_2$ as in the directional mean shift scenario; see Equation~\eqref{Dir_MS_stepsize}. Therefore, Lemma~\ref{norm_tot_grad} is still effective to argue that the step size $\underline{\eta}_{n,h}^{(t)'}$ converges to 0 with probability tending to 1 when $h\to 0$ and $nh^q \to\infty$.

\subsection{Linear Convergence of Population and Sample-Based SCGA Algorithms on $\Omega_q$}
\label{Sec:Dir_SCGA_LC}

As we have shown in \eqref{SCGA_Dir_sample} that our proposed directional SCMS algorithm is an example of the sample-based SCGA method with directional KDE $\hat{f}_h$ on $\Omega_q$ with an adaptive step size $\eta_{n,h}^{(t)'}$, our main focus in this subsection will be on the (linear) convergence of such SCGA algorithm on $\Omega_q$. We first consider the population SCGA algorithm on $\Omega_q$ defined by its iterative formula as:
\begin{equation}
\label{SCGA_manifold_update}
\underline{\bm{x}}^{(t+1)} = \Exp_{\underline{\bm{x}}^{(t)}}\left(\underline{\eta} \cdot \underline{V}_d(\underline{\bm{x}}^{(t)}) \underline{V}_d(\underline{\bm{x}}^{(t)})^T \grad f(\underline{\bm{x}}^{(t)}) \right)
\end{equation}
with a suitable choice of the step size $\underline{\eta} >0$. The sample-based version substitutes the subspace constrained Riemannian gradient $\underline{V}_d(\underline{\bm{x}}) \underline{V}_d(\underline{\bm{x}})^T \grad f(\underline{\bm{x}})$ with its estimator $\hat{\underline{V}}_d(\underline{\bm{x}}) \hat{\underline{V}}_d(\underline{\bm{x}})^T \grad \hat{f}_h(\underline{\bm{x}})$ and generally has a constant step size $\underline{\eta}>0$; see \eqref{SCGA_Dir_sample}. In the sequel, we denote the sequence defined by the population SCGA algorithm with objective function $f$ on $\Omega_q$ by $\big\{\underline{\bm{x}}^{(t)}\big\}_{t=0}^{\infty}$ and the sequence defined by the sample-based SCGA algorithm with objective function $\hat{f}_h$ on $\Omega_q$ by $\big\{\hat{\underline{\bm{x}}}^{(t)}\big\}_{t=0}^{\infty}$.  
	
	\begin{remark}
		Note that the definition \eqref{SCGA_manifold_update} of the SCGA algorithm is adaptive to any Riemannian manifold $\mathcal{M}$, not restricting to the unit hypersphere $\Omega_q$. The only requirement on $\mathcal{M}$ for \eqref{SCGA_manifold_update} to be valid is that the exponential map $\Exp_{\underline{\bm{x}}^{(t)}}: T_{\bm{\underline{\bm{x}}^{(t)}}}(\mathcal{M}) \to \mathcal{M}$ is well-defined within a small neighborhood of $\bm{0}$ on the tangent space $T_{\bm{\underline{\bm{x}}^{(t)}}}(\mathcal{M})$ for each $t \geq 0$. More importantly, our assumptions (\underline{A1-3}) and condition (\underline{{A4}}) are generalizable to any smooth function $f$ supported on $\mathcal{M}$, and our (linear) convergence results are applicable to the SCGA algorithm \eqref{SCGA_manifold_update} on $\mathcal{M}$ whose sectional curvature is lower bounded by a real number; see one of the key lemmas in our proofs (Lemma~\ref{trigo_inequality}).
	\end{remark}

	Similar to the SCGA algorithm in the Euclidean space $\mathbb{R}^D$, the following proposition demonstrates that the SCGA algorithm \eqref{SCGA_manifold_update} on $\Omega_q$ yields a non-decreasing sequence of the objective function $f$ supported on $\Omega_q$ and a convergent SCGA sequence to the directional ridge $\underline{R}_d$, as long as the step size $\underline{\eta}$ is sufficiently small.
	
	\begin{proposition}[Convergence of the SCGA Algorithm on $\Omega_q$]
		\label{SCGA_Dir_conv}
		For any SCGA sequence $\big\{\underline{\bm{x}}^{(t)} \big\}_{t=0}^{\infty} \subset \Omega_q$ defined by \eqref{SCGA_manifold_update} with $0<\underline{\eta} < \frac{2}{q\norm{\mathcal{H}f}_{\infty}^{(2)}}$, the following properties hold:
		\begin{enumerate}[label=(\alph*)]
			\item Under condition (\underline{A1}), the objective function sequence $\big\{f(\underline{\bm{x}}^{(t)}) \big\}_{t=0}^{\infty}$ is non-decreasing and thus converges.
			
			\item Under condition (\underline{A1}), $\lim_{t\to\infty} \norm{\underline{V}_d(\underline{\bm{x}}^{(t)})^T \grad f(\underline{\bm{x}}^{(t)})}_2 = \lim_{t\to\infty} d_g\left(\underline{\bm{x}}^{(t+1)}, \underline{\bm{x}}^{(t)} \right)=0$.
			
			\item Under conditions (\underline{A1-3}), $\lim_{t\to\infty} d_g\left(\underline{\bm{x}}^{(t)},\underline{R}_d \right)=0$ whenever $\underline{\bm{x}}^{(0)} \in \underline{R}_d\oplus \underline{r}_1$ with the convergence radius $\underline{r}_1$ satisfying
			$$0< \underline{r}_1 < \min\left\{\underline{\rho}/2, \frac{\min\left\{\underline{\beta}_1, 1 \right\}^2}{\underline{A}_2 \left(\norm{f}_{\infty}^{(3)} + \norm{f}_{\infty}^{(4)} \right)}, 2\sin\left(\frac{\underline{\beta}_1}{2\underline{A}_4(f)} \right) \right\},$$
			where $\underline{A}_2$ is a constant defined in (h) of Lemma~\ref{Dir_norm_reach_prop} while $\underline{A}_4(f) >0$ is a quantity depending on both the dimension $q$ and the functional norm $\norm{f}_{\infty,4}^*$ up to the fourth-order (partial) derivatives of $f$.
		\end{enumerate}
	\end{proposition}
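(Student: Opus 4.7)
My plan is to mirror the architecture of the Euclidean proof of Proposition~\ref{SCGA_conv}, but everywhere replace Euclidean Taylor expansion with a Riemannian second-order expansion along the geodesic joining $\underline{\bm{x}}^{(t)}$ and $\underline{\bm{x}}^{(t+1)} = \Exp_{\underline{\bm{x}}^{(t)}}(\underline{\eta}\,\underline{G}_d(\underline{\bm{x}}^{(t)}))$. For part (a), I would write, with $\bm{w}_t := \underline{\eta}\,\underline{V}_d(\underline{\bm{x}}^{(t)})\underline{V}_d(\underline{\bm{x}}^{(t)})^T\grad f(\underline{\bm{x}}^{(t)}) \in T_{\underline{\bm{x}}^{(t)}}$, the expansion
\begin{equation*}
f(\underline{\bm{x}}^{(t+1)}) = f(\underline{\bm{x}}^{(t)}) + \left\langle \grad f(\underline{\bm{x}}^{(t)}), \bm{w}_t \right\rangle + \tfrac{1}{2}\left\langle \bm{w}_t, \mathcal{H}f(\underline{\bm{\xi}}_t)[\bm{w}_t]\right\rangle,
\end{equation*}
for some $\underline{\bm{\xi}}_t$ along the connecting geodesic. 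The inner product in the first-order term equals $\underline{\eta}\,\norm{\underline{G}_d(\underline{\bm{x}}^{(t)})}_2^2$ because $\underline{U}_d(\underline{\bm{x}}^{(t)})$ is a projection, and the Hessian term is controlled by $\tfrac{\underline{\eta}^2}{2}\,q\,\norm{\mathcal{H}f}_\infty^{(2)} \norm{\underline{G}_d(\underline{\bm{x}}^{(t)})}_2^2$ (using the explicit Riemannian-Hessian bound and condition (\underline{A1})). Hence
\begin{equation*}
f(\underline{\bm{x}}^{(t+1)}) - f(\underline{\bm{x}}^{(t)}) \geq \underline{\eta}\left(1 - \tfrac{\underline{\eta}\,q\,\norm{\mathcal{H}f}_\infty^{(2)}}{2}\right)\norm{\underline{G}_d(\underline{\bm{x}}^{(t)})}_2^2 \geq 0
\end{equation*}
for $0<\underline{\eta} < \frac{2}{q\norm{\mathcal{H}f}_\infty^{(2)}}$. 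Since $f$ is continuous on the compact set $\Omega_q$, the monotone sequence $\{f(\underline{\bm{x}}^{(t)})\}$ is bounded above and therefore converges.

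For part (b), telescoping the above descent inequality gives $\sum_{t=0}^{\infty} \norm{\underline{G}_d(\underline{\bm{x}}^{(t)})}_2^2 < \infty$, so $\norm{\underline{V}_d(\underline{\bm{x}}^{(t)})^T \grad f(\underline{\bm{x}}^{(t)})}_2 \to 0$. Because $\underline{G}_d(\underline{\bm{x}}^{(t)}) \in T_{\underline{\bm{x}}^{(t)}}$, the geodesic distance between consecutive iterates equals $\underline{\eta}\,\norm{\underline{G}_d(\underline{\bm{x}}^{(t)})}_2$ by the defining property of $\Exp$, and thus $d_g(\underline{\bm{x}}^{(t+1)},\underline{\bm{x}}^{(t)}) \to 0$.

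For part (c), which is the main obstacle, I plan a two-layer argument. First, I need a local quadratic-behavior lemma: under (\underline{A1-3}), for every $\underline{\bm{y}} \in (\underline{R}_d \oplus \underline{\rho})\cap \Omega_q$ one has $d_g(\underline{\bm{y}}, \underline{R}_d) \leq C\,\norm{\underline{V}_d(\underline{\bm{y}})^T \grad f(\underline{\bm{y}})}_2$ for some constant $C = C(\underline{\beta}_0,\underline{\beta}_1)$; this is the directional counterpart of the estimate used in the proof of Proposition~\ref{SCGA_conv}(c) and follows from a Taylor expansion of the principal Riemannian gradient along the normal direction at $\pi_{\underline{R}_d}(\underline{\bm{y}})$, where the eigengap in (\underline{A2}) provides invertibility of the relevant Jacobian and (\underline{A3}) controls the remainder. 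The three ceilings in the definition of $\underline{r}_1$ serve their standard purposes: $\underline{\rho}/2$ keeps iterates inside the region where (\underline{A2-3}) apply; the second threshold guarantees $\underline{r}_1 < \mathtt{reach}(\underline{R}_d)$ via Lemma~\ref{Dir_norm_reach_prop}(h) (hence $\pi_{\underline{R}_d}$ is well-defined and single-valued); the $2\sin(\underline{\beta}_1/(2\underline{A}_4(f)))$ factor translates the Euclidean analog $\underline{\beta}_1/\underline{A}_4(f)$ into a chord threshold via the identity \eqref{Geo_Eu_dist_eq}, so that the quadratic bound above can be iterated.

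Second, I would run an induction that the entire trajectory $\{\underline{\bm{x}}^{(t)}\}$ stays inside $\underline{R}_d \oplus \underline{r}_1$. The step from $\underline{\bm{x}}^{(t)}$ to $\underline{\bm{x}}^{(t+1)}$ has geodesic length $\underline{\eta}\,\norm{\underline{G}_d(\underline{\bm{x}}^{(t)})}_2$, which by the monotonicity in (a) and the bound on $\norm{\grad f}_\infty$ is small enough (shrinking $\underline{r}_1$ if necessary, which is why the last term in $\underline{r}_1$ depends on $\underline{A}_4(f)$) that $d_g(\underline{\bm{x}}^{(t+1)}, \underline{R}_d) \leq d_g(\underline{\bm{x}}^{(t)}, \underline{R}_d) + \underline{\eta}\norm{\underline{G}_d}_2 < \underline{r}_1$. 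Combining the trapping with part (b) and the quadratic-behavior lemma yields $d_g(\underline{\bm{x}}^{(t)}, \underline{R}_d) \leq C\,\norm{\underline{G}_d(\underline{\bm{x}}^{(t)})}_2 \to 0$. The technical complications I anticipate are (i) establishing the quadratic lemma with the correct constants tied to (\underline{A2-3}) on the curved manifold, and (ii) ensuring that the exponential map and the projection $\pi_{\underline{R}_d}$ interact compatibly inside the reach, both of which ultimately reduce to careful bookkeeping with the Riemannian Taylor remainder and the equivalence \eqref{Geo_Eu_dist_eq} between chord and geodesic distance on $\Omega_q$.
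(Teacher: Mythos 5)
Your plan for parts (a) and (b) is the same as the paper's: a second-order Riemannian bound controlled by $q\norm{\mathcal{H}f}_{\infty}^{(2)}$ gives the descent inequality, from which monotonicity and vanishing of $\norm{\underline{V}_d(\underline{\bm{x}}^{(t)})^T\grad f(\underline{\bm{x}}^{(t)})}_2$ and of the consecutive geodesic distances follow. The paper reaches the same inequality via the fundamental theorem for line integrals combined with parallel transport (equation~\eqref{L_smooth_manifold}) rather than an intermediate-point Taylor form, which avoids the technicality that $\bm{w}_t$ lives in $T_{\underline{\bm{x}}^{(t)}}$ while $\mathcal{H}f(\underline{\bm{\xi}}_t)$ acts on $T_{\underline{\bm{\xi}}_t}$; if you keep your formulation you should parallel-transport $\bm{w}_t$ into $T_{\underline{\bm{\xi}}_t}$ before contracting with the Hessian. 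For part (c), your proposed ``quadratic-behavior lemma'' $d_g(\underline{\bm{y}},\underline{R}_d) \leq C\norm{\underline{V}_d(\underline{\bm{y}})^T \grad f(\underline{\bm{y}})}_2$ with $C=2/\underline{\beta}_1$ is exactly what the paper establishes through the normal-space matrix $\underline{M}(\underline{\bm{z}}^{(t)})$, the eigenvalue bound in Lemma~\ref{Dir_norm_reach_prop}(d), and the $\underline{A}_4(f)$-controlled Taylor remainder, and your reading of the three ceilings in $\underline{r}_1$ (eigengap region, reach, and chord-to-geodesic translation via~\eqref{Geo_Eu_dist_eq}) matches the paper's.

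The one piece that does not close as written is the trapping induction. You assert $d_g(\underline{\bm{x}}^{(t+1)},\underline{R}_d) \leq d_g(\underline{\bm{x}}^{(t)},\underline{R}_d) + \underline{\eta}\norm{\underline{G}_d(\underline{\bm{x}}^{(t)})}_2 < \underline{r}_1$, but if $d_g(\underline{\bm{x}}^{(t)},\underline{R}_d)$ is already near $\underline{r}_1$, adding a strictly positive step pushes the right-hand side past $\underline{r}_1$; shrinking $\underline{r}_1$ cannot rescue a bound of the form ``current distance plus a step length,'' because the ``current distance'' term scales with whatever radius you choose. Closing the induction requires a genuine contraction or path-length control — for instance, combining the error bound $d_g(\underline{\bm{x}}^{(t)},\underline{R}_d)\le C\norm{\underline{G}_d(\underline{\bm{x}}^{(t)})}_2$ with a quantitative decrease of $\norm{\underline{G}_d}$ along the iterates, or a summable total path length as in Lemma~\ref{quad_bound_prop}(a), neither of which holds automatically under (\underline{A1--3}) alone. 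Note that the paper's own proof of this part is also silent on trapping: it derives the error bound under the hypothesis ``when $\underline{\bm{x}}^{(t)} \in (\underline{R}_d \oplus \underline{r}_1)\cap\Omega_q$'' and then invokes (b), so you share the same implicit assumption; but since you made the trapping step explicit, be aware that the naive triangle-inequality induction you wrote does not deliver it.
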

	
	The proof of Proposition~\ref{SCGA_Dir_conv} can be found in Appendix~\ref{App:Proofs_Dir_SCMS}. The upper bound for the convergence radius $\underline{r}_1>0$ has the same meaning as in Proposition~\ref{SCGA_conv} for the Euclidean SCGA algorithm, ensuring that $\underline{r}_1 \leq \mathtt{reach}(\underline{R}_d)$ and the distances from the SCGA sequence $\big\{\underline{\bm{x}}^{(t)} \big\}_{t=0}^{\infty}$ on $\Omega_q$ to the directional ridge $\underline{R}_d$ can be upper bounded by the norms $\norm{\underline{V}_d(\underline{\bm{x}}^{(t)})^T \grad f(\underline{\bm{x}}^{(t)})}_2$ of order-$d$ principal Riemannian gradients for all $t=0,1,...$.
	
	\begin{corollary}[Convergence of the Directional SCMS Algorithm]
		\label{Dir_SCMS_conv}
		When the fixed sample size $n$ is sufficiently large and the bandwidth $h$ is chosen to be correspondingly small, the following properties hold for the directional SCMS sequence $\big\{\hat{\underline{\bm{x}}}^{(t)}\big\}_{t=0}^{\infty} \subset \Omega_q$ with high probability under conditions (\underline{A1-3}) and (D1-2):
		\begin{enumerate}[label=(\alph*)]
			\item The directional KDE sequence $\left\{\hat{f}_h(\hat{\underline{\bm{x}}}^{(t)}) \right\}_{t=0}^{\infty}$ is non-decreasing and thus converges.
			
			\item $\lim_{t\to\infty}\norm{\hat{\underline{V}}(\hat{\underline{\bm{x}}}^{(t)})^T \grad \hat{f}_h(\hat{\underline{\bm{x}}}^{(t)})}_2= \lim_{t\to\infty} d_g\left(\hat{\underline{\bm{x}}}^{(t)}, \hat{\underline{\bm{x}}}^{(t+1)} \right)=0$.
			
			\item $\lim_{t\to\infty} d_g\left(\hat{\underline{\bm{x}}}^{(t)}, \hat{\underline{R}}_d\right)=0$ whenever $\hat{\underline{\bm{x}}}^{(0)} \in \hat{\underline{R}}_d \oplus \underline{r}_1$ with the convergence radius $\underline{r}_1 >0$ defined in (c) of Proposition~\ref{SCGA_Dir_conv}.
		\end{enumerate}
	\end{corollary}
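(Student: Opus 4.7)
The plan is to prove Corollary~\ref{Dir_SCMS_conv} by reducing it to the sample-based analogue of Proposition~\ref{SCGA_Dir_conv} with objective function $\hat{f}_h$, following the same strategy used to pass from Proposition~\ref{SCGA_conv} to Corollary~\ref{SCMS_conv} in the Euclidean case. The first step is to argue that conditions (\underline{A1-3}) transfer from $f$ to $\hat{f}_h$ with probability tending to $1$: given (D1-2), the uniform consistency \eqref{Dir_KDE_unif_bound} yields $\norm{\hat{f}_h - f}_{\infty,3}^* = O(h^2) + O_P\bigl(\sqrt{|\log h|/(nh^{q+6})}\bigr)$, so when $h$ is sufficiently small and $nh^{q+6}/|\log h|$ is large enough, Theorem~\ref{Dir_ridge_stability}(a) and (c) imply that $\hat{f}_h$ satisfies (\underline{A2-3}) and that $\hat{\underline{R}}_d$ has a uniformly positive reach. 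Differentiability (\underline{A1}) for $\hat{f}_h$ is immediate from (D1).

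The second step is to recast the directional SCMS iteration in the SCGA form \eqref{SCGA_Dir_sample} and control its adaptive step size $\underline{\eta}_{n,h}^{(t)'}$ from \eqref{Dir_SCMS_stepsize}. By Lemma~\ref{norm_tot_grad}, $h^2 \norm{\nabla \hat{f}_h(\bm{x})}_2$ converges in probability to $C_{L,q} \cdot f(\bm{x})$, which is uniformly positive on the compact neighborhood $\hat{\underline{R}}_d \oplus \underline{r}_1$ provided $f > 0$ there (guaranteed by (\underline{A1-2}) for sufficiently small $h$). Near the ridge the angle $\theta_t'$ between consecutive iterates is small so $\theta_t'/\tan\theta_t' \approx 1$, and consequently $\underline{\eta}_{n,h}^{(t)'} = O(h^2)$ with probability tending to $1$. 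Choosing $h$ small enough ensures $0 < \underline{\eta}_{n,h}^{(t)'} < \tfrac{2}{q \norm{\mathcal{H}\hat{f}_h}_{\infty}^{(2)}}$, which is the step-size threshold required in Proposition~\ref{SCGA_Dir_conv}.

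Given these two ingredients, the three conclusions follow by replicating the proof of Proposition~\ref{SCGA_Dir_conv} with $\hat{f}_h$ in place of $f$. Specifically, a Taylor expansion of $\hat{f}_h$ along the exponential map together with the step-size bound gives the ascending property of $\{\hat{f}_h(\hat{\underline{\bm{x}}}^{(t)})\}$; since $\hat{f}_h$ is bounded on the compact manifold $\Omega_q$, the sequence converges, proving (a). The same expansion produces a quadratic lower bound on successive function increments in terms of $\underline{\eta}_{n,h}^{(t)'} \cdot \norm{\hat{\underline{V}}_d(\hat{\underline{\bm{x}}}^{(t)})^T \grad \hat{f}_h(\hat{\underline{\bm{x}}}^{(t)})}_2^2$, and telescoping forces this quantity to vanish. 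Combined with the uniform positive lower bound on $\underline{\eta}_{n,h}^{(t)'}$ near the ridge, this yields (b). Finally, for (c) the path-smoothness condition (\underline{A3}) applied to $\hat{f}_h$ makes $\hat{\underline{R}}_d \oplus \underline{r}_1$ forward-invariant under the SCMS iteration, and the distance $d_g(\hat{\underline{\bm{x}}}^{(t)}, \hat{\underline{R}}_d)$ is controlled by $\norm{\hat{\underline{V}}_d(\hat{\underline{\bm{x}}}^{(t)})^T \grad \hat{f}_h(\hat{\underline{\bm{x}}}^{(t)})}_2$ via the same reach-based projection argument that appears in Lemma~\ref{Dir_norm_reach_prop}.

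The main obstacle is the careful handling of the adaptive step size $\underline{\eta}_{n,h}^{(t)'}$, which is the direct analogue of the subtlety in Remark~\ref{SCMS_stepsize_remark}. One must show that, once $n$ is fixed and $h$ is chosen small, the step size is simultaneously (i) small enough to stay below the stability threshold $\tfrac{2}{q\norm{\mathcal{H}\hat{f}_h}_{\infty}^{(2)}}$ so that the ascending property holds, and (ii) uniformly bounded away from zero in $t$ on the compact set $\hat{\underline{R}}_d \oplus \underline{r}_1$, so that vanishing gradient norms actually translate into convergence to the ridge rather than an inert stall of the iterates. Both bounds follow from Lemma~\ref{norm_tot_grad} applied uniformly over $\hat{\underline{R}}_d \oplus \underline{r}_1$ (for which one also invokes the uniform convergence in \eqref{Dir_KDE_unif_bound} together with (D2) for an empirical-process argument), but it is this combination of estimates that makes the sample-based statement genuinely nontrivial relative to its population counterpart.
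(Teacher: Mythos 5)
Your proposal matches the paper's own justification of Corollary~\ref{Dir_SCMS_conv}: transfer conditions (\underline{A1-3}) to $\hat{f}_h$ via Theorem~\ref{Dir_ridge_stability} under the uniform rates \eqref{Dir_KDE_unif_bound}, bound the adaptive step size $\underline{\eta}_{n,h}^{(t)'}$ above and below via Lemma~\ref{norm_tot_grad} and Remark~\ref{SCMS_stepsize_remark}, and then invoke Proposition~\ref{SCGA_Dir_conv} with $\hat{f}_h$ in place of $f$. The tension you flag in your last paragraph between keeping $\underline{\eta}_{n,h}^{(t)'}$ below the stability threshold while uniformly bounded away from zero on a compact neighborhood of $\hat{\underline{R}}_d$ is precisely the point the paper addresses in Remark~\ref{SCMS_stepsize_remark}, so the argument is sound and essentially identical to the one in the paper.
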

	
	Corollary~\ref{Dir_SCMS_conv} should also be considered as the convergence results of the sample-based SCGA algorithm on $\Omega_q$. To justify Corollary~\ref{Dir_SCMS_conv}, we know from Theorem~\ref{Dir_ridge_stability} that conditions (\underline{A1-3}) also hold with high probability for the directional KDE $\hat{f}_h$ and its estimated directional ridge $\hat{\underline{R}}_d$ when $\frac{nh^{q+6}}{|\log h|}$ is sufficiently large and $h$ is small enough. Further, by Lemma~\ref{norm_tot_grad}, the adaptive step size $\underline{\eta}_{n,h}^{(t)'}$ of our directional SCMS algorithm can be smaller than the threshold value $\frac{2}{q\norm{\mathcal{H}f}_{\infty}^{(2)}}$ in Proposition~\ref{SCGA_Dir_conv} but also universally bounded away from zero with respect to the iteration number $t$, given a sufficiently large but fixed sample $n$ and a sufficiently small bandwidth $h$; recall our Remark~\ref{SCMS_stepsize_remark}. As a result, Corollary~\ref{Dir_SCMS_conv} follows from Proposition~\ref{SCGA_Dir_conv}. Notice that the statements in Proposition~\ref{Dir_SCMS_prop} are essentially the same as the results (a-b) in Corollary~\ref{Dir_SCMS_conv} here. However, similar to Proposition 2 in \cite{SCMS_conv2013} for the Euclidean SCMS algorithm, Proposition~\ref{Dir_SCMS_prop} for the directional SCMS algorithm is established under the convexity assumption on the directional kernel $L$ and holds for any sample size $n$ and bandwidth $h$. On the contrary, the results (a-b) in Corollary~\ref{Dir_SCMS_conv} are asymptotic and probabilistic properties, in which we require $\frac{nh^{q+6}}{|\log h|} \to \infty$ and $h\to 0$.
	
	According to Proposition~\ref{SCGA_Dir_conv} and  Corollary~\ref{Dir_SCMS_conv}, we can denote the limiting points of the population and sample-based SCGA algorithms on $\Omega_q$ by $\underline{\bm{x}}^* \in \underline{R}_d$ and $\hat{\underline{\bm{x}}}^* \in \hat{\underline{R}}_d$, respectively.
	The definition of the linear convergence of any converging sequence on $\Omega_q$ (or an arbitrary Riemannian manifold) is similar to the one in the flat Euclidean space $\mathbb{R}^D$ (see Definition~\ref{Q_Linear_conv}), except that the Euclidean distance is replaced with the geodesic distance on $\Omega_q$ in the definition; see Section 4.5 in \cite{Op_algo_mat_manifold2008}.
	
	Using the notation in \cite{Geo_Convex_Op2016}, we let $\zeta(\kappa, c) \equiv \frac{\sqrt{|\kappa|}c}{\tanh(\sqrt{|\kappa|}c)}$. Given that the sectional curvature is $\kappa=1$ on $\Omega_q$, we have $\zeta(1, c) = \frac{c}{\tanh(c)}$. One can show by differentiating $\zeta(1, c)$ that $\zeta(1, c)$ is strictly increasing with respect to $c$ and $\zeta(1, c) > 1$ for any $c>0$.  Analogous to the Euclidean SCGA algorithms, we will establish the linear convergence of the SCGA sequence $\big\{\underline{\bm{x}}^{(t)} \big\}_{t=0}^{\infty}$ on $\Omega_q$ (or any Riemannian manifold whose sectional curvature is lower bounded by a real number) as well as its sample-based version under the following local condition. 
	
	\begin{itemize}
		\item {\bf (\underline{A4})} (\emph{Quadratic Behaviors of Residual Vectors}) We assume that the SCGA sequence $\big\{\underline{\bm{x}}^{(t)} \big\}_{t=0}^{\infty}$ on $\Omega_q$ with step size $0 < \underline{\eta} \leq \min\left\{\frac{4}{\underline{\beta}_0}, \frac{1}{q\norm{\mathcal{H}f}_{\infty}^{(2)} \cdot \zeta(1,\underline{\rho})} \right\}$ and $\underline{\bm{x}}^* \in \underline{R}_d$ as its limiting point satisfies
		\begin{align*}
		\left\langle \underline{U}_d^{\perp}(\underline{\bm{x}}^{(t)}) \grad f(\underline{\bm{x}}^{(t)}),\, \Exp_{\underline{\bm{x}}^{(t)}}^{-1}(\underline{\bm{x}}^*) \right\rangle &\leq \frac{\underline{\beta}_0}{4} \cdot d_g\left(\underline{\bm{x}}^{(t)},\underline{\bm{x}}^* \right)^2,\\
		\norm{\underline{U}_d^{\perp}(\underline{\bm{x}}^{(t)}) \Exp_{\underline{\bm{x}}^{(t)}}^{-1}(\underline{\bm{x}}^*)}_2 &\leq \underline{\beta}_2 \cdot d_g\left(\underline{\bm{x}}^{(t)},\underline{\bm{x}}^* \right)^2
		\end{align*}
		for some constant $\underline{\beta}_2 >0$, where $\underline{\beta}_0 >0$ is the constant defined in condition (\underline{A2}) and $\Exp_{\underline{\bm{x}}^{(t)}}^{-1}(\underline{\bm{x}}^*) \in T_{\underline{\bm{x}}^{(t)}}$ is the logarithmic map.
	\end{itemize}
	
	Condition (\underline{A4}) serves as a generalization of its Euclidean counterpart condition (A4) to $\Omega_q$, which again requires a quadratic behavior of the residual vector $\underline{U}_d^{\perp}(\underline{\bm{x}}^{(t)}) \Exp_{\underline{\bm{x}}^{(t)}}^{-1}(\underline{\bm{x}}^*)$ within the tangent space $T_{\underline{\bm{x}}^{(t)}}$. Under this condition, the objective (density) function $f$ is ``subspace constrained geodesically strongly concave'' around the directional ridge $\underline{R}_d$; see also Remark~\ref{SC_remark2}. Some discussions about potentially weaker assumptions that imply condition (\underline{A4}) in Appendix~\ref{App:dis_A4} are also applicable in the manifold setting under some modifications; see Remark~\ref{quad_bound_prop_Dir}. One intuitive example that condition (\underline{A4}) holds is presented at the second row of Figure~\ref{fig:LC_plot_Dir}, where the directional SCMS/SCGA iterative vector $\Exp_{\underline{\bm{x}}^{(t)}}^{-1}(\underline{\bm{x}}^*)$ is always orthogonal to the residual space $\underline{U}_d^{\perp}(\bm{x}^{(t)})$ for all $t\geq 0$ around the (estimated) ridge on $\Omega_q$.

\begin{theorem}[Linear Convergence of the SCGA Algorithm on $\Omega_q$]
	\label{LC_SCGA_Dir}
	Assume conditions (\underline{A1-4}) throughout the theorem.
	\begin{enumerate}[label=(\alph*)]
		\item \textbf{Q-Linear convergence of $d_g(\underline{\bm{x}}^{(t)}, \underline{\bm{x}}^*)$}: Consider a convergence radius $\underline{r}_2>0$ satisfying 
			\begin{align*}
			0<\underline{r}_2 &\leq \min\Bigg\{\underline{\rho}/2, \frac{\underline{\beta}_1^2}{\underline{A}_2\left(\norm{f}_{\infty}^{(3)} + \norm{f}_{\infty}^{(4)} \right)}, \frac{\underline{\beta}_1}{\underline{A}_4(f)},\\
			& 2\sin\Bigg[\frac{3\underline{\beta}_0}{8q \left(12\norm{\mathcal{H}f}_{\infty}^{(2)}\underline{\beta}_2^2 \arcsin\left(\underline{\rho}/2 \right) + \sqrt{q}\norm{f}_{\infty}^{(3)} \right)}\Bigg] \Bigg\},
			\end{align*}
			where $\underline{A}_2>0$ is the constant defined in (h) of Lemma~\ref{Dir_norm_reach_prop} and $\underline{A}_4(f) >0$ is a quantity defined in (c) of Proposition~\ref{SCGA_Dir_conv} that depends on both the dimension $q$ and the functional norm $\norm{f}_{\infty,4}^*$ up to the fourth-order (partial) derivatives of $f$. Whenever $0 < \underline{\eta} \leq \min\left\{\frac{4}{\underline{\beta}_0}, \frac{1}{q\norm{\mathcal{H}f}_{\infty}^{(2)} \cdot \zeta(1,\underline{\rho})} \right\}$ and the initial point $\underline{\bm{x}}^{(0)} \in \text{Ball}_{q+1}(\underline{\bm{x}}^*, \underline{r}_2) \cap \Omega_q$ with $\underline{\bm{x}}^* \in \underline{R}_d$, we have that
		$$d_g(\underline{\bm{x}}^{(t)}, \underline{\bm{x}}^*) \leq \underline{\Upsilon}^t \cdot d_g(\underline{\bm{x}}^{(0)}, \underline{\bm{x}}^*) \quad \text{ with } \quad \underline{\Upsilon}= \sqrt{1-\frac{\underline{\beta}_0\underline{\eta}}{4}}.$$
		\item \textbf{R-Linear convergence of $d_g(\underline{\bm{x}}^{(t)}, \underline{R}_d)$}: Under the same radius $\underline{r}_2>0$ in (a), we have that whenever $0 < \underline{\eta} \leq \min\left\{\frac{4}{\underline{\beta}_0}, \frac{1}{q\norm{\mathcal{H}f}_{\infty}^{(2)} \cdot \zeta(1,\underline{\rho})} \right\}$ and the initial point $\underline{\bm{x}}^{(0)} \in \text{Ball}_{q+1}(\underline{\bm{x}}^*, \underline{r}_2) \cap \Omega_q$ with $\underline{\bm{x}}^* \in \underline{R}_d$,
		$$d_g(\underline{\bm{x}}^{(t)}, \underline{R}_d) \leq \underline{\Upsilon}^t \cdot d_g(\underline{\bm{x}}^{(0)}, \underline{\bm{x}}^*) \quad \text{ with } \quad \underline{\Upsilon}=\sqrt{1-\frac{\underline{\beta}_0\underline{\eta}}{4}}.$$
	\end{enumerate}
	We further assume (D1-2) in the rest of statements. Suppose that $h \to 0$ and $\frac{nh^{q+4}}{|\log h|} \to \infty$.
	\begin{enumerate}[label=(c)]
		\item \textbf{Q-Linear convergence of $d_g(\hat{\underline{\bm{x}}}^{(t)}, \underline{\bm{x}}^*)$}: Under the same radius $\underline{r}_2>0$ and $\underline{\Upsilon}=\sqrt{1-\frac{\underline{\beta}_0\underline{\eta}}{4}}$ in (a), we have that
		$$d_g(\hat{\underline{\bm{x}}}^{(t)}, \underline{\bm{x}}^*) \leq \underline{\Upsilon}^t \cdot d_g(\hat{\underline{\bm{x}}}^{(0)}, \underline{\bm{x}}^*) + O(h^2) + O_P\left(\sqrt{\frac{|\log h|}{nh^{q+4}}} \right)$$
		with probability tending to 1 whenever $0 < \underline{\eta} \leq \min\left\{\frac{4}{\underline{\beta}_0}, \frac{1}{q\norm{\mathcal{H}f}_{\infty}^{(2)} \cdot \zeta(1,\underline{\rho})} \right\}$ and the initial point $\hat{\underline{\bm{x}}}^{(0)} \in \text{Ball}_{q+1}(\underline{\bm{x}}^*, \underline{r}_2) \cap \Omega_q$ with $\underline{\bm{x}}^* \in \underline{R}_d$.
	\end{enumerate}
	\begin{enumerate}[label=(d)]
		\item \textbf{R-Linear convergence of $d_g(\hat{\underline{\bm{x}}}^{(t)}, \underline{R}_d)$}: Under the same radius $\underline{r}_2>0$ and $\underline{\Upsilon}=\sqrt{1-\frac{\underline{\beta}_0 \underline{\eta}}{4}}$ in (a), we have that
		$$d_g(\hat{\underline{\bm{x}}}^{(t)}, \underline{R}_d) \leq \underline{\Upsilon}^t \cdot d_g(\hat{\underline{\bm{x}}}^{(0)}, \underline{\bm{x}}^*) + O(h^2) + O_P\left(\sqrt{\frac{|\log h|}{nh^{q+4}}} \right)$$
		with probability tending to 1 whenever $0 < \underline{\eta} \leq \min\left\{\frac{4}{\underline{\beta}_0}, \frac{1}{q\norm{\mathcal{H}f}_{\infty}^{(2)} \cdot \zeta(1,\underline{\rho})} \right\}$ and the initial point $\hat{\underline{\bm{x}}}^{(0)} \in \text{Ball}_{q+1}(\underline{\bm{x}}^*, \underline{r}_2) \cap \Omega_q$ with $\underline{\bm{x}}^* \in \underline{R}_d$.
	\end{enumerate}
\end{theorem}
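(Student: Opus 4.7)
The plan is to establish part (a) first and then deduce parts (b)--(d) from it with minor adjustments. The argument will parallel that of Theorem~\ref{SCGA_LC}, but I will replace the Euclidean identity $\norm{\bm{x}^{(t+1)} - \bm{x}^*}_2^2 = \norm{\bm{x}^{(t)} - \bm{x}^*}_2^2 - 2\eta \langle \cdot, \cdot\rangle + \eta^2\norm{\cdot}_2^2$ with a manifold ``law of cosines'' on $\Omega_q$. The two central analytic ingredients are (i) a Toponogov-type inequality (Lemma~\ref{trigo_inequality}), which, applied to the geodesic triangle with vertices $\underline{\bm{x}}^{(t)}, \underline{\bm{x}}^{(t+1)}, \underline{\bm{x}}^*$ on $\Omega_q$ (whose sectional curvature is $1$) yields
\begin{equation*}
d_g(\underline{\bm{x}}^{(t+1)}, \underline{\bm{x}}^*)^2 \leq d_g(\underline{\bm{x}}^{(t)}, \underline{\bm{x}}^*)^2 - 2\underline{\eta}\big\langle \underline{V}_d \underline{V}_d^T \grad f(\underline{\bm{x}}^{(t)}),\, \Exp_{\underline{\bm{x}}^{(t)}}^{-1}(\underline{\bm{x}}^*)\big\rangle + \underline{\eta}^2\, \zeta(1,\underline{\rho})\, \norm{\underline{V}_d \underline{V}_d^T \grad f(\underline{\bm{x}}^{(t)})}_2^2,
\end{equation*}
valid as long as both iterates lie in a geodesic ball of radius at most $\underline{\rho}$ around $\underline{\bm{x}}^*$; and (ii) a ``subspace constrained geodesically strongly concave'' inequality playing the role of \eqref{key_inequality}.

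For ingredient (ii), I would Taylor-expand $f\circ \Exp_{\underline{\bm{x}}^{(t)}}$ to second order along the minimizing geodesic from $\underline{\bm{x}}^{(t)}$ to $\underline{\bm{x}}^*$ and invoke the eigengap condition (\underline{A2}) to obtain $\big\langle \mathcal{H}f(\underline{\bm{x}}^{(t)})[\underline{U}_d(\underline{\bm{x}}^{(t)})\bm{v}],\, \underline{U}_d(\underline{\bm{x}}^{(t)})\bm{v}\big\rangle \leq -\underline{\beta}_0\norm{\underline{U}_d(\underline{\bm{x}}^{(t)})\bm{v}}_2^2$ for $\bm{v}=\Exp_{\underline{\bm{x}}^{(t)}}^{-1}(\underline{\bm{x}}^*)$, with the cubic remainder controlled by $\norm{f}_{\infty}^{(3)}$. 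Condition (\underline{A4}) is then used twice: first to bound $\langle \underline{U}_d^{\perp}\grad f(\underline{\bm{x}}^{(t)}),\, \Exp_{\underline{\bm{x}}^{(t)}}^{-1}(\underline{\bm{x}}^*)\rangle \leq \tfrac{\underline{\beta}_0}{4}d_g(\underline{\bm{x}}^{(t)},\underline{\bm{x}}^*)^2$, which lets the splitting $\grad f = \underline{V}_d \underline{V}_d^T \grad f + \underline{U}_d^{\perp}\grad f$ be reassembled into a single tangential term, and second to control $\norm{\underline{U}_d^{\perp}\Exp_{\underline{\bm{x}}^{(t)}}^{-1}(\underline{\bm{x}}^*)}_2$ quadratically in $d_g$. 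Combined with the ascending property $f(\underline{\bm{x}}^*)\geq f(\underline{\bm{x}}^{(t)})$ (a consequence of Proposition~\ref{SCGA_Dir_conv}(a) together with convergence to $\underline{\bm{x}}^*$), this yields
\begin{equation*}
\big\langle \underline{V}_d \underline{V}_d^T \grad f(\underline{\bm{x}}^{(t)}),\, \Exp_{\underline{\bm{x}}^{(t)}}^{-1}(\underline{\bm{x}}^*)\big\rangle \geq \tfrac{\underline{\beta}_0}{4}\, d_g(\underline{\bm{x}}^{(t)}, \underline{\bm{x}}^*)^2,
\end{equation*}
provided the fourth quantity in the upper bound on $\underline{r}_2$ dominates the cubic and residual contributions. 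Substituting back into the law of cosines in (i) and choosing $\underline{\eta} \leq 1/(q\norm{\mathcal{H}f}_{\infty}^{(2)}\zeta(1,\underline{\rho}))$ so that the $\underline{\eta}^2$ term is absorbed into half of the cross term produces the recursion $d_g(\underline{\bm{x}}^{(t+1)}, \underline{\bm{x}}^*)^2 \leq (1-\underline{\beta}_0\underline{\eta}/4)\, d_g(\underline{\bm{x}}^{(t)}, \underline{\bm{x}}^*)^2$. A simultaneous induction verifies that the iterate remains in $\text{Ball}_{q+1}(\underline{\bm{x}}^*,\underline{r}_2)\cap\Omega_q$, closing the loop for (a). Part (b) is immediate from $d_g(\underline{\bm{x}}^{(t)},\underline{R}_d)\leq d_g(\underline{\bm{x}}^{(t)},\underline{\bm{x}}^*)$.

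For parts (c) and (d), I would compare one step of the sample-based SCGA to one step of the population SCGA at the same base point. Under conditions (D1--2), the uniform bounds \eqref{Dir_KDE_unif_bound} give $\norm{\grad\hat f_h - \grad f}_{\infty} = O(h^2) + O_P(\sqrt{|\log h|/(nh^{q+4})})$ and analogously for the Hessian; the Davis--Kahan theorem combined with the eigengap (\underline{A2}) then yields $\sup_{\bm{x}\in\underline{R}_d\oplus \underline{\rho}}\norm{\hat{\underline{V}}_d(\bm{x})\hat{\underline{V}}_d(\bm{x})^T - \underline{V}_d(\bm{x})\underline{V}_d(\bm{x})^T}_2 = O(h^2) + O_P(\sqrt{|\log h|/(nh^{q+4})})$ with probability tending to $1$. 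This perturbs the recursion to $d_g(\hat{\underline{\bm{x}}}^{(t+1)},\underline{\bm{x}}^*) \leq \underline{\Upsilon}\, d_g(\hat{\underline{\bm{x}}}^{(t)},\underline{\bm{x}}^*) + \Delta_{n,h}$ with $\Delta_{n,h} = O(h^2) + O_P(\sqrt{|\log h|/(nh^{q+4})})$, and unrolling the geometric series with $\sum_{s\geq 0}\underline{\Upsilon}^s = (1-\underline{\Upsilon})^{-1} = O(1)$ produces (c); part (d) follows from $d_g(\hat{\underline{\bm{x}}}^{(t)},\underline{R}_d)\leq d_g(\hat{\underline{\bm{x}}}^{(t)},\underline{\bm{x}}^*)$.

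The main obstacle will be ingredient (i), the Toponogov-type law of cosines, because the curvature correction factor $\zeta(1,\underline{\rho})$ must appear in exactly the form that matches the step-size threshold $\underline{\eta}\leq 1/(q\norm{\mathcal{H}f}_{\infty}^{(2)}\zeta(1,\underline{\rho}))$ in the theorem statement. In particular, the normal-neighborhood invariance of $\text{Ball}_{q+1}(\underline{\bm{x}}^*,\underline{r}_2)\cap\Omega_q$ under the SCGA iteration must be established inductively in tandem with the contraction estimate, since the applicability of the comparison factor $\zeta(1,\underline{\rho})$ itself requires that both $\underline{\bm{x}}^{(t)}$ and $\underline{\bm{x}}^{(t+1)}$ lie in a common normal neighborhood of $\underline{\bm{x}}^*$. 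A secondary subtlety is that $\Exp_{\underline{\bm{x}}^{(t)}}^{-1}(\underline{\bm{x}}^*)$ lives in the eigenspace decomposition at $\underline{\bm{x}}^{(t)}$ rather than at $\underline{\bm{x}}^*$; this is precisely why condition (\underline{A4}) is imposed directly along the SCGA path, ensuring that the mismatch between the two eigenspace decompositions contributes only quadratically in $d_g$ and is therefore absorbed into the target contraction rate $\underline{\beta}_0\underline{\eta}/4$.
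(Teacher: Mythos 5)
Your outline correctly identifies the two analytic pillars (the Toponogov-type law of cosines from Lemma~\ref{trigo_inequality} and the ``subspace constrained geodesically strong concavity'' established via a second-order Taylor expansion together with conditions (\underline{A2}) and (\underline{A4})), and parts (b)--(d) of your plan match the paper's argument: (b) follows from $d_g(\underline{\bm{x}}^{(t)},\underline{R}_d)\le d_g(\underline{\bm{x}}^{(t)},\underline{\bm{x}}^*)$, and (c)--(d) follow by perturbing the one-step map with Davis--Kahan plus the uniform bounds \eqref{Dir_KDE_unif_bound} and summing the geometric series. However, there is a genuine gap in the final step of part (a). You propose to drop the function gap immediately, obtaining $\langle \underline{V}_d\underline{V}_d^T\grad f(\underline{\bm{x}}^{(t)}),\Exp_{\underline{\bm{x}}^{(t)}}^{-1}(\underline{\bm{x}}^*)\rangle \gtrsim \tfrac{\underline{\beta}_0}{4}d_g^2$, and then to ``absorb the $\underline{\eta}^2\zeta$ term into half of the cross term.'' But once the function gap is discarded, the only quantity on the right is $d_g(\underline{\bm{x}}^{(t)},\underline{\bm{x}}^*)^2$, and the $\underline{\eta}^2\zeta(1,\underline{\rho})\,\norm{\underline{V}_d^T\grad f(\underline{\bm{x}}^{(t)})}_2^2$ term cannot be compared to $d_g^2$ without a Lipschitz bound on the \emph{projected} gradient $\underline{V}_d(\cdot)\underline{V}_d(\cdot)^T\grad f(\cdot)$; that bound depends on $\norm{f}_{\infty}^{(3)}$ (through Davis--Kahan control of the eigenspace), not on $q\norm{\mathcal{H}f}_{\infty}^{(2)}$, so it would produce a different step-size threshold and a different contraction factor than the ones claimed in the theorem.

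The missing ingredient is the descent-lemma consequence of what the paper calls \emph{Property 2}: a single SCGA step with step size $1/(q\norm{\mathcal{H}f}_{\infty}^{(2)})$ cannot overshoot $f(\underline{\bm{x}}^*)$, which combined with the $q\norm{\mathcal{H}f}_{\infty}^{(2)}$-smoothness inequality \eqref{L_smooth_manifold} gives the crucial bound
\begin{equation*}
\norm{\underline{V}_d(\underline{\bm{x}}^{(t)})^T\grad f(\underline{\bm{x}}^{(t)})}_2^2 \le 2q\norm{\mathcal{H}f}_{\infty}^{(2)}\left[f(\underline{\bm{x}}^*)-f(\underline{\bm{x}}^{(t)})\right].
\end{equation*}
Substituting \emph{both} this bound and the subspace constrained strong concavity inequality (without discarding the function gap) into Corollary~\ref{Riemannian_tri_update} makes the function gap cancel with coefficient $-2\underline{\eta}\bigl[1-\underline{\eta}\,\zeta(1,\underline{\rho})\,q\norm{\mathcal{H}f}_{\infty}^{(2)}\bigr]$, which is nonpositive precisely under the stated step-size constraint $\underline{\eta}\le 1/(q\norm{\mathcal{H}f}_{\infty}^{(2)}\zeta(1,\underline{\rho}))$; dropping that nonpositive term then yields the advertised rate $\underline{\Upsilon}=\sqrt{1-\underline{\beta}_0\underline{\eta}/4}$. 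In short, you need to retain the function gap until the very end and bound $\norm{\underline{V}_d^T\grad f}_2^2$ by it, rather than by $d_g^2$, to recover the theorem's exact constants and threshold.
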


The detailed proof of Theorem~\ref{LC_SCGA_Dir} is in Appendix~\ref{App:Proofs_Dir_SCMS}. The theorem illuminates both the step size requirement and the convergence radius $\underline{r}_2 >0$ for the linear convergence of SCGA algorithms on $\Omega_q$. 
Similar to Euclidean SCGA algorithms in Theorem~\ref{SCGA_LC}, the upper bound of the convergence radius $\underline{r}_2$ consists of the three quantities adopted from Proposition~\ref{SCGA_Dir_conv} and a quantity controlling the ``subspace constrained geodesically strong concavity'' around the directional ridge $\underline{R}_d$.

\begin{remark}
	\label{SC_remark2}
	Similar to Euclidean SCGA algorithms, the geodesically strong concavity assumption \citep{Geo_Convex_Op2016} on the objective function $f$ is not sufficient to prove the linear convergence of the SCGA algorithm \eqref{SCGA_manifold_update} on $\Omega_q$. We instead establish the following ``subspace constrained geodesically strong concavity'' under some mild conditions (\underline{A1-4}):
	\begin{equation}
	\label{Sub_con_SC_Dir}
	f(\underline{\bm{x}}^*) - f(\bm{y}) \leq \left\langle \underline{V}_d(\bm{y}) \underline{V}_d(\bm{y})^T \grad f(\bm{y}), \Exp_{\bm{y}}^{-1}(\underline{\bm{x}}^*) \right\rangle -A_8 \cdot d_g(\underline{\bm{x}}^*, \bm{y})^2 + o\left(d_g(\underline{\bm{x}}^*, \bm{y})^2 \right)
	\end{equation}
	for some constant $A_8>0$, where $\bm{y}$ is generally chosen to be $\underline{\bm{x}}^{(t)}$. In fact, the most critical factors for establish this property is the eigengap condition (\underline{A2}) and the quadratic behaviors of residual vectors stated in condition (\underline{A4}).
\end{remark}

	\begin{corollary}[Linear Convergence of the Directional SCMS Algorithm]
		\label{LC_Dir_SCMS}
		Assume conditions (\underline{A1-4}) and (D1-2). When the fixed sample size $n$ is sufficiently large and the fixed bandwidth is chosen to be sufficiently small, there exists a convergence radius $\underline{r}_3 \in \left(0,\underline{r}_2 \right)$ such that the directional SCMS sequence $\left\{\hat{\underline{\bm{x}}}^{(t)} \right\}_{t=0}^{\infty}$ satisfies
		\begin{align*}
		&d_g\left(\hat{\underline{\bm{x}}}^{(t)}, \hat{\underline{R}}_d \right) \leq d_g\left(\hat{\underline{\bm{x}}}^{(t)},\hat{\underline{\bm{x}}}^* \right) \leq \underline{\Upsilon}_{n,h}^t \cdot  d_g\left(\hat{\underline{\bm{x}}}^{(t)}, \hat{\underline{\bm{x}}}^* \right) \\ 
		&\text{ with }\quad  \underline{\Upsilon}_{n,h} = \sqrt{1- \frac{\underline{\beta}_0 \tilde{\underline{\eta}}_{n,h}}{4}} \text{ and } \tilde{\underline{\eta}}_{n,h} =\inf_t \underline{\eta}_{n,h}^{(t)'}
		\end{align*}
		with high probability whenever $0 < \sup_t \underline{\eta}_{n,h}^{(t)'} \leq \min\left\{\frac{4}{\underline{\beta}_0}, \frac{1}{q\norm{\mathcal{H}f}_{\infty}^{(2)} \cdot \zeta(1,\underline{\rho})} \right\}$ and the initial point $\hat{\underline{\bm{x}}}^{(0)} \in \left(\hat{\underline{R}}_d \oplus \underline{r}_3\right) \cap \Omega_q$.
	\end{corollary}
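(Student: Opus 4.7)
The plan is to derive Corollary~\ref{LC_Dir_SCMS} as the sample-based analogue of Theorem~\ref{LC_SCGA_Dir}(a), mirroring how Corollary~\ref{SCMS_LC} was obtained from Theorem~\ref{SCGA_LC}(a). First, I would invoke the identification already established in \eqref{SCGA_Dir_sample}--\eqref{Dir_SCMS_stepsize}: the directional SCMS iteration is exactly a sample-based SCGA iteration on $\Omega_q$ with objective $\hat{f}_h$ and adaptive step size $\underline{\eta}_{n,h}^{(t)'}$. Hence it suffices to prove Q-linear convergence of the sample-based SCGA sequence with objective $\hat{f}_h$ and limit $\hat{\underline{\bm{x}}}^* \in \hat{\underline{R}}_d$ when the step size is variable rather than fixed.

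Second, I would transfer the structural hypotheses (\underline{A1-4}) from the population quantities to their sample-based counterparts. Condition (\underline{A1}) for $\hat{f}_h$ is immediate from the smoothness of $L$ in (D1). For (\underline{A2-3}), the uniform bound \eqref{Dir_KDE_unif_bound} gives $\norm{\hat{f}_h - f}_{\infty,3}^{*} = O(h^2) + O_P\bigl(\sqrt{|\log h|/(nh^{q+6})}\bigr)$, which is small with probability tending to one when $n$ is large and $h$ is small; Theorem~\ref{Dir_ridge_stability}(a) then ensures that (\underline{A2-3}) hold for $\hat{f}_h$ and $\hat{\underline{R}}_d$ with only slightly degraded constants. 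Condition (\underline{A4}) along the SCMS sequence transfers via the same stability argument, using that the eigengap of $\mathcal{H}\hat{f}_h$ is preserved by (\underline{A2}) and that the quadratic behaviors of the residual vectors are a local consequence of the ridge geometry (see the discussion in Appendix~\ref{App:dis_A4}).

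Third, I would control the adaptive step size $\underline{\eta}_{n,h}^{(t)'}$ uniformly in $t$. By Lemma~\ref{norm_tot_grad}, $h^2 \norm{\nabla \hat{f}_h(\underline{\bm{x}})}_2$ concentrates around a positive constant multiple of $f(\underline{\bm{x}})$ with high probability, and near $\hat{\underline{R}}_d$ the angle $\theta_t'$ in \eqref{Dir_SCMS_stepsize} is small so that $\theta_t'/\tan\theta_t' \to 1$. Restricting the initial point to a sufficiently small compact neighborhood $(\hat{\underline{R}}_d \oplus \underline{r}_3)\cap \Omega_q$ on which $f$ is bounded away from zero, I obtain a two-sided bound on $\underline{\eta}_{n,h}^{(t)'}$ that is independent of $t$; moreover, by further shrinking $h$, the upper bound can be forced below the threshold $\min\{4/\underline{\beta}_0, 1/(q\norm{\mathcal{H}f}_{\infty}^{(2)}\zeta(1,\underline{\rho}))\}$ required in Theorem~\ref{LC_SCGA_Dir}(a). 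This guarantees $0<\tilde{\underline{\eta}}_{n,h} = \inf_t \underline{\eta}_{n,h}^{(t)'} \leq \sup_t \underline{\eta}_{n,h}^{(t)'} < \infty$ with high probability.

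Finally, I would rerun the inductive one-step contraction that underlies Theorem~\ref{LC_SCGA_Dir}(a): it yields, step by step,
\[
d_g\!\left(\hat{\underline{\bm{x}}}^{(t+1)}, \hat{\underline{\bm{x}}}^*\right)^2 \leq \left(1 - \tfrac{\underline{\beta}_0\, \underline{\eta}_{n,h}^{(t)'}}{4}\right) d_g\!\left(\hat{\underline{\bm{x}}}^{(t)}, \hat{\underline{\bm{x}}}^*\right)^2 \leq \left(1 - \tfrac{\underline{\beta}_0\, \tilde{\underline{\eta}}_{n,h}}{4}\right) d_g\!\left(\hat{\underline{\bm{x}}}^{(t)}, \hat{\underline{\bm{x}}}^*\right)^2.
\]
Iterating gives $d_g(\hat{\underline{\bm{x}}}^{(t)}, \hat{\underline{\bm{x}}}^*) \leq \underline{\Upsilon}_{n,h}^{\,t}\cdot d_g(\hat{\underline{\bm{x}}}^{(0)}, \hat{\underline{\bm{x}}}^*)$, and $d_g(\hat{\underline{\bm{x}}}^{(t)}, \hat{\underline{R}}_d) \leq d_g(\hat{\underline{\bm{x}}}^{(t)}, \hat{\underline{\bm{x}}}^*)$ is automatic since $\hat{\underline{\bm{x}}}^* \in \hat{\underline{R}}_d$. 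The main obstacle I anticipate is the uniform lower bound on the adaptive step size: since $\norm{\nabla \hat{f}_h}_2$ diverges like $O(h^{-2})$, one must carefully combine this divergence with the vanishing $\theta_t'$ to show that the ratio in \eqref{Dir_SCMS_stepsize} stays bounded below by a positive constant as $t\to\infty$ (depending on $h$, but not on $t$). The secondary technical point is a clean verification that (\underline{A4}) perturbs stably from $f$ to $\hat{f}_h$ along the SCMS trajectory; everything else is a transcription of the deterministic inductive argument of Theorem~\ref{LC_SCGA_Dir}(a).
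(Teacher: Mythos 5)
Your proposal follows essentially the same path the paper takes: identify the directional SCMS iteration as sample-based SCGA with the adaptive step size $\underline{\eta}_{n,h}^{(t)'}$, transfer conditions (\underline{A1-3}) to $\hat f_h$ via the uniform bound \eqref{Dir_KDE_unif_bound} and Theorem~\ref{Dir_ridge_stability}, bound the step size uniformly in $t$ using Lemma~\ref{norm_tot_grad} together with the Remark~\ref{SCMS_stepsize_remark}-style argument that $\underline{\eta}_{n,h}^{(t)'}$ is bounded away from zero on a compact neighborhood of $\hat{\underline R}_d$ once $n,h$ are fixed, and then rerun the one-step contraction from Theorem~\ref{LC_SCGA_Dir}(a) with the infimum step size $\tilde{\underline\eta}_{n,h}$. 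Your explicit chain
\[
d_g\!\left(\hat{\underline{\bm{x}}}^{(t+1)}, \hat{\underline{\bm{x}}}^*\right)^2 \leq \left(1 - \tfrac{\underline{\beta}_0\, \underline{\eta}_{n,h}^{(t)'}}{4}\right) d_g\!\left(\hat{\underline{\bm{x}}}^{(t)}, \hat{\underline{\bm{x}}}^*\right)^2 \leq \left(1 - \tfrac{\underline{\beta}_0\, \tilde{\underline{\eta}}_{n,h}}{4}\right) d_g\!\left(\hat{\underline{\bm{x}}}^{(t)}, \hat{\underline{\bm{x}}}^*\right)^2
\]
is exactly how the paper's ``taking the infimum of $\underline{\eta}_{n,h}^{(t)'}$ with respect to $t$'' should be read. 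If anything, you are slightly more careful than the paper's discussion: you explicitly flag that condition (\underline{A4}) must be verified along the sample trajectory (the paper only mentions transferring (\underline{A1-3})), and you highlight the competing asymptotics of $\norm{\nabla \hat f_h}_2 = O(h^{-2})$ and $\theta_t'/\tan\theta_t' \to 1$ in establishing the two-sided bound on the adaptive step size. Both are legitimate technical points, and your reading of how they are resolved—restrict to a compact neighborhood of $\hat{\underline R}_d$ where $f$ is bounded below, use the eigengap to stabilize (\underline{A4}), and shrink $h$ until the step threshold is met—matches the paper's intent. No genuine gap.
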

	
	We also identify Corollary~\ref{LC_Dir_SCMS} as the linear convergence of the sample-based SCGA algorithm on $\Omega_q$ to the estimated directional ridge $\hat{\underline{R}}_d$ defined by the directional KDE $\hat{f}_h$. The corollary can be justified by noticing that, under conditions (D1-2) and the uniform bounds \eqref{Dir_KDE_unif_bound}, $\hat{f}_h$ satisfies conditions (\underline{A1-3}) with probability tending to 1 as $h\to 0$ and $\frac{nh^{q+6}}{|\log h|} \to \infty$; see Theorem~\ref{Dir_ridge_stability}. With this fact, one can leverage our argument in (a) of Theorem~\ref{LC_SCGA_Dir} to prove the linear convergence of the sample-based SCGA algorithm on $\Omega_q$ with a fixed step size $\underline{\eta}$ satisfying $0<\underline{\eta} \leq \min\left\{\frac{4}{\underline{\beta}_0}, \frac{1}{q\norm{\mathcal{H}f}_{\infty}^{(2)} \cdot \zeta(1,\underline{\rho})} \right\}$. Additionally, when the fixed sample size $n$ is sufficiently large and the bandwidth is chosen to be accordingly small, the adaptive step size $\underline{\eta}_{n,h}^{(t)'}$ of our directional SCMS algorithm in \eqref{Dir_SCMS_stepsize} always falls below the threshold value $\min\left\{\frac{4}{\underline{\beta}_0}, \frac{1}{q\norm{\mathcal{H}f}_{\infty}^{(2)} \cdot \zeta(1,\underline{\rho})} \right\}$ for linear convergence by Lemma~\ref{norm_tot_grad} but is also bounded away from zero; recall Remark~\ref{SCMS_stepsize_remark}. Taking the infimum of $\underline{\eta}_{n,h}^{(t)'}$ with respect to the iteration number $t$ under a fixed $n$ and $h$ yields our results in Corollary~\ref{LC_Dir_SCMS}.

\section{Experiments}
\label{App:Experiments}

In this section, we first validate our linear convergence results of both Euclidean and directional SCMS algorithms on some simulated datasets. Then, we apply these two algorithms to a real-world earthquake dataset so as to identify its density ridges and compare the estimated ridges with boundaries of tectonic plates and fault lines, on which earthquakes are known to happen frequently. 

We leverage the Gaussian kernel profile $k_N(x)=\exp\left(-\frac{x}{2}\right)$ in the Euclidean SCMS algorithm and the von Mises kernel $L(r)=e^{-r}$ in the directional SCMS algorithm. In addition, the logarithms of the estimated densities are utilized in our actual implementations (Step 2 in Algorithms~\ref{Algo:SCMS} and \ref{Algo:Dir_SCMS} in Appendix~\ref{App:Algo}) of the Euclidean and directional SCMS algorithms because of two advantages. First, using the log-density in the Euclidean SCMS algorithm leads to a faster convergence process \citep{SCMS_conv2013}; see our empirical illustration in Figure~\ref{fig:SCMS_log_comp}. Second, estimating a hidden manifold with a density ridge defined by a log-density stabilizes the valid region for a well-defined ridge compared to the corresponding ridge defined by the original density; see Theorem 7 (Surrogate theorem) in \cite{Non_ridge_est2014}.

Unless stated otherwise, we set the default bandwidth parameter of the Euclidean SCMS algorithm to the normal reference rule in \cite{Asymp_deri_KDE2011,Mode_clu2016}, which is
\begin{equation}
\label{NR_rule}
h_{\text{NR}} = \bar{S}_n \times \left(\frac{4}{D+4} \right)^{\frac{1}{D+6}} n^{-\frac{1}{D+6}}, \quad \bar{S}_n=\frac{1}{D}\sum_{j=1}^D S_{n,j},
\end{equation}
where $S_{n,j}$ is the sample standard deviation along $j$-th coordinate and $D$ is the (Euclidean) dimension of the data in $\mathbb{R}^D$. As mentioned by \cite{Mode_clu2016}, there are two advantages of applying the normal reference rule \eqref{NR_rule} in our context. First, the KDE $\hat{p}_n$ under $h_{\text{NR}}$ tends to be oversmoothing \citep{Sheather2004}, because the bandwidth minimizes the asymptotic MISE for estimating the first-order derivatives of a multivariate Gaussian distribution with covariance matrix $\sigma^2 \bm{I}_D$; see Corollary 4 in \cite{Asymp_deri_KDE2011}. More importantly, the Euclidean SCMS algorithm with an oversmoothed KDE $\hat{p}_n$ would not produce too many spurious ridges. Second, compared to cross validation methods, $h_{\text{NR}}$ is easy to compute in practice, especially when the dimension of data is high. The default bandwidth parameter of the directional SCMS algorithm is selected via the rule of thumb in Proposition 2 of \cite{Exact_Risk_bw2013}, which optimizes the asymptotic MISE for a $\text{vMF}(\bm{\mu},\nu)$ distribution. The concentration parameter $\nu$ is estimated by Equation (4.4) in \cite{spherical_EM}. That is, 
\begin{equation}
\label{bw_ROT}
h_{\text{ROT}} = \left[\frac{4\pi^{\frac{1}{2}} \mathcal{I}_{\frac{q-1}{2}}(\hat{\nu})^2}{\hat{\nu}^{\frac{q+1}{2}}\left[2 q\cdot\mathcal{I}_{\frac{q+1}{2}}(2\hat{\nu}) + (q+2)\hat{\nu} \cdot \mathcal{I}_{\frac{q+3}{2}}(2\hat{\nu}) \right]n} \right]^{\frac{1}{q+4}}, \quad \hat{\nu} = \frac{\bar{R}(q+1-\bar{R})}{1-\bar{R}^2},
\end{equation}
where $\bar{R}=\frac{\norm{\sum_{i=1}^n \bm{X}_i}_2}{n}$ given the directional dataset $\left\{\bm{X}_1,...,\bm{X}_n \right\} \subset \Omega_q \subset \mathbb{R}^{q+1}$ and we recall that $\mathcal{I}_{\alpha}(\nu)$ is the modified Bessel function of the first kind of order $\nu$. As $q$-von Mises-Fisher distribution behaves as the Gaussian distribution on $\Omega_q$, choosing the bandwidth \eqref{bw_ROT} also helps smooth out the resulting directional KDE. The tolerance level is always set to be $\epsilon=10^{-9}$ for any SCMS algorithm.

\subsection{Simulation Study on the Euclidean SCMS Algorithm}
\label{Sec:EuSCMS_exp}

To evaluate the algorithmic rate of convergence of the Euclidean SCMS algorithm (Algorithm~\ref{Algo:SCMS}), we generate the first simulated dataset by randomly drawing 1000 data points from a Gaussian mixture model with density $0.4 \cdot N(\bm{\mu}_1, \Sigma_1) + 0.6 \cdot N(\bm{\mu}_2, \Sigma_2)$, where $\bm{\mu}_1=-\bm{\mu}_2 = (1,1)^T$, $\Sigma_1=\Diag\left(\frac{1}{4},\frac{1}{4} \right)$, and $\Sigma_2 = \begin{pmatrix}
\frac{1}{2} & \frac{1}{4}\\
\frac{1}{4} & \frac{1}{2}
\end{pmatrix}$. Another simulated dataset consists of 1000 data points randomly generated from an upper half circle with radius 2 and i.i.d. Gaussian noises $N(0,0.3^2)$. When applying Algorithm~\ref{Algo:SCMS} with the estimated log-density on each of these two simulated datasets, we choose the set of initial mesh points as the simulated dataset itself and remove those initial points whose density values are below 25\% of the maximum density from the set of mesh points in order to obtain a cleaner ridge structure.

Figure~\ref{fig:LC_plot} presents the Euclidean KDE plots, estimated density ridges from the Euclidean SCMS algorithm, and their (linear) convergence plots on the two simulated datasets. The linear trends of those plots in the second and third columns of Figure~\ref{fig:LC_plot} empirically demonstrate the correctness of our Theorem~\ref{SCGA_LC} and Corollary~\ref{SCMS_LC} about the linear convergence of the Euclidean SCMS algorithm. 

\begin{figure}[t]
	\captionsetup[subfigure]{justification=centering}
	\centering
	\begin{subfigure}[t]{.32\textwidth}
		\centering
		\includegraphics[width=1\linewidth]{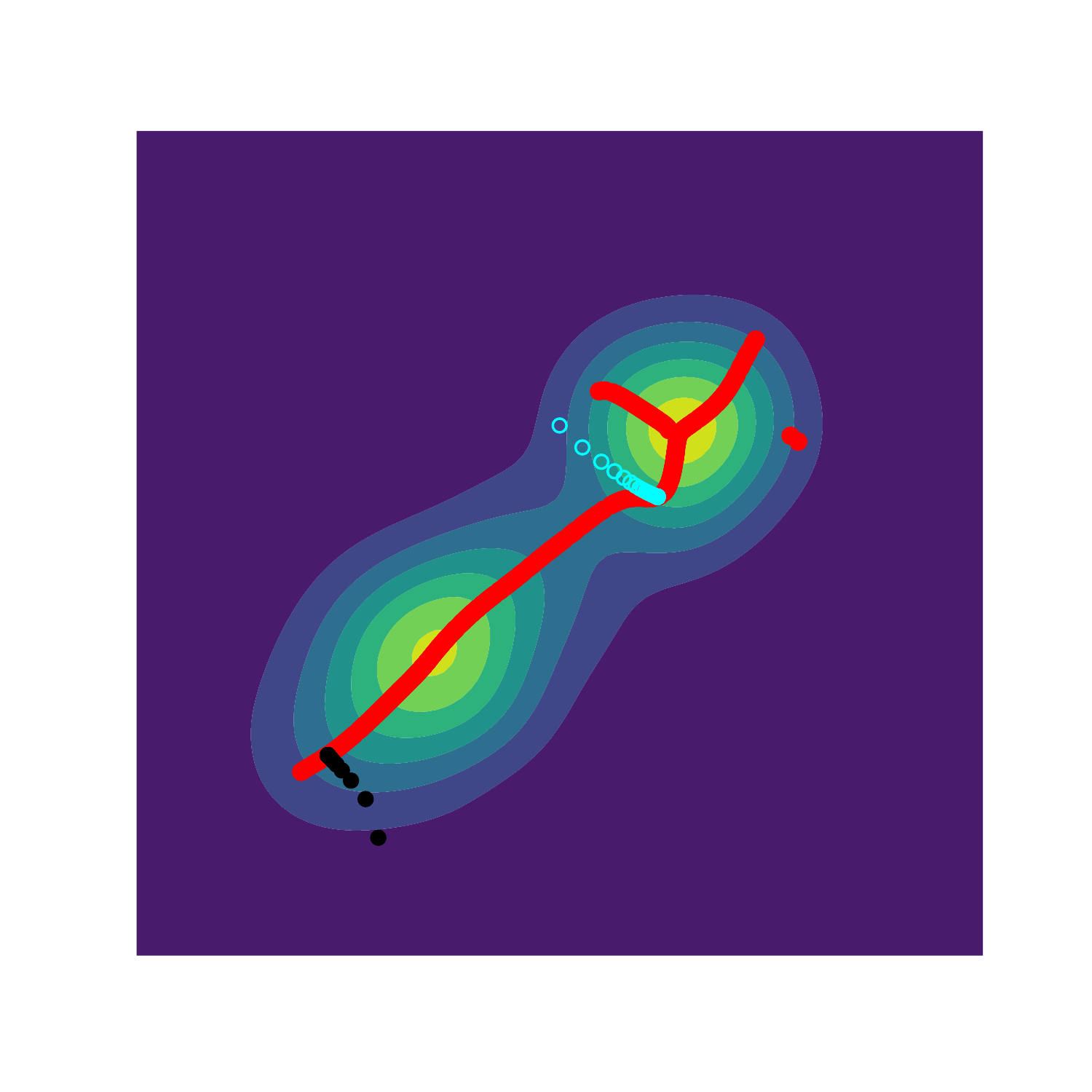}
	\end{subfigure}
	\hfil
	\begin{subfigure}[t]{.32\textwidth}
		\centering
		\includegraphics[width=1\linewidth, height=4.5cm]{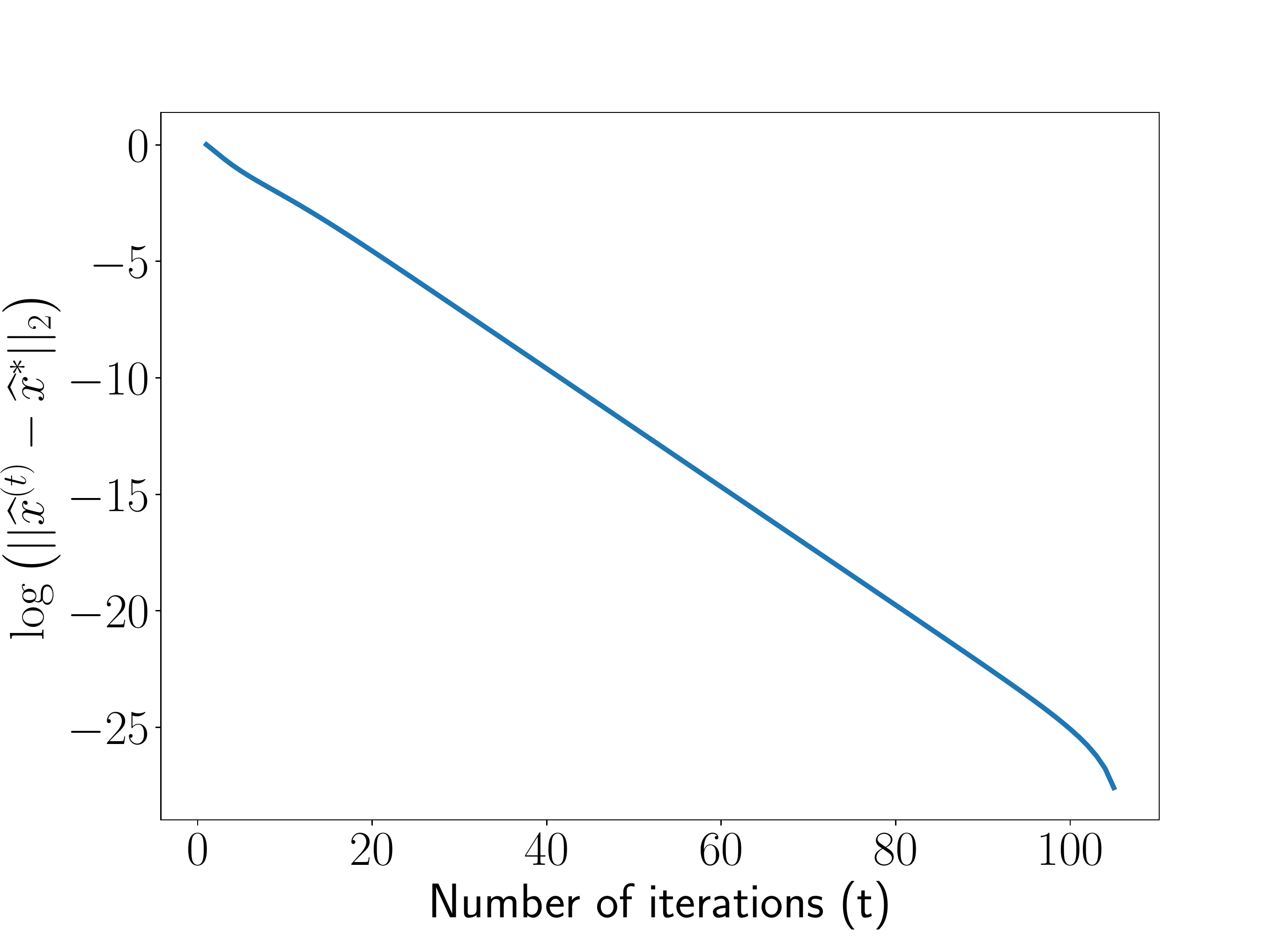}
	\end{subfigure}
	\hfil
	\begin{subfigure}[t]{.32\textwidth}
		\centering
		\includegraphics[width=1\linewidth,height=4.5cm]{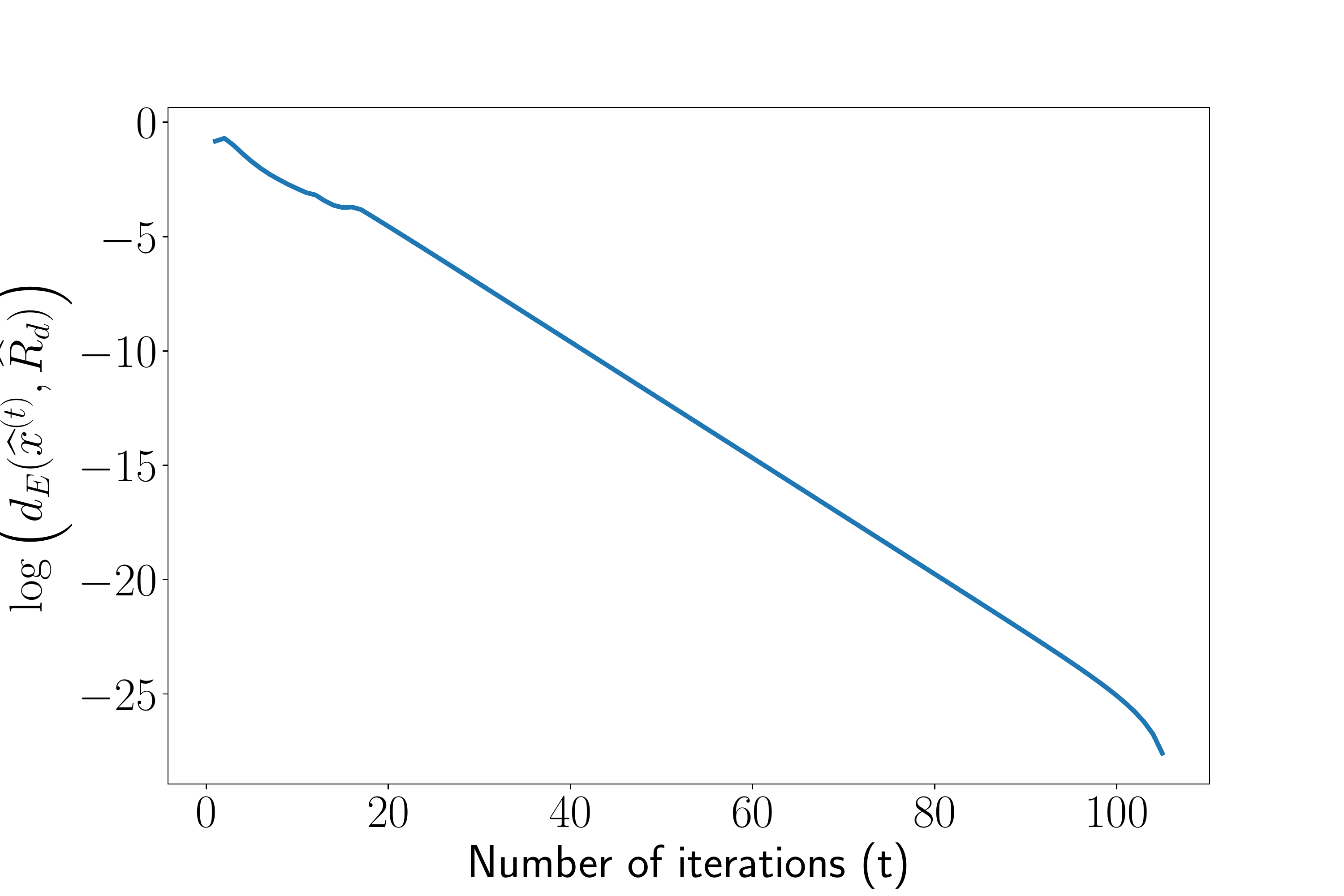}
	\end{subfigure}
	\begin{subfigure}[t]{.32\textwidth}
		\centering
		\includegraphics[width=1\linewidth]{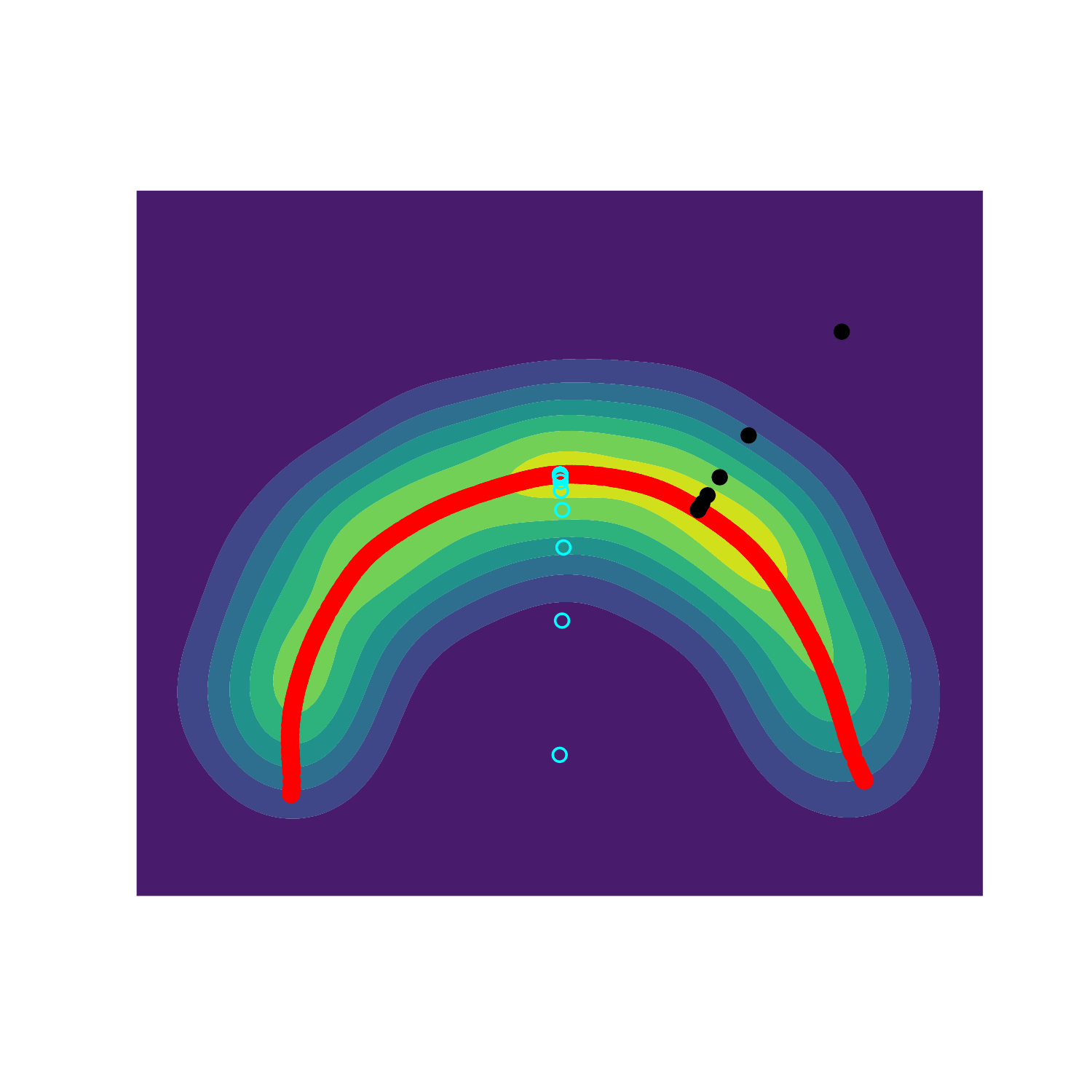}
	\end{subfigure}%
	\hfil
	\begin{subfigure}[t]{.32\textwidth}
		\centering
		\includegraphics[width=1\linewidth,height=4.5cm]{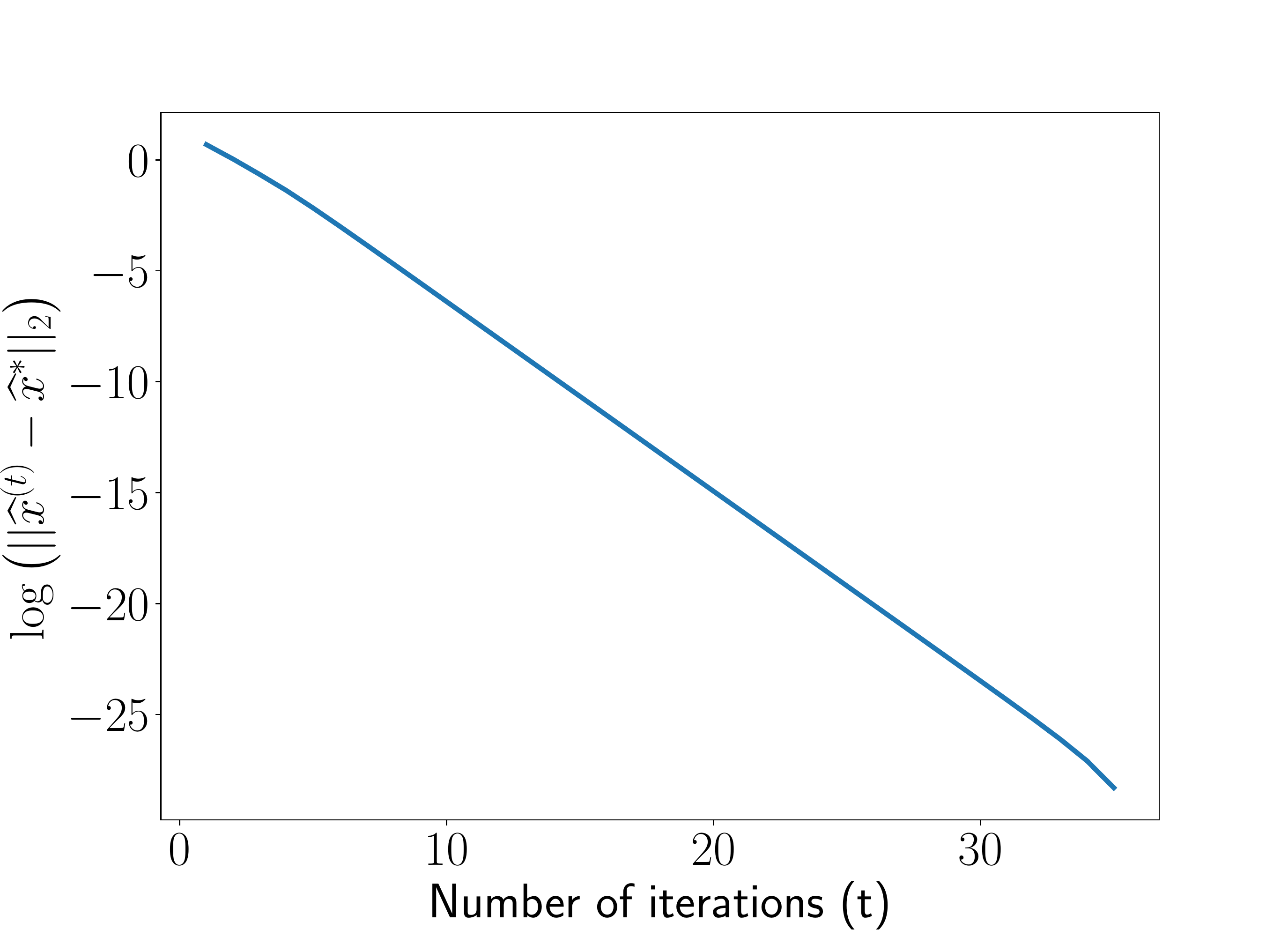}
	\end{subfigure}
	\hfil
	\begin{subfigure}[t]{.32\textwidth}
		\centering
		\includegraphics[width=1\linewidth,height=4.5cm]{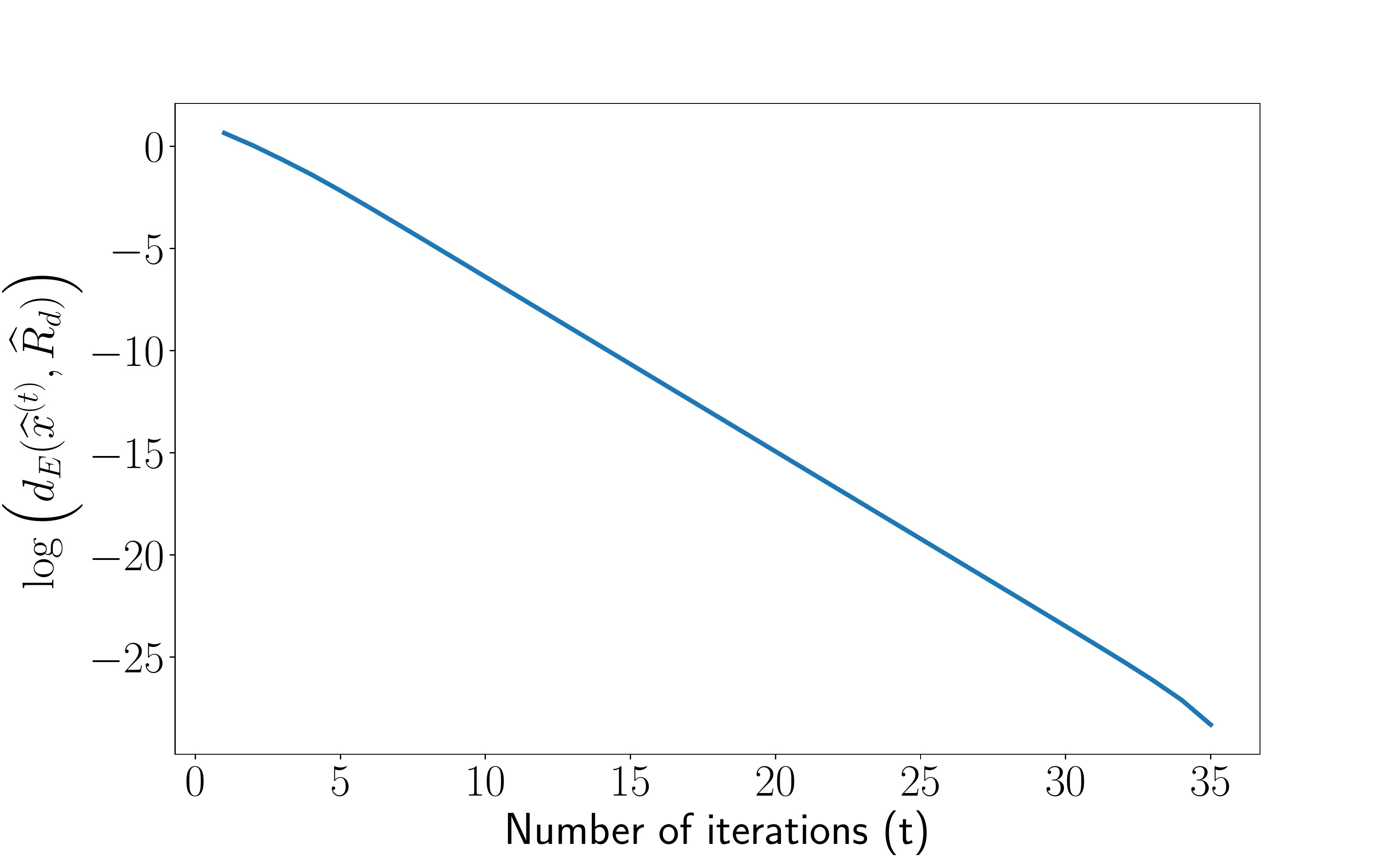}
	\end{subfigure}
	\caption{Density ridges estimated by the Euclidean SCMS algorithm on the two simulated datasets and their (linear) convergence plots. Horizontally, the first row displays the results of the simulated Gaussian mixture dataset, while the second row presents the results of the half circle simulated dataset. Vertically, the first column includes plots with Euclidean KDE, estimated ridges, and trajectories of SCMS sequences from two (randomly) chosen initial points. The second and third columns present the (linear) convergence plots for the log-distances of points in the highlighted sequences (indicated by hollow cyan points) to their limiting points or the estimated ridges.}
	\label{fig:LC_plot}
\end{figure}

\subsection{Simulation Study on the Directional SCMS Algorithm}
\label{Sec:DirSCMS_exp}

Analogous to our simulation study for the linear convergence of the Euclidean SCMS algorithm, we verify the linear convergence of our directional SCMS algorithm (Algorithm~\ref{Algo:Dir_SCMS}) on two different simulated datesets. One of them comprises 1000 data points randomly generated from a vMF mixture model $0.4 \cdot \text{vMF}\left(\underline{\bm{\mu}}_1, \nu_1 \right) + 0.6 \cdot \text{vMF}\left(\underline{\bm{\mu}}_2, \nu_2 \right)$ with $\underline{\bm{\mu}}_1=(0,0,1)^T \in \Omega_2 \subset \mathbb{R}^3$, $\underline{\bm{\mu}}_2=(1,0,0)^T \in \Omega_2 \subset \mathbb{R}^3$, and $\nu_1=\nu_2=10$. The other simulated dataset is identical to the example in the right panel of Figure~\ref{fig:ridges_Eu_Dir} and the underlying dataset in Figure~\ref{fig:add_Cart}, which consists of 1000 randomly sampled points from a circle connecting two poles on $\Omega_2$ with i.i.d. additive Gaussian noises $N(0, 0.2^2)$ to their Cartesian coordinates and additional $L_2$ normalization onto $\Omega_2$. In our implementation of Algorithm~\ref{Algo:Dir_SCMS} with the directional log-density on the two simulated datasets, we also set each initial mesh as the dataset itself and remove those points whose density values are below 10\% of the maximal density value from each set of mesh points.

Figure~\ref{fig:LC_plot_Dir} shows the directional KDE plots, estimated density ridges on $\Omega_2$ from the directional SCMS algorithm, and their (linear) convergence plots on the aforementioned simulated datasets. Those linear decreasing trends in the convergence plots, possibly after several pilot iterations, illustrate the locally linear convergence of the directional SCMS algorithm that we proved in Theorem~\ref{LC_SCGA_Dir} and Corollary~\ref{LC_Dir_SCMS}. Note that those minor perturbations at the tails of some linear convergence plots in Figure~\ref{fig:LC_plot_Dir} are due to precision errors.

\begin{figure}[t]
	\captionsetup[subfigure]{justification=centering}
	\centering
	\begin{subfigure}[t]{.32\textwidth}
		\centering
		\includegraphics[width=1\linewidth]{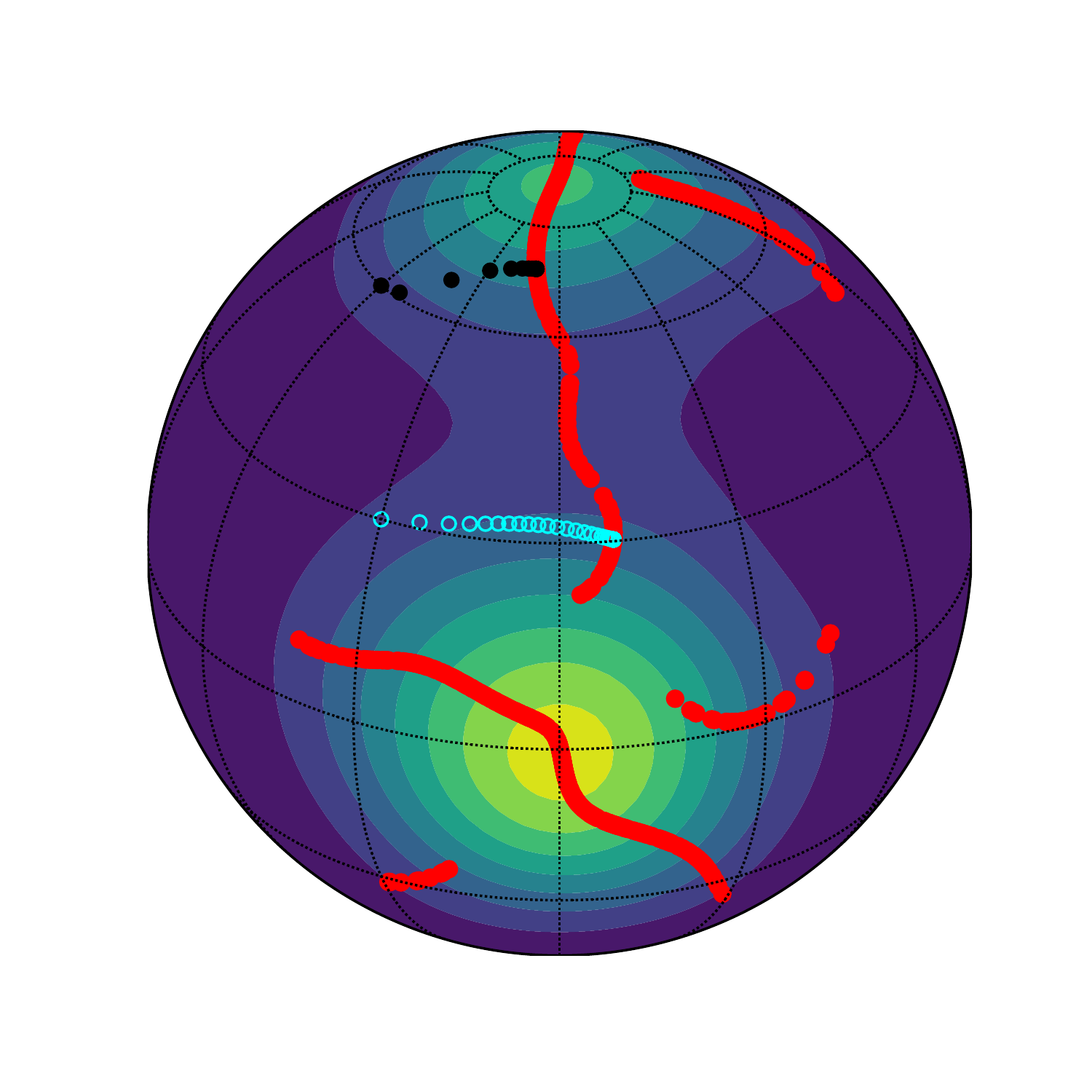}
	\end{subfigure}
	\hfil
	\begin{subfigure}[t]{.32\textwidth}
		\centering
		\includegraphics[width=1\linewidth, height=4.5cm]{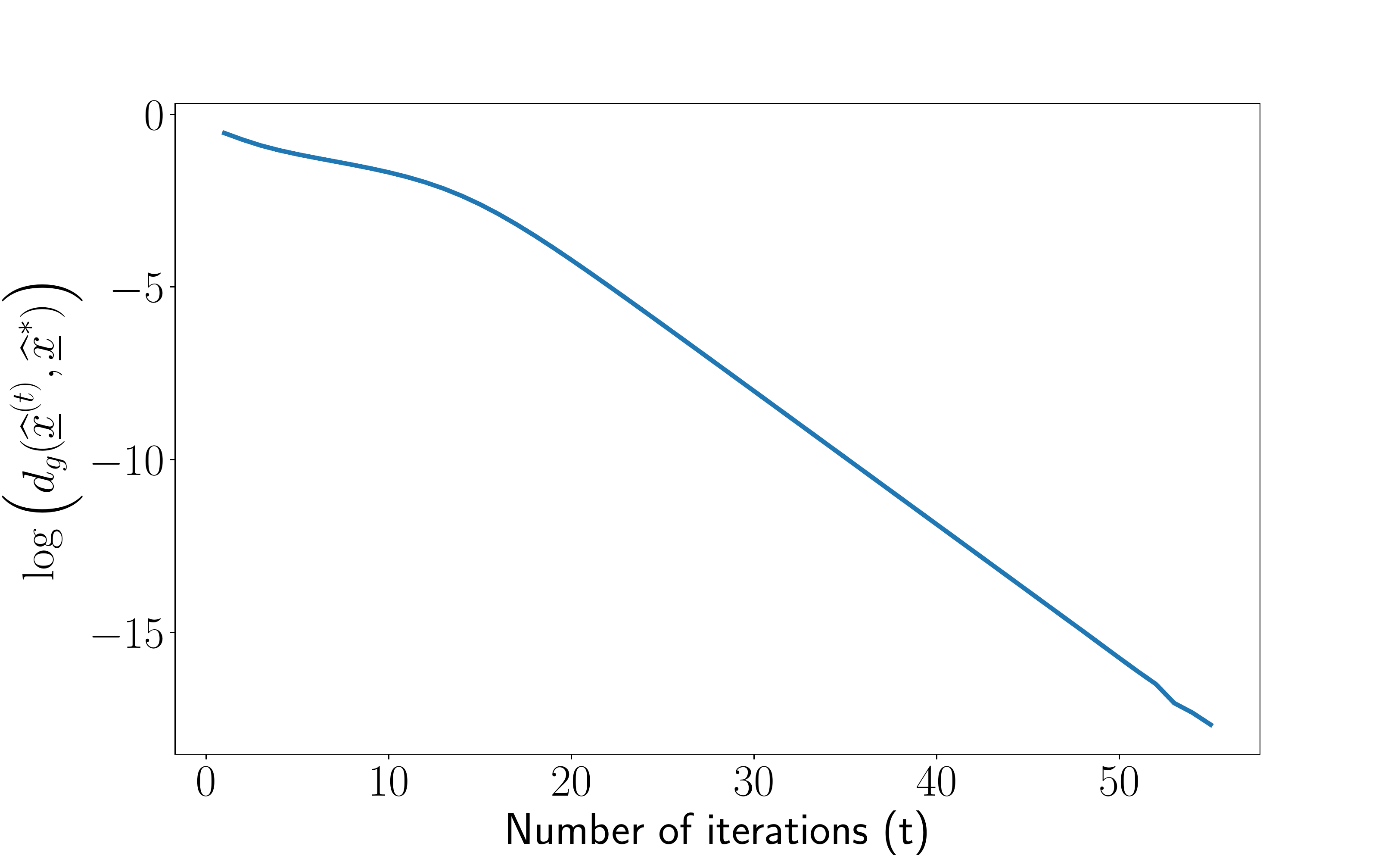}
	\end{subfigure}
	\hfil
	\begin{subfigure}[t]{.32\textwidth}
		\centering
		\includegraphics[width=1\linewidth,height=4.5cm]{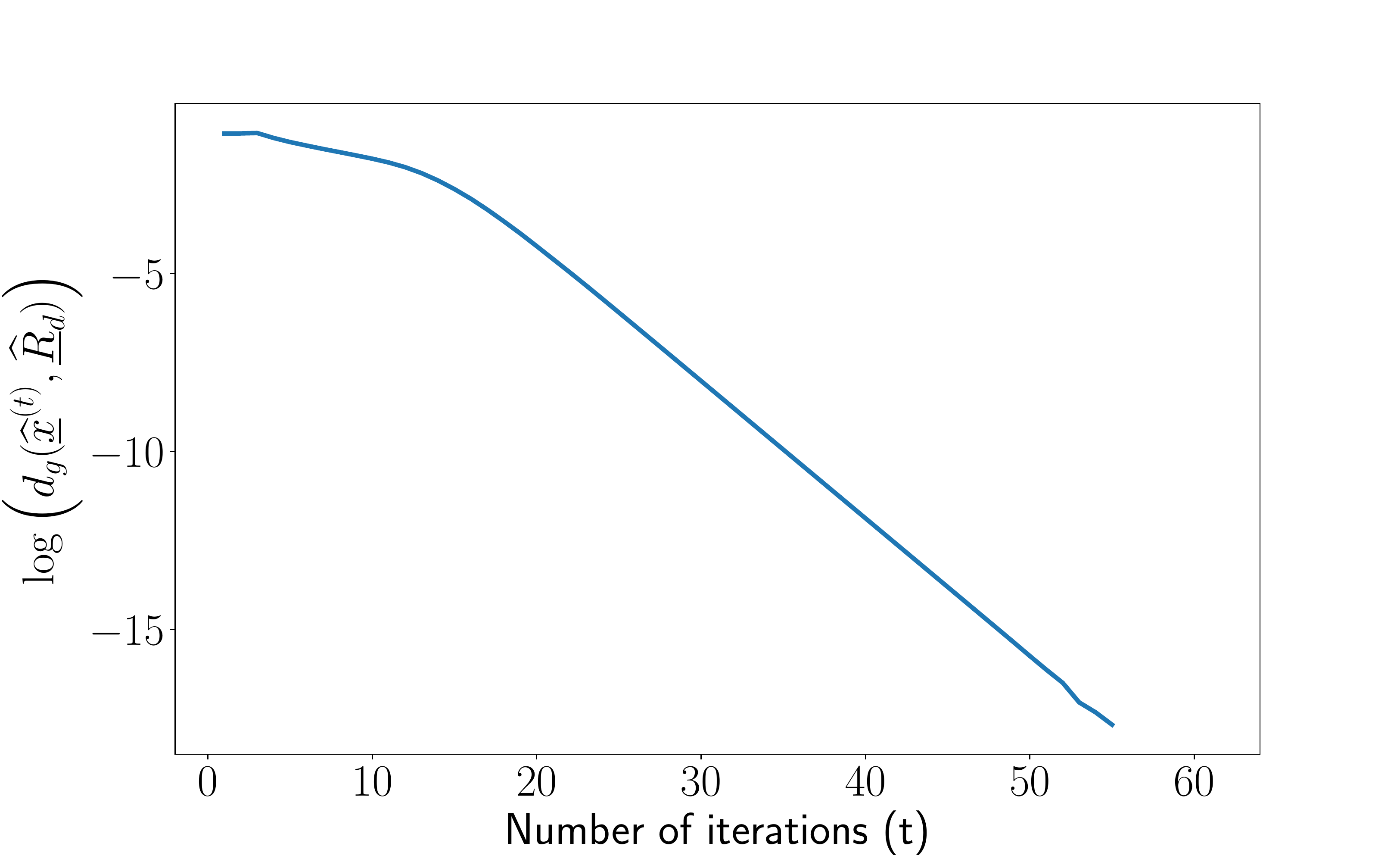}
	\end{subfigure}
	\begin{subfigure}[t]{.32\textwidth}
		\centering
		\includegraphics[width=\linewidth]{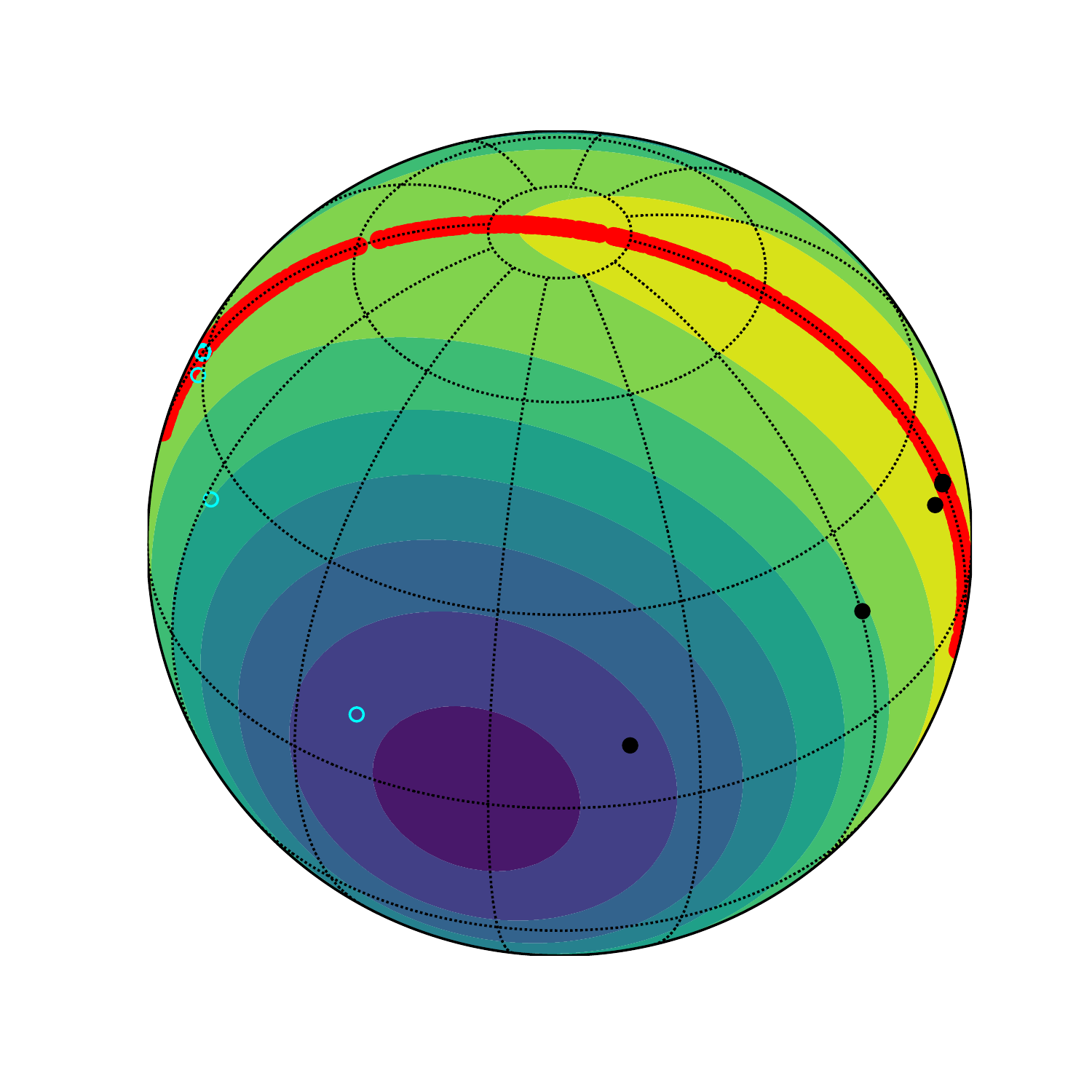}
	\end{subfigure}%
	\hfil
	\begin{subfigure}[t]{.32\textwidth}
		\centering
		\includegraphics[width=\linewidth,height=4.5cm]{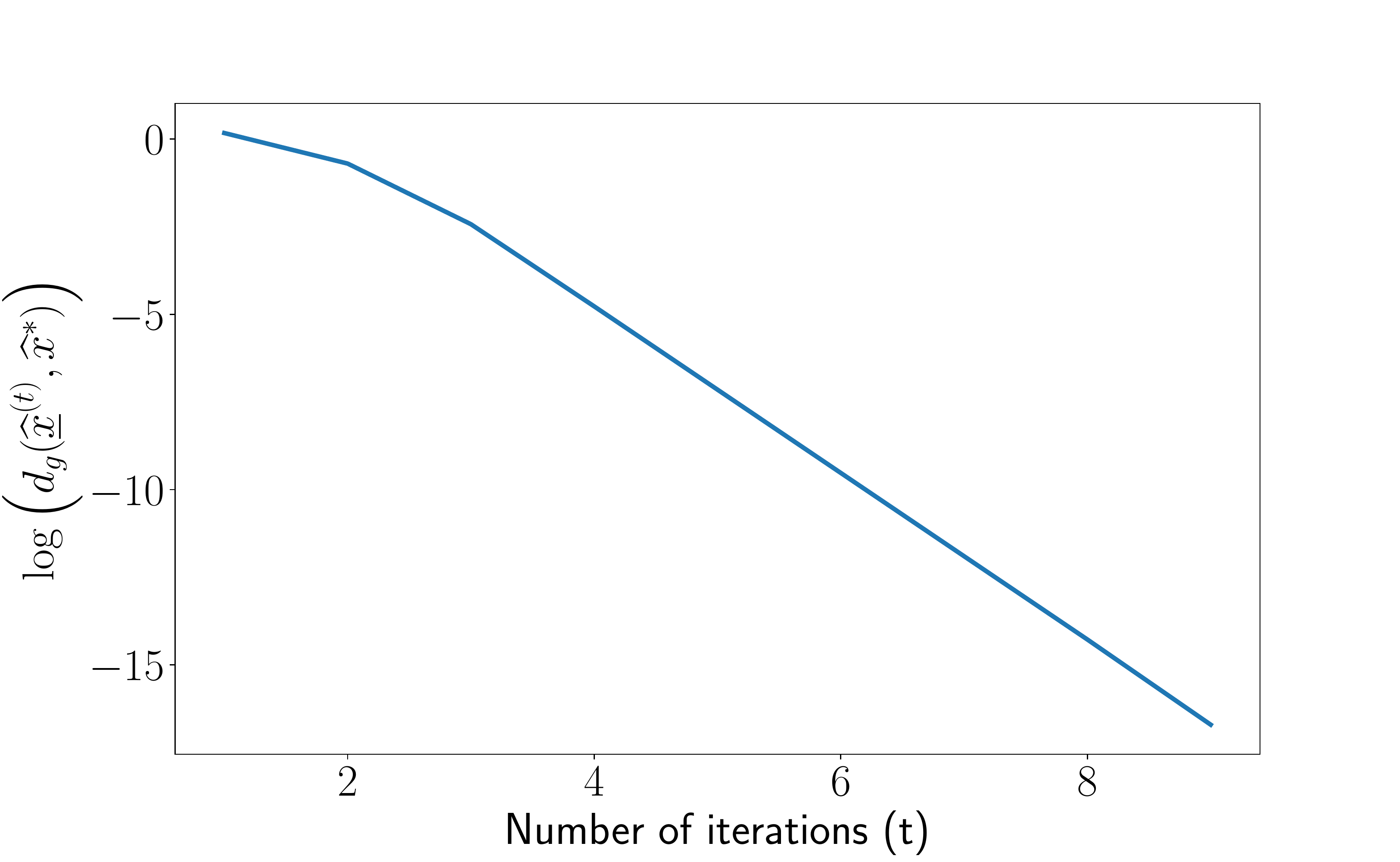}
	\end{subfigure}
	\hfil
	\begin{subfigure}[t]{.32\textwidth}
		\centering
		\includegraphics[width=\linewidth,height=4.5cm]{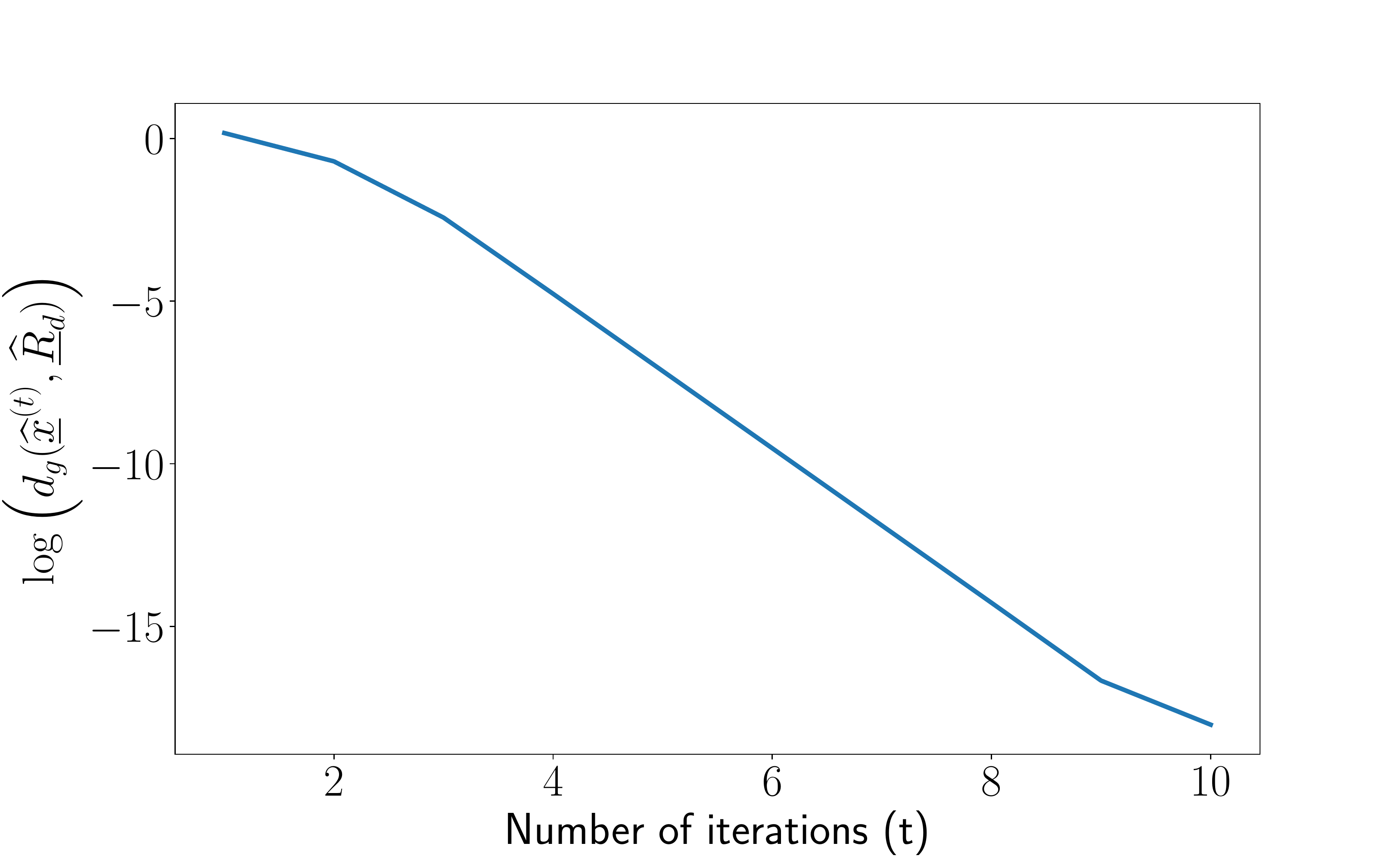}
	\end{subfigure}
	\caption{Density ridges estimated by the directional SCMS algorithm performed on the two simulated datasets and their (linear) convergence plots. Horizontally, the first row displays the results on the simulated vMF mixture dataset, while the second row presents the results on the circular simulated dataset on $\Omega_2$. Vertically, the first column includes plots with directional KDE, estimated ridges, and trajectories of directional SCMS sequences from two (randomly) chosen initial points on $\Omega_2$. The second and third columns present the convergence plots for the log-distances of points in the highlighted sequences (indicated by hollow cyan points) to their limiting points or the estimated ridges on $\Omega_2$.}
	\label{fig:LC_plot_Dir}
\end{figure}

\subsection{Density Ridges on Earthquake Data}

It is well-known that earthquakes on Earth tend to strike more frequently along the boundaries of tectonic plates and fault lines (\emph{i.e.}, sections of a plate or two plates are moving in different directions); see \cite{subarya2006plate,harris2017large} for more details. We analyze earthquakes with magnitudes of 2.5+ occurring between 2020-10-01 00:00:00 UTC and 2021-03-31 23:59:59 UTC, which can be obtained from the Earthquake Catalog (\url{https://earthquake.usgs.gov/earthquakes/search/}) of the United States Geological Survey. The dataset $\mathcal{D}$ contains 15049 earthquakes worldwide in this half-year period.

The normal reference rule \eqref{NR_rule} leads to the bandwidth parameter $h_{\text{NR}} \approx 16.0035$ and the rule of thumb \eqref{bw_ROT} yields $h_{\text{ROT}} \approx 0.1584$ under the earthquake dataset $\mathcal{D}$.
However, as these bandwidths lead to oversmoothing density estimates,
we decrease the bandwidths for the Euclidean and directional SCMS algorithms to $h_{\text{Eu}}=7.0$ and $h_{\text{Dir}}=0.1$ respectively in order to detect more ridge structures. We generate 5000 points uniformly on the sphere $\Omega_2$ as the initial mesh points.

\begin{figure}[t]
	\captionsetup[subfigure]{justification=centering}
	\begin{subfigure}[t]{.49\textwidth}
		\centering
		\includegraphics[width=\linewidth,height=4.5cm]{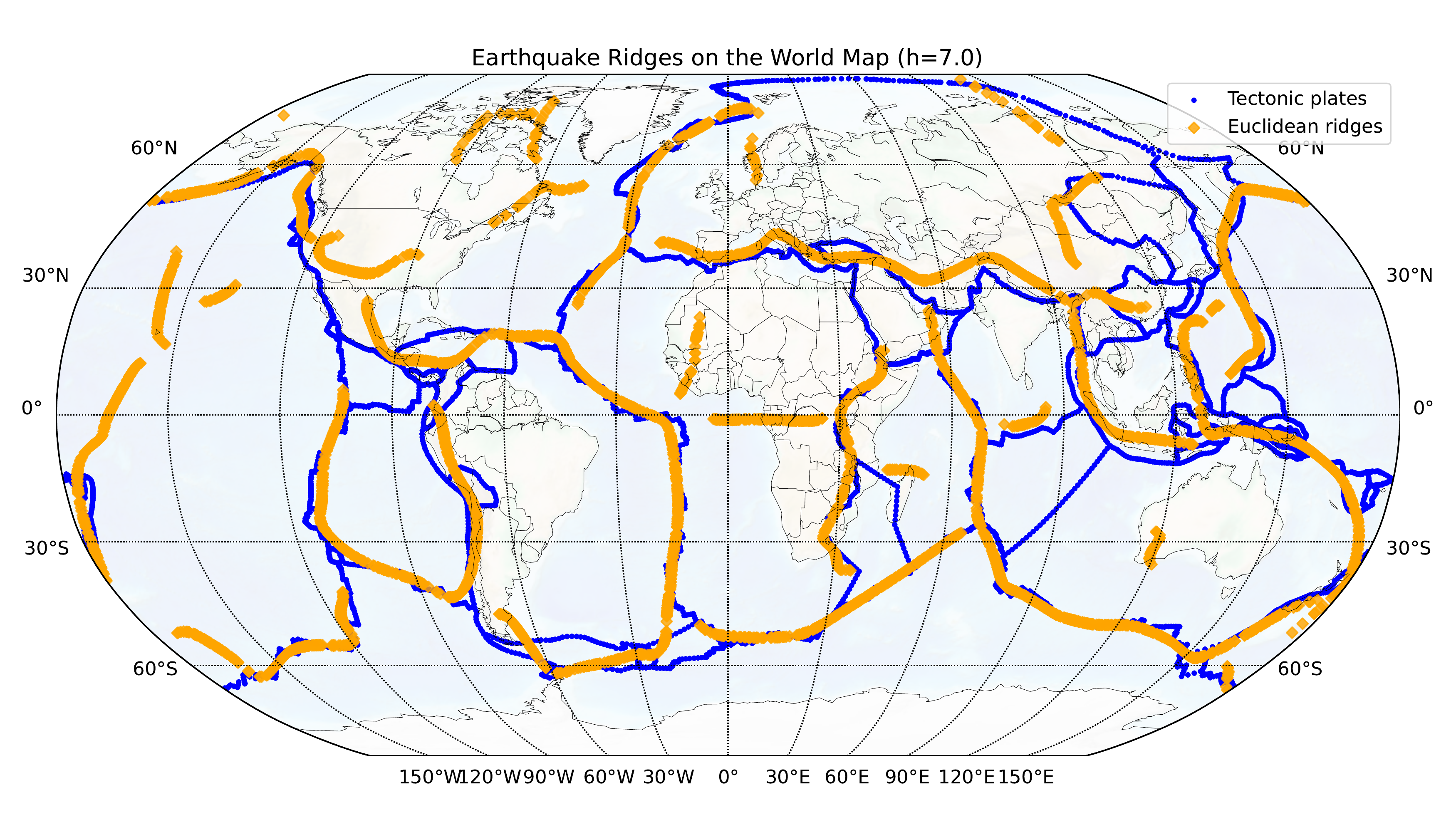}
		\caption{Estimated ridges via Euclidean SCMS}
	\end{subfigure}
	\begin{subfigure}[t]{.49\textwidth}
		\centering
		\includegraphics[width=\linewidth,height=4.5cm]{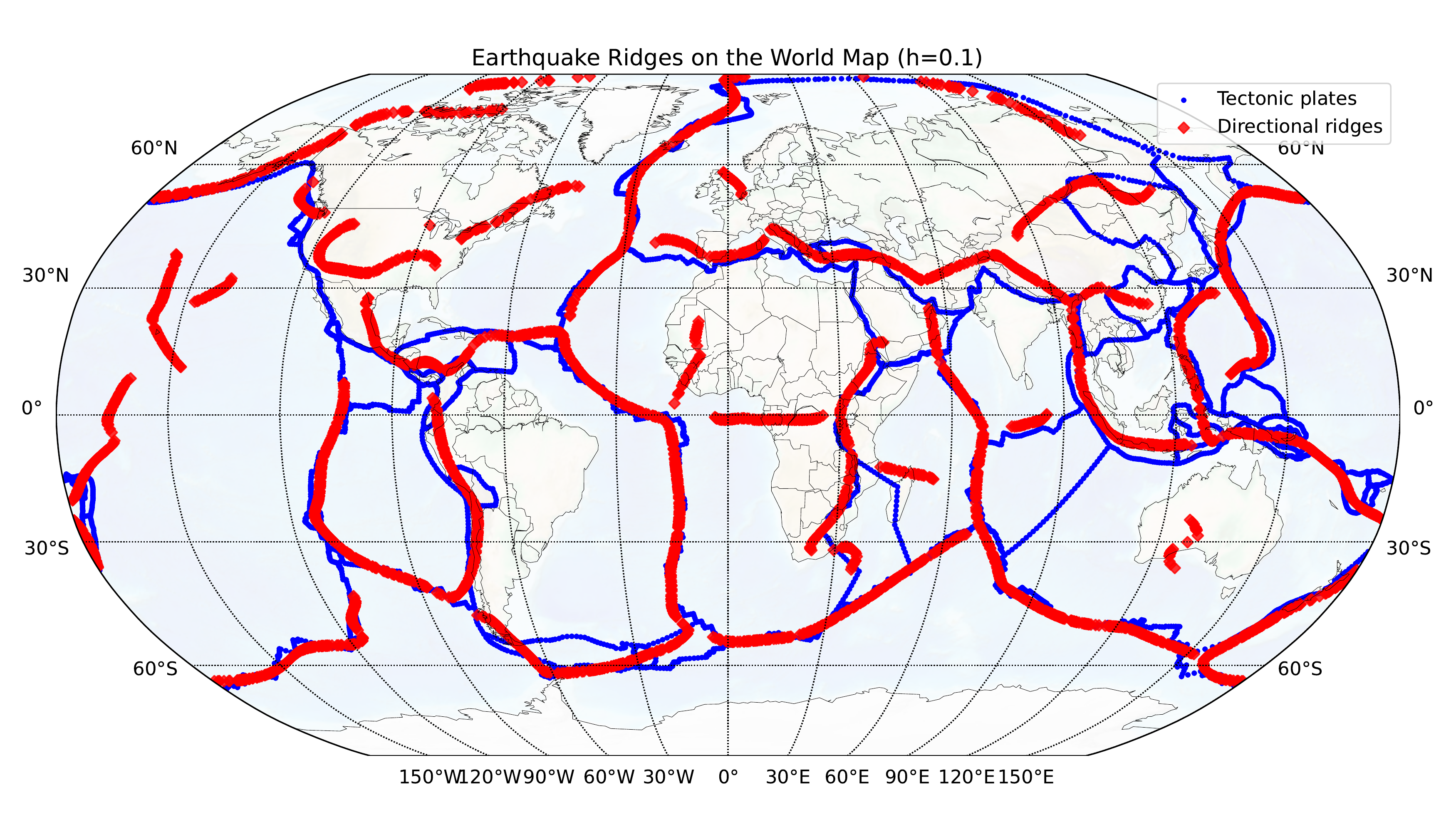}
		\caption{Estimated ridges via directional SCMS}
	\end{subfigure}%
	\caption{Comparisons between density ridges obtained by the Euclidean SCMS algorithm on angular coordinates and the directional SCMS algorithm on Cartesian coordinates from the earthquake dataset. On each panel, the ground-truth boundaries of tectonic plates are plots in blue curves.}
	\label{fig:Earthquake_ridges}
\end{figure}

To compare the earthquake ridges obtained by the Euclidean and directional SCMS algorithms with the boundaries of tectonic plates, we download the boundary geometry file of the 56 tectonic plates from \url{https://www.kaggle.com/cwthompson/tectonic-plate-boundaries} according to the models of \cite{Bird2003,argus2011geologically} and overlap them with the estimated ridges in Figure~\ref{fig:Earthquake_ridges}. The results suggest that the ridges identified by the Euclidean and directional SCMS algorithms on the earthquake dataset coincide with the boundaries of tectonic plates to a large extent.
Note that the Euclidean and directional ridges on the earthquake dataset $\mathcal{D}$ do not show too much difference, because most of the observed earthquakes are in the low latitude region ($\leq 60^{\circ}$) where most human beings live. 
Yet, the ridges estimated by our proposed directional SCMS algorithm do align better with the boundary of the Eurasian Plate near the North Pole than the ones estimated by the Euclidean SCMS algorithm, which confirms the superiority of our directional SCMS algorithm in the high latitude region; see also Appendix~\ref{Sec:Drawback_Euc} for more in-depth analysis.
	
	We further quantify the performances of earthquake ridges $\hat{R}_1$ and $\underline{\hat{R}}_1$ estimated by the Euclidean and directional SCMS algorithms from two different perspectives. First, given the fact that an estimated ridge should lie on the region where earthquakes happen more intensively, we compute the mean geodesic distances from each point in the earthquake dataset $\mathcal{D}$ to the ridges $\hat{R}_1$ and $\underline{\hat{R}}_1$ respectively as:
	$$\frac{1}{|\mathcal{D}|} \sum_{\bm{x}\in \mathcal{D}} d_g(\bm{x}, \hat{R}_1) \approx 0.02241 \quad \text{ and } \quad \frac{1}{|\mathcal{D}|} \sum_{\bm{x}\in \mathcal{D}} d_g(\bm{x}, \underline{\hat{R}}_1) \approx 0.02150,$$
	where $|\mathcal{D}|=15049$ is the number of earthquakes in the dataset. The ridge $\underline{\hat{R}}_1$ estimated by our directional SCMS algorithm is around 4\% closer to the earthquakes in $\mathcal{D}$ on average. Second, we assess the estimation errors of $\hat{R}_1$ and $\underline{\hat{R}}_1$ with respect to the boundaries of tectonic plates. To this end, we view the surface of the Earth as a unit sphere $\Omega_2$ and define a manifold-recovering error measure \citep{DLSCMSProd2021} between the set of boundary points $\mathcal{B}$ and an estimated ridge $\hat{R}$ as:
	\begin{equation}
	\label{Manifold_rec}
	d_E\left(\mathcal{B},\hat{R} \right) = \frac{1}{2} \left[\frac{1}{|\hat{R}|} \sum_{\bm{x}\in \hat{R}} d_g(\bm{x},\mathcal{B}) + \frac{1}{|\mathcal{B}|} \sum_{\bm{y}\in \mathcal{B}} d_g(\bm{y}, \hat{R}) \right],
	\end{equation}
	where $|\hat{R}|$ and $|\mathcal{B}|$ are the cardinalities of $\hat{R}$ and $\mathcal{B}$, respectively. Note that although the density ridge $\hat{R}$ and the boundaries of tectonic plates $\mathcal{B}$ are continuous structures in theory, they are generally represented by sets of discrete points in practice. That is why we can calculate their cardinalities without computing complicated integrals. Moreover, the manifold-recovering error measure is an average between the mean geodesic distances from each point in $\hat{R}$ to $\mathcal{B}$ and from each point in $\mathcal{B}$ to $\hat{R}$. We define such a balanced error measure to avoid biasing toward an estimated ridge $\hat{R}$ that only approximates a small portion of $\mathcal{B}$ in high accuracy but fails to cover other parts of $\mathcal{B}$; see Figure 4 in \cite{DLSCMSProd2021} for an illustrative example. The manifold-recovering error measures of the ridges $\hat{R}_1$ and $\underline{\hat{R}}_1$ estimated by the Euclidean and directional SCMS algorithms with respect to the boundaries of tectonic plates $\mathcal{B}$ are 
	$$d_E\left(\mathcal{B},\hat{R}_1 \right) \approx 0.05332 \quad \text{ and } \quad d_E\left(\mathcal{B},\underline{\hat{R}}_1 \right) \approx 0.05121.$$
	Our directional SCMS algorithm again reduces the estimation error by around 3.9\%. In summary, the earthquake ridges yielded by our directional SCMS algorithm are not only closer to the earthquakes on average than the ones identified by the Euclidean SCMS algorithm but also have a lower error in approximating the boundaries of tectonic plates. 


\section{Discussions}	
\label{sec::discussion}

In this paper, we have provided a rigorous proof for the linear convergence of the well-known SCMS algorithm by viewing it as an example of the SCGA algorithm. We have also generalized the definition of density ridges from the usual densities supported on compact sets in $\mathbb{R}^D$ to the directional densities supported on $\Omega_q$ with nonzero curvature. The stability theorem of directional density ridges has been established, and the linear convergence of our proposed directional SCMS algorithm has been proved. Table~\ref{table:Stepsize_summary} summarizes the frameworks of considering the (directional) mean shift/SCMS algorithms as gradient ascent/SCGA methods (on $\Omega_q$) and our results of asymptotic convergence rates of their corresponding step sizes.

\begin{table}[t]
	\centering
	\caption{Comparisons between the Euclidean and directional mean shift (MS) or SCMS algorithms and summary of the asymptotic convergence rates of their adaptive sizes when viewed as GA/SCGA algorithms in $\mathbb{R}^D$ or on $\Omega_q$.} 
	\label{table:Stepsize_summary}
	\begin{tabular}{C{0.09\textwidth}C{0.58\textwidth}c}
		\hline
		\bf Algorithms & \bf Recast forms as GA/SCGA (in $\mathbb{R}^D$ or on $\Omega_q$) & \bf Asymptotic step sizes\\
		\hline
		& \multirow{4}{*}{$\hat{\bm{x}}^{(t+1)} \gets 
			\begin{cases}
			\hat{\bm{x}}^{(t)} + \eta_{n,h}^{(t)} \cdot \nabla \hat{p}_n(\hat{\bm{x}}^{(t)})\\
			\hat{\bm{x}}^{(t)} + \eta_{n,h}^{(t)} \cdot \hat{V}_d(\hat{\bm{x}}^{(t)}) \hat{V}_d(\hat{\bm{x}}^{(t)})^T \nabla \hat{p}_n(\hat{\bm{x}}^{(t)})
			\end{cases}$}  &  \multirow{2}{*}{$\eta_{n,h}^{(t)} \asymp O(h^2) + o_P(h^2)$} \\
		MS / SCMS in $\mathbb{R}^D$  & &  \multirow{3}{*}{(See Lemma~\ref{limit_step_MS})} \\
		\hline
		& \multirow{4}{*}{$\hat{\underline{\bm{x}}}^{(t+1)} \gets 
			\begin{cases}
			\Exp_{\hat{\underline{\bm{x}}}^{(t)}}\left(\underline{\eta}_{n,h}^{(t)} \cdot \grad \hat{f}_h(\hat{\underline{\bm{x}}}^{(t)}) \right) \\
			\Exp_{\hat{\underline{\bm{x}}}^{(t)}}\left(\underline{\eta}_{n,h}^{(t)'} \cdot \hat{\underline{V}}_d(\hat{\underline{\bm{x}}}^{(t)}) \hat{\underline{V}}_d(\hat{\underline{\bm{x}}}^{(t)})^T \grad \hat{f}_h(\hat{\underline{\bm{x}}}^{(t)}) \right)
			\end{cases}$}& $\underline{\eta}_{n,h}^{(t)} \asymp \underline{\eta}_{n,h}^{(t)'}$ \\
		MS / & & \multirow{2}{*}{$\quad =O(h^2) + o_P(h^2),$} \\
		SCMS on $\Omega_q$ &  & \multirow{2}{*}{(See Lemma~\ref{norm_tot_grad})}\\
		\hline
	\end{tabular}
\end{table}

Our theoretical analyses of the SCGA algorithm in the Euclidean space $\mathbb{R}^D$ and on the unit hypersphere $\Omega_q$ has potential implications beyond proving the linear convergence of SCMS algorithms. In the optimization literature \citep{Nocedal2006Numerical,Op_algo_mat_manifold2008,Geo_Convex_Op2016,nesterov2018lectures}, it is well-known that a standard gradient ascent method (on a smooth manifold) will converge linearly given an appropriate step size when the objective function is smooth and (geodesically) strongly concave. However, as we have discussed in Remarks~\ref{SC_remark} and \ref{SC_remark2}, the smoothness and (geodesically) strong concavity assumptions are not sufficient for the linear convergence of the SCGA algorithms. Therefore, identifying density ridges with the SCGA algorithms is not only a nonconvex optimization problem, but also fundamentally more complex than standard gradient ascent methods. The assumptions and proof arguments developed in this paper may give some insights into the linear convergence of the SCGA algorithms with other forms of subspace constrained gradients.

There are still many open problems related to the SCMS algorithm. First, a central issue in determining the performance of a SCMS algorithm is the bandwidth selection. 
There is a variety of bandwidth selection mechanisms available to the Euclidean KDE and its derivatives in the literature \citep{Asymp_deri_KDE2011,Scott2015}, but it is unclear how they can be applied to the SCMS algorithm. We plan to specialize or generalize such techniques to the SCMS algorithm under both the Euclidean and directional data. Second, our definition of density ridges is generalizable to any density supported on an arbitrary Riemannian manifold. As \cite{hauberg2015principal} has formulated the principal curve on a Riemannian manifold based on its classical definition in \cite{hastie1989principal}, it will be interesting to propose a new definition of principal curves from the perspective of density ridges on Riemannian manifolds and derive a more general SCMS algorithm, possibly based on some existing nonlinear mean shift methods on manifolds \citep{subbarao2006nonlinear,subbarao2009nonlinear}.

\section*{Acknowledgements}
YC is supported by NSF DMS-1810960 and DMS-1952781, NIH U01-AG0169761.

\bibliographystyle{imsart-nameyear} 
\bibliography{Bibliography}       

\begin{appendix}

\section{Algorithmic Summaries of Euclidean and Directional SCMS Algorithms}
\label{App:Algo}

\begin{figure}[t]
	\captionsetup[subfigure]{justification=centering}
	\centering
	\begin{subfigure}[t]{.485\textwidth}
		\centering
		\includegraphics[width=\linewidth]{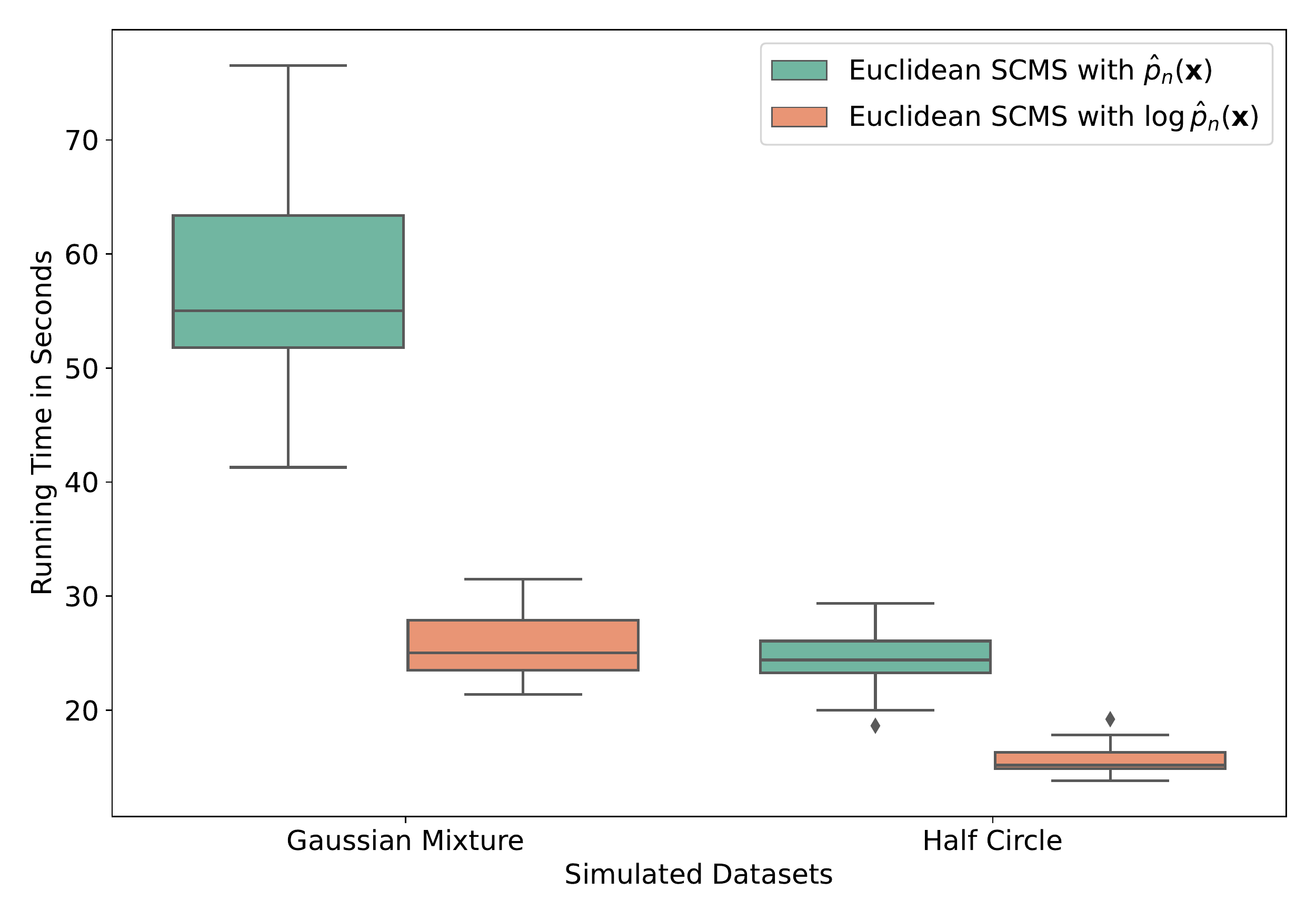}
		\caption{Repeated experiments with the simulated Euclidean datasets in Figure~\ref{fig:LC_plot}.}
	\end{subfigure}
	\hfil
	\begin{subfigure}[t]{.485\textwidth}
		\centering
		\includegraphics[width=\linewidth]{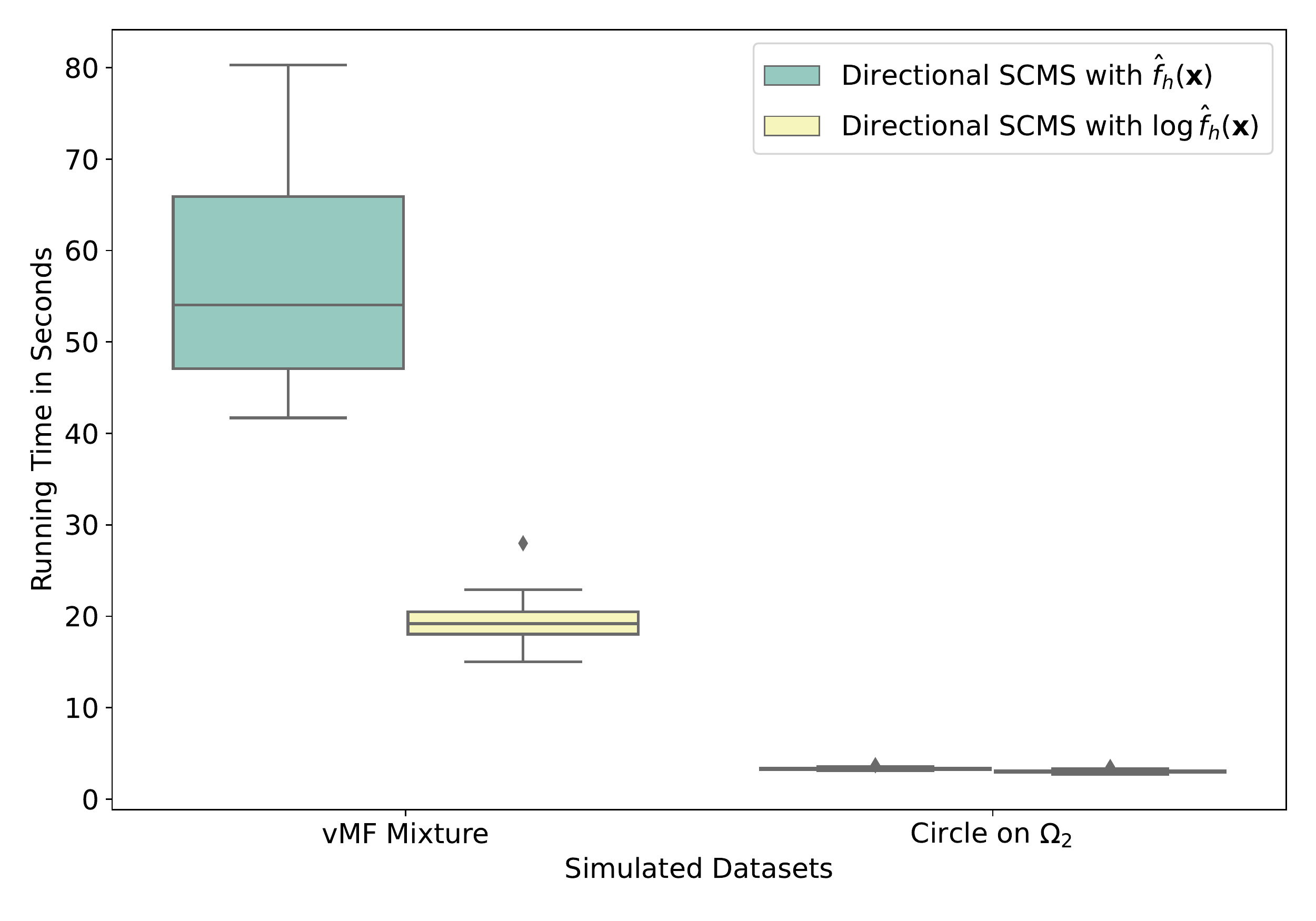}
		\caption{Repeated experiments with the simulated directional datasets in Figure~\ref{fig:LC_plot_Dir}.}
	\end{subfigure}
	\caption{Running time comparisons between the (directional) SCMS algorithms with the original density and the log-density applied to our simulated datasets in Figures~\ref{fig:LC_plot} and \ref{fig:LC_plot_Dir}.}
	\label{fig:SCMS_log_comp}
\end{figure}

In this section, we provide algorithmic summaries of the Euclidean and directional SCMS algorithms for practical reference. 
Algorithm~\ref{Algo:SCMS} describes each step of the Euclidean SCMS algorithm in detail. In our actual implementation of the algorithm, we replace the density estimator $\hat{p}_n(\bm{x})$ with $\log \hat{p}_n(\bm{x})$.
To demonstrate that the (directional) SCMS algorithms under the log-density implementation give rise to a faster convergence process, we repeat our experiments in Sections~\ref{Sec:EuSCMS_exp} and \ref{Sec:DirSCMS_exp} (\emph{i.e.}, Figures~\ref{fig:LC_plot} and \ref{fig:LC_plot_Dir}) 20 times for each simulated dataset with the (directional) SCMS algorithms under the original (estimated) density and the (estimated) log-density, respectively. The comparisons between their running times are shown in Figure~\ref{fig:SCMS_log_comp}, in which the (directional) SCMS algorithms under the log-density implementation clearly outperform their counterparts with the original density in terms of the average elapsed time until convergence.

Additionally, when the observational data in practice are noisy, it is common to incorporate an extra denoising step before Step 2 of Algorithm~\ref{Algo:SCMS} to remove observations in low-density areas and stabilize the (Euclidean) SCMS algorithm; see \cite{Non_ridge_est2014, Cosmic_Web2015} for comparative studies that demonstrate the significance of denoising.

\begin{algorithm}[t]
	\caption{(Euclidean) Subspace Constrained Mean Shift (SCMS) Algorithm}
	\label{Algo:SCMS}
	\begin{algorithmic}
		\State \textbf{Input}: 
		\begin{itemize}
			\item A data sample $\bm{X}_1,...,\bm{X}_n \sim p(\bm{x})$ in $\mathbb{R}^D$.
			\item The order $d$ of the ridge, smoothing bandwidth $h >0$, and tolerance level $\epsilon >0$.
			\item A suitable mesh $\mathbb{M}_E \subset \mathbb{R}^D$ of initial points. By default, $\mathbb{M}_E=\left\{\bm{X}_1,...,\bm{X}_n \right\}$.
		\end{itemize} 
		\State \textbf{Step 1}: Compute the density estimator $\hat{p}_n(\bm{x})=\frac{c_{k,D}}{nh^D}\sum\limits_{i=1}^n k\left(\norm{\frac{\bm{x}-\bm{X}_i}{h}}_2^2 \right)$ on the mesh $\mathbb{M}_E$.
		\State \textbf{Step 2}: For each initial point $\hat{\bm{x}}^{(0)} \in \mathbb{M}_E$, iterate the following SCMS update until convergence:
		\While {$\norm{\hat{V}_d(\hat{\bm{x}}^{(t)})^T \nabla \hat{p}_n(\hat{\bm{x}}^{(t)})}_2 > \epsilon$}
		\State \textbf{Step 2-1}: Compute the estimated Hessian matrix as:
		\begin{align*}
		\label{Est_Hessian}
		\begin{split}
		\nabla\nabla \hat{p}_n(\hat{\bm{x}}^{(t)}) = \frac{c_{k,D}}{nh^{D+2}} \sum_{i=1}^n &\Bigg[2\bm{I}_D \cdot k'\left(\norm{\frac{\hat{\bm{x}}^{(t)}-\bm{X}_i}{h}}_2^2 \right) \\
		&\quad + \frac{4}{h^2} (\hat{\bm{x}}^{(t)}-\bm{X}_i) (\hat{\bm{x}}^{(t)} - \bm{X}_i)^T \cdot k''\left(\norm{\frac{\hat{\bm{x}}^{(t)}-\bm{X}_i}{h}}_2^2 \right) \Bigg].
		\end{split}
		\end{align*}
		\State \textbf{Step 2-2}: Perform the spectral decomposition on the Hessian $\nabla\nabla \hat{p}_n(\hat{\bm{x}}^{(t)})$ and obtain that $\hat{V}_d(\hat{\bm{x}}^{(t)}) = \left[\hat{\bm{v}}_{d+1}(\hat{\bm{x}}^{(t)}),...,\hat{\bm{v}}_D(\hat{\bm{x}}^{(t)}) \right]$ whose columns are orthonormal eigenvectors associated with the smallest $(D-d)$ eigenvalues of $\nabla\nabla \hat{p}_n(\hat{\bm{x}}^{(t)})$.
		\State \textbf{Step 2-3}: Update $\hat{\bm{x}}^{(t+1)} \gets \hat{\bm{x}}^{(t)} + \hat{V}_d(\hat{\bm{x}}^{(t)}) \hat{V}_d(\hat{\bm{x}}^{(t)})^T \left[\frac{\sum_{i=1}^n \bm{X}_i k'\left(\norm{\frac{\hat{\bm{x}}^{(t)}-\bm{X}_i}{h}}_2^2 \right)}{\sum_{i=1}^n k'\left(\norm{\frac{\hat{\bm{x}}^{(t)}-\bm{X}_i}{h}}_2^2 \right)} -\hat{\bm{x}}^{(t)} \right]$.
		\EndWhile
		\State \textbf{Output}: An estimated $d$-ridge $\hat{R}_d$ represented by the collection of resulting points.
	\end{algorithmic}
\end{algorithm}

\begin{algorithm}[t]
	\caption{Directional Subspace Constrained Mean Shift (SCMS) Algorithm}
	\label{Algo:Dir_SCMS}
	\begin{algorithmic}
		\State \textbf{Input}: 
		\begin{itemize}
			\item A directional data sample $\bm{X}_1,...,\bm{X}_n \sim f(\bm{x})$ on $\Omega_q$.
			\item The order $d$ of the directional ridge, smoothing bandwidth $h>0$, and tolerance level $\epsilon >0$.
			\item A suitable mesh $\mathbb{M}_D \subset \Omega_q$ of initial points. By default, $\mathbb{M}_D = \left\{\bm{X}_1,...,\bm{X}_n \right\}$.
		\end{itemize} 
		\State \textbf{Step 1}: Compute the directional KDE $\hat{f}_h(\bm{x})=\frac{c_{L,q}(h)}{n} \sum\limits_{i=1}^n L\left(\frac{1-\bm{x}^T\bm{X}_i}{h^2}\right)$ on the mesh $\mathbb{M}_D$.
		\State \textbf{Step 2}: For each $\underline{\hat{\bm{x}}}^{(0)}\in \mathbb{M}_D$, iterate the following directional SCMS update until convergence:
		\While {$\norm{\frac{nh^2}{c_{L,q}(h)} \cdot \underline{\hat{G}}_d\left(\underline{\hat{\bm{x}}}^{(t)} \right)}_2 > \epsilon$}:
		\State \textbf{Step 2-1}: Compute the scaled version of the estimated Hessian matrix as: 
		\begin{align*}
		\frac{nh^2}{c_{L,q}(h)}\mathcal{H} \hat{f}_h(\underline{\hat{\bm{x}}}^{(t)}) 
		&= \left[\bm{I}_{q+1} -\underline{\hat{\bm{x}}}^{(t)}\left(\underline{\hat{\bm{x}}}^{(t)}\right)^T \right] \Bigg[\frac{1}{h^2}\sum_{i=1}^n \bm{X}_i\bm{X}_i^T\cdot L''\left(\frac{1-\bm{X}_i^T \underline{\hat{\bm{x}}}^{(t)}}{h^2} \right) \\
		& \hspace{10mm} + \sum_{i=1}^n \bm{X}_i^T \underline{\hat{\bm{x}}}^{(t)} \bm{I}_{q+1}\cdot L'\left(\frac{1-\bm{X}_i^T \underline{\hat{\bm{x}}}^{(t)}}{h^2} \right) \Bigg] \left[\bm{I}_{q+1} -\underline{\hat{\bm{x}}}^{(t)}\left(\underline{\hat{\bm{x}}}^{(t)}\right)^T \right].
		\end{align*}
		\State \textbf{Step 2-2}: Perform the spectral decomposition on $\frac{nh^2}{c_{L,q}(h)} \mathcal{H} \hat{f}_h\left(\underline{\hat{\bm{x}}}^{(t)} \right)$ and compute $\underline{\hat{V}}_d(\underline{\hat{\bm{x}}}^{(t)}) = \left[\underline{\hat{\bm{v}}}_{d+1}(\underline{\hat{\bm{x}}}^{(t)}),...,\underline{\hat{\bm{v}}}_q(\underline{\hat{\bm{x}}}^{(t)}) \right]$, whose columns are orthonormal eigenvectors corresponding to the smallest $q-d$ eigenvalues inside the tangent space $T_{\underline{\hat{\bm{x}}}^{(t)}}$.
		\State \textbf{Step 2-3}: Update $\underline{\hat{\bm{x}}}^{(t+1)} \gets \underline{\hat{\bm{x}}}^{(t)} - \underline{\hat{V}}_d(\underline{\hat{\bm{x}}}^{(t)}) \underline{\hat{V}}_d(\underline{\hat{\bm{x}}}^{(t)})^T  \left[\frac{\sum_{i=1}^n \bm{X}_i L'\left(\frac{1-\bm{X}_i^T \underline{\hat{\bm{x}}}^{(t)}}{h^2} \right)}{\norm{\sum_{i=1}^n \bm{X}_i L'\left(\frac{1-\bm{X}_i^T \underline{\hat{\bm{x}}}^{(t)}}{h^2} \right)}_2} \right]$.
		\State \textbf{Step 2-4}: Standardize $\underline{\hat{\bm{x}}}^{(t+1)}$ as $\underline{\hat{\bm{x}}}^{(t+1)} \gets \frac{\underline{\hat{\bm{x}}}^{(t+1)}}{\norm{\underline{\hat{\bm{x}}}^{(t+1)}}_2}$.
		\EndWhile
		\State \textbf{Output}: An estimated directional $d$-ridge $\underline{\hat{R}}_d$ represented by the collection of resulting points.
	\end{algorithmic}
\end{algorithm}

We summarize the directional SCMS algorithm in Algorithm~\ref{Algo:Dir_SCMS}. 
Note that in Step 2-1 of Algorithm~\ref{Algo:Dir_SCMS}, we compute the scaled versions $\frac{nh^2}{c_{L,q}(h)} \cdot \underline{\hat{G}}_d(\bm{x})$ and $\frac{nh^2}{c_{L,q}(h)}\mathcal{H} \hat{f}_h(\bm{x})$ for $\bm{x}\in \Omega_q$ because the estimated principal Riemannian gradient $\underline{\hat{G}}_d(\bm{x})$ and Hessian $\mathcal{H} \hat{f}_h(\bm{x})$ are often very small. 
The scaling stabilizes the numerical computation.
The spectral decomposition is thus performed on the scaled Hessian estimator $\frac{nh^2}{c_{L,q}(h)}\mathcal{H} \hat{f}_h(\bm{x})$, and the scaled principal Riemannian gradient estimator is calculated as
\begin{align*}
\frac{nh^2}{c_{L,q}(h)} \cdot \underline{\hat{G}}_d(\bm{x}) &= \underline{\hat{V}}_d(\bm{x}) \underline{\hat{V}}_d(\bm{x})^T \left[\sum_{i=1}^n \left(\bm{x}\cdot \bm{x}^T\bm{X}_i -\bm{X}_i \right) L'\left(\frac{1-\bm{x}^T\bm{X}_i}{h^2} \right) \right] \\
&= - \sum_{i=1}^n \underline{\hat{V}}_d(\bm{x}) \underline{\hat{V}}_d(\bm{x})^T \bm{X}_i \cdot L'\left(\frac{1-\bm{x}^T\bm{X}_i}{h^2} \right),
\end{align*}
where $\underline{\hat{V}}_d(\bm{x}) = [\underline{\hat{\bm{v}}}_{d+1}(\bm{x}),...,\underline{\hat{\bm{v}}}_q(\bm{x})]$ has its columns equal to the orthonormal eigenvectors associated with the $d$ smallest eigenvalues of the scaled Hessian estimator $\frac{nh^2}{c_{L,q}(h)}\mathcal{H} \hat{f}_h(\bm{x})$ (or equivalently, $\mathcal{H} \hat{f}_h(\bm{x})$) inside the tangent space $T_{\bm{x}}$.

\section{Limitations of Euclidean KDE in Handling Directional Data}
\label{Sec:Drawback_Euc}

In this section, we demonstrate with examples and simulation studies that it is inadequate to analyze angular or directional data with Euclidean KDE \eqref{KDE_simp} and SCMS algorithm (Algorithm~\ref{Algo:SCMS}). Consider a directional data sample $\left\{\bm{X}_1,...,\bm{X}_n\right\} \subset \Omega_2$ generated from a directional density $f$ on $\Omega_2$. In real-world applications, the random observations $\bm{X}_1,...,\bm{X}_n$ on $\Omega_2$ are commonly represented by their angular coordinates $\bm{Y}_1,...,\bm{Y}_n$ with $\bm{Y}_i=(Y_{i,1}, Y_{i,2}) \in [-180^{\circ},180^{\circ})\times \left[-90^{\circ},90^{\circ} \right]$ or equivalently, $\bm{Y}_i=(Y_{i,1}, Y_{i,2}) \in [-\pi,\pi)\times \left[-\frac{\pi}{2}, \frac{\pi}{2}\right]$ for $i=1,...,n$, where $\left\{Y_{i,1} \right\}_{i=1}^n$ are longitudes and $\left\{Y_{i,2} \right\}_{i=1}^n$ are latitudes.

\subsection{Case I: Density Estimation}
\label{Sec:Drawback_Euc_CaseI}

As the angular coordinates $\{\bm{Y}_1,...,\bm{Y}_n\}$ of the directional dataset $\{\bm{X}_1,...,\bm{X}_n\} \subset \Omega_2$ have their ranges in a subset $[-\pi,\pi)\times \left[-\frac{\pi}{2},\frac{\pi}{2} \right]$ of the flat Euclidean space $\mathbb{R}^2$, it is tempting to apply the Euclidean KDE on $\{\bm{Y}_1,...,\bm{Y}_n\}$ to construct a density estimator as:
\begin{equation}
\label{Euc_KDE_2d}
\hat{p}_n(\bm{y}) = \frac{1}{nh^2} \sum_{i=1}^n K\left(\frac{\bm{y}-\bm{Y}_i}{h} \right)=
\begin{cases}
\frac{c_{k,2}}{nh^2} \sum\limits_{i=1}^n k\left(\norm{\frac{\bm{y}-\bm{Y}_i}{h}}_2^2 \right) := \hat{p}_n^{(1)}(\bm{y}),\\
\frac{1}{nh^2}\sum\limits_{i=1}^n K_1\left(\frac{y_1-Y_{i,1}}{h} \right) \cdot K_2\left(\frac{y_2-Y_{i,2}}{h} \right) := \hat{p}_n^{(2)}(\bm{y}),
\end{cases}
\end{equation}
where $\hat{p}_n^{(1)}$ uses a radial symmetric kernel with profile $k$, and $\hat{p}_n^{(2)}$ leverages a product kernel. However, the Euclidean KDEs in \eqref{Euc_KDE_2d} (both $\hat{p}_n^{(1)}$ and $\hat{p}_n^{(2)}$) exhibit two potential drawbacks of dealing with directional data.

\noindent $\bullet$ First, $\hat{p}_n(\bm{y})$ in \eqref{Euc_KDE_2d} is an estimator of the directional density $f$ under its angular representation $p_f:[-\pi,\pi)\times \left[-\frac{\pi}{2},\frac{\pi}{2} \right] \to \mathbb{R}$. Here, $f$ is $2\pi$-periodic in its first coordinate and $\pi$-periodic in its second coordinate. Then, the bias of $\hat{p}_n(\bm{y})$ in estimating $p_f(\bm{y})$ is $$\mathbb{E}\left[\hat{p}_n(\bm{y}) \right]-p_f(\bm{y})=\frac{h^2}{2} C_K^2 \Delta p_f(\bm{y}) + o(h^2),$$ 
where $C_K^2 = \int_{\mathbb{R}^2} \norm{\bm{y}}_2^2 K(\bm{y}) d\bm{y}$ and $\Delta p_f= \frac{\partial^2 p_f}{\partial y_1^2} + \frac{\partial^2 p_f}{\partial y_2^2}$ is the Laplacian of $p_f$; see \cite{KDE_t} for details. However, the second-order partial derivative $\frac{\partial^2 p_f}{\partial y_1^2}$ along the lines of constant latitude (or parallels) would tend to infinity as we approach the north and south poles, given that the first-order partial derivative $\frac{\partial p_f}{\partial y_1}$ is bounded. One method to justify this claim is that the curvatures of these parallels, which are equivalent to the reciprocals of their radii, tend to infinity as these radii shrink. In addition, one should recall that the curvature of a function $y=g(x)$ is defined as $\varkappa := \frac{|g''|}{(1+g'^2)^{\frac{3}{2}}}$. Therefore, applying \eqref{Euc_KDE_2d} to estimate the angular representation $p_f$ of the directional density $f$ will produce high bias as the estimator $\hat{p}_n$ approaches the high-latitude regions (around the north and south poles); see also Panel (c) of Figure~\ref{fig:add_Cart}.

\noindent $\bullet$ Second, the Euclidean KDE $\hat{p}_n$ leverages the Euclidean distances between any query point $\bm{y} \in [-\pi,\pi)\times \left[-\frac{\pi}{2},\frac{\pi}{2} \right]$ and observations $\{\bm{Y}_1,...,\bm{Y}_n\} \subset [-\pi,\pi)\times \left[-\frac{\pi}{2},\frac{\pi}{2} \right]$ under their angular coordinates to construct the density estimates, instead of using the (intrinsic) geodesic distances. Note that the Euclidean distance in the angular coordinate system is not equivalent to the Euclidean distance in the ambient Euclidean space $\mathbb{R}^3$ containing the directional data on $\Omega_2$. As a result, some observations that have dramatically different geodesic distances to density query points can have the same density contributions in $\hat{p}_n$, as illustrated in Example~\ref{counterexample}.

\begin{figure}[t]
	\centering
	\includegraphics[width=0.4\linewidth]{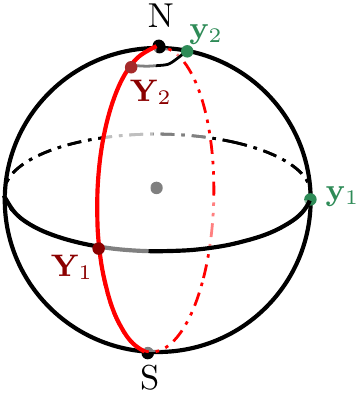}
	\caption{Graphical illustration of geodesic distances between $\bm{y}_1$ and $\bm{Y}_1$ as well as $\bm{y}_2$ and $\bm{Y}_2$.}
	\label{fig:counter_example}
\end{figure}

\begin{example}
	\label{counterexample}
	Suppose that we want to estimate the density values at $\bm{y}_1=(0,0)$ and $\bm{y}_2=(0,\frac{\pi}{2}-\epsilon)$, where $\epsilon >0$ is of a small value. Consider a random sample consisting of only two observations $\bm{Y}_1=(\frac{\pi}{2}, 0)$ and $\bm{Y}_2=(\frac{\pi}{2}, \frac{\pi}{2} -\epsilon)$. 
	If we use the Euclidean distance, the distance between $(\bm{y}_1, \bm{Y}_1)$ and the distance between $(\bm{y}_2, \bm{Y}_2)$
	are the same. Therefore, when we use the Euclidean KDE $\hat{p}_n$ to estimate the underlying density, the contribution of $\bm{Y}_1$ to $\bm{y}_1$ will be the same as the contribution of $\bm{Y}_2$ to $\bm{y}_2$.
	Nevertheless, their geodesic distances are very different, because $d_g(\bm{y}_1,\bm{Y}_1) = \frac{\pi}{2}$ while $d_g(\bm{y}_2,\bm{Y}_2) = \arccos\left[\sin^2\left(\frac{\pi}{2}-\epsilon \right) \right] = \arccos\left[\frac{1+\cos(2\epsilon)}{2} \right]$ is a quantity close to zero; see Figure~\ref{fig:counter_example} for a graphical illustration. 
	It explains, from a different angle, why the Euclidean KDE $\hat{p}_n$ will have a large bias in estimating the underlying density when the query point $\bm{y}$ is within the high latitude region.
\end{example}

\subsection{Case II: Ridge-Finding Problem}

Consider the following simulated example of identifying a density ridge via the Euclidean SCMS algorithm (Algorithm~\ref{Algo:SCMS}) and our proposed directional SCMS algorithm (Algorithm~\ref{Algo:Dir_SCMS}). We generate 1000 data points $\{\bm{X}_1,...,\bm{X}_{1000}\} \subset \Omega_2$ uniformly frbecauseom a great circle connecting the North and South Poles of $\Omega_2$ with some i.i.d. additive Gaussian noises $N(0,0.2^2)$ to their Cartesian coordinates. Then, all the simulated points will be standardized back to $\Omega_2$ via $L_2$ normalization. The angular coordinates of these simulated points are denoted by $\{\bm{Y}_1,...,\bm{Y}_{1000} \} \subset [-180^{\circ},180^{\circ})\times \left[-90^{\circ},90^{\circ} \right]$ accordingly. Figure~\ref{fig:add_Cart} presents the result of applying both the Euclidean SCMS algorithm (with the Gaussian kernel) to angular coordinates and the directional SCMS algorithm (with the von Mises kernel) to Cartesian coordinates of our simulated dataset. As shown in the panel (b) of Figure~\ref{fig:add_Cart}, the Euclidean SCMS algorithm exhibits high bias in estimating the true circular structure near two poles of $\Omega_2$, while our directional SCMS algorithm is able to seek out the true circular structure under negligible errors. The density plot in the panel (c) of Figure~\ref{fig:add_Cart} exhibits two nonsmoothing peaks on the North Pole due to the infinite Hessian matrices of the underlying density in its angular coordinate; recall our discussion in Section~\ref{Sec:Drawback_Euc_CaseI}. This also explains the chaotic behavior of the Euclidean KDE in high-latitude regions. 

\begin{figure}[!t]
	\captionsetup[subfigure]{justification=centering}
	\centering
	\begin{subfigure}[t]{.32\textwidth}
		\centering
		\includegraphics[width=1\linewidth,height=5cm]{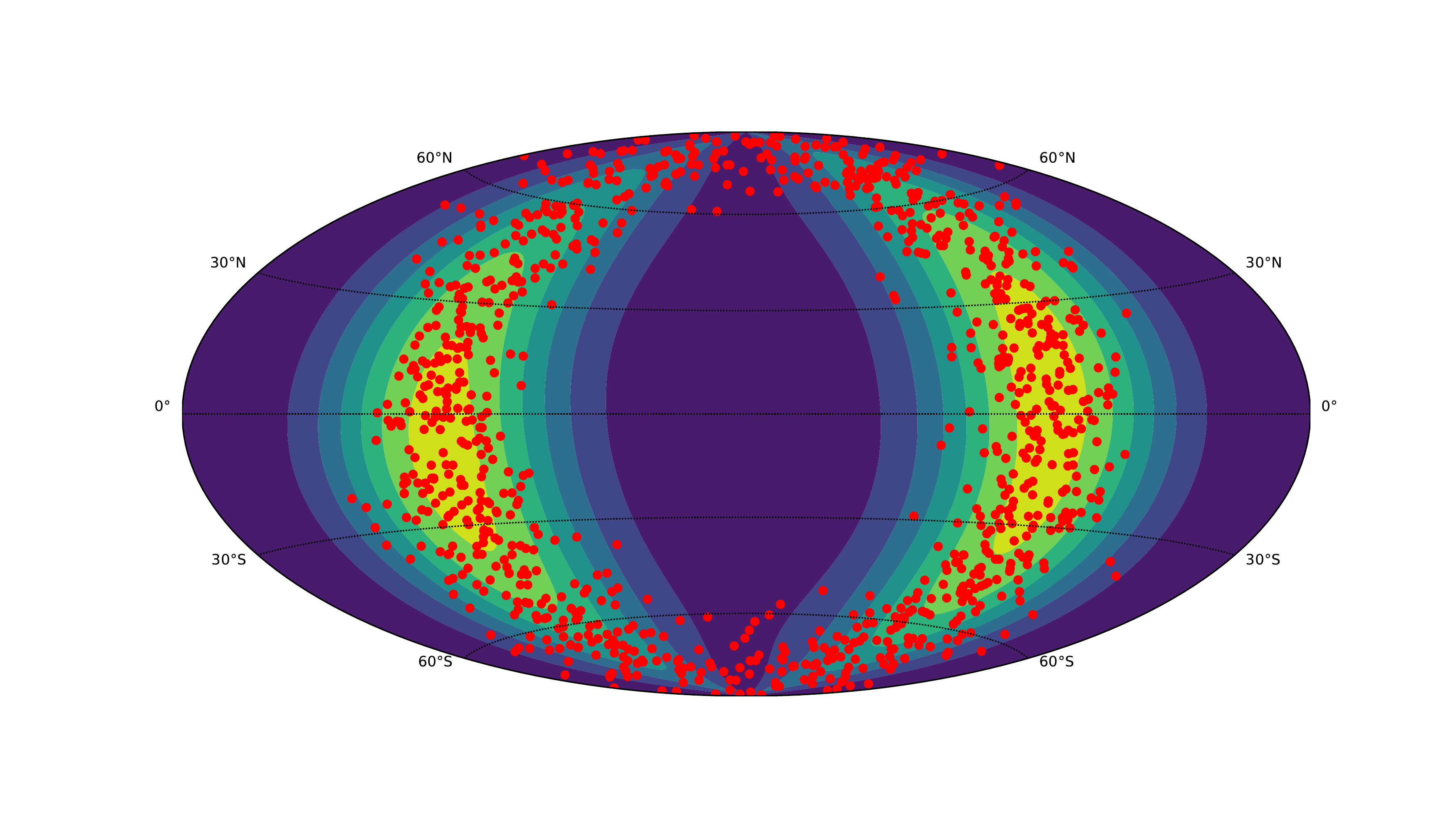}
		\caption{Initial points}
	\end{subfigure}
	\hfil
	\begin{subfigure}[t]{.32\textwidth}
		\centering
		\includegraphics[width=1\linewidth, height=5cm]{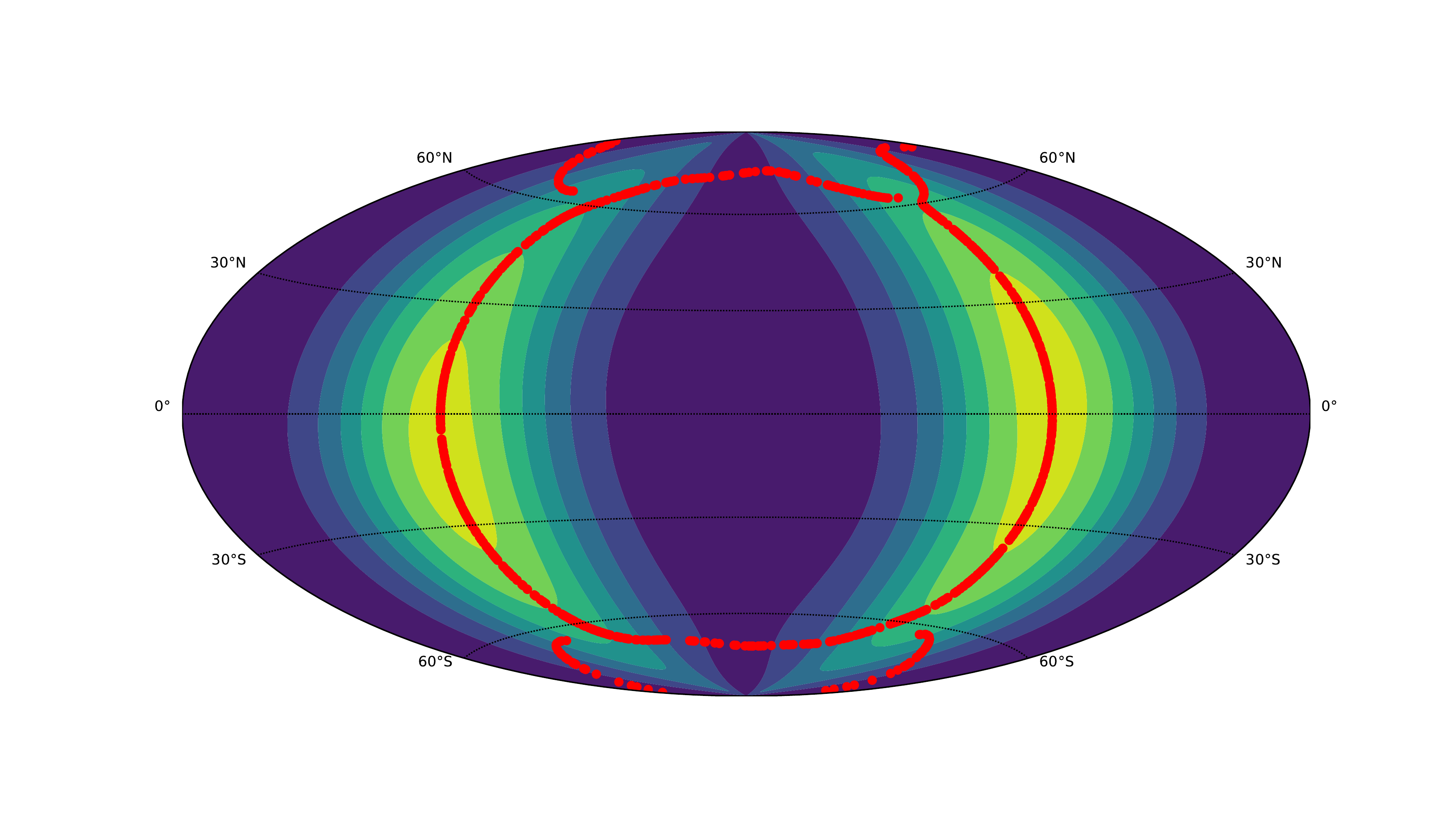}
		\caption{Converged points (hammer projection)}
	\end{subfigure}
	\hfil
	\begin{subfigure}[t]{.32\textwidth}
		\centering
		\includegraphics[width=1\linewidth]{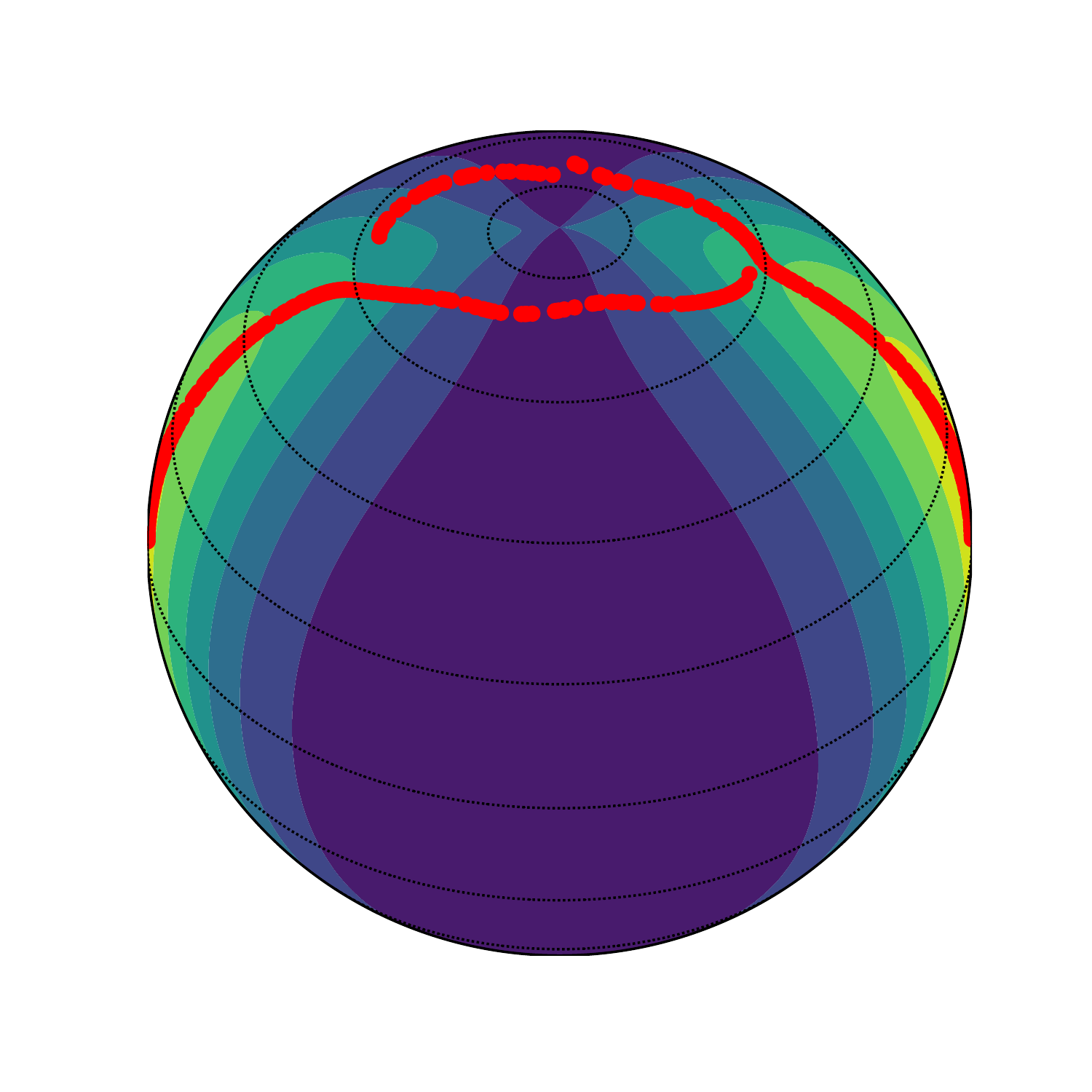}
		\caption{Converged points (orthographic projection)}
	\end{subfigure}
	\begin{subfigure}[t]{.32\textwidth}
		\centering
		\includegraphics[width=1\linewidth, height=5cm]{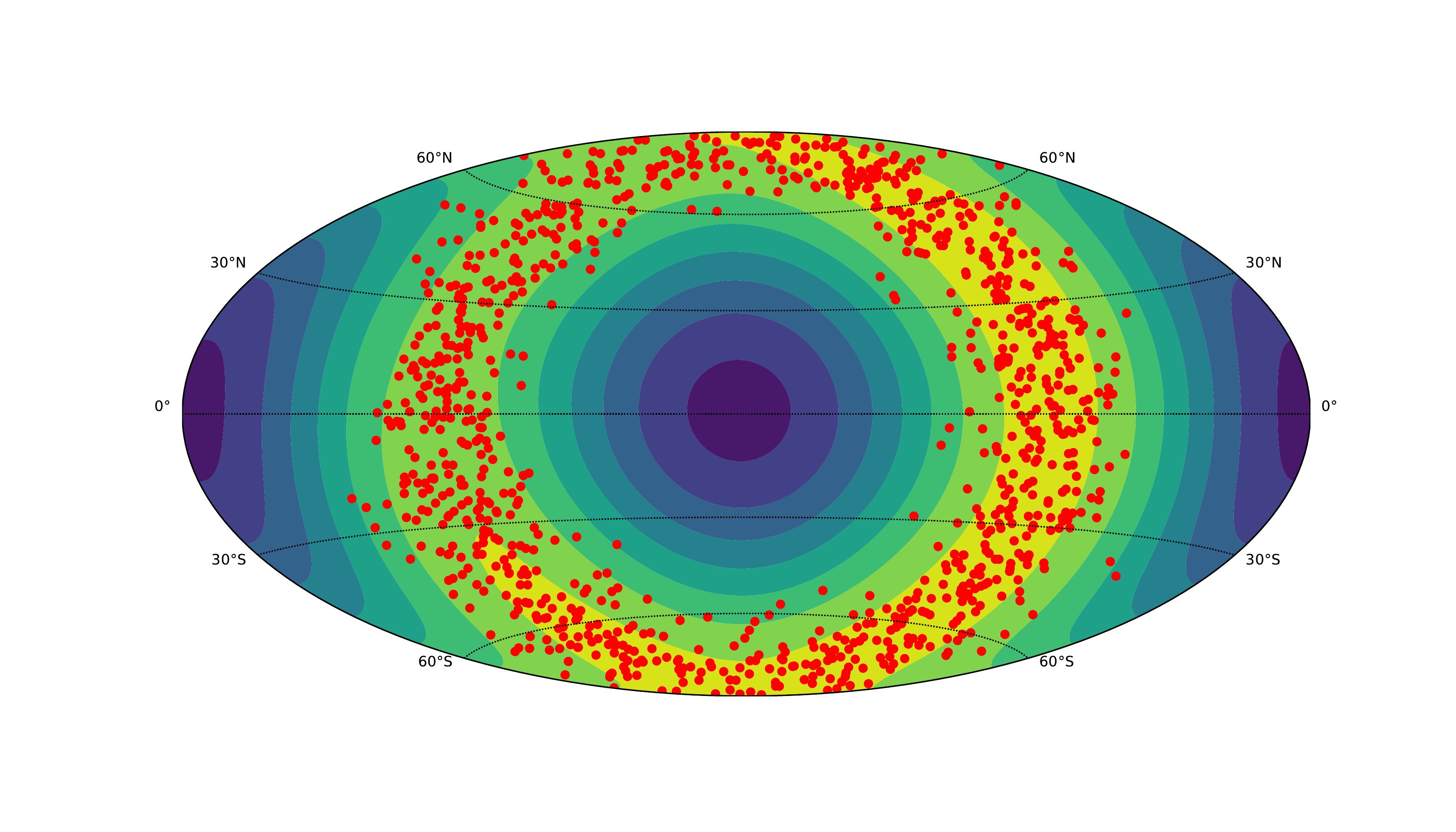}
		\caption{Initial points}
	\end{subfigure}%
	\hfil
	\begin{subfigure}[t]{.32\textwidth}
		\centering
		\includegraphics[width=1\linewidth, height=5cm]{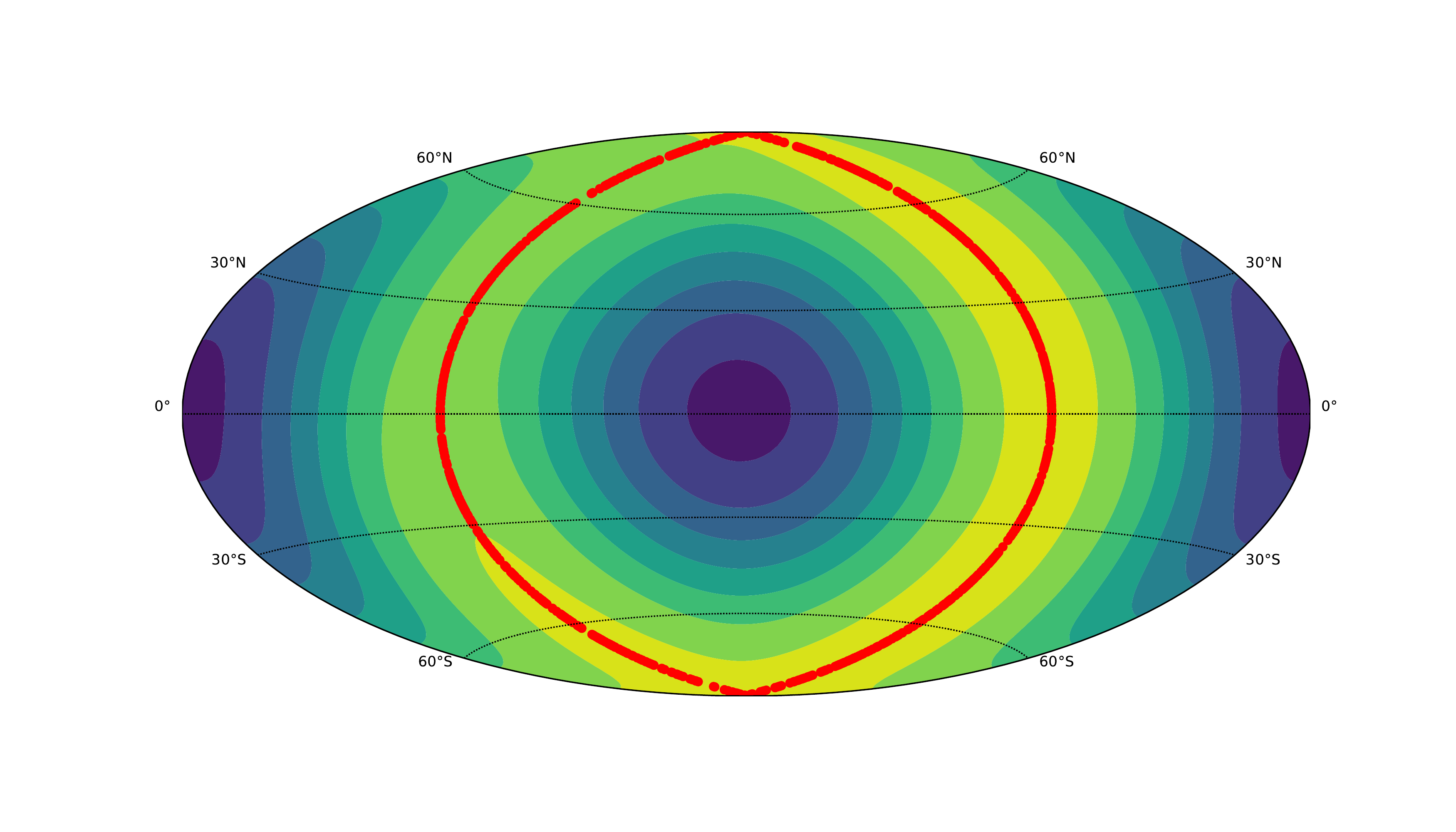}
		\caption{Converged points (hammer projection)}
	\end{subfigure}
	\hfil
	\begin{subfigure}[t]{.32\textwidth}
		\centering
		\includegraphics[width=1\linewidth]{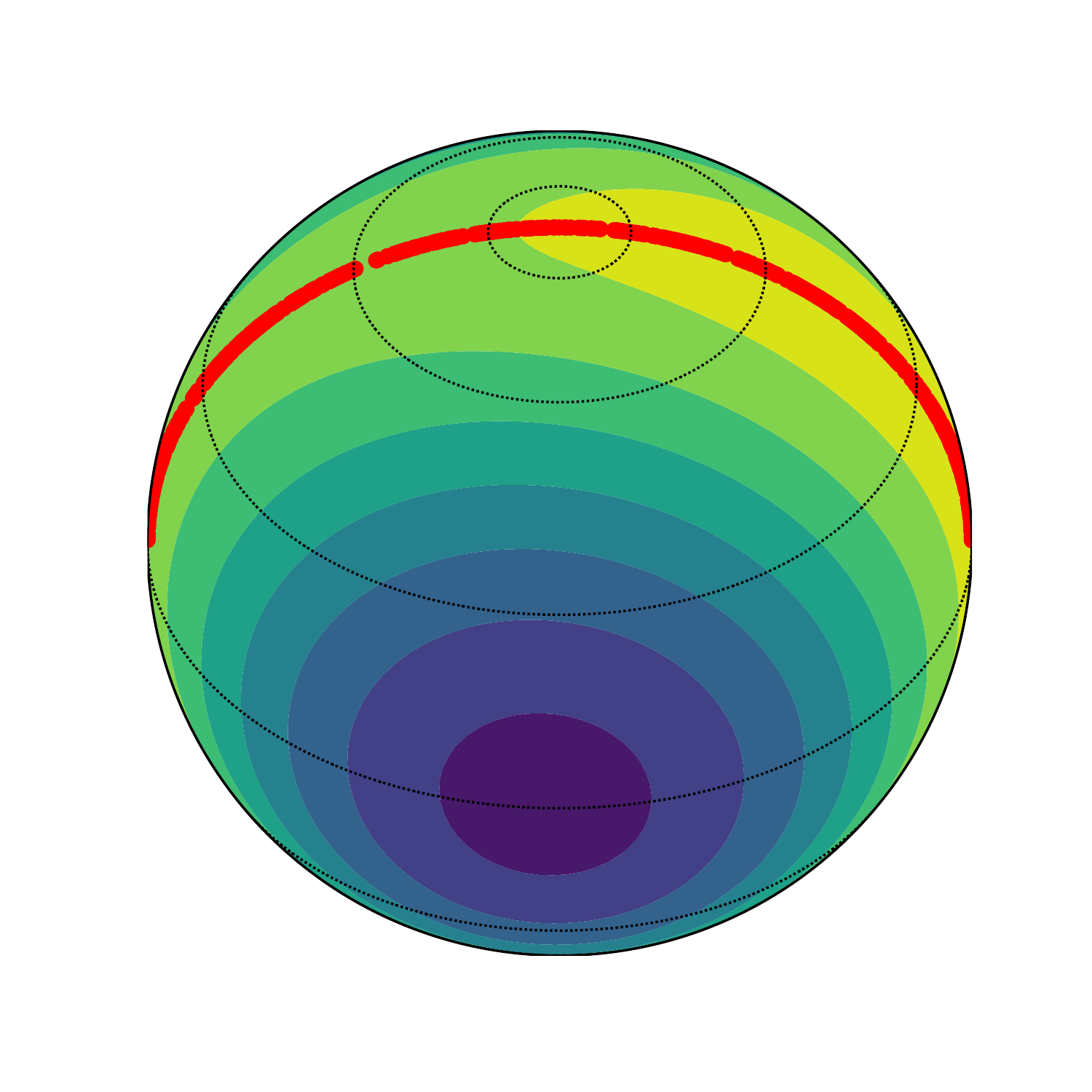}
		\caption{Converged points (orthographic projection)}
	\end{subfigure}
	\caption{Euclidean and directional SCMS algorithms performed on the simulated dataset. 
		{\bf Panels (a)-(c):} Outcomes of the Euclidean SCMS algorithm with the contour plot for the Euclidean KDE.
		{\bf Panels (d)-(f):} Outcomes of our directional SCMS algorithm with the contour plot for the directional KDE.
		{\bf Panels (a)-(b) and (d)-(e)} are shown in the view of Hammer projections (page 160 in \citealt{Snyder1989album}), while {\bf Panels (c) and (f)} are presented under the orthographic projections.
	}
	\label{fig:add_Cart}
\end{figure}

At this point, some readers may have a natural concern: why  we do not directly apply the Euclidean SCMS algorithm to the Cartesian coordinates $\left\{\bm{X}_1,...,\bm{X}_n \right\} \subset \Omega_q$ of the available data points? We discuss the potential downsides of this approach from two different aspects.
	\begin{enumerate}
		\item The Euclidean SCMS algorithm is not intrinsically designed for handling the directional data $\left\{\bm{X}_1,...,\bm{X}_n \right\} \subset \Omega_q$. Directly applying the algorithm to these Cartesian coordinates leads to an estimated ridge not lying on $\Omega_q$. While the $L_2$ normalization is able to standardize the ridge points back to $\Omega_q$, this standardization process will inevitable introduce extra bias.
		
		\item When estimating the underlying density of $\left\{\bm{X}_1,...,\bm{X}_n \right\} \subset \Omega_q$, we know from \eqref{unif_bound} and some KDE literature \citep{Asymp_deri_KDE2011,Scott2015,KDE_t} that the (uniform) rates of convergence of the Euclidean KDE and its derivatives depend on the dimension $(q+1)$ of the ambient space instead of the intrinsic dimension $q$ of directional data. This dimensionality effect also appears in the (linear) convergence of the downstream SCMS algorithm, which, for instance, shrinks the upper bounds of the (linear) convergence radius and step size threshold in Theorem~\ref{SCGA_LC}. Thus, analyzing directional data $\left\{\bm{X}_1,...,\bm{X}_n \right\} \subset \Omega_q$ with the Euclidean KDE and SCMS algorithm will slow down the statistical and algorithmic rates of convergence of the density estimators as well as lower the accuracy of the resulting ridge in recovering the underlying structure inside the dataset.
	\end{enumerate}
	
	\begin{figure}[!t]
		\captionsetup[subfigure]{justification=centering}
		\centering
		\begin{subfigure}[t]{.49\textwidth}
			\centering
			\includegraphics[width=0.9\linewidth]{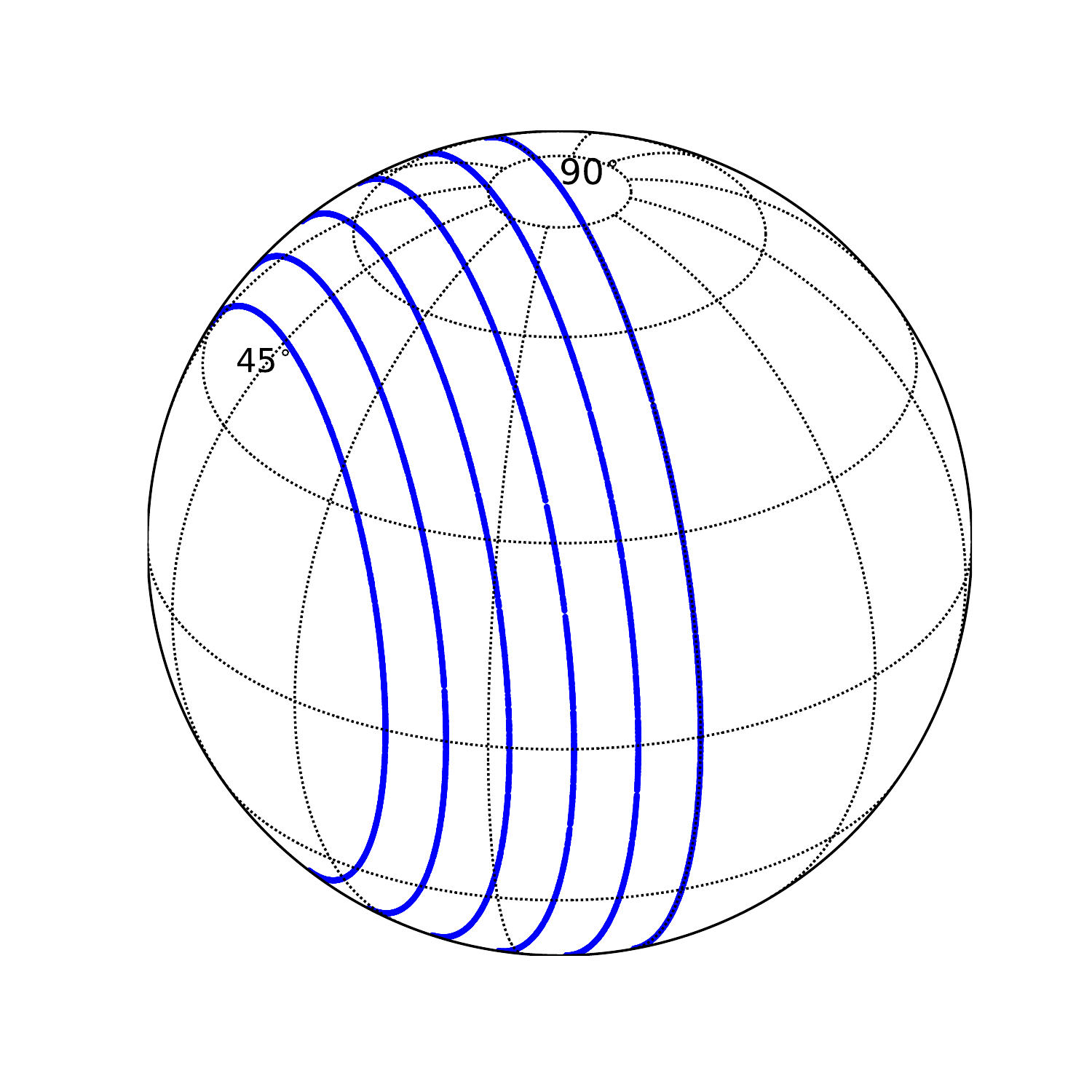}
			\caption{Depiction of underlying true circular structures.}
		\end{subfigure}
		\hfil
		\begin{subfigure}[t]{.49\textwidth}
			\centering
			\includegraphics[width=\linewidth]{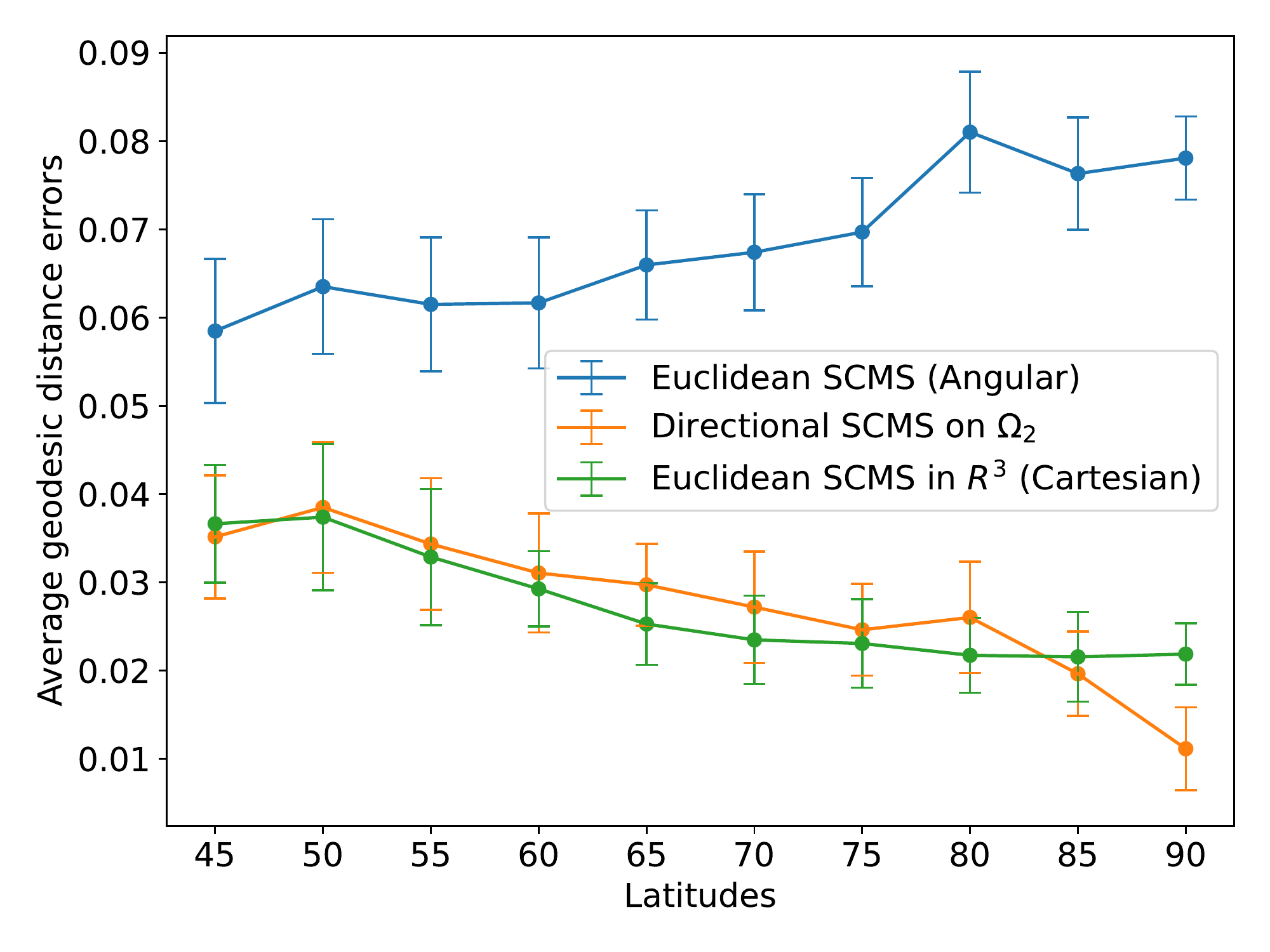}
			\caption{Average geodesic distance errors.}
		\end{subfigure}
		\begin{subfigure}[t]{.49\textwidth}
			\centering
			\includegraphics[width=\linewidth]{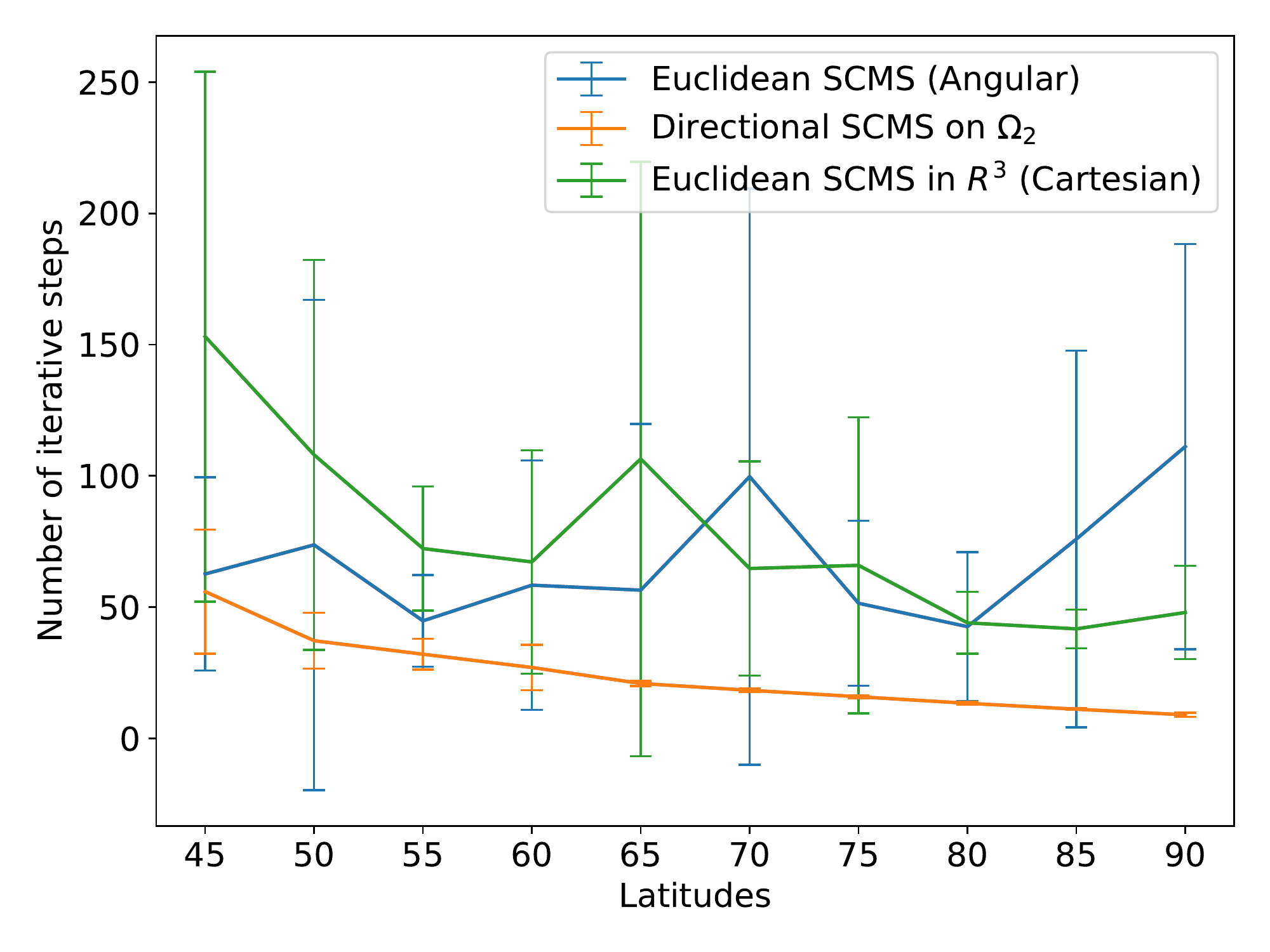}
			\caption{Number of iteration steps.}
		\end{subfigure}
		\hfil
		\begin{subfigure}[t]{.49\textwidth}
			\centering
			\includegraphics[width=\linewidth]{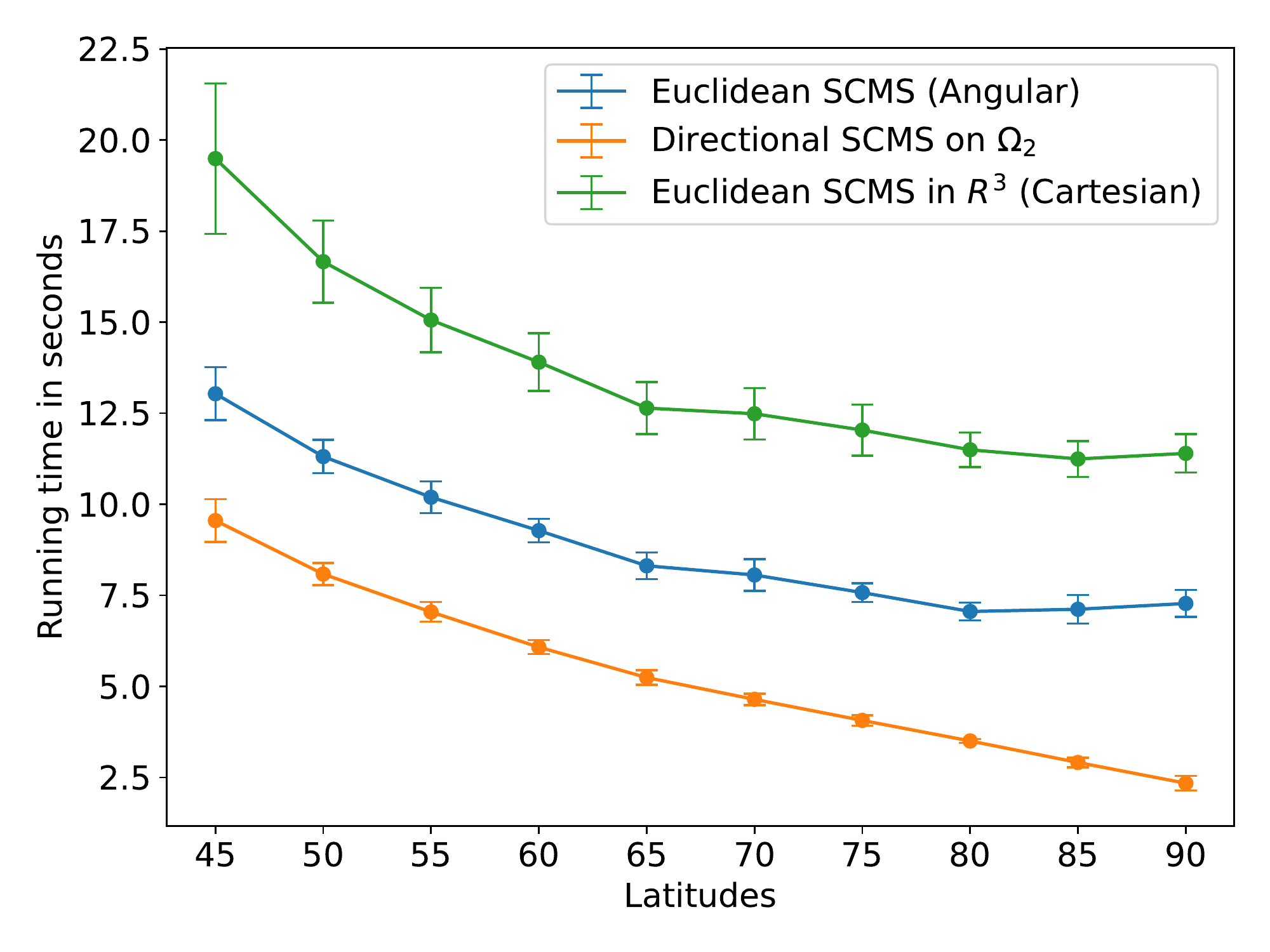}
			\caption{Running time.}
		\end{subfigure}
		\caption{Euclidean and directional SCMS algorithms applied to the simulated datasets whose true structures are circles on $\Omega_2$ attaining their maximum latitudes from $45^{\circ}$ to $90^{\circ}$, respectively. The dots on each line plot in the panels (b-d) are the means of the associated statistics for the repeated experiments, while the error bars indicate their corresponding standard deviations.}
		\label{fig:val_lat}
	\end{figure}
	
	To support our above explanations, we extend our simulation study in Figure~\ref{fig:add_Cart} as follows. We vary the maximum latitude attained by the underlying (intrinsic) circular structure on $\Omega_2$ from $45^{\circ}$ to $90^{\circ}$ while keeping the circle parallel to the original great circle connecting the North and South Poles of $\Omega_2$; see the panel (a) in Figure~\ref{fig:val_lat} for an illustration. For each of these underlying circles, we follow the same sampling scheme as in Figure~\ref{fig:add_Cart}, \emph{i.e.}, sampling 1000 points uniformly on the circle with some i.i.d. additive Gaussian noises $N(0,0.2^2)$ to their Cartesian coordinates and $L_2$ normalization back to $\Omega_2$. The Cartesian coordinates of the simulated points from each circular structure are denoted by $\left\{\bm{X}_1,...,\bm{X}_{1000} \right\} \subset \Omega_2$ while their angular coordinates are represented by $\left\{\bm{Y}_1,...,\bm{Y}_{1000} \right\} \subset [-180^{\circ}, 180^{\circ})\times \left[-90^{\circ}, 90^{\circ}\right]$. Then, we apply our directional SCMS algorithm to $\left\{\bm{X}_1,...,\bm{X}_{1000} \right\}$ from each of these simulated datasets. Moreover, the Euclidean SCMS algorithm is applied to both the angular coordinates $\left\{\bm{Y}_1,...,\bm{Y}_{1000} \right\}$ and Cartesian coordinates $\left\{\bm{X}_1,...,\bm{X}_{1000} \right\}$ from each of these simulated datasets, where we consider $\left\{\bm{X}_1,...,\bm{X}_{1000} \right\}$ as a dataset in the ambient space $\mathbb{R}^3$ in the latter case. Here, the sets of initial points for the Euclidean and directional SCMS algorithms are the simulated datasets themselves. Finally, we compute the average geodesic distance errors on $\Omega_2$ from the resulting ridges to the corresponding true circular structures. To reduce the randomness of our simulation studies, we also repeat the above sampling and experimental procedures 20 times for each true circular structure.
	
	
	We present our comparisons of the Euclidean and directional SCMS algorithms based on three metrics in Figure~\ref{fig:val_lat}: (i) average geodesic distance errors between the estimated ridges and the true circular structures, (ii) the number of iteration steps, and (iii) the running time. Notice that, as the latitudes of the underlying circular structures increase, the distance errors of (Euclidean) ridges based on the Euclidean SCMS algorithm applied on the angular coordinates $\left\{\bm{Y}_1,...,\bm{Y}_{1000} \right\}$ rise. Conversely, the distance errors of directional ridges and the ridges based on the Euclidean SCMS algorithm in $\mathbb{R}^3$ decreases when the true circular structures climb on $\Omega_2$; see the panel (b) of Figure~\ref{fig:val_lat}. While the performances of our directional SCMS algorithm and the Euclidean SCMS algorithm in $\mathbb{R}^3$ are almost indistinguishable in terms of the average geodesic distance errors, our directional SCMS algorithm significantly outperforms the Euclidean SCMS algorithm with regards to time efficiency; see the panels (c-d) of Figure~\ref{fig:val_lat}. Note that the Euclidean SCMS algorithm exhibits high variance in the number of iteration steps under the repeated experiments, because each simulated dataset may contain some outliers that are far away from the true circular structure on $\Omega_2$ and the Euclidean SCMS algorithm requires exceptionally large iterative steps to converge when initialized from these outliers. Our directional SCMS algorithm, however, is stabler in its iterative step due to the fact that it is adaptive to the geometry of $\Omega_2$.

Other potential issues of analyzing directional data with Euclidean methods and ignoring the curvature of $\Omega_2$ can be found in \cite{chrisman2017calculating}. In summary, it is highly inadequate and inefficient to handle directional data with the Euclidean KDE and SCMS algorithm, which calls for the needs to introduce the directional KDE \eqref{Dir_KDE} and propose our well-designed SCMS algorithm for analyzing directional data (Algorithm~\ref{Algo:Dir_SCMS}).

\section{Normal Space of the Euclidean Density Ridge}
\label{App:Normal_space_Eu}

As we will refer to conditions (A1-3) frequently in the next two sections, we restate them here:
\begin{itemize}
	\item {\bf (A1)} (\emph{Differentiability}) We assume that $p$ is bounded and at least four times differentiable with bounded partial derivatives up to the fourth order for every $\bm{x}\in \mathbb{R}^D$.
	\item {\bf (A2)} (\emph{Eigengap}) We assume that there exist constants $\rho>0$ and $\beta_0>0$ such that $\lambda_{d+1}(\bm{y}) \leq -\beta_0$ and $\lambda_d(\bm{y}) -\lambda_{d+1}(\bm{y}) \geq \beta_0$ for any $\bm{y} \in R_d\oplus \rho$.
	\item {\bf (A3)} (\emph{Path Smoothness}) Under the same $\rho, \beta_0>0$ in (A2), we assume that there exists another constant $\beta_1 \in (0,\beta_0)$ such that
	\begin{align*}
	D^{\frac{3}{2}}\norm{U_d^{\perp}(\bm{y}) \nabla p(\bm{y})}_2 \norm{\nabla^3 p(\bm{y})}_{\max} &\leq \frac{\beta_0^2}{2},\\
	d \cdot D^{\frac{3}{2}} \norm{\nabla p(\bm{x})}_2 \norm{\nabla^3 p(\bm{x})}_{\max} & \leq \beta_0(\beta_0-\beta_1)
	\end{align*}
	for all $\bm{y} \in R_d\oplus \rho$ and $\bm{x}\in R_d$. 
\end{itemize}

Given a matrix-valued function $B: \mathbb{R}^D \to \mathbb{R}^{m\times n}$, its gradient $\nabla B(\bm{x})$ will be an $m\times n \times D$ array defined as $\left[B(\bm{x}) \right]_{ijk} = \frac{\partial}{\partial x_k} B(\bm{x})_{ij}$.
The derivative of $B$ in the directional of a vector $\bm{z}\in \mathbb{R}^D$ is defined as:
$$B'(\bm{x};\bm{z}) \equiv \lim_{\epsilon \to 0} \frac{B(\bm{x}+\epsilon \bm{z}) - B(\bm{x})}{\epsilon}=\nabla B(\bm{x})\bm{z}.$$
When the matrix $A(\bm{x})=\nabla\nabla f(\bm{x}) \in \mathbb{R}^{D\times D}$, we will use the notation $\nabla\nabla f'(\bm{x};\bm{z}) = \nabla^3 f(\bm{x}) \bm{z} \equiv \bm{z}^T \nabla^3 f(\bm{x})$ interchangeably to denote its directional derivative along $\bm{z}$.

Recall that an order-$d$ ridge of the density $p$ in $\mathbb{R}^D$ is the collection of points defined as:
\begin{equation*}
R_d = \left\{\bm{x}\in \mathbb{R}^D: G_d(\bm{x})=\bm{0}, \lambda_{d+1}(\bm{x}) <0 \right\} = \left\{\bm{x}\in \mathbb{R}^D: V_d(\bm{x})^T \nabla p(\bm{x})=\bm{0}, \lambda_{d+1}(\bm{x}) <0 \right\}.
\end{equation*} 
Lemma~\ref{normal_reach_prop} below shows that under conditions (A1-3), the Jacobian matrix  $\nabla \left[V_d(\bm{x})^T \nabla p(\bm{x}) \right]$ has rank $D-d$ at every point of $R_d$, and $R_d$ is a $d$-dimensional manifold by the implicit function theorem \citep{Rudin1976}. Consequently, the row space of $\nabla \left[V_d(\bm{x})^T \nabla p(\bm{x}) \right] \in \mathbb{R}^{(D-d)\times D}$ spans the normal space to $R_d$.

If we define $M(\bm{x}) = \nabla \left[V_d(\bm{x})^T \nabla p(\bm{x}) \right]^T = \left[\bm{m}_{d+1}(\bm{x}),...,\bm{m}_D(\bm{x}) \right] \in \mathbb{R}^{D\times (D-d)}$, the derivation in pages 60-63 of \cite{Eberly1996ridges} shows that
\begin{equation}
\label{normal_rows}
\bm{m}_k(\bm{x}) = \left[\lambda_k(\bm{x}) \bm{I}_D +\sum_{i=1}^d \frac{\bm{v}_i(\bm{x})^T \nabla p(\bm{x})}{\lambda_k(\bm{x})-\lambda_i(\bm{x})} \cdot \bm{v}_i(\bm{x})^T \nabla^3 p(\bm{x}) \right] \bm{v}_k(\bm{x})
\end{equation}
for $k=d+1,...,D$, and the column space of $M(\bm{x})$ spans the normal space to $R_d$. Let
\begin{align*}
\Lambda_0(\bm{x}) &= \Diag\left[\lambda_{d+1}(\bm{x}),...,\lambda_D(\bm{x})\right],\\
\Lambda_i(\bm{x}) &= \Diag\left[\frac{1}{\lambda_{d+1}(\bm{x}) - \lambda_i(\bm{x})},..., \frac{1}{\lambda_D(\bm{x}) - \lambda_i(\bm{x})} \right],\\
T_i(\bm{x}) &= \left[\bm{v}_i(\bm{x})^T \nabla p(\bm{x}) \right] \cdot \bm{v}_i(\bm{x})^T \nabla^3 p(\bm{x})
\end{align*}
for $i=1,...,d$. Then, 
\begin{equation}
\label{normal_rows_all}
M(\bm{x}) = V_d(\bm{x}) \Lambda_0(\bm{x}) + \sum_{i=1}^d T_i(\bm{x}) V_d(\bm{x}) \Lambda_i(\bm{x}).
\end{equation}
However, the columns of $M(\bm{x})$ are not orthonormal. Thus, we leverage the orthonormalization in \cite{Asymp_ridge2015} to construct $N(\bm{x})$ whose columns are orthonormal and span the same column space as $M(\bm{x})$ in the following steps. Under the condition that $M(\bm{x}) = \nabla \left[V_d(\bm{x})^T \nabla p(\bm{x}) \right]^T$ has full rank $D-d$ at every point $\bm{x}\in R_d$ (see Lemma~\ref{normal_reach_prop}), $M(\bm{x})^T M(\bm{x})$ is positive definite, and we perform the Cholesky decomposition on it, that is,
\begin{equation}
\label{Cholesky_factor}
M(\bm{x})^T M(\bm{x}) = J(\bm{x}) J(\bm{x})^T,
\end{equation}
where $J(\bm{x}) \in \mathbb{R}^{(D-d)\times (D-d)}$ is a lower triangular matrix whose diagonal elements are positive. We then define 
\begin{equation}
\label{normal_rows_all_ortho}
N(\bm{x}) = M(\bm{x})\left[J(\bm{x})^T \right]^{-1}.
\end{equation} 
Notice that $M(\bm{x}), N(\bm{x}),J(\bm{x})$ intrinsically depend on the dimension $d$ of the ridge $R_d$, but we do not explicate these dependencies in their notations. As discussed in \cite{Asymp_ridge2015}, $M(\bm{x})$ might not be unique because the eigenvalues of $\nabla\nabla p(\bm{x})$ can have their multiplicities greater than 1. Any collection of linearly independent unit eigenvectors of $\nabla\nabla p(\bm{x})$ fits into the above construction for $M(\bm{x})$. However, as will be shown later, this volatility of $M(\bm{x})$ will not affect our results, as we only require the smoothness of $M(\bm{x})^TM(\bm{x})$ to develop a lower bound of $\mathtt{reach}(R_d)$.

\begin{lemma}
	\label{normal_reach_prop}
	Assume conditions (A1-3). Given that $M(\bm{x})$ and $N(\bm{x})$ are defined in \eqref{normal_rows_all} and \eqref{normal_rows_all_ortho}, we have the following properties:
	\begin{enumerate}[label=(\alph*)]
		\item $N(\bm{x})$ and $M(\bm{x})$ have the same column space. In addition,
		$$N(\bm{x})N(\bm{x})^T = M(\bm{x})\left[M(\bm{x})^T M(\bm{x}) \right]^{-1} M(\bm{x})^T.$$
		That is, $N(\bm{x}) N(\bm{x})^T$ is the projection matrix onto the columns of $M(\bm{x})$.
		\item The columns of $N(\bm{x})$ are orthonormal to each other.
		\item For $\bm{x}\in R_d$, the column space of $N(\bm{x})$ is normal to the (tangent) direction of $R_d$ at $\bm{x}$.
		\item For all $\bm{x} \in R_d$, $\rank(N(\bm{x}))=\rank(M(\bm{x})) = D-d$. Moreover, $R_d$ is a $d$-dimensional manifold that contains neither intersections and nor endpoints. Namely, $R_d$ is a finite union of connected and compact manifolds.
		\item For $\bm{x}\in R_d$, all the $(D-d)$ nonzero singular values of $M(\bm{x})$ are greater than $\beta_1>0$ and therefore,
		$$\norm{\left[M(\bm{x})^T M(\bm{x}) \right]^{-1}}_2 \leq \frac{1}{\beta_1^2} \quad \text{ and } \quad \norm{\left[J(\bm{x})^T\right]^{-1}}_2 \leq \frac{1}{\beta_1}.$$
		\item When $\norm{\bm{x}-\bm{y}}_2$ is sufficiently small and $\bm{x},\bm{y} \in R_d\oplus \rho$, 
		$$\norm{N(\bm{x})N(\bm{x})^T - N(\bm{y})N(\bm{y})^T}_{\max} \leq A_0 \left(\norm{p}_{\infty}^{(3)} + \norm{p}_{\infty}^{(4)} \right)^2 \norm{\bm{x}-\bm{y}}_2$$
		for some constant $A_0>0$.
		\item Assume that another density function $q$ also satisfies conditions (A1-3) and $\norm{p-q}_{\infty,3}^*$ is sufficiently small. Then
		$$\norm{N_p(\bm{x}) N_p(\bm{x})^T - N_q(\bm{x}) N_q(\bm{x})^T}_{\max} \leq A_1 \cdot \norm{p-q}_{\infty,3}^*$$
		for some constant $A_1 >0$ and any $\bm{x} \in R_d$, where $N_p(\bm{x})$ is the matrix defined in \eqref{normal_rows_all_ortho} with the underlying density $p$.
		\item The reach of $R_d$ satisfies
		$$\mathtt{reach}(R_d) \geq \min\left\{\frac{\rho}{2}, \frac{\beta_1^2}{A_2 \left(\norm{p}_{\infty}^{(3)} + \norm{p}_{\infty}^{(4)} \right)} \right\}$$
		for some constant $A_2>0$.
	\end{enumerate} 
\end{lemma}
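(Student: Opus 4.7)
Parts (a) and (b) reduce to direct linear algebra from the Cholesky relation $JJ^T=M^TM$: one computes
$$N^TN = J^{-1}(M^TM)J^{-T} = J^{-1}(JJ^T)J^{-T} = I_{D-d}$$
for orthonormality, and $NN^T = M(J^TJ)^{-1}M^T = M(M^TM)^{-1}M^T$, which is the standard orthogonal projector onto $\mathrm{Col}(M)$, giving $\mathrm{Col}(N)=\mathrm{Col}(M)$ simultaneously. Part (c) is then the implicit function theorem applied to the smooth map $\Phi(\bm{x}) := V_d(\bm{x})^T \nabla p(\bm{x}) \in \mathbb{R}^{D-d}$: on $R_d$ the function $\Phi$ vanishes, and the columns of $M(\bm{x})=\nabla\Phi(\bm{x})^T$ span the normal space at any regular point.

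For (d) and (e), I would lower-bound the smallest singular value of $M(\bm{x})$ on $R_d$ via the decomposition \eqref{normal_rows_all}. The leading block $V_d(\bm{x})\Lambda_0(\bm{x})$ has orthonormal columns and diagonal $|\lambda_{d+1}|,\ldots,|\lambda_D|$, each at least $\beta_0$ by (A2). Each perturbation term satisfies
$$\norm{T_i(\bm{x})V_d(\bm{x})\Lambda_i(\bm{x})}_2 \leq \norm{\nabla p(\bm{x})}_2 \cdot D^{3/2}\norm{\nabla^3 p(\bm{x})}_{\max}\cdot \beta_0^{-1},$$
using $|\bm{v}_i^T\nabla p|\leq \norm{\nabla p}_2$, the inequality $\norm{\bm{v}_i^T\nabla^3 p}_2\leq D^{3/2}\norm{\nabla^3 p}_{\max}$ (since the $D\times D$ slice has max-entry at most $\sqrt{D}\norm{\nabla^3 p}_{\max}$), and $\norm{\Lambda_i}_2\leq 1/\beta_0$ by the eigengap. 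Summing $i=1,\ldots,d$ and invoking the second inequality in (A3) bounds the total perturbation by $\beta_0-\beta_1$, so Weyl's inequality yields $\sigma_{D-d}(M(\bm{x}))\geq \beta_1$ on $R_d$, which gives (e). Full rank of $M(\bm{x})$ plus the implicit function theorem applied to $\Phi$ then yields the $d$-manifold structure in (d); the absence of self-intersections and endpoints follows from the uniform rank condition, and finiteness of connected components uses boundedness of $R_d\subseteq\{\lambda_{d+1}\leq -\beta_0\}$ together with the decay of the density.

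For (f) and (g) I would propagate Lipschitz estimates through $NN^T=M(M^TM)^{-1}M^T$. The building blocks of $M(\bm{x})$ in \eqref{normal_rows_all} are Lipschitz on $R_d\oplus\rho$: eigenvalues of $\nabla\nabla p$ are Lipschitz by Weyl, spectral projectors are Lipschitz under the eigengap (A2) with constants controlled by $\norm{p}_{\infty}^{(3)}$ and $\norm{p}_{\infty}^{(4)}$ through $\nabla^3 p$ and $\nabla^4 p$, and the factors $\Lambda_i$ inherit Lipschitz constants of order $1/\beta_0^2$. The delicate point is the non-uniqueness of individual eigenvectors $\bm{v}_i(\bm{x})$: this is sidestepped because $NN^T$ is basis-invariant, so the relevant Lipschitz estimates can be phrased in terms of the smooth projectors $U_d(\bm{x})$ and the spectral projectors associated to each distinct eigenvalue. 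Combining with $\norm{(M^TM)^{-1}}_2\leq 1/\beta_1^2$ from (e) yields (f). Part (g) is the same perturbation analysis but with $p$ varying to $q$: every quantity used above depends Lipschitz-continuously on $(\nabla p,\nabla\nabla p,\nabla^3 p)$, so a bound in $\norm{p-q}_{\infty,3}^*$ propagates to $N_pN_p^T-N_qN_q^T$.

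The main obstacle is (h), the reach lower bound. The plan is to exploit the classical characterization that for a $C^2$ compact embedded submanifold, $1/\mathtt{reach}(R_d)$ is bounded above by the operator norm of the second fundamental form, which in turn is controlled by the Lipschitz constant of the normal-bundle projector. Concretely, Federer's tube formula implies that a point lying at the boundary of the reach must either (i) admit two distinct nearest points on $R_d$, or (ii) lie on a focal locus of $R_d$; in either case the projector $\bm{x}\mapsto NN^T(\pi_{R_d}(\bm{x}))$ would have to vary too rapidly to be consistent with the Lipschitz bound from (f). Translating this quantitative constraint gives $\mathtt{reach}(R_d)\geq \beta_1^2/[A_2(\norm{p}_{\infty}^{(3)}+\norm{p}_{\infty}^{(4)})]$ for an explicit $A_2>0$ coming from the estimates in (f). The $\rho/2$ cap appears because all constants leading to this quantitative bound are only controlled on $R_d\oplus\rho$, so the final answer is the minimum of this intrinsic curvature-based bound and $\rho/2$.
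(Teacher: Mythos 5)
Your argument for parts (a)--(e) is essentially the paper's own proof: (a)--(b) are the same Cholesky manipulations; (c) replaces the paper's direct differentiation along a curve in $R_d$ with an appeal to the implicit function theorem, which is equivalent; and (d)--(e) use the same decomposition \eqref{normal_rows_all} with the same bounds — your ``Weyl's inequality'' for singular values is the result the paper cites as Theorem 3.3.16 of Horn and Johnson, and the perturbation term is controlled exactly as in the paper via condition (A3). (Minor imprecision: $V_d(\bm{x})\Lambda_0(\bm{x})$ has \emph{orthogonal}, not orthonormal, columns; its singular values are the $|\lambda_k|$, which is what you actually use.) For (f)--(h) the paper simply defers to the corresponding claims in \cite{Asymp_ridge2015}, whereas you sketch arguments — Lipschitz propagation through $M(M^TM)^{-1}M^T$ using the eigengap and basis-invariance of $NN^T$ to dodge eigenvector non-uniqueness for (f)--(g), and a Federer tube/focal-locus bound tying $1/\mathtt{reach}(R_d)$ to the Lipschitz constant of the normal projector for (h). These are consistent with the deferred reference; be aware that the finiteness-of-components claim in (d) requires compactness of $R_d$, which needs some decay hypothesis on the Hessian of $p$ beyond boundedness in (A1), and that a complete proof of (h) must separately handle the ``two-nearest-points'' (global) obstruction to reach, not only the curvature (focal) obstruction — both are subtleties that your sketch gestures at but does not close.
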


Lemma~\ref{normal_reach_prop} is extended from Lemma 2 in \cite{Asymp_ridge2015} to handle the density ridge $R_d$ with $1\leq d<D$. As our conditions (A1-3) imply the imposed conditions of Lemma 2 in \cite{Asymp_ridge2015}, our proof of Lemma~\ref{normal_reach_prop} essentially follows from their arguments with some minor modifications.

\begin{proof}[Proof of Lemma~\ref{normal_reach_prop}]
	We adopt and generalize parts of the proof of Lemma 2 in \cite{Asymp_ridge2015}.\\
	\noindent (a) This property is a natural corollary of the Cholesky decomposition as:
	$$N(\bm{x}) N(\bm{x})^T = M(\bm{x}) \left[J(\bm{x})^T \right]^{-1} J(\bm{x})^{-1} M(\bm{x})^T = M(\bm{x})\left[M(\bm{x})^T M(\bm{x}) \right]^{-1} M(\bm{x})^T.$$
	
	\noindent (b) Some direct calculations show that
	\begin{align*}
	N(\bm{x})^T N(\bm{x}) &= J(\bm{x})^{-1} M(\bm{x})^T M(\bm{x}) \left[J(\bm{x})^T \right]^{-1}\\
	&= J(\bm{x})^{-1} M(\bm{x})^T M(\bm{x}) \left[J(\bm{x})^T \right]^{-1} J(\bm{x})^{-1} J(\bm{x})\\
	&= J(\bm{x})^{-1} M(\bm{x})^T M(\bm{x}) \left[M(\bm{x})^T M(\bm{x}) \right]^{-1} J(\bm{x})\\
	&= \bm{I}_{D-d}.
	\end{align*}
	
	\noindent (c) It can be proved by the argument of Lemma 1 in \cite{Asymp_ridge2015}. Or, we define an arbitrary parametrized curve $\gamma: (-\epsilon,\epsilon) \to \mathbb{R}^D$ lying within $R_d$ for some $\epsilon >0$. Then $\gamma'(t)$ aligns with the tangent direction at $\gamma(t)\in R_d$. Since $V_d(\gamma(t))^T\nabla p(\gamma(t))=\bm{0}$, taking the derivative with respect to $t$ gives us that
	$$0= \frac{d}{dt} \left[V_d(\gamma(t))^T\nabla p(\gamma(t)) \right] = \nabla\left[V_d(\bm{x})^T\nabla p(\bm{x}) \right] \cdot \gamma'(t)$$
	with $\bm{x}=\gamma(t)$. Hence, by the arbitrariness of $\gamma(t)$, the column of $M(\bm{x}) = \nabla\left[V_d(\bm{x})^T\nabla p(\bm{x}) \right]^T$ is normal to the tangent direction of $R_d$ at $\bm{x}$.\\
	
	\noindent (d) We prove that the $(D-d)$ nonzero singular values of $M(\bm{x}) \in \mathbb{R}^{D\times (D-d)}$ are bounded away from 0. Recall that 
	$$M(\bm{x}) = V_d(\bm{x}) \Lambda_0(\bm{x}) + \sum_{i=1}^d T_i(\bm{x}) V_d(\bm{x}) \Lambda_i(\bm{x})$$
	with 
	\begin{align*}
	\Lambda_0(\bm{x}) &= \Diag\left[\lambda_{d+1}(\bm{x}),...,\lambda_D(\bm{x})\right],\\
	\Lambda_i(\bm{x}) &= \Diag\left[\frac{1}{\lambda_{d+1}(\bm{x}) - \lambda_i(\bm{x})},..., \frac{1}{\lambda_D(\bm{x}) - \lambda_i(\bm{x})} \right],\\
	T_i(\bm{x}) &= \left[\bm{v}_i(\bm{x})^T \nabla p(\bm{x}) \right] \cdot \bm{v}_i(\bm{x})^T \nabla^3 p(\bm{x})
	\end{align*}
	for $i=1,...,d$. Under conditions (A2-3), 
	\begin{align*}
	&\norm{\sum_{i=1}^d T_i(\bm{x}) V_d(\bm{x}) \Lambda_i(\bm{x})}_2 \\
	&\leq \sum_{i=1}^d \norm{T_i(\bm{x})}_2 \cdot \norm{V_d(\bm{x})}_2 \cdot \frac{1}{\beta_0} \quad \text{ by condition (A2)}\\
	&\leq \sum_{i=1}^d \norm{\left[\bm{v}_i(\bm{x})^T \nabla p(\bm{x}) \right] \cdot \bm{v}_i(\bm{x})^T \nabla^3 p(\bm{x})}_2 \cdot \frac{1}{\beta_0} \quad \text{ since } \norm{V_d(\bm{x})}_2=1\\
	&\leq \frac{d \norm{\nabla p(\bm{x})}_2 \cdot D^{\frac{3}{2}} \norm{\nabla^3 p(\bm{x})}_{\max}}{\beta_0}\\
	&\leq \beta_0-\beta_1 \quad \text{ by condition (A3)}.
	\end{align*}
	It shows that all the singular values of $\sum\limits_{i=1}^d T_i(\bm{x}) V_d(\bm{x}) \Lambda_i(\bm{x})$ are less than $\beta_0-\beta_1$. Moreover, under condition (A2) again, all the $(D-d)$ nonzero singular values of $V_d(\bm{x})\Lambda_0(\bm{x})$ are greater than $\beta_0$. By Theorem 3.3.16 in \cite{horn_johnson_1991}, we know that all the $(D-d)$ nonzero singular values of $M(\bm{x})$ are greater than $\beta_0-(\beta_0-\beta_1)=\beta_1 >0$. Therefore, $\rank(M(\bm{x}))=\rank(N(\bm{x}))=D-d$. The rest of the proof follows directly from Claim 4 in \cite{Asymp_ridge2015}.
	
	\noindent (e) By the proof of (d), we already know that all the $(D-d)$ nonzero singular values of $M(\bm{x})$ are greater than $\beta_1 >0$. Thus, $\norm{\left[M(\bm{x})^T M(\bm{x}) \right]^{-1}}_2 \leq \frac{1}{\beta_1^2}$, and
	\begin{align*}
	\norm{\left[J(\bm{x})^T \right]^{-1}}_2 = \sqrt{\frac{1}{\sigma_{\min}\left(J(\bm{x})J(\bm{x})^T \right)}} &= \sqrt{\frac{1}{\sigma_{\min}\left(M(\bm{x})^T M(\bm{x}) \right)}} \\
	&= \sqrt{\norm{\left[M(\bm{x})^T M(\bm{x}) \right]^{-1}}_2} \leq \frac{1}{\beta_1},
	\end{align*}
	where $\sigma_{\min}(A)$ is the smallest singular value of matrix $A$.\\
	
	Finally, the proofs of properties (f), (g), and (h) are essentially the same as the corresponding claims in \cite{Asymp_ridge2015}. We thus omitted them.
\end{proof}

As we have discussed in Remark~\ref{solution_manifold_remark}, property (d) of Lemma~\ref{normal_reach_prop} demonstrates that our imposed assumptions (A1-3) for the ridge $R_d$ is sufficient to imply the critical full-rank condition of its normal space in \cite{YC2020} in order for $R_d$ to be a well-defined solution manifold.

\section{Proofs of Lemma~\ref{limit_step_MS}, Proposition~\ref{SCGA_conv}, and Theorem~\ref{SCGA_LC}}
\label{App:Proofs_Sec3}

\begin{customlem}{3.2}
	Assume conditions (A1) and (E1). The convergence rate of $\hat{g}_n(\bm{x})=\frac{2c_{k,D}}{nh^{D+2}} \sum\limits_{i=1}^n -k'\left(\norm{\frac{\bm{x}-\bm{X}_i}{h}}_2^2 \right)$ is
	\begin{align*}
	h^2 \hat{g}_n(\bm{x}) &= -2c_{k,D} \cdot p(\bm{x}) \int_{\mathbb{R}^D} k'\left(\norm{\bm{u}}_2^2 \right)d\bm{u} +O\left(h^2 \right) + O_P\left(\sqrt{\frac{1}{nh^D}} \right) \\
	&= O(1) + O\left(h^2 \right) + O_P\left(\sqrt{\frac{1}{nh^D}} \right)
	\end{align*}
	for any $\bm{x}\in \mathbb{R}^D$ as $nh^D \to \infty$ and $h\to 0$.
\end{customlem}

Another interpretation of Lemma~\ref{limit_step_MS} is that $\hat{g}_n(\bm{x})$ diverges to infinity at the rate 
$$O\left(h^{-2} \right) + O_P\left(\sqrt{\frac{1}{nh^{D+4}}} \right) = O\left(n^{\frac{2}{D+4}} \right) + O_P\left(1 \right)$$
if we select the bandwidth $h_{\text{opt}} \asymp O\left(n^{-\frac{1}{D+4}} \right)$ to minimize the asymptotic mean integrated square error \citep{KDE_t}, where ``$\asymp$'' stands for the asymptotic equivalence.

\begin{proof}[Proof of Lemma~\ref{limit_step_MS}]
	Note that
	\begin{equation}
	\label{g_n_decom}
	h^2\hat{g}_n(\bm{x}) = \mathbb{E}\left[h^2\hat{g}_n(\bm{x}) \right] + h^2\hat{g}_n(\bm{x}) - \mathbb{E}\left[h^2\hat{g}_n(\bm{x}) \right].
	\end{equation}
	Given the differentiability of $p$ guaranteed by condition (A1), the expectation of $h^2 \hat{g}_n(\bm{x})$ is given by:
	\begin{align*}
	\mathbb{E}\left[h^2\hat{g}_n(\bm{x}) \right] &= \frac{2c_{k,D}}{h^D} \int_{\mathbb{R}^D} -k'\left(\norm{\frac{\bm{x}-\bm{y}}{h}}_2^2 \right) \cdot p(\bm{y}) d\bm{y}\\
	&= 2c_{k,D} \int_{\mathbb{R}^D} -k'\left(\norm{\bm{u}}_2^2 \right) \cdot p(\bm{x}+ h\bm{u}) d\bm{u}  \quad \text{ by } \bm{u}=\frac{\bm{y}-\bm{x}}{h}\\
	&= 2c_{k,D} \int_{\mathbb{R}^D} -k'\left(\norm{\bm{u}}_2^2 \right) \left[p(\bm{x}) + h\nabla p(\bm{x})^T\bm{u} +O(h^2) \right] d\bm{u} \\
	&= 2c_{k,D} \cdot p(\bm{x}) \int_{\mathbb{R}^D} -k'\left(\norm{\bm{u}}_2^2 \right) d\bm{u} +O(h^2).
	\end{align*}
	By condition (E1), the dominating constant $-2c_{k,D} \cdot p(\bm{x}) \int_{\mathbb{R}^D} k'\left(\norm{\bm{u}}_2^2 \right) d\bm{u}$ is finite and therefore, 
	\begin{equation}
	\label{term1}
	\mathbb{E}\left[h^2\hat{g}_n(\bm{x}) \right] = -2c_{k,D} \cdot p(\bm{x}) \int_{\mathbb{R}^D} k'\left(\norm{\bm{u}}_2^2 \right) d\bm{u} + O\left(h^2 \right) = O(1) + O(h^2).
	\end{equation}
	In addition, we calculate the variance of $\hat{g}_n(\bm{x})$ as
	\begin{align*}
	&\text{Var}\left[h^2\hat{g}_n(\bm{x}) \right] \\
	&= \frac{4c_{k,D}^2}{nh^{2D}}\cdot \text{Var}\left[k'\left(\norm{\frac{\bm{x}-\bm{X}_1}{h}}_2^2 \right) \right]\\
	&= \frac{4c_{k,D}^2}{nh^{2D}} \int_{\mathbb{R}^D} k'\left(\norm{\frac{\bm{x}-\bm{y}}{h}}_2^2 \right)^2 p(\bm{y}) \,d\bm{y} - \frac{1}{n} \left\{\mathbb{E}\left[h^2\hat{g}_n(\bm{x}) \right] \right\}^2\\
	&= \frac{4c_{k,D}^2}{nh^D} \int_{\mathbb{R}^D} k'\left(\norm{\bm{u}}_2^2 \right)^2 \cdot p(\bm{x}+h\bm{u}) d\bm{u} -\frac{1}{n}\left[O(1) + O\left(h^2 \right) \right]^2 \quad \text{ by } \bm{u}=\frac{\bm{y}-\bm{x}}{h}\\
	&= \frac{4c_{k,D}^2}{nh^D} \int_{\mathbb{R}^D} k'\left(\norm{\bm{u}}_2^2 \right)^2 \left[p(\bm{x}) + h\nabla p(\bm{x})^T\bm{u} +O(h^2) \right] d\bm{u} + o\left(\frac{1}{nh^D} \right)\\
	&= \frac{4c_{k,D}^2 \cdot p(\bm{x})}{nh^D} \int_{\mathbb{R}^D} k'\left(\norm{\bm{u}}_2^2 \right)^2 d\bm{u} + o\left(\frac{1}{nh^D} \right),
	\end{align*}
	Again, by condition (E1), the dominating constant $4c_{k,D}^2 \cdot p(\bm{x}) \int_{\mathbb{R}^D} k'\left(\norm{\bm{u}}_2^2 \right)^2 d\bm{u}$ is finite. Thus, by the central limit theorem,
	\begin{align}
	\label{term2}
	\begin{split}
	h^2\hat{g}_n(\bm{x}) - \mathbb{E}\left[h^2\hat{g}_n(\bm{x}) \right] &= \sqrt{\text{Var}\left[h^2\hat{g}_n(\bm{x}) \right]} \cdot  \frac{h^2\hat{g}_n(\bm{x}) - \mathbb{E}\left[h^2\hat{g}_n(\bm{x}) \right]}{\sqrt{\text{Var}\left[h^2\hat{g}_n(\bm{x}) \right]}}\\
	&= \sqrt{\text{Var}\left[h^2\hat{g}_n(\bm{x}) \right]} \cdot \bm{Z}_n(\bm{x})\\
	&= O_P\left(\sqrt{\frac{1}{nh^D}} \right),
	\end{split}
	\end{align}
	where $\bm{Z}_n(\bm{x}) \stackrel{d}{\to} \mathcal{N}_D(\bm{0}, \bm{I}_D)$. Combining \eqref{g_n_decom}, \eqref{term1}, and \eqref{term2}, we conclude that 
	\begin{align*}
	h^2 \hat{g}_n(\bm{x}) &= -2c_{k,D} \cdot p(\bm{x}) \int_{\mathbb{R}^D} k'\left(\norm{\bm{u}}_2^2 \right) +O\left(h^2 \right) + O_P\left(\sqrt{\frac{1}{nh^D}} \right) \\
	&= O(1) + O\left(h^2 \right) + O_P\left(\sqrt{\frac{1}{nh^D}} \right)
	\end{align*}
	for any $\bm{x}\in \mathbb{R}^D$ as $nh^D \to \infty$ and $h\to 0$.
\end{proof}

\begin{remark}
	Some previous research papers on the mean shift algorithm \cite{MS2007_pf, MS_EM2007, MS2015_Gaussian,MS_consist2016} have already justified that the algorithm converges to a local mode of the KDE $\hat{p}_n$ when its local modes are isolated and the algorithm starts within some small neighborhoods of these estimated local modes. 
	Lemma~\ref{limit_step_MS} here provides a (probabilistic) perspective on the linear convergence of the mean shift algorithm.
	It is well-known that the set of the true local modes of $p$ can be approximated by the set of estimated modes defined by $\hat{p}_n$ \citep{Mode_clu2016}. Moreover, around the true local modes of the density $p$, one can argue that $\hat{p}_n$ is strongly convex and has a Lipschitz gradient with probability tending to 1 by the uniform consistency of $\nabla \hat{p}_n$ and $\nabla\nabla \hat{p}_n$ as $h\to 0$ and $\frac{nh^{D+4}}{|\log h|} \to \infty$; see the uniform bounds \eqref{unif_bound}. Hence, by some standard results in optimization theory (\emph{e.g.}, Chapter 3 in \cite{SB2015}), a sample-based gradient ascent algorithm with objective function $\hat{p}_n$ converges linearly to (estimated) local modes around their neighborhoods as long as its step size is below some threshold value. Finally, recall that (i) the mean shift algorithm is a special variant of the sample-based gradient ascent method with an adaptive size $\eta_{n,h}^{(t)}$ by \eqref{MS_iteration} and (ii) $\eta_{n,h}^{(t)}$ can be sufficiently small but bounded away from 0 when $nh^D$ is large and $h$ is small by Lemma~\ref{limit_step_MS}; see also Remark~\ref{SCMS_stepsize_remark}. Therefore, the mean shift algorithm will converge linearly with high probability around some small neighborhood of the local modes of $\hat{p}_n$ when the sample size $n$ is sufficiently large and $h$ is chosen to be small. 
\end{remark}

\begin{customprop}{3.3}[Convergence of the SCGA Algorithm]
		For any SCGA sequence $\big\{\bm{x}^{(t)}\big\}_{t=0}^{\infty}$ defined by \eqref{SCGA_update_pop} with $0<\eta < \frac{2}{D\norm{p}_{\infty}^{(2)}}$, the following properties hold.
		\begin{enumerate}[label=(\alph*)]
			\item Under condition (A1), the objective function sequence $\big\{p(\bm{x}^{(t)}) \big\}_{t=0}^{\infty}$ is non-decreasing and converges.
			
			\item Under condition (A1), $\lim_{t\to\infty} \norm{V_d(\bm{x}^{(t)})^T \nabla p(\bm{x}^{(t)})}_2 =\lim_{t\to\infty} \norm{\bm{x}^{(t+1)} -\bm{x}^{(t)}}_2 = 0$.
			
			\item Under conditions (A1-3), $\lim_{t\to\infty} d_E(\bm{x}^{(t)}, R_d) =0$ whenever $\bm{x}^{(0)} \in R_d\oplus r_1$ with the convergence radius $r_1$ satisfying
			$$0<r_1 < \min\left\{\frac{\rho}{2},\; \frac{\beta_1^2}{A_2 \left(\norm{p}_{\infty}^{(3)} + \norm{p}_{\infty}^{(4)} \right)},\; \frac{\beta_1}{A_4(p)} \right\},$$
			where $A_2>0$ is a constant defined in (h) of Lemma~\ref{normal_reach_prop} while $A_4(p) >0$ is a quantity depending on both the dimension $D$ and functional norm $\norm{p}_{\infty,4}^*$ up to the fourth-order (partial) derivatives of $p$. 
		\end{enumerate}
	\end{customprop}
	
	\begin{proof}[Proof of Proposition~\ref{SCGA_conv}]
		(a) We first derive the following fact about the objective function $p$.
		
		$\bullet$ \emph{Fact 1}. Given (A1), $p$ is $D\norm{p}_{\infty}^{(2)}$-smooth, that is, $\nabla p$ is $D\norm{p}_{\infty}^{(2)}$-Lipschitz.
		
		This fact follows from the differentiability of $p$ ensured by condition (A1) and Taylor's theorem that
		\begin{align*}
		\norm{\nabla p(\bm{x}) - \nabla p(\bm{y})}_2 &\leq \norm{\nabla\nabla p(\tilde{\bm{y}})}_2 \cdot \norm{\bm{x}-\bm{y}}_2 \leq D\norm{p}_{\infty}^{(2)} \cdot \norm{\bm{x}-\bm{y}}_2
		\end{align*}
		for any $\bm{x},\bm{y} \in \mathbb{R}^D$, where $\tilde{\bm{y}}$ is within a $\norm{\bm{x}-\bm{y}}_2$-neighborhood of $\bm{y}$. Moreover,
		\begin{align}
		\label{L_smoothness}
		\begin{split}
		\left|p(\bm{y}) - p(\bm{x}) -\nabla p(\bm{x})^T(\bm{y}-\bm{x}) \right| &= \int_0^1 \nabla p(\bm{x}+t(\bm{y}-\bm{x}))^T (\bm{y}-\bm{x}) dt - \nabla p(\bm{x})^T (\bm{y}-\bm{x})\\
		&\leq \int_0^1 \norm{\nabla p(\bm{x}+t(\bm{y}-\bm{x})) - p(\bm{x})}_2 \cdot \norm{\bm{y}-\bm{x}}_2 dt\\
		&\leq \frac{D\norm{p}_{\infty}^{(2)}}{2}\cdot \norm{\bm{y}-\bm{x}}_2^2.
		\end{split}
		\end{align}
		When $0<\eta < \frac{2}{D\norm{p}_{\infty}^{(2)}}$, we have that
		\begin{align}
		\label{SCGA_ascending}
		\begin{split}
		&p(\bm{x}^{(t+1)}) - p(\bm{x}^{(t)}) \\
		&= p\left(\bm{x}^{(t)} + \eta \cdot V_d(\bm{x}^{(t)}) V_d(\bm{x}^{(t)})^T \nabla p(\bm{x}^{(t)}) \right) -p(\bm{x}^{(t)})\\
		&\geq \nabla p(\bm{x}^{(t)})^T \cdot \eta V_d(\bm{x}^{(t)}) V_d(\bm{x}^{(t)})^T \nabla p(\bm{x}^{(t)}) - \frac{D\norm{p}_{\infty}^{(2)}}{2} \eta^2 \norm{V_d(\bm{x}^{(t)})^T \nabla p(\bm{x}^{(t)})}_2^2 \quad \text{ by \eqref{L_smoothness}}\\
		&= \eta \left(1-\frac{D\norm{p}_{\infty}^{(2)} \eta}{2}  \right)\norm{V_d(\bm{x}^{(t)})^T \nabla p(\bm{x}^{(t)})}_2^2 \geq 0,
		\end{split}
		\end{align}
		showing that the objective function $p$ is non-decreasing along $\big\{\bm{x}^{(t)} \big\}_{t=0}^{\infty}$. Given the boundedness of $p$ guaranteed by condition (A1), we know that the sequence $\left\{p(\bm{x}^{(t)}) \right\}_{t=0}^{\infty}$ is bounded. Thus, $\left\{p(\bm{x}^{(t)}) \right\}_{t=0}^{\infty}$ also converges.\\
		
		\noindent (b) From (a), we know that when $0<\eta < \frac{2}{D\norm{p}_{\infty}^{(2)}}$,
		\begin{align*}
		p(\bm{x}^{(t+1)}) - p(\bm{x}^{(t)}) &\geq \eta \left(1-\frac{D\norm{p}_{\infty}^{(2)} \eta}{2}  \right)\norm{V_d(\bm{x}^{(t)})^T \nabla p(\bm{x}^{(t)})}_2^2 \\
		&= \left(\frac{2-D\norm{p}_{\infty}^{(2)}\eta}{2\eta} \right) \norm{\bm{x}^{(t+1)} -\bm{x}^{(t)}}_2^2.
		\end{align*}
		Since the sequence $\left\{p(\bm{x}^{(t)}) \right\}_{t=0}^{\infty}$ converges as $t\to\infty$, it follows that
		$$\lim_{t\to\infty} \norm{V_d(\bm{x}^{(t)})^T \nabla p(\bm{x}^{(t)})}_2 =0 \quad \text{ and } \quad \lim_{t\to\infty} \norm{\bm{x}^{(t+1)} -\bm{x}^{(t)}}_2 = 0.$$
		
		\noindent (c) Given condition (A2) and the fact that $r_1 < \frac{\rho}{2}$, we know that
		$$\bm{x}\in R_d\oplus r_1 \text{ and } \norm{V_d(\bm{x})^T \nabla p(\bm{x})}_2=0 \quad \text{ if and only if} \quad \bm{x}\in R_d.$$ 
		Let $\bm{z}^{(t)} = \pi_{R_d}(\bm{x}^{(t)})$ denote the projection of $\bm{x}^{(t)}$ in the SCGA sequence onto the ridge $R_d$. Since $r_1 \leq \mathtt{reach}(R_d)$ by (h) of Lemma~\ref{normal_reach_prop}, $\bm{z}^{(t)}$ is well-defined when $\bm{x}^{(t)}\in R_d\oplus r_1$. Given that the definition of $M(\bm{z}^{(t)}) = \nabla\left[V_d(\bm{z}^{(t)})^T \nabla p(\bm{z}^{(t)}) \right]^T \in \mathbb{R}^{D\times (D-d)}$ in \eqref{normal_rows_all}, we know that
		\begin{align}
		\label{error_bound}
		\begin{split}
		&\norm{V_d(\bm{x}^{(t)})^T \nabla p(\bm{x}^{(t)})}_2 \\
		&= \norm{V_d(\bm{x}^{(t)})^T \nabla p(\bm{x}^{(t)}) - \underbrace{V_d(\bm{z}^{(t)})^T \nabla p(\bm{z}^{(t)})}_{=0}}_2\\
		&= \norm{\int_0^1 M\left(\bm{z}^{(t)} + \epsilon(\bm{x}^{(t)}-\bm{z}^{(t)})\right)^T (\bm{x}^{(t)} -\bm{z}^{(t)}) \,d\epsilon}_2 \quad \text{ by Taylor's theorem}\\
		&= \norm{M(\bm{z}^{(t)})^T (\bm{x}^{(t)} - \bm{z}^{(t)}) + \int_0^1\left[M\left(\bm{z}^{(t)} + \epsilon(\bm{x}^{(t)}-\bm{z}^{(t)})\right) - M(\bm{z}^{(t)}) \right]^T (\bm{x}^{(t)} -\bm{z}^{(t)}) \,d\epsilon}_2\\
		&\geq \norm{M(\bm{z}^{(t)})^T (\bm{x}^{(t)} - \bm{z}^{(t)})}_2 - \frac{1}{2} \sup_{\epsilon \in [0,1]} \norm{\nabla M\left(\bm{z}^{(t)} + \epsilon(\bm{x}^{(t)}-\bm{z}^{(t)})\right)^T}_2 \norm{\bm{x}^{(t)} -\bm{z}^{(t)}}_2^2\\
		&\stackrel{\text{(i)}}{\geq} \beta_1 \norm{\bm{x}^{(t)} -\bm{z}^{(t)}}_2 - \frac{A_4(p)}{2} \norm{\bm{x}^{(t)} -\bm{z}^{(t)}}_2^2\\
		&= d_E(\bm{x}^{(t)}, R_d) \cdot \left(\beta_1 -\frac{A_4(p)}{2} d_E(\bm{x}^{(t)}, R_d) \right)\\
		&\geq \frac{\beta_1}{2} \cdot d_E(\bm{x}^{(t)}, R_d)
		\end{split}
		\end{align}
		when $\bm{x}^{(t)} \in R_d\oplus r_1$, where we leverage (e) of Lemma~\ref{normal_reach_prop} to obtain the inequality (i). More specifically, $M(\bm{z}^{(t)})^T$ is a full row rank matrix by (d) of Lemma~\ref{normal_reach_prop} and $\bm{x}^{(t)} -\bm{z}^{(t)}$ lies within the row space of $M(\bm{z}^{(t)})^T$ because $\bm{x}^{(t)} -\bm{z}^{(t)}$ is normal to $R_d$ at $\bm{z}^{(t)}$. Since the nonzero singular values of $M(\bm{z}^{(t)})$ are lower bounded by $\beta_1>0$, it follows that $\norm{M(\bm{z}^{(t)})^T (\bm{x}^{(t)} - \bm{z}^{(t)})}_2 \geq \beta_1 \norm{\bm{x}^{(t)} - \bm{z}^{(t)}}_2$. From the above derivation, we also know that $A_4(p) >0$ is indeed the supremum norm of $\nabla M(\bm{x})^T = \nabla\nabla \left[V_d(\bm{x})^T \nabla p(\bm{x})\right]^T$ over the line segment connecting $\bm{x}^{(t)}$ and $\bm{z}^{(t)}$, which depends on the uniform functional norm $\norm{p}_{\infty,4}^*$ of the partial derivatives of $p$ up to the fourth order. The result follows from (b).
\end{proof}

The following Davis-Kahan theorem \citep{Davis_Kahan} is one of the most notable theorems in matrix perturbation theory. We present the theorem in a modified version from \cite{Spectral_Clustering_t,Non_ridge_est2014}. Other useful variants of the Davis-Kahan theorem can be found in \cite{Yu_DK2014}.

\begin{lemma}[Davis-Kahan]
	\label{Davis_K}
	Let $H$ and $\tilde{H}$ be two symmetric matrices in $\mathbb{R}^{D\times D}$, whose spectra (Definition 1.1.4 in \citealt{HJ2012}) are $\sigma(H)$ and $\sigma(\tilde{H})$, and $S_1 \subset \mathbb{R}$ be an interval. Denote by $\sigma_{S_1}(H)$ the set of eigenvalues of $H$ that are contained in $S_1$, and by $V_1$ the matrix whose columns are the corresponding (unit) eigenvectors to $\sigma_{S_1}(H)$ (more formally, $V_1$ is the image of the spectral projection induced by $\sigma_{S_1}(H)$). Denote by $\sigma_{S_1}(\tilde{H})$ and $\tilde{V}_1$ the analogous quantities for $\tilde{H}$. If 
	$$\delta:= \inf\left\{|\lambda -\tilde{\lambda}|: \lambda \in \sigma_{S_1}(H), \tilde{\lambda} \in \sigma(\tilde{H})\setminus \sigma_{S_1}(\tilde{H}) \right\} >0,$$
	then the distance $d(V_1,\tilde{V}_1):=\norm{\sin \bm{\Theta}(V_1,\tilde{V}_1)}$ between two subspaces is bounded by
	$$d(V_1,\tilde{V}_1) \leq \frac{\norm{H - \tilde{H}}}{\delta}$$
	for any orthogonally invariant norm $\norm{\cdot}$, such as the Frobenius norm $\norm{\cdot}_F$ and the $L_2$-operator norm $\norm{\cdot}_2$, where $\bm{\Theta}(V_1,\tilde{V}_1)$ is a diagonal matrix with the ascending principal angles between the column spaces of $V_1$ and $\tilde{V}_1$ on the diagonal.
\end{lemma}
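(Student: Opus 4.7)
The plan is to prove the Davis--Kahan bound via the Sylvester-equation (Rosenblum) technique. First I would rephrase the left-hand side in a computable matrix form: letting $\tilde{V}_2$ be any matrix whose orthonormal columns span the orthogonal complement of the column space of $\tilde{V}_1$, a CS-decomposition argument shows that the sines of the principal angles $\bm{\Theta}(V_1,\tilde{V}_1)$ coincide with the singular values of $\tilde{V}_2^T V_1$, so that
$$d(V_1,\tilde{V}_1) = \norm{\sin \bm{\Theta}(V_1,\tilde{V}_1)} = \norm{\tilde{V}_2^T V_1}$$
for every orthogonally invariant norm. The target estimate is therefore an upper bound on $\norm{\tilde{V}_2^T V_1}$.

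Next I would write down the key commutator identity. Because the columns of $V_1$ are eigenvectors of $H$ with eigenvalues assembled into the diagonal matrix $\Lambda_1$ (whose diagonal entries lie in $\sigma_{S_1}(H)$), we have $H V_1 = V_1 \Lambda_1$; similarly, since $\tilde{H}$ is symmetric and $\tilde{V}_2$ collects eigenvectors of $\tilde{H}$ with eigenvalues $\tilde{\Lambda}_2$ (whose diagonal entries lie in $\sigma(\tilde{H})\setminus \sigma_{S_1}(\tilde{H})$), we have $\tilde{V}_2^T \tilde{H} = \tilde{\Lambda}_2 \tilde{V}_2^T$. Subtracting the two identities yields the Sylvester-type relation
$$\tilde{V}_2^T (H-\tilde{H}) V_1 = \tilde{V}_2^T V_1 \,\Lambda_1 - \tilde{\Lambda}_2 \,\tilde{V}_2^T V_1.$$
Setting $X := \tilde{V}_2^T V_1$ and working in the diagonalizing bases $\Lambda_1 = \mathrm{diag}(\lambda_i)$, $\tilde{\Lambda}_2 = \mathrm{diag}(\tilde{\lambda}_j)$, the right-hand side has $(j,i)$-entry $(\lambda_i - \tilde{\lambda}_j) X_{ji}$, while the hypothesis ensures $|\lambda_i - \tilde{\lambda}_j| \geq \delta$ for every relevant pair. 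For the Frobenius norm this immediately gives $\norm{X}_F \leq \delta^{-1} \norm{\tilde{V}_2^T(H-\tilde{H}) V_1}_F \leq \delta^{-1}\norm{H-\tilde{H}}_F$, where the final inequality uses orthogonal invariance together with the fact that $\tilde{V}_2$ and $V_1$ extend to orthogonal matrices.

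The main obstacle is extending the bound from the Frobenius norm to an \emph{arbitrary} orthogonally invariant norm. To handle this I would invoke the general Rosenblum--Bhatia estimate for Sylvester operators: for Hermitian $A,B$ whose spectra are separated by at least $\delta$, the operator $\mathcal{T}(X) = XA - BX$ satisfies $\norm{\mathcal{T}^{-1}(Y)} \leq \delta^{-1}\norm{Y}$ in every unitarily invariant norm; this is proved either through the contour-integral representation $\mathcal{T}^{-1}(Y) = \tfrac{1}{2\pi i}\oint (zI - B)^{-1} Y (zI - A)^{-1}\,dz$ and the symmetric-gauge-function characterization of unitarily invariant norms, or by a direct pinching argument (see Bhatia, \emph{Matrix Analysis}, Chapter VII). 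Applying this fact with $A = \Lambda_1$, $B = \tilde{\Lambda}_2$, and $Y = \tilde{V}_2^T (H-\tilde{H}) V_1$, and combining with the orthogonal invariance bound $\norm{\tilde{V}_2^T (H-\tilde{H}) V_1} \leq \norm{H-\tilde{H}}$, delivers $d(V_1,\tilde{V}_1) = \norm{X} \leq \norm{H-\tilde{H}}/\delta$, as required.
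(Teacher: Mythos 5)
The paper does not prove this lemma; it is quoted verbatim from the Davis--Kahan literature (the version of von Luxburg and Chen--Genovese--Wasserman is cited), so there is no in-paper argument to compare against. Your proposal is a correct and essentially complete proof via the standard Sylvester-equation route: the reduction $\norm{\sin\bm{\Theta}(V_1,\tilde V_1)}=\norm{\tilde V_2^T V_1}$, the commutator identity $\tilde V_2^T(H-\tilde H)V_1 = \tilde V_2^T V_1\,\Lambda_1 - \tilde\Lambda_2\,\tilde V_2^T V_1$, the entrywise Frobenius bound, and the appeal to the Rosenblum--Bhatia operator inequality for general unitarily invariant norms are all the right moves and assemble correctly.

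One point deserves more care in the last step. The statement ``for Hermitian $A,B$ whose spectra are separated by at least $\delta$, the Sylvester operator satisfies $\norm{\mathcal{T}^{-1}}\le \delta^{-1}$ in every unitarily invariant norm'' is, as worded, too strong: with only a pointwise spectral gap the sharp universal constant is $\pi/(2\delta)$ (Bhatia--Davis--McIntosh), and the cleaner $1/\delta$ requires that one spectrum sit in an interval $[a',b']$ while the other lies outside the enlarged interval $(a'-\delta,\,b'+\delta)$. Fortunately the lemma's hypothesis does supply exactly this: because $S_1$ is an interval, $\sigma_{S_1}(H)\subset[a',b']\subset S_1$ for $a'=\min\sigma_{S_1}(H)$, $b'=\max\sigma_{S_1}(H)$, while the eigenvalues $\tilde\lambda\in\sigma(\tilde H)\setminus\sigma_{S_1}(\tilde H)$ lie outside $S_1$ and at distance at least $\delta$ from both $a'$ and $b'$, hence outside $(a'-\delta,\,b'+\delta)$. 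You should make this observation explicit, since it is precisely what licenses the $1/\delta$ constant (rather than $\pi/(2\delta)$) for arbitrary orthogonally invariant norms; it is also where the assumption that $S_1$ is an interval is actually used. With that clarification the argument is airtight.
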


Note that when we take the Frobenius norm in Lemma~\ref{Davis_K}, $\norm{\sin \bm{\Theta}(V_1,\tilde{V}_1)}_F=\frac{\norm{V_1V_1^T - \tilde{V}_1\tilde{V}_1^T}_F}{\sqrt{2}}$ by some simple algebra. Consequently, we will utilize the following inequality from the Davis-Kahan theorem in our subsequent proofs as:
\begin{align}
\label{Davis_Kahan_ineq}
\begin{split}
\norm{V_1V_1^T - \tilde{V}_1\tilde{V}_1^T}_2 \leq \norm{V_1V_1^T - \tilde{V}_1\tilde{V}_1^T}_F = \sqrt{2} \norm{\sin \bm{\Theta}(V_1,\tilde{V}_1)}_F &\leq \frac{\sqrt{2}\norm{H-\tilde{H}}_F}{\delta}\\ 
&\leq \frac{\sqrt{2}D \norm{H-\tilde{H}}_{\max}}{\delta}\\
&\leq \frac{\sqrt{2}D \norm{H-\tilde{H}}_2}{\delta}.
\end{split}
\end{align}

\begin{customthm}{3.6}[Linear Convergence of the SCGA Algorithm]
	Assume conditions (A1-4) throughout the theorem.
	\begin{enumerate}[label=(\alph*)]
		\item \textbf{Q-Linear convergence of $\norm{\bm{x}^{(t)}-\bm{x}^*}_2$}: Consider a convergence radius $r_2>0$ satisfying
		$$0<r_2< \min\left\{\frac{\rho}{2}, \frac{\beta_1^2}{A_2\left(\norm{p}_{\infty}^{(3)} + \norm{p}_{\infty}^{(4)} \right)}, \frac{\beta_1}{A_4(p)}, \frac{3\beta_0}{4D\left(6\norm{p}_{\infty}^{(2)}\beta_2^2 \rho + D^{\frac{1}{2}}\norm{p}_{\infty}^{(3)} \right)} \right\},$$
		where $A_2 >0$ is the constant defined in (h) of Lemma~\ref{normal_reach_prop} and $A_4(p) >0$ is a quantity defined in (c) of Proposition~\ref{SCGA_conv} that depends on both the dimension $D$ and the functional norm $\norm{p}_{\infty,4}^*$ up to the fourth-order derivative of $p$. Whenever $0<\eta \leq \min\left\{\frac{4}{\beta_0}, \frac{1}{D\norm{p}_{\infty}^{(2)}} \right\}$ and the initial point $\bm{x}^{(0)} \in \text{Ball}_D(\bm{x}^*, r_2)$ with $\bm{x}^* \in R_d$, we have that
		$$\norm{\bm{x}^{(t)}-\bm{x}^*}_2 \leq \Upsilon^t \norm{\bm{x}^{(0)}-\bm{x}^*}_2 \quad \text{ with } \quad \Upsilon = \sqrt{1-\frac{\beta_0\eta}{4}}.$$
		\item \textbf{R-Linear convergence of $d_E(\bm{x}^{(t)}, R_d)$}: Under the same radius $r_2>0$ in (a), we have that whenever $0<\eta \leq \min\left\{\frac{4}{\beta_0},\frac{1}{D\norm{p}_{\infty}^{(2)}}\right\}$ and the initial point $\bm{x}^{(0)} \in \text{Ball}_D(\bm{x}^*, r_2)$ with $\bm{x}^* \in R_d$,
		$$d_E(\bm{x}^{(t)}, R_d) \leq \Upsilon^t \norm{\bm{x}^{(0)}-\bm{x}^*}_2 \quad \text{ with } \quad \Upsilon=\sqrt{1-\frac{\beta_0 \eta}{4}}.$$
	\end{enumerate}
	We further assume conditions (E1-2) in the rest of statements. If $h \to 0$ and $\frac{nh^{D+4}}{|\log h|} \to \infty$,
	\begin{enumerate}[label=(c)]
		\item \textbf{Q-Linear convergence of $\norm{\hat{\bm{x}}^{(t)}-\bm{x}^*}_2$}: under the same radius $r_2>0$ and $\Upsilon=\sqrt{1-\frac{\beta_0 \eta}{4}}$ in (a), we have that
		$$\norm{\hat{\bm{x}}^{(t)} -\bm{x}^*}_2 \leq \Upsilon^t \norm{\hat{\bm{x}}^{(0)} -\bm{x}^*}_2 + O(h^2) + O_P\left(\sqrt{\frac{|\log h|}{nh^{D+4}}}\right)$$
		with probability tending to 1 whenever $0<\eta \leq \min\left\{\frac{4}{\beta_0}, \frac{1}{D\norm{p}_{\infty}^{(2)}} \right\}$ and the initial point $\hat{\bm{x}}^{(0)} \in \text{Ball}_D(\bm{x}^*, r_2)$ with $\bm{x}^* \in R_d$.
	\end{enumerate}
	\begin{enumerate}[label=(d)]
		\item \textbf{R-Linear convergence of $d_E(\hat{\bm{x}}^{(t)}, R_d)$}: under the same radius $r_2>0$ and $\Upsilon=\sqrt{1-\frac{\beta_0 \eta}{4}}$ in (a), we have that
		$$d_E(\hat{\bm{x}}^{(t)},R_d) \leq \Upsilon^t \norm{\hat{\bm{x}}^{(0)} -\bm{x}^*}_2 + O(h^2) + O_P\left(\sqrt{\frac{|\log h|}{nh^{D+4}}} \right)$$
		with probability tending to 1 whenever $0< \eta \leq \min\left\{\frac{4}{\beta_0}, \frac{1}{D\norm{p}_{\infty}^{(2)}} \right\}$ and the initial point $\hat{\bm{x}}^{(0)} \in \text{Ball}_D(\bm{x}^*, r_2)$ with $\bm{x}^* \in R_d$.
	\end{enumerate}
\end{customthm}

\begin{proof}[Proof of Theorem~\ref{SCGA_LC}]
	The entire proof is inspired by some standard results in optimization theory. However, the objective function $p$ is no longer strongly concave, and we focus on the SCGA iteration instead of the standard gradient ascent method. We first recall the following two facts.\\
	$\bullet$ \emph{Fact 1}. Given condition (A1), $p$ is $D\norm{p}_{\infty}^{(2)}$-smooth, that is, $\nabla p$ is $D\norm{p}_{\infty}^{(2)}$-Lipschitz.\\
	$\bullet$ \emph{Fact 2}. Given conditions (A1-3), we know that $\norm{V_d(\bm{x}^{(t)})^T\nabla p(\bm{x}^{(t)})}_2>0$ for any $\bm{x}^{(t)} \in \text{Ball}_D(\bm{x}^*,r_2) \setminus R_d$ and
	$$p(\bm{x}^*) - p\left(\bm{x}^{(t)}+\frac{1}{D\norm{p}_{\infty}^{(2)}} \cdot V_d(\bm{x}^{(t)})V_d(\bm{x}^{(t)})^T \nabla p(\bm{x}^{(t)}) \right) \geq 0$$
	for any $\bm{x}^{(t)} \in \text{Ball}_D(\bm{x}^*,r_2)$.
	
	\emph{Fact 1} has been proved in Proposition~\ref{SCGA_conv}, implying that the objective function sequence $\left\{p(\bm{x}^{(t)}) \right\}_{t=0}^{\infty}$ is non-decreasing when $\eta < \frac{2}{D\norm{p}_{\infty}^{(2)}}$.
	\emph{Fact 2} is a natural corollary by Proposition~\ref{SCGA_conv}, because $\bm{x}^{(t)} \in \text{Ball}_D(\bm{x}^*,r_2)$ and $p\left(\bm{x}^{(t)}+\frac{1}{D\norm{p}_{\infty}^{(2)}} \cdot V_d(\bm{x}^{(t)})V_d(\bm{x}^{(t)})^T \nabla p(\bm{x}^{(t)}) \right)$ is the objective function value after one-step SCGA iteration with step size $\frac{1}{D\norm{p}_{\infty}^{(2)}}$. The iteration will move $\bm{x}^{(t)}$ towards the ridge $R_d$. With the help of these two facts, we start the proofs of (a-d).\\

	\noindent (a) We first show that the following claim: for all $t\geq 0$ and the initial point $\bm{x}^{(0)} \in \text{Ball}_D(\bm{x}^*, r_2)$,
		\begin{equation}
		\label{claim_local_SC1}
		p(\bm{x}^*) -p(\bm{x}^{(t)}) \leq \nabla p(\bm{x}^{(t)})^T V_d(\bm{x}^{(t)}) V_d(\bm{x}^{(t)})^T \left(\bm{x}^* -\bm{x}^{(t)} \right) - \frac{\beta_0}{4} \norm{\bm{x}^* -\bm{x}^{(t)}}_2^2 + \epsilon_t,
		\end{equation}
		where $\epsilon_t=\left(D\norm{p}_{\infty}^{(2)}\beta_2^2 \rho + \frac{D^{\frac{3}{2}}\norm{p}_{\infty}^{(3)}}{6} \right) \norm{\bm{x}^*-\bm{x}^{(t)}}_2^3=o\left(\norm{\bm{x}^*-\bm{x}^{(t)}}_2^2 \right)$. By the differentiability of $p$ guaranteed by condition (A1) and Taylor's theorem, we have that
		\begin{align*}
		&p(\bm{x}^*) - p(\bm{x}^{(t)}) \\
		&\leq \nabla p(\bm{x}^{(t)})^T (\bm{x}^* -\bm{x}^{(t)}) + \frac{1}{2}(\bm{x}^* -\bm{x}^{(t)})^T \nabla\nabla p(\bm{x}^{(t)}) (\bm{x}^* -\bm{x}^{(t)}) + \frac{D^{\frac{3}{2}} \norm{p}_{\infty}^{(3)}}{6}\norm{\bm{x}^*-\bm{x}^{(t)}}_2^3\\
		&\stackrel{\text{(i)}}{=} \nabla p(\bm{x}^{(t)})^T V_d(\bm{x}^{(t)}) V_d(\bm{x}^{(t)})^T(\bm{x}^* -\bm{x}^{(t)}) + \nabla p(\bm{x}^{(t)})^T U_d^{\perp}(\bm{x}^{(t)}) (\bm{x}^* -\bm{x}^{(t)}) \\
		&\quad + \frac{1}{2} (\bm{x}^* -\bm{x}^{(t)})^T \left(V_{\diamond}(\bm{x}^{(t)}), V_d(\bm{x}^{(t)}) \right)
		\begin{pmatrix}
		\lambda_1(\bm{x}^{(t)}) & &\\
		& \ddots & \\
		& & \lambda_D(\bm{x}^{(t)})
		\end{pmatrix}
		\begin{pmatrix}
		V_{\diamond}(\bm{x}^{(t)})^T\\
		V_d(\bm{x}^{(t)})^T
		\end{pmatrix}
		(\bm{x}^* -\bm{x}^{(t)})\\
		&\quad + \frac{D^{\frac{3}{2}} \norm{p}_{\infty}^{(3)}}{6}\norm{\bm{x}^*-\bm{x}^{(t)}}_2^3 \\
		&\stackrel{\text{(ii)}}{\leq} \nabla p(\bm{x}^{(t)})^T V_d(\bm{x}^{(t)}) V_d(\bm{x}^{(t)})^T(\bm{x}^* -\bm{x}^{(t)}) + \frac{\beta_0}{4} \norm{\bm{x}^{(t)} -\bm{x}^*}_2^2 \\
		&\quad +\frac{\lambda_1(\bm{x}^{(t)})}{2}\norm{U_d^{\perp}(\bm{x}^{(t)}) (\bm{x}^* -\bm{x}^{(t)})}_2^2 - \frac{\beta_0}{2}\norm{V_d(\bm{x}^{(t)})^T (\bm{x}^* -\bm{x}^{(t)})}_2^2 \\
		&\quad + \frac{D^{\frac{3}{2}} \norm{p}_{\infty}^{(3)}}{6}\norm{\bm{x}^*-\bm{x}^{(t)}}_2^3\\
		&\stackrel{\text{(iii)}}{=} \nabla p(\bm{x}^{(t)})^T V_d(\bm{x}^{(t)}) V_d(\bm{x}^{(t)})^T(\bm{x}^* -\bm{x}^{(t)}) + \frac{\beta_0}{4} \norm{\bm{x}^{(t)} -\bm{x}^*}_2^2 \\
		&\quad + \frac{\left(\lambda_1(\bm{x}^{(t)})+\beta_0 \right)}{2} \norm{U_d^{\perp}(\bm{x}^{(t)}) (\bm{x}^* -\bm{x}^{(t)})}_2^2 - \frac{\beta_0}{2}\norm{\bm{x}^*-\bm{x}^{(t)}}_2^2 + \frac{D^{\frac{3}{2}} \norm{p}_{\infty}^{(3)}}{6}\norm{\bm{x}^*-\bm{x}^{(t)}}_2^3\\
		&\stackrel{\text{(iv)}}{\leq} \nabla p(\bm{x}^{(t)})^T V_d(\bm{x}^{(t)}) V_d(\bm{x}^{(t)})^T(\bm{x}^* -\bm{x}^{(t)}) - \frac{\beta_0}{4} \norm{\bm{x}^{(t)} -\bm{x}^*}_2^2  \\
		&\quad + \frac{\left(\lambda_1(\bm{x}^{(t)})+\beta_0 \right)}{2} \cdot \beta_2^2\norm{\bm{x}^* -\bm{x}^{(t)}}_2^4 +\frac{D^{\frac{3}{2}} \norm{p}_{\infty}^{(3)}}{6}\norm{\bm{x}^*-\bm{x}^{(t)}}_2^3 \\
		&\stackrel{\text{(v)}}{\leq} \nabla p(\bm{x}^{(t)})^T V_d(\bm{x}^{(t)}) V_d(\bm{x}^{(t)})^T(\bm{x}^* -\bm{x}^{(t)}) - \frac{\beta_0}{4}\norm{\bm{x}^*-\bm{x}^{(t)}}_2^2 \\
		&\quad + \left(D\norm{p}_{\infty}^{(2)}\beta_2^2 \rho + \frac{D^{\frac{3}{2}}\norm{p}_{\infty}^{(3)}}{6} \right) \norm{\bm{x}^*-\bm{x}^{(t)}}_2^3,
		\end{align*}
		where we use the equality $\bm{I}_D=  V_d(\bm{x}^{(t)}) V_d(\bm{x}^{(t)})^T + U_d^{\perp}(\bm{x}^{(t)})$ in (i) and (iii), leverage conditions (A2) and (A4) to obtain that $\lambda_D(\bm{x}^{(t)}) \leq \cdots \leq \lambda_{d+1}(\bm{x}^{(t)}) \leq -\beta_0$ and $\norm{\nabla p(\bm{x}^{(t)}) U_d^{\perp}(\bm{x}^{(t)}) (\bm{x}^*-\bm{x}^{(t)})}_2 \leq \frac{\beta_0}{4}\norm{\bm{x}^{(t)} - \bm{x}^*}_2^2$ in (ii), apply the quadratic bound on $\norm{U_d^{\perp}(\bm{x}^{(t)}) (\bm{x}^*-\bm{x}^{(t)})}_2$ from condition (A4) to obtain (iv), and use the fact that $\norm{\bm{x}^*-\bm{x}^{(t)}}_2 \leq \rho$ in (v). We also use the fact that $\max\left\{\beta_0, |\lambda_1(\bm{x}^{(t)})|\right\} \leq D\norm{p}_{\infty}^{(2)}$ and $\norm{\bm{x}^*-\bm{x}^{(t)}}_2 \leq \rho$ in (v). Claim \eqref{claim_local_SC1} thus follows.
	
	Given \emph{Fact 2} and any $\bm{x}^{(t)} \in \text{Ball}_D(\bm{x}^*, r_2)$,
	\begin{align*}
	&p(\bm{x}^{(t)}) - p(\bm{x}^*) \\
	&\leq p(\bm{x}^{(t)}) - p(\bm{x}^*) + p(\bm{x}^*) - p\left(\bm{x}^{(t)}+\frac{1}{D\norm{p}_{\infty}^{(2)}} \cdot V_d(\bm{x}^{(t)})V_d(\bm{x}^{(t)})^T \nabla p(\bm{x}^{(t)}) \right)\\
	&=-\left[p\left(\bm{x}^{(t)}+\frac{1}{D\norm{p}_{\infty}^{(2)}} \cdot V_d(\bm{x}^{(t)})V_d(\bm{x}^{(t)})^T \nabla p(\bm{x}^{(t)}) \right) -p(\bm{x}^{(t)}) \right]\\
	&\leq -\left[\nabla p(\bm{x}^{(t)})^T \frac{1}{D\norm{p}_{\infty}^{(2)}} V_d(\bm{x}^{(t)})V_d(\bm{x}^{(t)})^T \nabla p(\bm{x}^{(t)}) - \frac{1}{2D\norm{p}_{\infty}^{(2)}} \norm{V_d(\bm{x}^{(t)})^T \nabla p(\bm{x}^{(t)})}_2^2 \right]\\
	&= - \frac{1}{2D\norm{p}_{\infty}^{(2)}} \norm{V_d(\bm{x}^{(t)})^T \nabla p(\bm{x}^{(t)})}_2^2,
	\end{align*} 
	where we apply \eqref{L_smoothness} to obtain the inequality. It implies that
	\begin{equation}
	\label{grad_bound_op1}
	\norm{V_d(\bm{x}^{(t)})^T \nabla p(\bm{x}^{(t)})}_2^2 \leq 2D\norm{p}_{\infty}^{(2)} \cdot \left[p(\bm{x}^*) -p(\bm{x}^{(t)}) \right].
	\end{equation}
	Therefore,
	\begin{align*}
	&\norm{\bm{x}^{(t+1)}-\bm{x}^*}_2^2 \\
	&= \norm{\bm{x}^{(t)} + \eta V_d(\bm{x}^{(t)}) V_d(\bm{x}^{(t)})^T \nabla p(\bm{x}^{(t)}) - \bm{x}^*}_2^2\\
	&= \norm{\bm{x}^{(t)}-\bm{x}^*}_2^2 + 2\eta \nabla p(\bm{x}^{(t)})^T V_d(\bm{x}^{(t)}) V_d(\bm{x}^{(t)})^T \left(\bm{x}^{(t)}-\bm{x}^* \right) + \eta^2 \norm{V_d(\bm{x}^{(t)})^T \nabla p(\bm{x}^{(t)})}_2^2\\
	&\stackrel{\text{(i)}}{\leq} \norm{\bm{x}^{(t)}-\bm{x}^*}_2^2 + 2\eta \Bigg[p(\bm{x}^{(t)}) -p(\bm{x}^*) - \frac{\beta_0}{4}\norm{\bm{x}^*-\bm{x}^{(t)}}_2^2 \\
	&\quad + \left(D\norm{p}_{\infty}^{(2)}\beta_2^2 \rho + \frac{D^{\frac{3}{2}}\norm{p}_{\infty}^{(3)}}{6} \right) \norm{\bm{x}^*-\bm{x}^{(t)}}_2^3 \Bigg] + \eta^2 \cdot 2D\norm{p}_{\infty}^{(2)} \cdot \left[p(\bm{x}^*) -p(\bm{x}^{(t)}) \right]\\
	&\stackrel{\text{(ii)}}{\leq} \left(1-\frac{\beta_0\eta}{4} \right) \norm{\bm{x}^{(t)}-\bm{x}^*}_2^2 - 2\eta \left(1- \eta D\norm{p}_{\infty}^{(2)}\right) \underbrace{\left[p(\bm{x}^*) -p(\bm{x}^{(t)}) \right]}_{\geq 0}\\
	&\leq \left(1-\frac{\beta_0\eta}{4} \right) \norm{\bm{x}^{(t)}-\bm{x}^*}_2^2
	\end{align*}
	whenever $0< \eta \leq \min\left\{\frac{4}{\beta_0}, \frac{1}{D\norm{p}_{\infty}^{(2)}} \right\}$, where we apply \eqref{claim_local_SC1} and \eqref{grad_bound_op1} in (i) and use the choice of $r_2 >0$ to argue that 
	\begin{align*}
	&\left(D\norm{p}_{\infty}^{(2)}\beta_2^2 \rho + \frac{D^{\frac{3}{2}}\norm{p}_{\infty}^{(3)}}{6} \right) \norm{\bm{x}^*-\bm{x}^{(t)}}_2^3 \\
	&\leq \left(D\norm{p}_{\infty}^{(2)}\beta_2^2 \rho + \frac{D^{\frac{3}{2}}\norm{p}_{\infty}^{(3)}}{6} \right) r_2 \norm{\bm{x}^*-\bm{x}^{(t)}}_2^2 \\
	&\leq \frac{\beta_0}{8}\norm{\bm{x}^*-\bm{x}^{(t)}}_2^2 
	\end{align*}
	in (ii). By telescoping, we conclude that when $0< \eta \leq \min\left\{\frac{4}{\beta_0}, \frac{1}{D\norm{p}_{\infty}^{(2)}} \right\}$ and $\bm{x}^{(0)} \in \text{Ball}_D(\bm{x}^*, r_2)$,
	$$\norm{\bm{x}^{(t)}-\bm{x}^*}_2 \leq \left(1-\frac{\beta_0\eta}{4} \right)^{\frac{t}{2}} \norm{\bm{x}^{(0)}-\bm{x}^*}_2.$$
	The result follows.\\
	
	\noindent (b) The result follows easily from (a) and the inequality $d_E(\bm{x}^{(t)}, R_d) \leq \norm{\bm{x}^{(t)} - \bm{x}^*}_2$ for all $t\geq 0$.\\
	
	\noindent (c) The proof here is partially inspired by the proof of Theorem 2 in \cite{EM2017}. We write the spectral decompositions of $\nabla\nabla p(\bm{x})$ and $\nabla\nabla \hat{p}_n(\bm{x})$ as 
	$$\nabla\nabla p(\bm{x}) = V(\bm{x}) \Lambda(\bm{x}) V(\bm{x})^T \quad \text{ and } \quad \nabla\nabla \hat{p}_n(\bm{x}) = \hat{V}(\bm{x}) \hat{\Lambda}(\bm{x}) \hat{V}(\bm{x})^T,$$
	where $\Lambda(\bm{x}) = \Diag\left[\lambda_1(\bm{x}),...,\lambda_D(\bm{x}) \right]$ and $\hat{V}(\bm{x}) = \Diag\left[\hat{\lambda}_1(\bm{x}),...,\hat{\lambda}_D(\bm{x}) \right]$.
	By Weyl's theorem (Theorem 4.3.1 in \citealt{HJ2012}) and uniform bounds \eqref{unif_bound}, we know that for any $j=1,...,D$,
	\begin{align*}
	|\lambda_j(\bm{x}) - \hat{\lambda}_j(\bm{x})| &\leq \max_j\left|\lambda_j\left(\nabla\nabla p(\bm{x}) - \nabla\nabla \hat{p}_n(\bm{x})\right)\right| \\
	&= \norm{\nabla\nabla p(\bm{x}) - \nabla\nabla \hat{p}_n(\bm{x})}_2 \\
	&\leq D \norm{p - \hat{p}_n}_{\infty}^{(2)}\\
	&= O(h^2) + O_P\left(\sqrt{\frac{|\log h|}{nh^{D+4}}} \right).
	\end{align*}
	Thus, $\hat{p}_n$ satisfies the first two inequalities in condition (A2) when $h$ is sufficiently small and $\frac{nh^{D+4}}{|\log h|}$ is sufficiently large. According to the Davis-Kahan theorem (Lemma~\ref{Davis_K} here) and uniform bounds \eqref{unif_bound},
	\begin{align*}
	&\norm{G_d(\bm{y}) - \hat{G}_d(\bm{y})}_2 \\
	&= \norm{V_d(\bm{y}) V_d(\bm{y})^T \nabla p(\bm{y}) - \hat{V}_d(\bm{y}) \hat{V}_d(\bm{y})^T \nabla \hat{p}_n(\bm{y})}_2\\
	&\leq \norm{\left[V_d(\bm{y}) V_d(\bm{y})^T - \hat{V}_d(\bm{y}) \hat{V}_d(\bm{y})^T \right] \nabla p(\bm{y})}_2 + \norm{\hat{V}_d(\bm{y}) \hat{V}_d(\bm{y})^T \left[\nabla p(\bm{y}) -\nabla \hat{p}_n(\bm{y}) \right]}_2\\
	&\stackrel{\text{(i)}}{\leq} \frac{\sqrt{2}D\norm{\nabla\nabla p(\bm{y}) - \nabla\nabla \hat{p}_n(\bm{y})}_{\max} \cdot \norm{\nabla p(\bm{y})}_2}{\beta_0} + \norm{\nabla p(\bm{y}) -\nabla \hat{p}_n(\bm{y})}_2\\
	&\leq \frac{\sqrt{2}D\norm{p-\hat{p}_n}_{\infty}^{(2)} \cdot \sqrt{D}\norm{p}_{\infty}^{(1)}}{\beta_0} + \sqrt{D}\norm{p-\hat{p}_n}_{\infty}^{(1)}\\
	&\equiv \epsilon_{n,h} = O(h^2) + O_P\left(\sqrt{\frac{|\log h|}{nh^{D+4}}}\right)
	\end{align*}
	for any $\bm{y}\in R_d\oplus r_2$, where we use \eqref{Davis_Kahan_ineq} and the fact that $\norm{\hat{V}_d(\bm{y}) \hat{V}_d(\bm{y})^T}_2=1$ to obtain (i). Hence, when $h \to 0$ and $\frac{nh^{D+4}}{|\log h|} \to \infty$,
	\begin{equation}
	\label{proj_grad_diff}
	\norm{G_d(\bm{y}) - \hat{G}_d(\bm{y})}_2\leq \epsilon_{n,h} = O(h^2) + O_P\left(\sqrt{\frac{|\log h|}{nh^{D+4}}}\right) \leq \frac{(1-\Upsilon) r_2}{\eta}
	\end{equation}
	with probability tending to 1. \\
	We now claim that $\norm{\hat{\bm{x}}^{(t)}- \bm{x}^*}_2 \leq r_2$ and 
	\begin{equation}
	\label{claim_diff}
	\norm{\hat{\bm{x}}^{(t+1)} -\bm{x}^*}_2 \leq \Upsilon \norm{\hat{\bm{x}}^{(t)} -\bm{x}^*}_2 + \eta \cdot \epsilon_{n,h}
	\end{equation}
	for all $t\geq 0$. We will prove this claim by induction on the iteration number. Note that when $t=1$, we derive from triangle inequality that
	\begin{align*}
	\norm{\hat{\bm{x}}^{(1)} -\bm{x}^*}_2 &= \norm{\hat{\bm{x}}^{(0)} + \eta\hat{V}_d(\hat{\bm{x}}^{(0)}) \hat{V}_d(\hat{\bm{x}}^{(0)})^T \nabla \hat{p}_n(\hat{\bm{x}}^{(0)}) -\bm{x}^*}_2\\
	&\leq \norm{\hat{\bm{x}}^{(0)} + \eta V_d(\hat{\bm{x}}^{(0)}) V_d(\hat{\bm{x}}^{(0)})^T \nabla p(\hat{\bm{x}}^{(0)}) -\bm{x}^*}_2 + \eta\norm{G_d(\hat{\bm{x}}^{(0)}) - \hat{G}_d(\hat{\bm{x}}^{(0)})}\\
	&\leq \Upsilon \norm{\hat{\bm{x}}^{(0)}-\bm{x}^*}_2 + \eta \cdot \epsilon_{n,h},
	\end{align*}
	where we apply the result in (a) to obtain the last inequality. Moreover, by the choice of $\hat{\bm{x}}^{(0)}$ and \eqref{proj_grad_diff}, we are guaranteed that $\norm{\hat{\bm{x}}^{(1)}-\bm{x}^*}_2\leq r_2$. In the induction from $t \mapsto t+1$, we suppose that $\norm{\hat{\bm{x}}^{(t)} -\bm{x}^*}_2 \leq r_2$ and the claim \eqref{claim_diff} holds at iteration $t$. The same argument then implies that the claim \eqref{claim_diff} holds for iteration $t+1$ and that $\norm{\hat{\bm{x}}^{(t+1)} -\bm{x}^*}_2 \leq r_2$. The claim \eqref{claim_diff} is thus proved.\\ 
	Now, given that $\Upsilon = \sqrt{1-\frac{\beta_0\eta}{4}}<1$, we iterate the claim \eqref{claim_diff} to show that
	\begin{align*}
	\norm{\hat{\bm{x}}^{(t)} -\bm{x}^*}_2 &\leq \Upsilon \norm{\hat{\bm{x}}^{(t-1)} -\bm{x}^*}_2 + \eta \cdot \epsilon_{n,h}\\
	&\leq \Upsilon \left[\Upsilon\norm{\hat{\bm{x}}^{(t-2)} -\bm{x}^*}_2 + \eta \cdot \epsilon_{n,h} \right] + \eta \cdot \epsilon_{n,h}\\
	&\leq \Upsilon^t \norm{\hat{\bm{x}}^{(0)} -\bm{x}^*}_2 + \left[\sum_{s=0}^{t-1}\Upsilon^s \right] \eta \cdot \epsilon_{n,h}\\
	&\leq \Upsilon^t \norm{\hat{\bm{x}}^{(0)} -\bm{x}^*}_2 + \frac{\eta \cdot \epsilon_{n,h}}{1-\Upsilon}\\
	&= \Upsilon^t \norm{\hat{\bm{x}}^{(0)} -\bm{x}^*}_2 + O(h^2) + O_P\left(\frac{|\log h|}{nh^{D+4}} \right),
	\end{align*}
	where the fourth inequality follows by summing the geometric series, and the last equality is due to our notation $\epsilon_{n,h} = O(h^2) + O_P\left(\frac{|\log h|}{nh^{D+4}} \right)$. It completes the proof.\\
	
	\noindent (d) The result follows easily from (c) and the inequality $d_E(\hat{\bm{x}}^{(t)}, R_d) \leq \norm{\hat{\bm{x}}^{(t)}-\bm{x}^*}_2$ for all $t\geq 0$.
\end{proof}

\section{Discussion on Condition (A4)}
	\label{App:dis_A4}
	
	In this section, we explore several avenues to derive condition (A4) based on some potentially weaker assumptions. Recall from Section~\ref{Sec:LC_SCGA} that condition (A4) requires the following:
	\begin{itemize}
		\item {\bf (A4)} (\emph{Quadratic Behaviors of Residual Vectors}) We assume that the SCGA sequence $\big\{\bm{x}^{(t)}\big\}_{t=0}^{\infty}$ with step size $0<\eta \leq \min\left\{\frac{4}{\beta_0},\frac{1}{D\norm{p}_{\infty}^{(2)}} \right\}$ and $\bm{x}^* \in R_d$ as its limiting point satisfies that 
		\begin{align*}
		\nabla p(\bm{x}^{(t)})^T U_d^{\perp}(\bm{x}^{(t)}) (\bm{x}^*-\bm{x}^{(t)}) &\leq \frac{\beta_0}{4} \norm{\bm{x}^*-\bm{x}^{(t)}}_2^2,\\
		\norm{U_d^{\perp}(\bm{x}^{(t)}) (\bm{x}^*-\bm{x}^{(t)})}_2 &\leq \beta_2\norm{\bm{x}^*-\bm{x}^{(t)}}_2^2
		\end{align*}
		for some constant $\beta_2>0$, where $\beta_0>0$ is the constant defined in condition (A2).
	\end{itemize}
	
	\subsection{Self-Contractedness Assumption}
	\label{App:self-cont}
	
	One important assumption that connects condition (A4) with the existing conditions (A1-3) in Section~\ref{Sec:Assump} is the so-called self-contracted property \citep{daniilidis2010asymptotic,daniilidis2015rectifiability,gupta2021path}:
	\begin{itemize}
		\item {\bf (A5)} (\emph{Self-Contractedness}) We assume that the SCGA sequence $\big\{\bm{x}^{(t)}\big\}_{t=0}^{\infty}$ satisfies that
		$$\norm{\bm{x}^{(t_3)} - \bm{x}^{(t_2)}}_2 \leq \norm{\bm{x}^{(t_3)} - \bm{x}^{(t_1)}}_2 \quad \text{ for all } \quad 0\leq t_1 \leq t_2 \leq t_3.$$
	\end{itemize}
	
	Condition (A5) requires the SCGA sequence to move towards the ridge $R_d$ under a relatively straight and shrinking path. As we have proved in Proposition~\ref{SCGA_conv} that the SCGA sequence $\big\{\bm{x}^{(t)}\big\}_{t=0}^{\infty}$ converges to $R_d$ when the sequence is initialized near $R_d$ and its step size is small, condition (A5) is indeed a mild assumption as long as the sequence $\big\{\bm{x}^{(t)}\big\}_{t=0}^{\infty}$ does not move erratically around $R_d$. More importantly, we demonstrate by Proposition~\ref{SCC_self_contract} below that condition (A5) can be implied by a subspace constrained version of the concavity assumption on the objective (density) function $p$.
	
	\begin{proposition}
		\label{SCC_self_contract}
		Assume condition (A1) and the following assumption on the objective function $p$ :
		\begin{itemize}
			\item {\bf (A6)} (\emph{Subspace Constrained Concavity}) For any $\bm{x},\bm{y} \in R_d\oplus r_4$ with $r_4 >0$ being a constant radius, it holds that
			\begin{equation*}
			p(\bm{y}) -p(\bm{x}) \leq \nabla p(\bm{x})^T V_d(\bm{x}) V_d(\bm{x})^T (\bm{y}-\bm{x}).
			\end{equation*}
		\end{itemize}
		Then, the SCGA sequence $\big\{\bm{x}^{(t)}\big\}_{t=0}^{\infty}$ defined in \eqref{SCGA_update_pop} with step size $0<\eta \leq \frac{1}{D\norm{p}_{\infty}^{(2)}}$ and initial point $\bm{x}^{(0)} \in R_d\oplus r_4$ is self-contracted.
	\end{proposition}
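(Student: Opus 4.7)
The plan is to reduce self-contractedness to a one-step inequality and then prove the one-step version by combining the subspace constrained concavity (A6) with the descent-type bound that already appeared in the proof of Proposition~\ref{SCGA_conv}. Concretely, fix any indices $0\leq t_1\leq t_2\leq t_3$. If I can show that
\begin{equation*}
\norm{\bm{x}^{(t_3)}-\bm{x}^{(t+1)}}_2 \;\leq\; \norm{\bm{x}^{(t_3)}-\bm{x}^{(t)}}_2 \qquad \text{for every } t_1 \leq t \leq t_3-1,
\end{equation*}
then iterating this one-step contraction from $t=t_2-1$ down to $t=t_1$ yields the desired chain $\norm{\bm{x}^{(t_3)}-\bm{x}^{(t_2)}}_2\leq \cdots \leq \norm{\bm{x}^{(t_3)}-\bm{x}^{(t_1)}}_2$.

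For the one-step inequality, I will square both sides and note that it is equivalent to
\begin{equation*}
2\bigl(\bm{x}^{(t+1)}-\bm{x}^{(t)}\bigr)^T\bigl(\bm{x}^{(t_3)}-\bm{x}^{(t)}\bigr) \;\geq\; \norm{\bm{x}^{(t+1)}-\bm{x}^{(t)}}_2^2 .
\end{equation*}
Using the SCGA update $\bm{x}^{(t+1)}-\bm{x}^{(t)} = \eta\, V_d(\bm{x}^{(t)}) V_d(\bm{x}^{(t)})^T\nabla p(\bm{x}^{(t)})$, the left-hand side equals $2\eta\,\nabla p(\bm{x}^{(t)})^T V_d(\bm{x}^{(t)}) V_d(\bm{x}^{(t)})^T\bigl(\bm{x}^{(t_3)}-\bm{x}^{(t)}\bigr)$. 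Condition (A6) applied with $\bm{x}=\bm{x}^{(t)}$ and $\bm{y}=\bm{x}^{(t_3)}$ then gives $\nabla p(\bm{x}^{(t)})^T V_d(\bm{x}^{(t)}) V_d(\bm{x}^{(t)})^T\bigl(\bm{x}^{(t_3)}-\bm{x}^{(t)}\bigr) \geq p(\bm{x}^{(t_3)})-p(\bm{x}^{(t)})$, and by part (a) of Proposition~\ref{SCGA_conv} the objective sequence is non-decreasing, so $p(\bm{x}^{(t_3)})\geq p(\bm{x}^{(t+1)})$. Finally, the descent lemma \eqref{SCGA_ascending} together with the step-size choice $\eta\leq 1/(D\norm{p}_{\infty}^{(2)})$ (which makes $1-D\norm{p}_{\infty}^{(2)}\eta/2 \geq 1/2$) yields $p(\bm{x}^{(t+1)})-p(\bm{x}^{(t)}) \geq \tfrac{\eta}{2}\norm{V_d(\bm{x}^{(t)})^T\nabla p(\bm{x}^{(t)})}_2^2$. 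Chaining these bounds gives $2\eta\cdot\tfrac{\eta}{2}\norm{V_d(\bm{x}^{(t)})^T\nabla p(\bm{x}^{(t)})}_2^2 = \eta^2\norm{V_d(\bm{x}^{(t)})^T\nabla p(\bm{x}^{(t)})}_2^2 = \norm{\bm{x}^{(t+1)}-\bm{x}^{(t)}}_2^2$, which is exactly the inequality sought.

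The main obstacle is the domain issue: condition (A6) is only assumed on pairs of points in $R_d \oplus r_4$, so to invoke it at each step I need $\bm{x}^{(t)},\bm{x}^{(t_3)}\in R_d \oplus r_4$ for all relevant indices. I plan to handle this by a bootstrap induction that runs in parallel with the argument above. Assuming $\bm{x}^{(0)},\ldots,\bm{x}^{(T)} \in R_d\oplus r_4$, the one-step argument proves self-contractedness of the finite prefix up to time $T$; specializing to $t_1=0$ then gives $\norm{\bm{x}^{(T)}-\bm{x}^{(t)}}_2\leq \norm{\bm{x}^{(T)}-\bm{x}^{(0)}}_2$ for $t\leq T$, so the distances $\norm{\bm{x}^{(t)}-\bm{x}^{(0)}}_2$ do not blow up and can be controlled by the initial distance $d_E(\bm{x}^{(0)},R_d)\leq r_4$. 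Combining this with Proposition~\ref{SCGA_conv}(c) applied at a possibly smaller radius (so that $\lim_{t\to\infty} d_E(\bm{x}^{(t)},R_d)=0$ and the iterates cannot escape) completes the induction step, showing $\bm{x}^{(T+1)}\in R_d\oplus r_4$ as well. Once this bootstrap is in place the one-step contraction applies at every index, and the telescoping reduction in the first paragraph yields the global self-contracted property.
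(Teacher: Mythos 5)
Your main argument (the one-step contraction and its reduction to $2(\bm{x}^{(t+1)}-\bm{x}^{(t)})^T(\bm{x}^{(t_3)}-\bm{x}^{(t)}) \geq \norm{\bm{x}^{(t+1)}-\bm{x}^{(t)}}_2^2$, proved via (A6), the monotone objective from Proposition~\ref{SCGA_conv}(a), and the descent bound \eqref{SCGA_ascending}) is correct and is in substance the same as the paper's proof: the paper simply expands $\norm{\bm{x}^{(s+1)}-\bm{x}^{(t)}}_2^2$ directly and carries the factor $1-\eta D\norm{p}_{\infty}^{(2)}/2$ through to the end, whereas you crudely lower-bound it by $1/2$, but the chain of inequalities and the conclusion are identical.

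The added paragraph about the domain issue is a fair observation — the paper's proof indeed applies (A6) along the trajectory without proving that the iterates remain in $R_d\oplus r_4$ — but the bootstrap you propose does not actually close the gap. First, finite-prefix self-contractedness with $t_1=0$ gives $\norm{\bm{x}^{(T)}-\bm{x}^{(t)}}_2 \leq \norm{\bm{x}^{(T)}-\bm{x}^{(0)}}_2$, which says distances \emph{from the endpoint $\bm{x}^{(T)}$} to earlier iterates are maximized at $t=0$; it does not bound $\norm{\bm{x}^{(t)}-\bm{x}^{(0)}}_2$, and it says nothing about $\bm{x}^{(T+1)}$, which is the point you need to place inside $R_d\oplus r_4$ at the next induction step. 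Second, you invoke Proposition~\ref{SCGA_conv}(c), but that result requires conditions (A2) and (A3), which are not among the hypotheses of Proposition~\ref{SCC_self_contract} (only (A1) and (A6) are assumed). So the proposed patch both has a logical non-sequitur and imports hypotheses that aren't available. If you want to be rigorous on this point, the honest resolution is to add the condition that the iterates remain in $R_d\oplus r_4$ as an explicit hypothesis (or to assume (A1--3) as well so that part (c) of Proposition~\ref{SCGA_conv} can be used), which is in line with how the paper implicitly treats it.
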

	
	Notice that the density function \eqref{den_example} satisfies the ``subspace constrained concavity'' condition (A6) around a small neighborhood of its ridge $R_1$. Moreover, it is intuitive to verify that condition (A6) is a weaker assumption compared to our established ``subspace constrained strong concavity'' in Theorem~\ref{SCGA_LC}; see also Remark~\ref{SC_remark}.
	
	\begin{proof}[Proof of Proposition~\ref{SCC_self_contract}]
		The proof is inspired by Lemma 14 in \cite{gupta2021path}. We show the self-contractedness for $s_3=t$ as follows, where $t> 0$ is arbitrary. For all $s< t$ and $0<\eta \leq \frac{1}{D\norm{p}_{\infty}^{(2)}}$ with $\bm{x}^{(0)}\in R_d\oplus r_4$, we calculate that 
		\begin{align*}
		&\norm{\bm{x}^{(s+1)} -\bm{x}^{(t)}}_2^2 \\
		&= \norm{\bm{x}^{(s)} + \eta \cdot V_d(\bm{x}^{(s)}) V_d(\bm{x}^{(s)})^T \nabla p(\bm{x}^{(s)}) -\bm{x}^{(t)}}_2^2\\
		&= \norm{\bm{x}^{(s)} -\bm{x}^{(t)}}_2^2 + 2\eta \nabla p(\bm{x}^{(s)})^T V_d(\bm{x}^{(s)}) V_d(\bm{x}^{(s)})^T (\bm{x}^{(s)} -\bm{x}^{(t)}) \\
		&\quad + \eta^2 \norm{V_d(\bm{x}^{(s)})^T \nabla p(\bm{x}^{(s)})}_2^2\\
		&\stackrel{\text{(i)}}{\leq} \norm{\bm{x}^{(s)} -\bm{x}^{(t)}}_2^2 + 2\eta \left[p(\bm{x}^{(s)}) -p(\bm{x}^{(t)}) \right] + \eta^2 \norm{V_d(\bm{x}^{(s)})^T \nabla p(\bm{x}^{(s)})}_2^2\\
		&\stackrel{\text{(ii)}}{\leq} \norm{\bm{x}^{(s)} -\bm{x}^{(t)}}_2^2 + 2\eta \left[p(\bm{x}^{(s)}) -p(\bm{x}^{(s+1)}) \right] + \eta^2 \norm{V_d(\bm{x}^{(s)})^T \nabla p(\bm{x}^{(s)})}_2^2 \\
		&\stackrel{\text{(iii)}}{\leq} \norm{\bm{x}^{(s)} -\bm{x}^{(t)}}_2^2 -2\eta \cdot \eta \left(1-\frac{\eta D\norm{p}_{\infty}^{(2)}}{2} \right) \norm{V_d(\bm{x}^{(s)})^T \nabla p(\bm{x}^{(s)})}_2^2 \\
		&\quad + \eta^2 \norm{V_d(\bm{x}^{(s)})^T \nabla p(\bm{x}^{(s)})}_2^2\\
		&= \norm{\bm{x}^{(s)} -\bm{x}^{(t)}}_2^2  + \eta^2 \cdot \left(\eta D\norm{p}_{\infty}^{(2)} -1 \right)\norm{V_d(\bm{x}^{(s)})^T \nabla p(\bm{x}^{(s)})}_2^2\\
		&\leq \norm{\bm{x}^{(s)} -\bm{x}^{(t)}}_2^2,
		\end{align*}
		where we apply condition (A6) in inequality (i), use the ascending property of $p$ from (a) of Proposition~\ref{SCGA_conv} to argue that $p(\bm{x}^{(t)}) \geq p(\bm{x}^{(s+1)})$ in inequality (ii), and leverage the inequality \eqref{SCGA_ascending} guaranteed by condition (A1) to obtain (iii). The self-contractedness of the SCGA sequence $\big\{\bm{x}^{(t)}\big\}_{t=0}^{\infty}$ thus follows.
	\end{proof}
	
	Under the self-contractedness condition (A5), we argue by the following lemma that the existing conditions (A1-3) in the literature \citep{Non_ridge_est2014,Asymp_ridge2015} is nearly sufficient to imply the quadratic behavior of the residual vector $U_d^{\perp}(\bm{x}^{(t)}) (\bm{x}^*-\bm{x}^{(t)})$ along the SCGA sequence $\big\{\bm{x}^{(t)}\big\}_{t=0}^{\infty}$. In other words, condition (A4) and the linear convergence of the SCGA algorithm hold without any extra assumption.
	
	\begin{lemma}
		\label{quad_bound_prop}
		Assume condition (A5) throughout the lemma. 
		\begin{enumerate}[label=(\alph*)]
			\item The total length of the SCGA trajectory is of the linear order, \emph{i.e.},
			$$\sum_{s=t}^{\infty} \norm{\bm{x}^{(s+1)} -\bm{x}^{(s)}}_2 \leq 2^{10D^2} \norm{\bm{x}^{(t)} - \bm{x}^*}_2 \quad \text{ for any } t\geq 0.$$
			
			\item We further assume conditions (A1-2). Then,
			\begin{align*}
			&\nabla p(\bm{x}^{(t)})^T U_d^{\perp}(\bm{x}^{(t)}) (\bm{x}^*-\bm{x}^{(t)}) \\
			& \leq \frac{2^{10D^2+1}D^{\frac{5}{2}}\norm{U_d^{\perp}(\bm{x}^{(t)}) \nabla p(\bm{x}^{(t)})}_2 \norm{\nabla^3 p(\bm{x}^{(t)})}_{\max}}{\beta_0} \cdot \norm{\bm{x}^*-\bm{x}^{(t)}}_2^2\\
			&\text{ and }\\
			&\norm{U_d^{\perp}(\bm{x}^{(t)})\left(\bm{x}^* -\bm{x}^{(t)} \right)}_2 \leq \frac{2^{10D^2+1}D^{\frac{5}{2}}\norm{\nabla^3 p(\bm{x}^{(t)})}_{\max}}{\beta_0} \cdot \norm{\bm{x}^*-\bm{x}^{(t)}}_2^2
			\end{align*}
			for any $\bm{x}^{(t)}\in \text{Ball}_D(\bm{x}^*, r_5)$ with some radius $0 < r_5 < \rho$, where we recall that $\rho>0$ is the effective radius in condition (A2) under which the underlying density $p$ has an eigengap $\beta_0>0$ between the $d$-th and $(d+1)$-th eigenvalues of its Hessian matrix $\nabla\nabla p$. 
		\end{enumerate}
	\end{lemma}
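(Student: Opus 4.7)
The argument splits naturally along the two parts of the lemma. Part (a) is a consequence of (A5) and the classical length theory for self-contracted trajectories in Euclidean space, whereas part (b) combines (a) with the subspace constrained structure of the SCGA iteration and the Davis--Kahan theorem to extract quadratic estimates on the residual displacement.

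For part (a), under (A5) the tail $\{\bm{x}^{(s)}\}_{s\geq t}$ is a bounded self-contracted discrete trajectory in $\mathbb{R}^D$ whose limit is $\bm{x}^*$ (which exists as a consequence of (c) of Proposition~\ref{SCGA_conv} after $r_5$ is chosen within the convergence basin). Self-contractedness with $\bm{x}^*$ as limit gives $\|\bm{x}^{(s)}-\bm{x}^*\|_2\leq\|\bm{x}^{(t)}-\bm{x}^*\|_2$ for every $s\geq t$, so the diameter of the tail is at most $2\|\bm{x}^{(t)}-\bm{x}^*\|_2$. I would then invoke the dimension-dependent length estimate for bounded self-contracted trajectories established in the line of work~\cite{daniilidis2015rectifiability,gupta2021path}, which controls the total variation $\sum_s\|\bm{x}^{(s+1)}-\bm{x}^{(s)}\|_2$ by an explicit constant $C(D)$ times the diameter. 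A careful bookkeeping of the dyadic-cone decomposition underlying those arguments, adapted from smooth to discrete trajectories, produces a constant satisfying $2C(D) \leq 2^{10D^2}$, from which the claim follows.

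For part (b), the key observation is that each SCGA increment $\bm{x}^{(s+1)}-\bm{x}^{(s)}=\eta V_d(\bm{x}^{(s)})V_d(\bm{x}^{(s)})^T\nabla p(\bm{x}^{(s)})$ lies in the column space of $V_d(\bm{x}^{(s)})$, so $U_d^{\perp}(\bm{x}^{(t)})$ only captures the drift of the eigenspace as $s$ varies. I would first write
\begin{equation*}
\bm{x}^*-\bm{x}^{(t)} \;=\; \eta\sum_{s=t}^{\infty} V_d(\bm{x}^{(s)})V_d(\bm{x}^{(s)})^T\nabla p(\bm{x}^{(s)}),
\end{equation*}
apply $U_d^{\perp}(\bm{x}^{(t)})$, and use the identity $U_d^{\perp}(\bm{x}^{(s)})V_d(\bm{x}^{(s)})=\bm{0}$ to rewrite each summand as $[U_d^{\perp}(\bm{x}^{(t)})-U_d^{\perp}(\bm{x}^{(s)})]V_d(\bm{x}^{(s)})V_d(\bm{x}^{(s)})^T\nabla p(\bm{x}^{(s)})$. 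The Davis--Kahan inequality~\eqref{Davis_Kahan_ineq} under the eigengap assumption (A2) then bounds the projection difference by $\sqrt{2}D\|\nabla\nabla p(\bm{x}^{(t)})-\nabla\nabla p(\bm{x}^{(s)})\|_2/\beta_0$, and a mean value estimate combined with (A1) dominates the Hessian difference by $D^{3/2}\|\nabla^3 p\|_{\max}\|\bm{x}^{(t)}-\bm{x}^{(s)}\|_2$; multiplying these gives a factor proportional to $D^{5/2}\|\nabla^3 p\|_{\max}\|\bm{x}^{(t)}-\bm{x}^{(s)}\|_2/\beta_0$. Self-contractedness then produces $\|\bm{x}^{(t)}-\bm{x}^{(s)}\|_2\leq 2\|\bm{x}^{(t)}-\bm{x}^*\|_2$, while part (a) controls $\eta\sum_s\|V_d(\bm{x}^{(s)})^T\nabla p(\bm{x}^{(s)})\|_2=\sum_s\|\bm{x}^{(s+1)}-\bm{x}^{(s)}\|_2\leq 2^{10D^2}\|\bm{x}^{(t)}-\bm{x}^*\|_2$. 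Telescoping these three factors yields the claimed quadratic bound on $\|U_d^{\perp}(\bm{x}^{(t)})(\bm{x}^*-\bm{x}^{(t)})\|_2$ with the constant $2^{10D^2+1}D^{5/2}\|\nabla^3 p\|_{\max}/\beta_0$. The first inequality of (b) then follows from the same expansion by taking an inner product with $U_d^{\perp}(\bm{x}^{(t)})\nabla p(\bm{x}^{(t)})$ in place of a norm; Cauchy--Schwarz produces the extra factor $\|U_d^{\perp}(\bm{x}^{(t)})\nabla p(\bm{x}^{(t)})\|_2$ present in the stated bound.

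The main obstacle is part (a): deriving a usable, dimension-explicit length bound for discrete self-contracted sequences in $\mathbb{R}^D$ requires transplanting the geometric arguments of Manselli--Pucci and Daniilidis and collaborators from smooth curves to the discrete setting, and tracking the constant carefully enough to certify the exponent $10D^2$. By contrast, part (b) is a clean three-factor telescoping once (a) is available, and I do not anticipate any conceptual difficulty beyond routine Davis--Kahan and Taylor estimates under (A1-2).
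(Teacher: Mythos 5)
Your proposal is correct and mirrors the paper's proof: part (a) is likewise handled by citing Theorem 15 of \cite{gupta2021path} (the paper notes that the proof there only uses self-contractedness, so it transfers to the SCGA iteration), and part (b) uses the same telescoping of $\bm{x}^*-\bm{x}^{(t)}$ into SCGA increments, the orthogonality $U_d^{\perp}(\bm{x}^{(s)})V_d(\bm{x}^{(s)})=\bm{0}$ (your $[U_d^{\perp}(\bm{x}^{(t)})-U_d^{\perp}(\bm{x}^{(s)})]$ rewriting is algebraically equivalent to the paper's $U_d^{\perp}(\bm{x}^{(t)})[V_d(\bm{x}^{(s)})V_d(\bm{x}^{(s)})^T-V_d(\bm{x}^{(t)})V_d(\bm{x}^{(t)})^T]$), the Davis--Kahan inequality with a Taylor expansion of the Hessian, and the length bound from (a). The only cosmetic difference is that the paper absorbs the quadratic Taylor remainder and a stray factor of $\sqrt{2}$ by shrinking $r_5$ so that the stated constant $2^{10D^2+1}$ comes out exactly, whereas your triangle-inequality bound $\norm{\bm{x}^{(s)}-\bm{x}^{(t)}}_2\leq 2\norm{\bm{x}^*-\bm{x}^{(t)}}_2$ gives a marginally larger constant---a distinction without a difference for how the lemma is used.
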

	
	\begin{proof}[Proof of Lemma~\ref{quad_bound_prop}]
		(a) This result follows directly from Theorem 15 of \cite{gupta2021path}. Note that although their results are stated for the standard gradient descent path, the associated proof only utilizes the self-contractedness property of the iterative path. Thus, their proofs are applicable to our SCGA setting under condition (A5).
		
		\begin{figure}[t]
			\centering
			\includegraphics[width=0.8\linewidth]{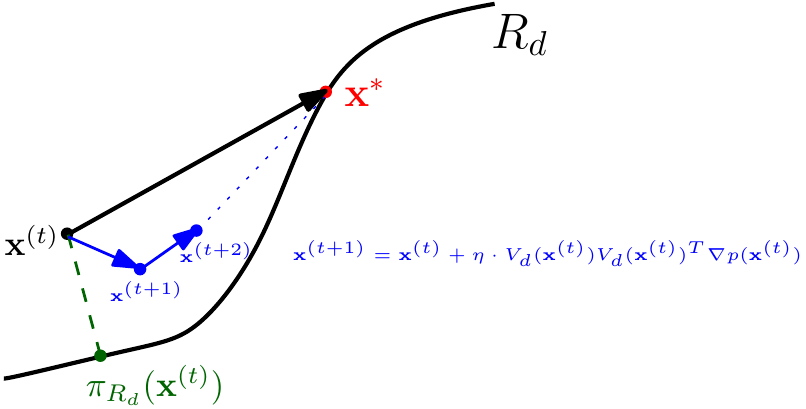}
			\caption{Decomposition of the vector $\bm{x}^*-\bm{x}^{(t)}$ into the summation $\sum\limits_{s=t}^{\infty} \left(\bm{x}^{(s+1)} -\bm{x}^{(s)} \right)$ of subspace constrained gradient ascent iterative vectors.}
			\label{fig:SCGA_decomp}
		\end{figure}
		
		\noindent (b) We first decompose the vector $\bm{x}^*-\bm{x}^{(t)}$ into an infinite sum of SCGA iterations $\bm{x}^{(s+1)} = \bm{x}^{(s)} + \eta \cdot V_d(\bm{x}^{(s)}) V_d(\bm{x}^{(s)})^T \nabla p(\bm{x}^{(s)})$ for $s\geq t$ and obtain that  
		\begin{align}
		\label{x_dist_residual}
		\begin{split}
		& U_d^{\perp}(\bm{x}^{(t)})\left(\bm{x}^* -\bm{x}^{(t)}\right) \\
		&= U_d^{\perp}(\bm{x}^{(t)}) \cdot \sum_{s=t}^{\infty} \left(\bm{x}^{(s+1)} -\bm{x}^{(s)} \right) \\
		& = \sum_{s=t}^{\infty} U_d^{\perp}(\bm{x}^{(t)}) \cdot\eta V_d(\bm{x}^{(s)}) V_d(\bm{x}^{(s)})^T \nabla p(\bm{x}^{(s)}) \\
		&\stackrel{\text{(i)}}{=} \sum_{s=t}^{\infty} U_d^{\perp}(\bm{x}^{(t)}) \cdot\eta \left[V_d(\bm{x}^{(s)}) V_d(\bm{x}^{(s)})^T - V_d(\bm{x}^{(t)}) V_d(\bm{x}^{(t)})^T \right] V_d(\bm{x}^{(s)}) V_d(\bm{x}^{(s)})^T \nabla p(\bm{x}^{(s)})
		\end{split}
		\end{align}
		for any $\bm{x}^{(t)}\in R_d\oplus \rho$, where we leverage the orthogonality between $U_d^{\perp}(\bm{x}^{(t)})$ and $V_d(\bm{x}^{(t)}) V_d(\bm{x}^{(t)})^T$ and the idempotence of $V_d(\bm{x}^{(s)}) V_d(\bm{x}^{(s)})^T$ for all $s\geq t$ in (ii). See also Figure~\ref{fig:SCGA_decomp} for a graphical illustration of the decomposition. 
		By Davis-Kahan theorem (Lemma~\ref{Davis_K} and \eqref{Davis_Kahan_ineq} here) and conditions (A1-2), we deduce that for all $s\geq t$,
		\begin{align*}
		&\norm{V_d(\bm{x}^{(s)}) V_d(\bm{x}^{(s)})^T - V_d(\bm{x}^{(t)}) V_d(\bm{x}^{(t)})^T}_2 \\
		& \leq \frac{\sqrt{2}D\norm{\nabla\nabla p(\bm{x}^{(s)}) - \nabla\nabla p(\bm{x}^{(t)})}_2}{\beta_0}\\
		&\stackrel{\text{(i)}}{\leq} \frac{\sqrt{2}D\norm{\nabla^3 p(\bm{x}^{(t)})}_2\norm{\bm{x}^{(s)}-\bm{x}^{(t)}}_2}{\beta_0} + \frac{\sqrt{2}D^3 \norm{p}_{\infty}^{(4)}}{\beta_0}\cdot \norm{\bm{x}^{(s)}-\bm{x}^{(t)}}_2^2\\
		&\stackrel{\text{(ii)}}{\leq} \frac{2D^{\frac{5}{2}}\norm{\nabla^3 p(\bm{x}^{(t)})}_{\max}\norm{\bm{x}^*-\bm{x}^{(t)}}_2}{\beta_0},
		\end{align*}
		where we use the Taylor's theorem in (i) as well as apply the self-contractedness condition (A5) and possibly shrink the radius $r_5 >0$ so that $\norm{\bm{x}^*-\bm{x}^{(t)}}_2 \leq \frac{(2-\sqrt{2}) \norm{\nabla^3 p(\bm{x}^{(t)})}_{\max}}{\sqrt{2D} \norm{p}_{\infty}^{(4)}}$ in (ii). Hence, by \eqref{x_dist_residual} and the fact that $\norm{U_d^{\perp}(\bm{x}^{(t)})}_2=1$, we obtain that
		\begin{align*}
		&\norm{U_d^{\perp}(\bm{x}^{(t)})\left(\bm{x}^* -\bm{x}^{(t)} \right)}_2 \\
		& \leq \sup_{s\geq t} \norm{V_d(\bm{x}^{(s)}) V_d(\bm{x}^{(s)})^T - V_d(\bm{x}^{(t)}) V_d(\bm{x}^{(t)})^T}_2 \cdot \sum_{s=t}^{\infty} \norm{\eta V_d(\bm{x}^{(s)}) V_d(\bm{x}^{(s)})^T \nabla p(\bm{x}^{(s)})}_2\\
		&\leq \frac{2D^{\frac{5}{2}}\norm{\nabla^3 p(\bm{x}^{(t)})}_{\max}\norm{\bm{x}^*-\bm{x}^{(t)}}_2}{\beta_0} \cdot 2^{10D^2} \norm{\bm{x}^{(t)} - \bm{x}^*}_2\\
		&= \frac{2^{10D^2+1}D^{\frac{5}{2}}\norm{\nabla^3 p(\bm{x}^{(t)})}_{\max}\norm{\bm{x}^*-\bm{x}^{(t)}}_2^2}{\beta_0},
		\end{align*}
		implying the second bound in condition (A4) with $\beta_2= \frac{2^{10D^2+1}D^{\frac{5}{2}}\norm{p}_{\infty}^{(3)}}{\beta_0}$. In addition, 
		\begin{align*}
		&\nabla p(\bm{x}^{(t)})^T U_d^{\perp}(\bm{x}^{(t)}) (\bm{x}^*-\bm{x}^{(t)}) \\
		&\leq \norm{U_d^{\perp}(\bm{x}^{(t)}) \nabla p(\bm{x}^{(t)})}_2 \norm{U_d^{\perp}(\bm{x}^{(t)})\left(\bm{x}^* -\bm{x}^{(t)} \right)}_2\\
		&\leq \frac{2^{10D^2+1}D^{\frac{5}{2}}\norm{U_d^{\perp}(\bm{x}^{(t)}) \nabla p(\bm{x}^{(t)})}_2 \norm{\nabla^3 p(\bm{x}^{(t)})}_{\max}\norm{\bm{x}^*-\bm{x}^{(t)}}_2^2}{\beta_0}.
		\end{align*}
		The results follow.
	\end{proof}
	
	According to (b) of Lemma~\ref{quad_bound_prop}, condition (A4) will hold with $\beta_2= \frac{2^{10D^2+1}D^{\frac{5}{2}}\norm{p}_{\infty}^{(3)}}{\beta_0}$ whenever 
	\begin{equation}
	\label{cond_A3_mod}
	2^{10D^2+1}D^{\frac{5}{2}}\norm{U_d^{\perp}(\bm{x}^{(t)}) \nabla p(\bm{x}^{(t)})}_2 \norm{\nabla^3 p(\bm{x}^{(t)})}_{\max} \leq \frac{\beta_0^2}{4}.
	\end{equation}
	The choice of $\beta_2 >0$ is a valid constant under the differentiability condition (A1). More importantly, \eqref{cond_A3_mod} is essentially the same assumption as the first inequality of condition (A3). Compared to the corresponding condition in (A3), the upper bound in \eqref{cond_A3_mod} for $\norm{U_d^{\perp}(\bm{x}^{(t)}) \nabla p(\bm{x}^{(t)})}_2 \norm{\nabla^3 p(\bm{x}^{(t)})}_{\max}$ around the ridge $R_d$ is only shrunk by a dimension-dependent factor $\frac{1}{D \cdot 2^{10D^2+2}}$. 
	As condition (A3) and \eqref{cond_A3_mod} are local, this adjustment does not induce too much extra strictness on the underlying density $p$.
	
	\subsection{Subspace Constrained Polyak-{\L}ojasiewicz Inequality Assumption}
	
	We have demonstrated in Appendix~\ref{App:self-cont} that the crucial condition (A4) is valid under the self-contractedness assumption on the SCGA sequence $\big\{\bm{x}^{(t)} \big\}_{t=0}^{\infty}$. Consequently, the linear convergence of the SCGA algorithm can be established by slightly modifying the common assumptions (A1-3) in ridge estimation. Nevertheless, the self-contractedness property of the SCGA sequence $\big\{\bm{x}^{(t)} \big\}_{t=0}^{\infty}$ does not always hold in practice, and it may only be implied by the subspace constrained concavity condition (A6) as proved in Proposition~\ref{SCC_self_contract}. 
	
	Given the fact that the underlying density function $p$ or its estimator $\hat{p}_n$ may not satisfy the subspace constrained concavity assumption in many practical applications of SCGA and SCMS algorithms, we present another approach to deduce condition (A4) based on the well-known Polyak-{\L}ojasiewicz inequality \citep{Polyak1963,Lojasiewicz1963}. Given any SCGA sequence $\big\{\bm{x}^{(t)} \big\}_{t=0}^{\infty}$ with limiting point $\bm{x}^* \in R_d$ and step size $0<\eta < \frac{2}{D\norm{p}_{\infty}^{(2)}}$, we consider the following condition:
	\begin{itemize}
		\item {\bf (A7)} (\emph{Subspace Constrained Polyak-{\L}ojasiewicz Inequality}) For all $t\geq 0$, there exists a constant $\beta_3 >0$ such that
		$$\frac{1}{2}\norm{V_d(\bm{x}^{(t)})^T \nabla p(\bm{x}^{(t)})}_2^2 \geq \beta_3 \left[p(\bm{x}^*) - p(\bm{x}^{(t)}) \right].$$
	\end{itemize}
	Similar to the standard Polyak-{\L}ojasiewicz inequality, there exist some objective functions that satisfy the subspace constrained Polyak-{\L}ojasiewicz inequality but fail to be concave in the subspace constrained sense as in condition (A6); see \cite{charles2018stability,fazel2018global} and Equation (36) in \cite{YC2020}. From this aspect, condition (A7) incorporates some extra SCGA sequences satisfying condition (A4) and converging linearly to the ridge $R_d$. However, as the subspace constrained Polyak-{\L}ojasiewicz inequality does not imply condition (A5) or (A6), it should not be regarded as a more general condition. Furthermore, unlike the standard gradient ascent/descent method (Theorem 2 in \citealt{LC_PL_ineq2016}), the error bound condition (\emph{i.e.}, Equation \eqref{error_bound} here) does not imply the subspace constrained Polyak-{\L}ojasiewicz inequality, indicating a challenge in validating condition (A7) in practice. 
	
	Despite these disadvantages, the subspace constrained Polyak-{\L}ojasiewicz inequality condition does give rise to a concise proof for the linear convergence of the objective function value $\big\{p(\bm{x}^{(t)}) \big\}_{t=0}^{\infty}$ along the SCGA sequence $\big\{\bm{x}^{(t)}\big\}_{t=0}^{\infty}$.
	
	\begin{proposition}
		\label{LC_func_SCPL}
		Assume conditions (A1) and (A7). Then, for any SCGA sequence $\big\{\bm{x}^{(t)} \big\}_{t=0}^{\infty}$ with step size $0< \eta <\min\left\{\frac{1}{D\norm{p}_{\infty}^{(2)}}, \frac{1}{\beta_3} \right\}$, we have that
		$$p(\bm{x}^*) - p(\bm{x}^{(t)}) \leq \big[p(\bm{x}^*) - p(\bm{x}^{(0)}) \big] \cdot (1-\eta\beta_3)^t.$$
	\end{proposition}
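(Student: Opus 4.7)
The plan is to adapt the classical Polyak-{\L}ojasiewicz argument to the subspace constrained setting, combining the smoothness-based ascent lemma already established in the proof of Proposition~\ref{SCGA_conv} with the lower bound supplied by condition (A7). First I would invoke \emph{Fact 1} from the proof of Proposition~\ref{SCGA_conv}, namely that condition (A1) makes $p$ a $D\norm{p}_{\infty}^{(2)}$-smooth function, and recycle the inequality \eqref{SCGA_ascending}, which yields
$$p(\bm{x}^{(t+1)}) - p(\bm{x}^{(t)}) \;\geq\; \eta\left(1 - \tfrac{D\norm{p}_{\infty}^{(2)}\eta}{2}\right) \norm{V_d(\bm{x}^{(t)})^T \nabla p(\bm{x}^{(t)})}_2^2.$$
The restriction $\eta < \frac{1}{D\norm{p}_{\infty}^{(2)}}$ guarantees $1 - \tfrac{D\norm{p}_{\infty}^{(2)}\eta}{2} > \tfrac{1}{2}$, so the right-hand side is at least $\tfrac{\eta}{2}\norm{V_d(\bm{x}^{(t)})^T \nabla p(\bm{x}^{(t)})}_2^2$.

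Next I would plug in condition (A7), the subspace constrained Polyak-{\L}ojasiewicz inequality, to lower-bound the squared principal gradient norm by $2\beta_3\left[p(\bm{x}^*) - p(\bm{x}^{(t)})\right]$. Chaining these two bounds gives the one-step contraction
$$p(\bm{x}^{(t+1)}) - p(\bm{x}^{(t)}) \;\geq\; \eta\beta_3 \left[p(\bm{x}^*) - p(\bm{x}^{(t)})\right],$$
which rearranges to
$$p(\bm{x}^*) - p(\bm{x}^{(t+1)}) \;\leq\; \left(1 - \eta\beta_3\right)\left[p(\bm{x}^*) - p(\bm{x}^{(t)})\right].$$
The constraint $\eta < \frac{1}{\beta_3}$ ensures the contraction factor lies in $(0,1)$, so iterating $t$ times and telescoping yields the stated geometric decay.

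I do not expect a serious obstacle here: the ascending property along the SCGA trajectory has already been handled inside Proposition~\ref{SCGA_conv}, and condition (A7) is precisely the hypothesis designed to mimic the Polyak-{\L}ojasiewicz mechanism in the subspace constrained context. The only delicate point is confirming that the combined step-size requirement $0 < \eta < \min\{1/(D\norm{p}_{\infty}^{(2)}),\, 1/\beta_3\}$ simultaneously produces the factor $\tfrac{1}{2}$ in the ascent lemma and keeps $1-\eta\beta_3$ strictly positive so that induction is valid; both are immediate from the stated upper bounds on $\eta$. Once the one-step inequality is in place, induction on $t$ closes the argument in a single line.
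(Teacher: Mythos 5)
Your proposal is correct and follows essentially the same route as the paper's own proof: it combines the ascent inequality \eqref{SCGA_ascending} with condition (A7) to get the one-step contraction $p(\bm{x}^*) - p(\bm{x}^{(t+1)}) \leq (1-\eta\beta_3)[p(\bm{x}^*) - p(\bm{x}^{(t)})]$ and then telescopes. The only cosmetic difference is that you make the intermediate factor of $\tfrac{1}{2}$ explicit before invoking (A7), whereas the paper chains the two inequalities in a single display.
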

	
	\begin{proof}[Proof of Proposition~\ref{LC_func_SCPL}]
		The proof is inspired by Theorem 1 in \cite{LC_PL_ineq2016}. From \eqref{SCGA_ascending} and condition (A7), we know that
		\begin{align*}
		p(\bm{x}^{(t+1)}) - p(\bm{x}^{(t)}) &\geq \eta \left(1- \frac{D\norm{p}_{\infty}^{(2)}\eta}{2}\right) \norm{V_d(\bm{x}^{(t)})^T \nabla p(\bm{x}^{(t)})}_2^2 \geq \eta\beta_3 \left[p(\bm{x}^*) - p(\bm{x}^{(t)}) \right]
		\end{align*}
		for all $t\geq 0$ when $0< \eta <\frac{1}{D\norm{p}_{\infty}^{(2)}}$. By some rearrangements, we conclude that
		$$p(\bm{x}^*) - p(\bm{x}^{(t+1)})\leq (1-\eta\beta_3) \left[p(\bm{x}^*) - p(\bm{x}^{(t)}) \right].$$
		The final display follows from telescoping.
	\end{proof}
	
	More importantly, the subspace constrained Polyak-{\L}ojasiewicz inequality controls the total length of the SCGA path to be of the linear order and implicates the quadratic behaviors of residual vectors as required by condition (A4).
	
	\begin{lemma}
		\label{quad_bound_prop_PL}
		Assume conditions (A1) and (A7) throughout the lemma.
		\begin{enumerate}[label=(\alph*)]
			\item The total length of the SCGA trajectory is of the linear order, \emph{i.e.}, 
			$$\sum_{s=t}^{\infty} \norm{\bm{x}^{(s+1)} -\bm{x}^{(s)}}_2 \leq \frac{4D\norm{p}_{\infty}^{(2)}}{\beta_3} \norm{\bm{x}^{(t)} - \bm{x}^*}_2 \quad \text{ for any } t\geq 0.$$
			
			\item We further assume condition (A2). Then,
			\begin{align*}
			&\nabla p(\bm{x}^{(t)})^T U_d^{\perp}(\bm{x}^{(t)}) (\bm{x}^*-\bm{x}^{(t)}) \\
			&\leq \frac{32 D^{\frac{9}{2}} \left(\norm{p}_{\infty}^{(2)}\right)^2\norm{U_d^{\perp}(\bm{x}^{(t)}) \nabla p(\bm{x}^{(t)})}_2 \norm{\nabla^3 p(\bm{x}^{(t)})}_{\max} \norm{\bm{x}^*-\bm{x}^{(t)}}_2^2}{\beta_0\beta_3^2},\\
			&\text{ and }\\
			&\norm{U_d^{\perp}(\bm{x}^{(t)}) (\bm{x}^*-\bm{x}^{(t)})}_2 \leq \frac{32 D^{\frac{9}{2}} \left(\norm{p}_{\infty}^{(2)}\right)^2\norm{\nabla^3 p(\bm{x}^{(t)})}_{\max}\norm{\bm{x}^*-\bm{x}^{(t)}}_2^2}{\beta_0\beta_3^2}
			\end{align*}
			for any $\bm{x}^{(t)}\in \text{Ball}_D(\bm{x}^*, r_6)$ with some radius $0 < r_6 < \rho$, where we recall that $\rho>0$ is the effective radius in condition (A2) under which the underlying density $p$ has an eigengap $\beta_0>0$ between the $d$-th and $(d+1)$-th eigenvalues of its Hessian matrix $\nabla\nabla p$. 
		\end{enumerate}
	\end{lemma}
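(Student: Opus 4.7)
The plan is to mirror the two-stage structure of Lemma~\ref{quad_bound_prop} but replace the self-contractedness bookkeeping with the telescoping estimates that are standard in the Polyak--{\L}ojasiewicz analysis of gradient ascent.

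For part (a), I would first write $D_s := p(\bm{x}^*) - p(\bm{x}^{(s)}) \ge 0$ (nonnegativity is the ascent guarantee from (a) of Proposition~\ref{SCGA_conv}) and combine two one-step estimates: the ascent inequality \eqref{SCGA_ascending} gives $D_s - D_{s+1} \ge \eta(1-\eta L/2)\norm{V_d(\bm{x}^{(s)})^T\nabla p(\bm{x}^{(s)})}_2^2$ with $L = D\norm{p}_{\infty}^{(2)}$, and the subspace constrained PL inequality (A7) gives $\norm{V_d(\bm{x}^{(s)})^T\nabla p(\bm{x}^{(s)})}_2^2 \ge 2\beta_3 D_s$. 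Using $\sqrt{D_s}-\sqrt{D_{s+1}} \ge (D_s-D_{s+1})/(2\sqrt{D_s})$ and the PL-based lower bound on $\sqrt{D_s}$, one cancels one copy of the gradient norm to obtain the per-step estimate
\[
\norm{\bm{x}^{(s+1)}-\bm{x}^{(s)}}_2 = \eta\norm{V_d(\bm{x}^{(s)})^T\nabla p(\bm{x}^{(s)})}_2 \leq \frac{2}{(1-\eta L/2)\sqrt{2\beta_3}}\bigl(\sqrt{D_s}-\sqrt{D_{s+1}}\bigr).
\]
Telescoping over $s \ge t$ and using $D_s \to 0$ (which is Proposition~\ref{LC_func_SCPL}) yields $\sum_{s=t}^\infty\norm{\bm{x}^{(s+1)}-\bm{x}^{(s)}}_2 \leq \sqrt{2D_t}/[(1-\eta L/2)\sqrt{\beta_3}]$. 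It then remains to convert $\sqrt{D_t}$ into the linear factor $\norm{\bm{x}^{(t)}-\bm{x}^*}_2$; the choice of step size $\eta \leq 1/L$ reduces $1-\eta L/2$ to at most $\tfrac{1}{2}$, which accounts for the numerical factor in the advertised constant $\frac{4L}{\beta_3}$.

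For part (b), the skeleton is the telescoping-in-gradients identity already used in the proof of Lemma~\ref{quad_bound_prop}(b): expanding $\bm{x}^* - \bm{x}^{(t)} = \sum_{s=t}^\infty(\bm{x}^{(s+1)}-\bm{x}^{(s)})$ and exploiting the orthogonality $U_d^\perp(\bm{x}^{(t)})V_d(\bm{x}^{(t)})=0$ produces
\[
U_d^\perp(\bm{x}^{(t)})(\bm{x}^*-\bm{x}^{(t)}) = \sum_{s=t}^\infty \eta\, U_d^\perp(\bm{x}^{(t)})\bigl[V_d(\bm{x}^{(s)})V_d(\bm{x}^{(s)})^T - V_d(\bm{x}^{(t)})V_d(\bm{x}^{(t)})^T\bigr]V_d(\bm{x}^{(s)})V_d(\bm{x}^{(s)})^T\nabla p(\bm{x}^{(s)}).
\]
Applying the Davis--Kahan bound \eqref{Davis_Kahan_ineq} under the eigengap condition (A2) and Taylor expanding the Hessian gives $\norm{V_d(\bm{x}^{(s)})V_d(\bm{x}^{(s)})^T-V_d(\bm{x}^{(t)})V_d(\bm{x}^{(t)})^T}_2 \leq \frac{\sqrt{2}D^{5/2}\norm{\nabla^3 p(\bm{x}^{(t)})}_{\max}}{\beta_0}\norm{\bm{x}^{(s)}-\bm{x}^{(t)}}_2 + (\text{higher order})$, where the higher-order remainder is absorbed by shrinking the radius $r_6$. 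The key replacement for the self-contractedness appeal is exactly part (a): the triangle inequality $\norm{\bm{x}^{(s)}-\bm{x}^{(t)}}_2 \leq \sum_{u\ge t}\norm{\bm{x}^{(u+1)}-\bm{x}^{(u)}}_2$ together with part (a) yields $\norm{\bm{x}^{(s)}-\bm{x}^{(t)}}_2 \leq \frac{4L}{\beta_3}\norm{\bm{x}^{(t)}-\bm{x}^*}_2$ for every $s\ge t$. Pulling this uniform bound out of the sum and controlling the remaining series $\sum_s \eta\norm{V_d(\bm{x}^{(s)})V_d(\bm{x}^{(s)})^T\nabla p(\bm{x}^{(s)})}_2$ by part (a) a second time supplies the two factors of $\frac{4L}{\beta_3}$, which multiply with the Davis--Kahan factor to give the displayed constant $\frac{32D^{9/2}(\norm{p}_{\infty}^{(2)})^2\norm{\nabla^3 p(\bm{x}^{(t)})}_{\max}}{\beta_0\beta_3^2}$. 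The first inequality of (b) follows from Cauchy--Schwarz against $\norm{U_d^\perp(\bm{x}^{(t)})\nabla p(\bm{x}^{(t)})}_2$, exactly as in Lemma~\ref{quad_bound_prop}(b).

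The main obstacle is the bootstrap $D_t = O(\norm{\bm{x}^{(t)}-\bm{x}^*}_2^2)$ needed to close part (a). Because $\bm{x}^*\in R_d$ is only a ridge point and not a local maximum of $p$, the total gradient $\nabla p(\bm{x}^*)$ lies along the tangent space of $R_d$ and is generically nonzero, so the naive $L$-smooth Taylor expansion $D_t \leq -\nabla p(\bm{x}^*)^T(\bm{x}^{(t)}-\bm{x}^*) + \tfrac{L}{2}\norm{\bm{x}^{(t)}-\bm{x}^*}_2^2$ carries a linear term $\nabla p(\bm{x}^*)^T(\bm{x}^{(t)}-\bm{x}^*)$ that must be bounded by a quadratic remainder. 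I expect to dispatch this by invoking the PL inequality once more and writing $\sqrt{2\beta_3 D_t}\leq \norm{V_d(\bm{x}^{(t)})^T\nabla p(\bm{x}^{(t)})}_2$; the right-hand side is then shown to be $O(\norm{\bm{x}^{(t)}-\bm{x}^*}_2)$ by splitting $V_d(\bm{x}^{(t)})^T\nabla p(\bm{x}^{(t)})-V_d(\bm{x}^*)^T\nabla p(\bm{x}^*)$ into a smoothness piece and a Davis--Kahan piece that perturbs $V_d$ near $\bm{x}^*$. The latter leans on condition (A2), which is why the eigengap assumption naturally appears once part (b) is under consideration; I will present this bootstrap at the interface between the two parts so that the hypothesis of (b) supplies precisely what part (a) needs to complete its quadratic estimate of $D_t$.
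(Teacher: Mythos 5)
Your proof is correct and follows essentially the same route as the paper: telescoping $\sqrt{p(\bm{x}^*)-p(\bm{x}^{(s)})}$ under the subspace constrained PL inequality for part (a), then the same residual-vector decomposition, Davis--Kahan estimate, and two appeals to part (a) for part (b), landing on the same constants. You also correctly spotted a subtlety the paper glosses over: converting $\sqrt{p(\bm{x}^*)-p(\bm{x}^{(t)})}$ into $\norm{\bm{x}^{(t)}-\bm{x}^*}_2$ at the end of part (a) cannot be done under (A1) and (A7) alone --- the paper's bound on $\norm{M(\bm{x})}_2$ quietly uses (A2) and (A3) via Lemma~\ref{normal_reach_prop}(d), while your Davis--Kahan variant needs (A2) --- so staging that bootstrap at the interface with part (b), as you propose, is the right repair (and arguably tidier, since it avoids the appeal to (A3)).
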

	
	\begin{proof}[Proof of Lemma~\ref{quad_bound_prop_PL}]
		(a) This part of the proof is inspired by the arguments in Theorem 9 of \cite{gupta2021path}. Based on the proof of (a) in Proposition~\ref{SCGA_conv} under condition (A1), we know from \eqref{SCGA_ascending} that 
		\begin{align*}
		p(\bm{x}^{(t+1)}) - p(\bm{x}^{(t)}) &\geq \eta\left(1-\frac{D\norm{p}_{\infty}^{(2)}\eta}{2} \right) \norm{V_d(\bm{x}^{(t)})^T \nabla p(\bm{x}^{(t)})}_2^2 \\
		&\geq \frac{\eta}{2} \norm{V_d(\bm{x}^{(t)})^T \nabla p(\bm{x}^{(t)})}_2^2
		\end{align*}
		when $0<\eta < \frac{1}{D\norm{p}_{\infty}^{(2)}}$. Using this inequality and condition (A7), we derive that
		\begin{align*}
		\sqrt{p(\bm{x}^*) - p(\bm{x}^{(t+1)})} &= \sqrt{p(\bm{x}^*) - p\left(\bm{x}^{(t)} + \eta \cdot V_d(\bm{x}^{(t)}) V_d(\bm{x}^{(t)})^T \nabla p(\bm{x}^{(t)}) \right)}\\
		&\leq \sqrt{p(\bm{x}^*) -p(\bm{x}^{(t)}) - \frac{\eta}{2} \norm{V_d(\bm{x}^{(t)})^T \nabla p(\bm{x}^{(t)})}_2^2} \\
		&\stackrel{\text{(i)}}{\leq} \sqrt{p(\bm{x}^*) -p(\bm{x}^{(t)})} - \frac{\eta \norm{V_d(\bm{x}^{(t)})^T \nabla p(\bm{x}^{(t)})}_2^2}{4\sqrt{p(\bm{x}^*) -p(\bm{x}^{(t)})}} \\
		&\stackrel{\text{(ii)}}{\leq} \sqrt{p(\bm{x}^*) -p(\bm{x}^{(t)})} - \frac{\eta\sqrt{2\beta_3}}{4} \norm{V_d(\bm{x}^{(t)})^T \nabla p(\bm{x}^{(t)})}_2, 
		\end{align*}
		where we use the inequality $\sqrt{a-b} \leq \sqrt{a} - \frac{b}{2\sqrt{a}}$ to obtain (i) and apply condition (A7) in inequality (ii). Since $\norm{\bm{x}^{(t+1)} - \bm{x}^{(t)}}_2 = \eta \norm{V_d(\bm{x}^{(t)})^T \nabla p(\bm{x}^{(t)})}_2$, some rearrangement of the above inequality suggests that
		$$\sqrt{\frac{\beta_3}{8}} \norm{\bm{x}^{(t+1)} - \bm{x}^{(t)}}_2 \leq \sqrt{p(\bm{x}^*) -p(\bm{x}^{(t)})} - \sqrt{p(\bm{x}^*) -p(\bm{x}^{(t+1)})}.$$
		Therefore, 
		\begin{align*}
		\sum_{s=t}^{\infty} \norm{\bm{x}^{(s+1)} - \bm{x}^{(s)}}_2 &\leq \sqrt{\frac{8}{\beta_3}} \sum_{s=t}^{\infty} \left[\sqrt{p(\bm{x}^*) -p(\bm{x}^{(s)})} - \sqrt{p(\bm{x}^*) -p(\bm{x}^{(s+1)})} \right]\\
		&= \sqrt{\frac{8}{\beta_3}} \cdot \sqrt{p(\bm{x}^*) -p(\bm{x}^{(t)})}\\
		&\stackrel{\text{(i)}}{\leq} \sqrt{\frac{8}{\beta_3}} \cdot \sqrt{\frac{1}{2\beta_3}} \cdot \norm{V_d(\bm{x}^{(t)})^T \nabla p(\bm{x}^{(t)})}_2 \quad \text{ by condition (A4)}\\
		&=\frac{2}{\beta_3} \norm{V_d(\bm{x}^{(t)})^T \nabla p(\bm{x}^{(t)}) - \underbrace{V_d(\bm{x}^*)^T \nabla p(\bm{x}^*)}_{=0}}_2\\
		&\stackrel{\text{(ii)}}{\leq} \frac{2}{\beta_3} \norm{\sup_{\epsilon \in [0,1]} M(\bm{x}^*+\epsilon(\bm{x}^{(t)}-\bm{x}^*))^T \left(\bm{x}^{(t)} -\bm{x}^* \right)}_2\\
		&\stackrel{\text{(iii)}}{\leq} \frac{2}{\beta_3} \left(D\norm{p}_{\infty}^{(2)} +\beta_0-\beta_1 \right) \norm{\bm{x}^*-\bm{x}^{(t)}}_2 \\
		&\leq \frac{4D\norm{p}_{\infty}^{(2)}}{\beta_3} \norm{\bm{x}^*-\bm{x}^{(t)}}_2,
		\end{align*}
		where we leverage condition (A7) again in (i). In addition, to obtain inequalities (ii) and (iii), we recall from the proof of (d) for Lemma~\ref{normal_reach_prop} that $M(\bm{x}) = \nabla\left[V_d(\bm{x})^T \nabla p(\bm{x})\right]^T = V_d(\bm{x}) \Lambda_0(\bm{x}) + \sum_{i=1}^d T_i(\bm{x}) V_d(\bm{x}) \Lambda_i(\bm{x})$, in which the singular values of $V_d(\bm{x}) \Lambda_0(\bm{x})$ is bounded by $D\norm{p}_{\infty}^{(2)}$ and the singular values of $\sum_{i=1}^d T_i(\bm{x}) V_d(\bm{x}) \Lambda_i(\bm{x})$ is bounded by $\beta_0-\beta_1 \leq D\norm{p}_{\infty}^{(2)}$. The result thus follows.\\
		
		\noindent (b) This part of the proof is analogous to our arguments in (b) of Lemma~\ref{quad_bound_prop}, except that the SCGA sequence $\big\{\bm{x}^{(t)}\big\}_{t=0}^{\infty}$ is no longer self-contracted. For the completeness, we still repeat some arguments and highlight the differences here. By Davis-Kahan theorem (Lemma~\ref{Davis_K} and \eqref{Davis_Kahan_ineq} here) and conditions (A1-2), we have that for all $s\geq t$,
		\begin{align*}
		&\norm{V_d(\bm{x}^{(s)}) V_d(\bm{x}^{(s)})^T - V_d(\bm{x}^{(t)}) V_d(\bm{x}^{(t)})^T}_2 \\
		& \leq \frac{\sqrt{2}D \norm{\nabla\nabla p(\bm{x}^{(s)}) - \nabla\nabla p(\bm{x}^{(t)})}_2}{\beta_0}\\
		&\leq \frac{\sqrt{2}D \norm{\nabla^3 p(\bm{x}^{(t)})}_2\norm{\bm{x}^{(s)} - \bm{x}^{(t)}}_2}{\beta_0} + \frac{\sqrt{2} D^3 \norm{p}_{\infty}^{(4)}}{\beta_0} \norm{\bm{x}^{(s)} - \bm{x}^{(t)}}_2^2\\
		&\stackrel{\text{(ii)}}{\leq}  \frac{4\sqrt{2}D^2 \norm{p}_{\infty}^{(2)}\norm{\nabla^3 p(\bm{x}^{(t)})}_2 \norm{\bm{x}^* - \bm{x}^{(t)}}_2}{\beta_0\beta_3} + \frac{16\sqrt{2}D^5 \norm{p}_{\infty}^{(4)} \left(\norm{p}_{\infty}^{(2)} \right)^2 }{\beta_0\beta_3^2} \norm{\bm{x}^{(t)}-\bm{x}^*}_2^2\\
		&\stackrel{\text{(ii)}}{\leq} \frac{8D^{\frac{7}{2}} \norm{p}_{\infty}^{(2)}\norm{\nabla^3 p(\bm{x}^{(t)})}_{\max} \norm{\bm{x}^* - \bm{x}^{(t)}}_2}{\beta_0\beta_3},
		\end{align*}
		where we possibly shrink the radius $r_6 >0$ so that $\norm{\bm{x}^{(t)} -\bm{x}^*}_2 \leq \frac{(\sqrt{2} -1) \beta_3 \norm{\nabla^3 p(\bm{x}^{(t)})}_{\max}}{4D^{\frac{3}{2}} \norm{p}_{\infty}^{(2)} \norm{p}_{\infty}^{(4)}}$ to obtain inequality (ii). Notice also that, since $\norm{\bm{x}^{(s)} -\bm{x}^{(t)}}_2 \leq \norm{\bm{x}^* -\bm{x}^{(t)}}_2$ may not hold without the self-contractedness property, we use a looser bound 
		$$\norm{\bm{x}^{(s)} -\bm{x}^{(t)}}_2 \leq \sum_{s=t}^{\infty} \norm{\bm{x}^{(s+1)} - \bm{x}^{(s)}}_2 \leq \frac{4D\norm{p}_{\infty}^{(2)}}{\beta_3}\norm{\bm{x}^{(t)} -\bm{x}^*}_2$$ 
		from (a) to derive inequality (i). Therefore, by \eqref{x_dist_residual} and the fact that 
		$$\norm{\eta V_d(\bm{x}^{(s)}) V_d(\bm{x}^{(s)})^T \nabla p(\bm{x}^{(s)})}_2 =\norm{\bm{x}^{(s+1)}-\bm{x}^{(s)}}_2,$$ 
		we obtain that
		\begin{align*}
		&\norm{U_d^{\perp}(\bm{x}^{(t)})\left(\bm{x}^* -\bm{x}^{(t)} \right)}_2 \\
		& \leq \sup_{s\geq t} \norm{V_d(\bm{x}^{(s)}) V_d(\bm{x}^{(s)})^T - V_d(\bm{x}^{(t)}) V_d(\bm{x}^{(t)})^T}_2 \cdot \sum_{s=t}^{\infty} \norm{\eta V_d(\bm{x}^{(s)}) V_d(\bm{x}^{(s)})^T \nabla p(\bm{x}^{(s)})}_2\\
		&\leq \frac{8D^{\frac{7}{2}} \norm{p}_{\infty}^{(2)}\norm{\nabla^3 p(\bm{x}^{(t)})}_{\max} \norm{\bm{x}^{(s)} - \bm{x}^{(t)}}_2}{\beta_0\beta_3} \cdot \frac{4D\norm{p}_{\infty}^{(2)}}{\beta_3} \norm{\bm{x}^{(t)} - \bm{x}^*}_2\\
		&= \frac{32 D^{\frac{9}{2}} \left(\norm{p}_{\infty}^{(2)}\right)^2\norm{\nabla^3 p(\bm{x}^{(t)})}_{\max}\norm{\bm{x}^*-\bm{x}^{(t)}}_2^2}{\beta_0\beta_3^2},
		\end{align*}
		which implies the second bound in condition (A4) with $\beta_2=\frac{32 D^{\frac{9}{2}} \left(\norm{p}_{\infty}^{(2)}\right)^2\norm{p}_{\infty}^{(3)}}{\beta_0\beta_3^2}$. Finally, 
		\begin{align*}
		&\nabla p(\bm{x}^{(t)})^T U_d^{\perp}(\bm{x}^{(t)}) (\bm{x}^*-\bm{x}^{(t)}) \\
		&\leq \norm{U_d^{\perp}(\bm{x}^{(t)}) \nabla p(\bm{x}^{(t)})}_2 \norm{U_d^{\perp}(\bm{x}^{(t)})\left(\bm{x}^* -\bm{x}^{(t)} \right)}_2\\
		&\leq \frac{32 D^{\frac{9}{2}} \left(\norm{p}_{\infty}^{(2)}\right)^2\norm{U_d^{\perp}(\bm{x}^{(t)}) \nabla p(\bm{x}^{(t)})}_2 \norm{\nabla^3 p(\bm{x}^{(t)})}_{\max}\norm{\bm{x}^*-\bm{x}^{(t)}}_2^2}{\beta_0\beta_3^2}.
		\end{align*}
		The results follow.
	\end{proof}
	
	The results in (b) of Lemma~\ref{quad_bound_prop_PL} also imply condition (A4) with $\beta_2 =\frac{32 D^{\frac{9}{2}} \left(\norm{p}_{\infty}^{(2)}\right)^2\norm{p}_{\infty}^{(3)}}{\beta_0\beta_3^2}$ whenever
	\begin{equation}
	\label{cond_A3_mod2}
	\frac{32 D^{\frac{9}{2}} \left(\norm{p}_{\infty}^{(2)}\right)^2\norm{U_d^{\perp}(\bm{x}^{(t)}) \nabla p(\bm{x}^{(t)})}_2 \norm{\nabla^3 p(\bm{x}^{(t)})}_{\max}}{\beta_3^2} \leq \frac{\beta_0^2}{4}.
	\end{equation}
	Once again, the choice of $\beta_2$ is feasible under condition (A1) and the upper bound \eqref{cond_A3_mod2} can be viewed as a variant of the first inequality in condition (A3). From this perspective, the subspace constrained Polyak-{\L}ojasiewicz inequality (A7) also leads to an alternative assumptions for condition (A4) and the linear convergence of the SCGA algorithm.
	
	\begin{remark}
		\label{quad_bound_prop_Dir}
		Note that the results in Proposition~\ref{quad_bound_prop_PL} can be generalized to the directional or arbitrary manifold cases under conditions (\underline{A1-3}). First, the subspace constrained Polyak-{\L}ojasiewicz inequality for the SCGA sequence $\big\{\underline{\bm{x}}^{(t)} \big\}_{t=0}^{\infty}$ on $\Omega_q$ or an arbitrary manifold can be modified as:
		$$\frac{1}{2} \norm{\underline{V}_d(\underline{\bm{x}}^{(t)})^T \grad f(\underline{\bm{x}}^{(t)})}_2^2 \geq \underline{\beta}_3 \left[f(\underline{\bm{x}}^*) -f(\underline{\bm{x}}^{(t)}) \right] \quad \text{ for some } \underline{\beta}_3 >0 \text{ and any }t\geq 0,$$
		where $f$ is the objective (density) function. Based on the proof of (a) in Proposition~\ref{SCGA_Dir_conv} and our arguments in (a) of Lemma~\ref{quad_bound_prop_PL}, it follows that the total length of the SCGA trajectory on $\Omega_q$ or an arbitrary manifold is of the linear order, \emph{i.e.},
		$$\sum_{s=t}^{\infty} d_g\left(\underline{\bm{x}}^{(s+1)},\underline{\bm{x}}^{(s)} \right) \leq \frac{4q\norm{\mathcal{H}f}_{\infty}^{(2)}}{\underline{\beta}_3} \cdot d_g\left(\underline{\bm{x}}^{(t)},\underline{\bm{x}}^* \right) \quad \text{ for any } t\geq 0.$$
		Second, to establish the quadratic bounds for $\left\langle \underline{U}_d^{\perp}(\underline{\bm{x}}^{(t)}) \grad f(\underline{\bm{x}}^{(t)}),\, \Exp_{\underline{\bm{x}}^{(t)}}^{-1}(\underline{\bm{x}}^*) \right\rangle$ and $\norm{\underline{U}_d^{\perp}(\underline{\bm{x}}^{(t)}) \Exp_{\underline{\bm{x}}^{(t)}}^{-1}(\underline{\bm{x}}^*)}_2$, one can follow the arguments in the proof of (b) in Lemma~\ref{quad_bound_prop_PL} and leverage the two facts:
		
		1. The tangent vector $\Exp_{\underline{\bm{x}}^{(t)}}^{-1}(\underline{\bm{x}}^*)$ can be decomposed into an infinite sum of SCGA updates \eqref{SCGA_manifold_update} on $\Omega_q$ or an arbitrary manifold as:
		\begin{align*}
		\Exp_{\underline{\bm{x}}^{(t)}}^{-1}(\underline{\bm{x}}^*) &= \sum_{s=t}^{\infty} \Gamma_{\underline{\bm{x}}^{(s)}}^{\underline{\bm{x}}^{(t)}}\left(\Exp_{\underline{\bm{x}}^{(s)}}^{-1}(\underline{\bm{x}}^{(s+1)}) \right).
		\end{align*}
		See Figure~\ref{fig:SCGA_decomp_Dir} for a graphical illustration. This equation is valid because parallel transports preserve inner products and are linear.
		\begin{figure}
			\centering
			\includegraphics[width=0.9\linewidth]{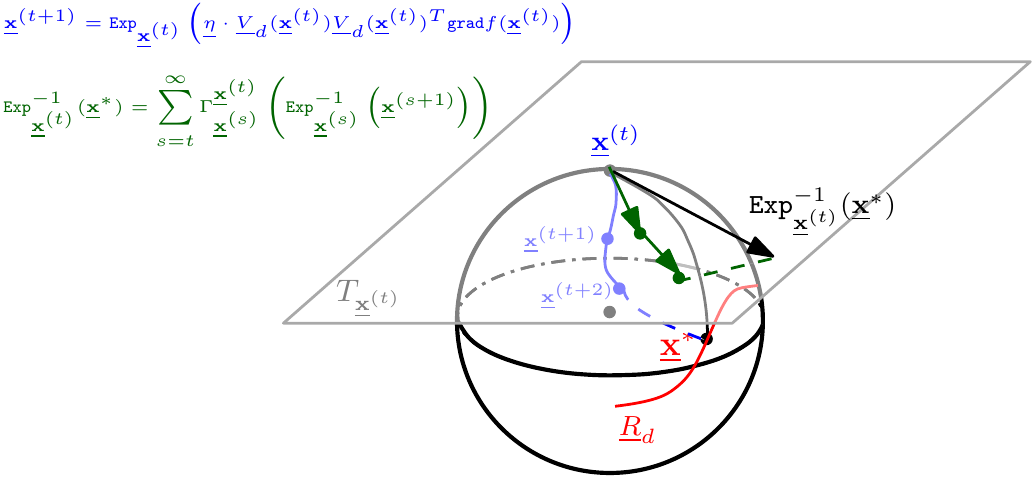}
			\caption{Decomposition of the vector $\Exp_{\underline{\bm{x}}^{(t)}}^{-1}(\underline{\bm{x}}^*)$ within the tangent space $T_{\underline{\bm{x}}^{(t)}}$ into the summation $\sum\limits_{s=t}^{\infty} \Gamma_{\underline{\bm{x}}^{(s)}}^{\underline{\bm{x}}^{(t)}}\left(\Exp_{\underline{\bm{x}}^{(s)}}^{-1}(\underline{\bm{x}}^{(s+1)}) \right)$ of parallel transported SCGA iterative vectors. Here, the blue curves on $\Omega_q$ are iterative paths of the SCGA algorithm, while the green vectors are tangent vectors $\Exp_{\underline{\bm{x}}^{(s)}}^{-1}(\underline{\bm{x}}^{(s+1)}) \in T_{\underline{\bm{x}}^{(s)}}$ after being parallel transported to $T_{\underline{\bm{x}}^{(t)}}$.}
			\label{fig:SCGA_decomp_Dir}
		\end{figure}
		
		2. Under conditions (\underline{A1-2}), we know that 
		\begin{align*}
		&\norm{\underline{U}_d^{\perp}(\underline{\bm{x}}^{(t)}) \cdot  \sum_{s=t}^{\infty} \Gamma_{\underline{\bm{x}}^{(s)}}^{\underline{\bm{x}}^{(t)}}\left(\Exp_{\underline{\bm{x}}^{(s)}}^{-1}(\underline{\bm{x}}^{(s+1)}) \right)}_2\\
		&=\Bigg\|\sum_{s=t}^{\infty} \underline{U}_d^{\perp}(\underline{\bm{x}}^{(t)}) \bigg[\Gamma_{\underline{\bm{x}}^{(s)}}^{\underline{\bm{x}}^{(t)}}\left(\underline{\eta} \underline{V}_d(\underline{\bm{x}}^{(s)}) \underline{V}_d(\underline{\bm{x}}^{(s)})^T \underline{V}_d(\underline{\bm{x}}^{(s)}) \underline{V}_d(\underline{\bm{x}}^{(s)})^T \grad f(\underline{\bm{x}}^{(s)})\right) \\
		&\hspace{20mm} - \underline{\eta}\underline{V}_d(\underline{\bm{x}}^{(t)}) \underline{V}_d(\underline{\bm{x}}^{(t)})^T \underline{V}_d(\underline{\bm{x}}^{(s)}) \underline{V}_d(\underline{\bm{x}}^{(s)})^T  \grad f(\underline{\bm{x}}^{(s)})  \bigg]\Bigg\|_2 \\
		&\leq \sum_{s=t}^{\infty} \tilde{A} \norm{\underline{V}_d(\underline{\bm{x}}^{(s)}) \underline{V}_d(\underline{\bm{x}}^{(s)})^T - \underline{V}_d(\underline{\bm{x}}^{(t)}) \underline{V}_d(\underline{\bm{x}}^{(t)})^T }_2 \cdot \norm{\underline{\bm{x}}^{(s)} - \underline{\bm{x}}^{(t)}}_2\\
		&\leq \tilde{A} \norm{\underline{\bm{x}}^*-\bm{x}^{(t)}}_2 \cdot \sum_{s=t}^{\infty} \norm{\underline{\bm{x}}^{(s)} - \underline{\bm{x}}^{(t)}}_2\\
		&=O\left(\norm{\underline{\bm{x}}^{(s)} - \underline{\bm{x}}^{(t)}}_2^2\right)
		\end{align*}
		for some constant $\underline{A}>0$, where we leverage the fact that the vector field 
		$$X(\gamma(t)) = \underline{V}_d(\gamma(t)) \underline{V}_d(\gamma(t))^T \underline{V}_d(\underline{\bm{x}}^{(s)}) \underline{V}_d(\underline{\bm{x}}^{(s)})^T  \grad f(\underline{\bm{x}}^{(s)})$$
		with $\gamma(0)=\underline{\bm{x}}^{(s)}$ and $\gamma(1)=\underline{\bm{x}}^{(t)}$ has its variation $\norm{X(\gamma(t_1))-X(\gamma(t_2))}_2$ bounded by $$\norm{\underline{V}_d(\underline{\bm{x}}^{(s)}) \underline{V}_d(\underline{\bm{x}}^{(s)})^T - \underline{V}_d(\underline{\bm{x}}^{(t)}) \underline{V}_d(\underline{\bm{x}}^{(t)})^T }_2 = O\left(\norm{\underline{\bm{x}}^{(s)} -\underline{\bm{x}}^{(t)}}_2\right)$$ 
		according to the Davis-Kahan theorem for any $0\leq t_1,t_2\leq 1$. However, we are not sure if the self-contractedness condition can also be adaptive to the directional or general manifold cases, given that the arguments in Theorem 15 of \cite{gupta2021path} are based on the Euclidean geometry. 
	\end{remark}

\section{Other Technical Concepts of Differential Geometry on $\Omega_q$}
\label{App:DG_sphere}

\noindent $\bullet$ {\bf Taylor's Theorem on $\Omega_q$}. Given a smooth function $f$ on $\Omega_q$, its Taylor's expansion is often written as \citep{Intrin_Stat_manifolds2006}:
\begin{equation}
\label{taylor_thm_sphere}
f(\Exp_{\bm x}(\bm{v})) = f(\bm{x}) + \left\langle \grad f(\bm{x}), \bm{v} \right\rangle + \frac{1}{2} \bm{v}^T \mathcal{H} f(\bm{x}) \bm{v} + o\left(\norm{\bm{v}}_2^2\right)
\end{equation} 
for any $\bm{v}\in T_{\bm{x}}$, where $\Exp_{\bm x}: T_{\bm x} \to \Omega_q$ is the \emph{exponential map} at $\bm x\in \Omega_q$. One may replace the exponential map with a more general concept called the \emph{retractions} on an arbitrary manifold; see Section 4.1 and Proposition 5.5.5 in \cite{Op_algo_mat_manifold2008}.

\noindent $\bullet$ {\bf Parallel Transport}. When comparing vectors in two different tangent spaces $T_{\bm{x}}, T_{\bm{y}}$ on $\Omega_q$, we leverage the notion of \emph{parallel transport} $\Gamma_{\bm{x}}^{\bm{y}}: T_{\bm{x}} \to T_{\bm{y}}$ to transport vectors from one tangent space to another along a geodesic. In addition, $\Gamma_{\bm{x}}^{\bm{y}}(\bm{v})$ is a tangent vector in $T_{\bm{y}}$ after being parallel transported from $\bm{v}\in T_{\bm{x}}$ along a geodesic (or great circle) on $\Omega_q$. The parallel transport mapping $\Gamma_{\bm{x}}^{\bm{y}}$ is a linear isometry along any smooth curve on $\Omega_q$, \emph{i.e.}, $\left\langle \Gamma_{\bm x}^{\bm{y}}(\bm{u}), \Gamma_{\bm x}^{\bm{y}}(\bm{v}) \right\rangle = \langle \bm{u}, \bm{v} \rangle$ for any $\bm{u},\bm{v} \in T_{\bm{x}}$; see Proposition 5.5 in \cite{Lee2018Riem_man} or Proposition 1 in Section 4-4 of \cite{doCarmo}.

\noindent $\bullet$ {\bf Sectional Curvature}. \emph{Sectional curvature} is the Gaussian curvature of a two-dimensional submanifold formed as the image of a two-dimensional subspace of a tangent space after exponential mapping; see Section 3-2 in \cite{doCarmo} for detailed discussions about the Gaussian curvature. It is known that a two dimensional submanifold with positive, zero, or negative sectional curvature is locally isometric to a two dimensional sphere, a Euclidean plane, or a hyperbolic plane with the same Gaussian curvature \citep{Geo_Convex_Op2016}.

\noindent $\bullet$ {\bf Geodesically Strong Concavity}. A function $f:\Omega_q \to \mathbb{R}$ is said to be \emph{geodesically concave} if for any $\bm{x},\bm{y} \in \Omega_q$, it holds that
	$$f(\varphi(t)) \geq (1-t) f(\bm{x}) + f(\bm{y})$$
	for any $t\in [0,1]$, where $\varphi: [0,1]\to \Omega_q$ is a geodesic with $\varphi(0)=\bm{x}$ and $\varphi(1) =\bm{y}$. When $f$ is differentiable, an equivalent statement of the geodesic concavity is that (Theorem 11.17 in \citealt{boumal2020introduction}):
	$$f(\bm{y}) - f(\bm{x}) \leq \left\langle \grad f(\bm{x}), \Exp_{\bm x}^{-1}(\bm{y}) \right\rangle.$$
	A function $f:\Omega_q \to \mathbb{R}$ is said to be \emph{geodesically $\mu_g$-strongly concave} if for any $\bm{x},\bm{y} \in \Omega_q$, it holds that
	$$f(\bm{y}) \leq f(\bm{x}) + \left\langle \grad f(\bm{x}), \Exp_{\bm x}^{-1}(\bm{y}) \right\rangle - \frac{\mu_g}{2} \cdot d_g(\bm{x},\bm{y})^2.$$

\section{Normal Space of Directional Density Ridge}
\label{App:Normal_space_Dir}

Recall that we extend the directional density $f$ from its support $\Omega_q$ to $\mathbb{R}^{q+1}\setminus \left\{\bm{0}\right\}$ by defining $f(\bm{x})\equiv f\left(\frac{\bm{x}}{\norm{\bm{x}}_2} \right)$ for all $\bm{x}\in \mathbb{R}^{q+1}\setminus \left\{\bm{0}\right\}$. As we will refer to conditions (\underline{A1-3}) frequently in the next three sections, we restate them here:

\begin{itemize}
	\item {\bf (\underline{A1})} (\emph{Differentiability}) Under the extension \eqref{DirDensity_Ext} of the directional density $f$, we assume that the total gradient $\nabla f(\bm{x})$, total Hessian matrix $\nabla\nabla f(\bm{x})$, and third-order derivative tensor $\nabla^3 f(\bm{x})$ in $\mathbb{R}^{q+1}$ exist, and are continuous on $\mathbb{R}^{q+1} \setminus \{\bm{0} \}$ and square integrable on $\Omega_q$. We also assume that $f$ has bounded fourth order derivatives on $\Omega_q$. 
	
	\item {\bf (\underline{A2})} (\emph{Eigengap}) We assume that there exist constants $\underline{\rho}>0$ and $\underline{\beta}_0 >0$ such that $\underline{\lambda}_{d+1}(\bm{y}) \leq -\underline{\beta}_0$ and $\underline{\lambda}_d(\bm{y}) - \underline{\lambda}_{d+1}(\bm{y}) \geq \underline{\beta}_0$ for any $\bm{y}\in \left(\underline{R}_d \oplus \underline{\rho} \right) \cap \Omega_q$.
	
	\item {\bf (\underline{A3})} (\emph{Path Smoothness}) Under the same $\underline{\rho}, \underline{\beta}_0 >0$ in (\underline{A2}), we assume that there exists another constant $\underline{\beta}_1 \in \left(0,\underline{\beta}_0 \right)$ such that
	\begin{align*}
	\sqrt{2}\cdot q^{\frac{3}{2}} \norm{\underline{U}_d^{\perp}(\bm{y}) \grad f(\bm{y})}_2 \norm{\nabla^3 f(\bm{y})}_{\max} &\leq \frac{\underline{\beta}_0^2}{2},\\
	d \cdot q^{\frac{3}{2}} \norm{\nabla f(\bm{x})}_2 \cdot \norm{\nabla^3 f(\bm{x})}_{\max} & \leq \underline{\beta}_0\left(\underline{\beta}_0-\underline{\beta}_1 \right)
	\end{align*}
	for all $\bm{y} \in \left(\underline{R}_d\oplus \underline{\rho} \right)\cap \Omega_q$ and $\bm{x}\in \underline{R}_d$.
\end{itemize}

Recall that an order-$d$ density ridge of a directional density $f$ on $\Omega_q=\left\{\bm{x}\in \mathbb{R}^{q+1}:\norm{\bm{x}}_2=1 \right\}$ is the set of points defined as:
\begin{equation}
\label{Dir_ridge_app}
\underline{R}_d = \left\{\bm{x}\in \Omega_q: \underline{G}_d(\bm{x})=\bm{0}, \underline{\lambda}_{d+1}(\bm{x}) <0 \right\} = \left\{\bm{x}\in \Omega_q: \underline{V}_d(\bm{x})^T \grad f(\bm{x})=\bm{0}, \underline{\lambda}_{d+1}(\bm{x}) <0 \right\}.
\end{equation}

Lemma~\ref{Dir_norm_reach_prop} below shows that under conditions (\underline{A1-3}), the Jacobian matrices $\nabla \left[\underline{V}_d(\bm{x})^T \grad f(\bm{x}) \right] \in \mathbb{R}^{(q-d)\times (q+1)}$ and $\left(\bm{I}_{q+1} -\bm{x}\bm{x}^T \right)\nabla \left[\underline{V}_d(\bm{x})^T \grad f(\bm{x}) \right]^T \in \mathbb{R}^{(q+1)\times (q-d)}$ (\emph{i.e.}, projecting the columns of $\nabla \left[\underline{V}_d(\bm{x})^T \grad f(\bm{x}) \right]^T$ onto the tangent space $T_{\bm{x}}$) both have rank $q-d$ at every point on $\underline{R}_d$, and $\underline{R}_d$ will be a $d$-dimensional submanifold on $\Omega_q$ by the implicit function theorem \citep{Rudin1976,Lee2012}. 
Analogous to the discussion about the normal space of a Euclidean density ridge in Appendix~\ref{App:Normal_space_Eu}, we define 
\begin{align*}
\underline{M}(\bm{x}) =\nabla \left[\underline{V}_d(\bm{x})^T \grad f(\bm{x}) \right]^T &=  \nabla \left[\underline{V}_d(\bm{x})^T \nabla f(\bm{x}) \right]^T \\
&= \left(\underline{\bm{m}}_{d+1}(\bm{x}),...,\underline{\bm{m}}_q(\bm{x})\right) \in \mathbb{R}^{(q+1)\times (q-d)}.
\end{align*}
Different from the Euclidean density ridge case, it is the column space of 
\begin{equation}
	\label{Dir_normal_ext}
	\underline{M}_E(\bm{x})=\left(\nabla\left[\underline{V}_d(\bm{x})^T \nabla f(\bm{x}) \right]^T, \bm{x} \right)=\left(\underline{\bm{m}}_{d+1}(\bm{x}),...,\underline{\bm{m}}_q(\bm{x}), \bm{x} \right)\in \mathbb{R}^{(q+1)\times (q+1-d)}
\end{equation}
	that spans the normal space of $\underline{R}_d$ within the ambient space $\mathbb{R}^{q+1}$. It can be seen from our Remark~\ref{solution_manifold_remark} that the rows of 
	$$\nabla\Psi(\bm{x}) = \begin{pmatrix}
	\nabla \left[\underline{\bm{v}}_{d+1}(\bm{x})^T \nabla f(\bm{x}) \right]\\
	\vdots\\
	\nabla \left[\underline{\bm{v}}_q(\bm{x})^T \nabla f(\bm{x}) \right]\\
	2\bm{x}
	\end{pmatrix}$$
	spans the normal space of the solution manifold $\underline{R}_d$; see also Lemma 1 in \cite{YC2020}.
Consequently, the column space of $\left(\bm{I}_{q+1} -\bm{x}\bm{x}^T \right)\underline{M}(\bm{x}) = \left(\bm{I}_{q+1} -\bm{x}\bm{x}^T \right)\nabla \left[\underline{V}_d(\bm{x})^T \grad f(\bm{x}) \right]^T$ spans the normal space of $\underline{R}_d$ within the tangent space $T_{\bm{x}}$ at each $\bm{x}\in \underline{R}_d \subset \Omega_q$. 
The technique in pages 60-63 of \cite{Eberly1996ridges} is still valid to argue that
\begin{align}
\label{Dir_normal_rows}
\begin{split}
&\underline{\bm{m}}_k(\bm{x}) \\
&= \left[\underline{\lambda}_k(\bm{x}) \bm{I}_{q+1} +\sum_{i=1}^d \frac{\underline{\bm{v}}_i(\bm{x})^T \nabla f(\bm{x})}{\underline{\lambda}_k(\bm{x})- \underline{\lambda}_i(\bm{x})} \cdot \underline{\bm{v}}_i(\bm{x})^T \nabla^3 f(\bm{x}) + \frac{\bm{x}^T \nabla f(\bm{x})}{\underline{\lambda}_k(\bm{x})}\cdot \bm{x}^T \nabla^3 f(\bm{x}) \right] \underline{\bm{v}}_k(\bm{x})\\
&= \left[\underline{\lambda}_k(\bm{x}) \bm{I}_{q+1} +\sum_{i=1}^d \frac{\underline{\bm{v}}_i(\bm{x})^T \nabla f(\bm{x})}{\underline{\lambda}_k(\bm{x})-\underline{\lambda}_i(\bm{x})} \cdot \underline{\bm{v}}_i(\bm{x})^T \nabla^3 f(\bm{x}) \right] \underline{\bm{v}}_k(\bm{x})
\end{split}
\end{align}
for $k=d+1,...,q$, where we use the fact that $\bm{x}^T\nabla f(\bm{x}) = \bm{x}^T\grad f(\bm{x})=0$ on $\Omega_q$ under the extension of $f$ as in (\underline{A1}). Let
\begin{align*}
\underline{\Lambda}_0(\bm{x}) &= \Diag\left[\underline{\lambda}_{d+1}(\bm{x}),...,\underline{\lambda}_q(\bm{x})\right],\\
\underline{\Lambda}_i(\bm{x}) &= \Diag\left[\frac{1}{\underline{\lambda}_{d+1}(\bm{x}) - \underline{\lambda}_i(\bm{x})},..., \frac{1}{\underline{\lambda}_q(\bm{x}) - \underline{\lambda}_i(\bm{x})} \right],\\
\underline{T}_i(\bm{x}) &= \left[\underline{\bm{v}}_i(\bm{x})^T \nabla f(\bm{x}) \right] \cdot \underline{\bm{v}}_i(\bm{x})^T \nabla^3 f(\bm{x})
\end{align*}
for $i=1,...,d$. Then, 
\begin{equation}
\label{Dir_normal_rows_all}
\underline{M}(\bm{x}) = \underline{V}_d(\bm{x}) \underline{\Lambda}_0(\bm{x}) + \sum_{i=1}^d \underline{T}_i(\bm{x}) \underline{V}_d(\bm{x}) \underline{\Lambda}_i(\bm{x}).
\end{equation}

As in the Euclidean data case, the columns of $\underline{M}_E(\bm{x}) = \left[\underline{M}(\bm{x}), \bm{x}\right] \in \mathbb{R}^{(q+1)\times (q+1-d)}$ are not orthonormal, and we again leverage the orthonormalization technique in \cite{Asymp_ridge2015} to construct $\underline{N}(\bm{x})$ that shares the same column space with $\underline{M}_E(\bm{x})$ but has orthonormal columns. That is, under the condition that $\underline{M}_E(\bm{x})$ has full column rank $q-d$ at every point $\bm{x}\in \underline{R}_d$ (see Lemma~\ref{Dir_norm_reach_prop}),
\begin{equation}
\label{Dir_normal_rows_all_ortho}
\underline{N}(\bm{x}) = \underline{M}_E(\bm{x})\left[\underline{J}(\bm{x})^T \right]^{-1}
\end{equation}
with the Cholesky decomposition $\underline{M}_E(\bm{x})^T \underline{M}_E(\bm{x}) = \underline{J}(\bm{x}) \underline{J}(\bm{x})^T$, where $\underline{J}(\bm{x}) \in \mathbb{R}^{(q+1-d)\times (q+1-d)}$ is a lower triangular matrix whose diagonal elements are positive. Finally, the non-uniqueness of $\underline{M}_E(\bm{x})$ will not affect our subsequent discussions about the properties of directional density ridges.

\begin{lemma}
	\label{Dir_norm_reach_prop}
	Assume conditions (\underline{A1-3}). Given that $\underline{M}(\bm{x})$, $\underline{M}_E(\bm{x})=\left[\underline{M}(\bm{x}), \bm{x}\right]$, and $\underline{N}(\bm{x})$ are defined in \eqref{Dir_normal_rows_all} and \eqref{Dir_normal_rows_all_ortho}, we have the following properties:
	\begin{enumerate}[label=(\alph*)]
		\item $\underline{M}_E(\bm{x})$ and $\underline{N}(\bm{x})$ have the same column space. In addition,
		$$\underline{N}(\bm{x}) \underline{N}(\bm{x})^T = \underline{M}_E(\bm{x})\left[\underline{M}_E(\bm{x})^T \underline{M}_E(\bm{x}) \right]^{-1} \underline{M}_E(\bm{x})^T.$$
		That is, $\underline{N}(\bm{x}) \underline{N}(\bm{x})^T$ is the projection matrix onto the columns of $\underline{M}_E(\bm{x})$.
		\item The columns of $\underline{N}(\bm{x})$ are orthonormal to each other.
		\item For $\bm{x}\in \underline{R}_d$, the column space of $\underline{N}(\bm{x})$ is normal to the (tangent) direction of $\underline{R}_d$ at $\bm{x}$.
		\item For $\bm{x} \in \underline{R}_d$, the smallest eigenvalue $\lambda_{\min}(\underline{M}(\bm{x})^T \underline{M}(\bm{x}))=\lambda_{\min}(\underline{M}(\bm{x})^T (\bm{I}_{q+1}-\bm{x}\bm{x}^T) \underline{M}(\bm{x}))\geq \underline{\beta}_1^2>0$, and 
		$$\rank(\underline{M}(\bm{x})) =\rank\left((\bm{I}_{q+1}-\bm{x}\bm{x}^T) \underline{M}(\bm{x}) \right) = q-d.$$ 
			Moreover, all the nonzero singular values of $\underline{M}_E(\bm{x})$ are greater than $\min\left\{\underline{\beta}_1, 1 \right\} >0$, and
			$$\rank(\underline{N}(\bm{x}))=\rank(\underline{M}_E(\bm{x})) = q+1-d.$$
			Therefore, $\underline{R}_d$ is a $d$-dimensional submanifold that contains neither intersections and nor endpoints on $\Omega_q$. Namely, $\underline{R}_d$ is a finite union of connected and compact submanifolds on $\Omega_q$.
		\item For all $\bm{x} \in \underline{R}_d$,
		$$\norm{\left[\underline{M}(\bm{x})^T \left(\bm{I}_{q+1} -\bm{x}\bm{x}^T \right) \underline{M}(\bm{x})\right]^{-1}}_2\leq \frac{1}{\underline{\beta}_1^2} \quad \text{ and } \quad \norm{\left[\underline{J}(\bm{x})^T \right]^{-1}}_2 \leq \max\left\{\frac{1}{\underline{\beta}_1}, 1 \right\}.$$
		\item When $\norm{\bm{x}-\bm{y}}_2$ is sufficiently small and $\bm{x},\bm{y} \in \left(\underline{R}_d\oplus \underline{\rho} \right) \cap \Omega_q$, 
		$$\norm{\underline{N}(\bm{x}) \underline{N}(\bm{x})^T - \underline{N}(\bm{y}) \underline{N}(\bm{y})^T}_{\max} \leq \underline{A}_0 \left(\norm{f}_{\infty}^{(3)} + \norm{f}_{\infty}^{(4)} \right)^2 \norm{\bm{x}-\bm{y}}_2$$
		for some constant $\underline{A}_0>0$.
		\item Assume that another directional density function $g$ also satisfies conditions (\underline{A1-3}) after the extension $g(\bm{x})\equiv g\left(\frac{\bm{x}}{\norm{\bm{x}}} \right)$ in $\mathbb{R}^{q+1}\setminus \{\bm{0} \}$, and $\norm{f-g}_{\infty,3}^*$ is sufficiently small. Then,
		$$\norm{\underline{N}_f(\bm{x}) \underline{N}_f(\bm{x})^T - \underline{N}_g(\bm{x}) \underline{N}_g(\bm{x})^T}_{\max} \leq \underline{A}_1 \cdot \norm{f-g}_{\infty,3}^*$$
		for some constant $\underline{A}_1 >0$ and any $\bm{x} \in \underline{R}_d$, where $\underline{N}_f(\bm{x})$ is the matrix defined in \eqref{Dir_normal_rows_all_ortho} with directional density $f$.
		\item The reach of $\underline{R}_d$ satisfies
		$$\mathtt{reach}(\underline{R}_d) \geq \min\left\{\underline{\rho}/2, \frac{\min\left\{\underline{\beta}_1, 1\right\}^2}{\underline{A}_2 \left(\norm{f}_{\infty}^{(3)} + \norm{f}_{\infty}^{(4)} \right)} \right\}$$
		for some constant $\underline{A}_2>0$.
	\end{enumerate} 
\end{lemma}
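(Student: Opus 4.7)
My plan is to follow the template established for the Euclidean case (Lemma~\ref{normal_reach_prop}) while carefully tracking the two new features intrinsic to the spherical setting: (i) the normal space of $\underline{R}_d$ in $\mathbb{R}^{q+1}$ has one extra direction compared with its normal space inside $T_{\bm{x}}$, namely the radial direction $\bm{x}$ itself, which is precisely why we augment $\underline{M}(\bm{x})$ to $\underline{M}_E(\bm{x}) = \left[\underline{M}(\bm{x}),\bm{x}\right]$; and (ii) the extension \eqref{DirDensity_Ext} of $f$ forces $\bm{x}^T \nabla f(\bm{x}) = 0$, which removes the would-be $\bm{x}$-coupling term in the formula \eqref{Dir_normal_rows} and lets me recycle the Eberly-type derivation. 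Parts (a) and (b) are then immediate: the Cholesky factorization $\underline{M}_E^T\underline{M}_E = \underline{J}\underline{J}^T$ makes $\underline{N}(\bm{x})\underline{N}(\bm{x})^T = \underline{M}_E(\underline{M}_E^T\underline{M}_E)^{-1}\underline{M}_E^T$ the projector onto the column space of $\underline{M}_E$, and the orthonormality of the columns of $\underline{N}$ follows by direct algebra. For (c), I would parametrize a smooth curve $\gamma:(-\epsilon,\epsilon)\to\underline{R}_d$, differentiate both defining relations $\underline{V}_d(\gamma(t))^T\nabla f(\gamma(t)) = 0$ and $\|\gamma(t)\|_2^2 = 1$ in $t$, and read off that $\gamma'(t)$ is annihilated by the rows of $\nabla[\underline{V}_d^T\nabla f]$ and by $\bm{x}^T$, hence orthogonal to every column of $\underline{M}_E(\bm{x})$.

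The main work is part (d). Here I would mimic the singular-value argument from Lemma~\ref{normal_reach_prop}(d): decompose $\underline{M}(\bm{x}) = \underline{V}_d(\bm{x})\underline{\Lambda}_0(\bm{x}) + \sum_{i=1}^d \underline{T}_i(\bm{x})\underline{V}_d(\bm{x})\underline{\Lambda}_i(\bm{x})$, bound the operator norm of the perturbation term using (\underline{A2-3}) exactly as in the Euclidean proof (with the dimensional constant $D^{3/2}$ replaced by $q^{3/2}$ appearing in (\underline{A3})), and invoke Theorem 3.3.16 of \cite{horn_johnson_1991} to conclude that all nonzero singular values of $\underline{M}(\bm{x})$ exceed $\underline{\beta}_1$. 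The extra subtlety is that I need this for $(\bm{I}_{q+1}-\bm{x}\bm{x}^T)\underline{M}(\bm{x})$ inside $T_{\bm{x}}$; but since every column $\underline{\bm{m}}_k(\bm{x})$ of $\underline{M}(\bm{x})$ lies in $T_{\bm{x}}$ at points $\bm{x}\in\underline{R}_d$ (each $\underline{\bm{v}}_k(\bm{x})$ is a tangent vector, and \eqref{Dir_normal_rows} shows $\underline{\bm{m}}_k$ is a linear combination of such vectors), the projector $\bm{I}_{q+1}-\bm{x}\bm{x}^T$ acts as identity on the columns and the two matrices coincide. To then lift to $\underline{M}_E$, I observe that $\bm{x}$ has unit norm and is orthogonal to every column of $\underline{M}(\bm{x})$, so $\underline{M}_E^T\underline{M}_E$ is block-diagonal with blocks $\underline{M}^T\underline{M}$ and $1$, giving the singular-value bound $\min\{\underline{\beta}_1,1\}$ and the full-rank conclusion; the manifold statement then follows by the implicit function theorem.

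Parts (e)--(g) are routine perturbation/smoothness calculations once (d) is in hand. For (e), the block-diagonal structure of $\underline{M}_E^T\underline{M}_E$ gives $\|(\underline{J}^T)^{-1}\|_2 = \sigma_{\min}(\underline{M}_E^T\underline{M}_E)^{-1/2} \le \max\{\underline{\beta}_1^{-1},1\}$. For (f), the smoothness of $\bm{x}\mapsto \underline{M}_E(\bm{x})^T\underline{M}_E(\bm{x})$ (controlled by $\|f\|_\infty^{(3)}$ and $\|f\|_\infty^{(4)}$ through \eqref{Dir_normal_rows}) combined with the uniform lower bound from (d)/(e) produces a Lipschitz estimate on $(\underline{M}_E^T\underline{M}_E)^{-1}$ by the standard perturbation identity $A^{-1}-B^{-1} = A^{-1}(B-A)B^{-1}$, and the projector identity in (a) converts this into the claimed bound on $\|\underline{N}\underline{N}^T-\underline{N}\underline{N}^T\|_{\max}$. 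Part (g) is the two-density analog of (f): under $\|f-g\|_{\infty,3}^*$ small, $\underline{M}_{E,g}(\bm{x})$ is a $O(\|f-g\|_{\infty,3}^*)$-perturbation of $\underline{M}_{E,f}(\bm{x})$ (via Davis--Kahan applied to the Hessian perturbation to control $\underline{V}_d^g - \underline{V}_d^f$), and the same perturbation identity yields the conclusion. Finally, part (h) is a standard reach estimate: by Theorem 4.8 of \cite{Federer1959}, or the argument in Lemma~4 of \cite{Asymp_ridge2015}, a lower bound on the reach can be obtained from the Lipschitz constant of the projector $\underline{N}\underline{N}^T$ established in (f) together with the bound on $\mathtt{reach}$ in terms of the minimum singular value of the normal-space basis.

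The hard part, I expect, is verifying the tangent-space consistency in (d) --- specifically, the claim that $(\bm{I}_{q+1}-\bm{x}\bm{x}^T)\underline{M}(\bm{x}) = \underline{M}(\bm{x})$ on $\underline{R}_d$ and that the block-diagonal structure of $\underline{M}_E^T\underline{M}_E$ holds exactly (not just approximately). This rests on the identity $\bm{x}^T\nabla^3 f(\bm{x})\underline{\bm{v}}_k(\bm{x}) = 0$, which in turn requires carefully differentiating the radial invariance $f(\lambda\bm{x}) = f(\bm{x})$ for $\lambda>0$ three times and evaluating on $\Omega_q$; without this identity the bookkeeping in \eqref{Dir_normal_rows} produces an unwanted radial contribution and the clean singular-value bound $\min\{\underline{\beta}_1,1\}$ would degrade. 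Everything else is a direct --- if tedious --- adaptation of the Euclidean proof.
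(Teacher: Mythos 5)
Your plan follows the paper's proof almost step for step: parts (a)--(c) and (e)--(h) are transcribed from the Euclidean Lemma~\ref{normal_reach_prop} with $\underline{M}_E(\bm{x})=[\underline{M}(\bm{x}),\bm{x}]$ replacing $M(\bm{x})$, and part (d) proceeds via the decomposition $\underline{M}(\bm{x})=\underline{V}_d(\bm{x})\underline{\Lambda}_0(\bm{x})+\sum_{i=1}^d\underline{T}_i(\bm{x})\underline{V}_d(\bm{x})\underline{\Lambda}_i(\bm{x})$, a singular-value perturbation bound (Theorem~3.3.16 of \citealt{horn_johnson_1991}) giving $\sigma_{\min}(\underline{M}(\bm{x}))\geq\underline{\beta}_1$, and then the block-diagonal structure of $\underline{M}_E(\bm{x})^T\underline{M}_E(\bm{x})$ to lift the bound to $\min\{\underline{\beta}_1,1\}$ --- exactly as in the paper.

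One correction to the identity you single out as the ``hard part.'' You claim the block-diagonality rests on $\bm{x}^T\nabla^3 f(\bm{x})\underline{\bm{v}}_k(\bm{x})=0$. This is false: by Euler's identity applied to $\nabla\nabla f$ (homogeneous of degree $-2$ under the extension \eqref{DirDensity_Ext}), contracting $\nabla^3 f$ with $\bm{x}$ in any one slot gives $-2\,\nabla\nabla f(\bm{x})$, so $\bm{x}^T\nabla^3 f(\bm{x})\underline{\bm{v}}_k(\bm{x})=-2\,\nabla\nabla f(\bm{x})\underline{\bm{v}}_k(\bm{x})$, which generically has a nonzero radial component because $\underline{\bm{v}}_k$ is an eigenvector of $\mathcal{H}f$, not of the full Euclidean Hessian. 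What you actually need --- and what is true --- is the scalar identity $\bm{x}^T\big[\underline{\bm{v}}_i(\bm{x})^T\nabla^3 f(\bm{x})\big]\underline{\bm{v}}_k(\bm{x})=0$ for $i\in\{1,\dots,d\}$ and $k\in\{d+1,\dots,q\}$. It follows by combining Euler's identity with the tangency of the eigenvectors: this contraction equals $-2\,\underline{\bm{v}}_i(\bm{x})^T\nabla\nabla f(\bm{x})\underline{\bm{v}}_k(\bm{x})=-2\,\underline{\bm{v}}_i(\bm{x})^T\mathcal{H}f(\bm{x})\underline{\bm{v}}_k(\bm{x})=-2\underline{\lambda}_k(\bm{x})\,\underline{\bm{v}}_i(\bm{x})^T\underline{\bm{v}}_k(\bm{x})=0$, where the middle step uses that $\underline{\bm{v}}_i,\underline{\bm{v}}_k\in T_{\bm{x}}$ and the last uses the orthogonality of eigenvectors with distinct eigenvalues guaranteed by the eigengap in (\underline{A2}). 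Together with $\underline{\lambda}_k(\bm{x})\,\bm{x}^T\underline{\bm{v}}_k(\bm{x})=0$, this is what gives $\underline{M}(\bm{x})^T\bm{x}=\bm{0}$ and hence the exact block-diagonal form. (The third summand in \eqref{Dir_normal_rows} drops out for the simpler reason that its scalar coefficient $\bm{x}^T\nabla f(\bm{x})$ vanishes; no identity on $\nabla^3 f$ is needed there.) This is a local repair --- the overall strategy is sound and matches the paper.
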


This lemma is a direct extension of Lemma~\ref{normal_reach_prop} to the directional data scenario; thus, its proof is similar to the proof of Lemma~\ref{normal_reach_prop}.

\begin{proof}[Proof of Lemma~\ref{Dir_norm_reach_prop}]
	The proofs of properties (a), (b), and (c) can be inherited from the corresponding ones in Lemma~\ref{normal_reach_prop} with mild modifications and we thus omit them. \\
	
	\noindent(d) We will prove that the $(q-d)$ nonzero singular values of $\underline{M}(\bm{x})$ and $\left(\bm{I}_{q+1} -\bm{x}\bm{x}^T \right) \underline{M}(\bm{x}) \in \mathbb{R}^{(q+1)\times (q-d)}$ are bounded away from 0. Recall that
	$$\underline{M}(\bm{x}) = \underline{V}_d(\bm{x}) \underline{\Lambda}_0(\bm{x}) + \sum_{i=1}^d \underline{T}_i(\bm{x}) \underline{V}_d(\bm{x}) \underline{\Lambda}_i(\bm{x})$$
	with 
	\begin{align*}
	\underline{\Lambda}_0(\bm{x}) &= \Diag\left[\underline{\lambda}_{d+1}(\bm{x}),...,\underline{\lambda}_q(\bm{x})\right]\\
	\underline{\Lambda}_i(\bm{x}) &= \Diag\left[\frac{1}{\underline{\lambda}_{d+1}(\bm{x}) - \underline{\lambda}_i(\bm{x})},..., \frac{1}{\underline{\lambda}_q(\bm{x}) - \underline{\lambda}_i(\bm{x})} \right]\\
	\underline{T}_i(\bm{x}) &= \left[\underline{\bm{v}}_i(\bm{x})^T \nabla f(\bm{x}) \right] \cdot \underline{\bm{v}}_i(\bm{x})^T \nabla^3 f(\bm{x})
	\end{align*}
	for $i=1,...,d$. Under condition (\underline{A2}), 
	\begin{align*}
	&\norm{\left(\bm{I}_{q+1} -\bm{x}\bm{x}^T \right) \cdot  \sum_{i=1}^d \underline{T}_i(\bm{x}) \underline{V}_d(\bm{x}) \underline{\Lambda}_i(\bm{x})}_2\\ &\leq \norm{\sum_{i=1}^d \underline{T}_i(\bm{x}) \underline{V}_d(\bm{x}) \underline{\Lambda}_i(\bm{x})}_2 \quad \text{ since } \norm{\bm{I}_{q+1} -\bm{x}\bm{x}^T}_2=1\\
	&\leq \sum_{i=1}^d \norm{\underline{T}_i(\bm{x})}_2 \cdot \norm{\underline{V}_d(\bm{x})}_2 \cdot \frac{1}{\underline{\beta}_0} \quad \text{ by (\underline{A2})}\\
	&\leq \sum_{i=1}^d \norm{\underline{\bm{v}}_i(\bm{x})^T \nabla f(\bm{x})}_2 \cdot \norm{\underline{\bm{v}}_i(\bm{x})^T \nabla^3 f(\bm{x})}_2 \quad \text{ since } \norm{\underline{V}_d(\bm{x})}_2=1\\
	&\leq \frac{d \norm{\nabla f(\bm{x})}_2 \cdot q^{\frac{3}{2}}\norm{\nabla^3 f(\bm{x})}_{\max}}{\underline{\beta}_0}\\
	&\leq \underline{\beta}_0-\underline{\beta}_1.
	\end{align*}
	It shows that all the singular values of $\left(\bm{I}_{q+1} -\bm{x}\bm{x}^T \right) \sum\limits_{i=1}^d \underline{T}_i(\bm{x}) \underline{V}_d(\bm{x}) \underline{\Lambda}_i(\bm{x})$ or simply $\sum\limits_{i=1}^d \underline{T}_i(\bm{x}) \underline{V}_d(\bm{x}) \underline{\Lambda}_i(\bm{x})$ are less than $\underline{\beta}_0-\underline{\beta}_1$. Moreover, under condition (\underline{A2}) again, all the $(q-d)$ singular values of 
	$$\left(\bm{I}_{q+1}-\bm{x}\bm{x}^T \right) \underline{V}_d(\bm{x}) \underline{\Lambda}_0(\bm{x}) = \underline{V}_d(\bm{x}) \underline{\Lambda}_0(\bm{x})$$ 
	are greater than $\underline{\beta}_1$. \\
	By Theorem 3.3.16 in \cite{horn_johnson_1991}, we know that all the singular values of $\underline{M}(\bm{x})$ and $\left(\bm{I}_{q+1}-\bm{x}\bm{x}^T \right) \underline{M}(\bm{x})$ are greater than 
	$$\sigma_i\left(\underline{V}_d(\bm{x})\underline{\Lambda}_0(\bm{x}) \right) - \sigma_1\left( \sum_{i=1}^d \underline{T}_i(\bm{x}) \underline{V}_d(\bm{x}) \underline{\Lambda}_i(\bm{x}) \right) \geq \underline{\beta}_0-(\underline{\beta}_0-\underline{\beta}_1)=\underline{\beta}_1>0,$$ 
	where $\sigma_i(A), i=1,...,q-d$ are singular values of a matrix $A \in \mathbb{R}^{(q+1)\times (q-d)}$ in their descending order. Therefore, the minimum eigenvalue of $\underline{M}(\bm{x})^T \underline{M}(\bm{x})$ satisfies 
	\begin{equation}
	\label{M_Dir_eigen_bound}
	\lambda_{\min}\left(\underline{M}(\bm{x})^T \underline{M}(\bm{x}) \right)=\lambda_{\min}\left(\underline{M}(\bm{x})^T (\bm{I}_{q+1}-\bm{x}\bm{x}^T) \underline{M}(\bm{x}) \right)\geq \underline{\beta}_1^2>0.
	\end{equation}
	
	Now, given $\underline{M}_E(\bm{x}) = \left[\underline{M}(\bm{x}), \bm{x}\right] \in \mathbb{R}^{(q+1)\times (q+1-d)}$ and $\bm{x}\in \Omega_q$, we know that 
		$$\underline{M}_E(\bm{x})^T \underline{M}_E(\bm{x}) = \begin{pmatrix}
		\underline{M}(\bm{x})^T \underline{M}(\bm{x}) & \underline{M}(\bm{x})^T \bm{x}\\
		\bm{x}^T \underline{M}(\bm{x}) & 1
		\end{pmatrix}.$$
		If we denote the orthonormal eigenvectors of $\underline{M}(\bm{x})^T \underline{M}(\bm{x})$ by $\bm{v}_{\underline{M},1}(\bm{x}),..., \bm{v}_{\underline{M},q-d}(\bm{x}) \in \mathbb{R}^{q-d}$, then 
		\[
		\begin{pmatrix}
		\bm{v}_{\underline{M},1}(\bm{x})\\
		0
		\end{pmatrix},...,
		\begin{pmatrix}
		\bm{v}_{\underline{M},q-d}(\bm{x})\\
		0
		\end{pmatrix},
		\begin{pmatrix}
		\bm{0}\\
		1
		\end{pmatrix} \in \mathbb{R}^{q+1-d}
		\]
		are the orthonormal eigenvectors of $\underline{M}_E(\bm{x})^T \underline{M}_E(\bm{x})$, whose eigenvalues are thus lower bounded by $\min\left\{\underline{\beta}_1,1 \right\}$ due to \eqref{M_Dir_eigen_bound}. Hence, $\rank(\underline{N}(\bm{x}))=\rank(\underline{M}_E(\bm{x})) = q+1-d$.\\
		By the implicit function theorem and the extra constraint $\underline{R}_d \subset \Omega_q$, $\underline{R}_d$ is a $d$-dimensional submanifold on $\Omega_q$. It also implies that $\underline{R}_d$ cannot have intersections, because otherwise the intersected points will violate the rank condition.\\
		Finally, we argue by contradiction that $\underline{R}_d$ has no endpoints. Assume, on the contrary, that $\underline{R}_d$ has an end point $\bm{x}_0$. Our preceding argument has shown that $\underline{M}(\bm{x})$, the derivative of $\underline{V}_d(\bm{x})^T \nabla f(\bm{x})$, is bounded. In addition, $\bm{x}_0\in \underline{R}_d$. However, this contradicts to the implicit function theorem indicating that $\underline{R}_d$ is a $d$-dimensional submanifold on $\Omega_q$, because at the end point $\bm{x}_0\in \underline{R}_d$, there exists no local coordinate chart for $\underline{R}_d$ defined on an open set in $\underline{R}_d$. The results follow.\\
	
	\noindent (e) By the proof of (d), we already know that all the $(q-d)$ nonzero singular values of $\underline{M}(\bm{x})$ and $\left(\bm{I}_D -\bm{x}\bm{x}^T \right)\underline{M}(\bm{x})$ are greater than $\underline{\beta}_1 >0$. Also, all the $(q+1-d)$ nonzero singular values of $\underline{M}_E(\bm{x})$ are greater than $\min\left\{\underline{\beta}_1,1 \right\}$. Thus, the results follow easily from the argument of (e) in Lemma~\ref{normal_reach_prop}.\\ 
	
	Finally, the proofs of properties (f), (g), and (h) are essentially the same as the corresponding claims in \cite{Asymp_ridge2015}. We thus omitted them. For (h), the reader should be aware that we have extended the directional density $f$ from $\Omega_q$ to $\mathbb{R}^{q+1}\setminus \{\bm{0}\}$. In addition, it is the columns of $\underline{M}_E(\bm{x})$ that span the normal space of $\underline{R}_d$ in the ambient space, whose nonzero singular values are lower bounded by $\min\left\{\underline{\beta}_1, 1 \right\}$. The proof of (h) can also be found in Theorem 3 of \cite{YC2020}.
\end{proof}

\section{Stability of Directional Density Ridge}
\label{App:Stability_Dir_Ridge}

\subsection{Subspace Constrained Gradient Flows}
\label{Sec:SCGF}

This subsection is modified from Section 4 in \cite{Non_ridge_est2014} for directional densities and their ridges on $\Omega_q$. A map $\varpi: \mathbb{R} \to \Omega_q$ is a subspace constrained gradient flow with the principal Riemannian gradient $\underline{G}_d$ if $\varpi(0)=\bm{x}\in \Omega_q$ and 
\begin{equation}
\label{SCGF}
\varpi'(t) = \underline{G}_d(\varpi(t)) =\underline{U}_d(\varpi(t)) \cdot \grad f(\varpi(t))= \underline{V}_d(\varpi(t)) \underline{V}_d(\varpi(t))^T \nabla f(\varpi(t)),
\end{equation}
where the last equality follows from \eqref{Riem_grad_new}. Given the definition of the directional density ridge $\underline{R}_d$ in \eqref{Dir_ridge_app}, it consists of the destinations of the subspace constrained gradient flow $\varpi$, \emph{i.e.}, $\bm{y}\in \underline{R}_d$ if $\lim\limits_{t\to \infty} \varpi(t)=\bm{y}$ for some $\varpi$ satisfying \eqref{SCGF}. It will be convenient to parametrize the subspace constrained gradient ascent path with $\varpi$ by arc length. Let $s\equiv s(t)$ be the arc length from $\varpi(t)$ to $\varpi(\infty)$:
$$s(t) = \int_t^{\infty} \norm{\varpi'(u)}_2 du.$$
Denote the inverse of $s(t)$ by $t \equiv t(s)$. Note that 
$$t'(s) = \frac{1}{s'(t)} = -\frac{1}{\norm{\varpi'(t(s))}_2}= -\frac{1}{\norm{\underline{G}_d\left(\varpi(t(s)) \right)}_2}.$$
With $\gamma(s) =\varpi(t(s))$, we have that
\begin{equation}
\label{arc_length_deri}
\gamma'(s) = -\frac{\underline{G}_d\left(\gamma(s) \right)}{\norm{\underline{G}_d\left(\gamma(s) \right)}_2},
\end{equation}
which is a reparametrization of \eqref{SCGF} by arc length. Note that $\gamma$ always lies on $\Omega_q$ because its velocity is within the tangent space $T_{\gamma(s)}$ for every $s\in [0,\infty)$. Lemma 2 in \cite{Non_ridge_est2014} justifies the uniqueness of $\gamma$ passing through any particular point $\bm{x}\in \left((\underline{R}_d\oplus \underline{\rho}) \setminus \underline{R}_d \right) \cap \Omega_q$ under conditions (\underline{A1-3}). The (reversed) subspace constrained gradient flow $\gamma$ can be lifted onto the directional function $f$, as we may define
\begin{equation}
\label{SCGF_Density}
\xi(s) = f(\varpi(\infty)) - f(\varpi(t(s))) = f(\gamma(0)) - f(\gamma(s)).
\end{equation}
Sometimes, we may add the subscript $\bm{x}$ to the curves $\varpi_{\bm x},\gamma_{\bm x},\xi_{\bm x}$ if we want to emphasize that $\varpi,\gamma,\xi$ start from or pass through the specific point $\bm{x}$.

To analyze the behavior of the subspace constrained gradient flow $\xi$ lifted on $f$, we need the derivative of the projection matrix $\underline{U}(\bm{x})\equiv \underline{U}_d(\bm{x})$ along the path $\gamma$. Recall that $\underline{U}(\bm{x}) \equiv \underline{U}_d(\mathcal{H} f(\bm{x})) = \underline{V}_d(\bm{x}) \underline{V}_d(\bm{x})^T$. The collection $\left\{\underline{U}(\bm{x}): \bm{x}\in \Omega_q \right\}$ defines a matrix field: there is a matrix $\underline{U}(\bm{x})$ attached to each point $\bm{x}$. As mentioned earlier, there is a unique path $\gamma$ and unique $s>0$ such that $\bm{x}=\gamma(s)$ for any $\bm{x}\in \left(\underline{R}_d\oplus \underline{\rho} \right) \setminus \underline{R}_d$. Define
\begin{equation}
\label{proj_mat_deriv}
\underline{\dot{U}}_{\gamma(s)} \equiv \lim_{\epsilon \to 0} \frac{\underline{U}(\mathcal{H} f(\gamma(s+\epsilon))) - \underline{U}(\mathcal{H} f(\gamma(s)))}{\epsilon} = \lim_{t\to 0} \frac{\underline{U}(\mathcal{H} f(\gamma(s)) + tE_s) - \underline{U}(\mathcal{H} f(\gamma(s)))}{t},
\end{equation}
where $E_s = \frac{d}{ds} \mathcal{H} f(\gamma(s)) = \bar{\nabla}_{\gamma'(s)} \mathcal{H} f(\gamma(s))$ with $\bar{\nabla}$ being the Riemannian connection on $\Omega_q$. Under conditions (\underline{A1-3}), $\xi$ has a quadratic-like behavior near the directional ridge $\underline{R}_d$, analogous to Lemma 3 in \cite{Non_ridge_est2014}.

\begin{lemma}
	\label{quad_behave_Dir}
	Assume that conditions (\underline{A1-3}) holds. For all $\bm{x} \in \left(\underline{R}_d \oplus \underline{\rho} \right) \cap \Omega_q$, we have the following properties:
	\begin{enumerate}[label=(\alph*)]
		\item $\xi(0)=0$, $\xi'(s) = \norm{\underline{G}_d(\gamma(s))}_2$, and $\xi'(0)=0$. Thus, $\xi(s)$ is non-decreasing in $s$.
		\item The second derivative of $\xi$ satisfies $\xi''(s) \geq \frac{\underline{\beta}_0}{2}$.
		\item $\xi(s) \geq \frac{\underline{\beta}_0}{4} \norm{\gamma(0) -\gamma(s)}_2^2$.
	\end{enumerate}
\end{lemma}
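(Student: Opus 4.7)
The proof proposal mirrors the strategy used by \cite{Non_ridge_est2014} in the Euclidean case (their Lemma~3), but adapted to the manifold $\Omega_q$ with the Riemannian gradient, Hessian, and connection replacing their Euclidean counterparts. The idea is that part (a) is a direct chain-rule computation, part (b) is where all of the assumptions (\underline{A1-3}) come into play, and part (c) follows by two integrations.

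For part (a), I would start from $\xi(s)=f(\gamma(0))-f(\gamma(s))$, note $\xi(0)=0$ trivially, and differentiate along $\gamma$ to obtain $\xi'(s)=-\langle \grad f(\gamma(s)),\gamma'(s)\rangle$. Plugging in $\gamma'(s)=-\underline{G}_d(\gamma(s))/\|\underline{G}_d(\gamma(s))\|_2$ and writing $\grad f = \underline{U}_d\grad f + \underline{U}_d^\perp \grad f = \underline{G}_d + \underline{U}_d^\perp \grad f$ makes the residual piece drop out by orthogonality, leaving $\xi'(s)=\|\underline{G}_d(\gamma(s))\|_2$. Since $\gamma(0)\in \underline{R}_d$ and $\underline{G}_d$ vanishes there, $\xi'(0)=0$, and monotonicity is immediate.

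Part (b) is the heart of the argument. Starting from the identity
\[
\xi''(s) = -\frac{d^2}{ds^2} f(\gamma(s)) = -\bigl\langle \mathcal{H}f(\gamma(s))\gamma'(s),\gamma'(s)\bigr\rangle - \bigl\langle \grad f(\gamma(s)),\bar{\nabla}_{\gamma'(s)}\gamma'(s)\bigr\rangle,
\]
I would treat the two terms separately. The first term is easy: $\gamma'(s)$ is a unit tangent vector lying in the column space of $\underline{V}_d(\gamma(s))$ by construction, so by the eigengap condition (\underline{A2}) and the spectral decomposition of $\mathcal{H}f$, one has $\langle \mathcal{H}f(\gamma(s))\gamma'(s),\gamma'(s)\rangle \leq \underline{\lambda}_{d+1}(\gamma(s)) \leq -\underline{\beta}_0$, contributing at least $+\underline{\beta}_0$ to $\xi''(s)$. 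The second term is the subtle one: it involves the rate of change of $\gamma'(s)=-\underline{G}_d(\gamma(s))/\|\underline{G}_d(\gamma(s))\|_2$, which in turn depends on the derivative $\underline{\dot{U}}_{\gamma(s)}$ of the projection matrix (see~\eqref{proj_mat_deriv}) and hence on $\nabla^3 f$. Since $\bar{\nabla}_{\gamma'(s)}\gamma'(s)$ is orthogonal to $\gamma'(s)$ and $\gamma'(s)\in \operatorname{col}(\underline{V}_d)$, only the residual part $\underline{U}_d^\perp \grad f(\gamma(s))$ pairs nontrivially with it. Combining the Davis-Kahan-type bound on $\|\underline{\dot{U}}_{\gamma(s)}\|_2$ (controlled by $\|\nabla^3 f\|_{\max}/\underline{\beta}_0$ and by the fact that $E_s=\bar\nabla_{\gamma'(s)}\mathcal{H}f(\gamma(s))$ has operator norm $\lesssim q^{3/2}\|\nabla^3 f\|_{\max}$) with the path smoothness condition (\underline{A3}) yields
\[
\bigl|\langle \grad f(\gamma(s)),\bar{\nabla}_{\gamma'(s)}\gamma'(s)\rangle\bigr| \leq \frac{\sqrt{2}\,q^{3/2}\,\|\underline{U}_d^\perp(\gamma(s))\grad f(\gamma(s))\|_2\,\|\nabla^3 f(\gamma(s))\|_{\max}}{\underline{\beta}_0} \leq \frac{\underline{\beta}_0}{2},
\]
where the last inequality is precisely the first inequality of (\underline{A3}). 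Adding the two contributions gives $\xi''(s)\geq \underline{\beta}_0 - \underline{\beta}_0/2 = \underline{\beta}_0/2$.

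Part (c) is then a direct consequence of (a) and (b): integrating $\xi''\geq \underline{\beta}_0/2$ from $0$ to $s$ with $\xi'(0)=0$ gives $\xi'(s)\geq \underline{\beta}_0 s/2$, and integrating once more with $\xi(0)=0$ gives $\xi(s)\geq \underline{\beta}_0 s^2/4$. Since $s$ is the arc length of $\gamma$ from $\gamma(0)$ to $\gamma(s)$ on $\Omega_q$, it bounds the chord length from below, so $s\geq \|\gamma(0)-\gamma(s)\|_2$ and the desired inequality follows. The main obstacle I anticipate is controlling $\bar{\nabla}_{\gamma'(s)}\gamma'(s)$ in part (b): one needs to carefully distinguish between ordinary Euclidean derivatives of $\gamma'$ (viewed as an $\mathbb{R}^{q+1}$-valued curve) and the Riemannian connection on $\Omega_q$, and verify that the normal component arising from the embedding does not spoil the bound because it pairs with $\grad f(\gamma(s))\in T_{\gamma(s)}$ and hence vanishes.
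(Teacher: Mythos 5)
Your overall strategy is sound and your formula $\xi''(s)=-\langle\mathcal{H}f(\gamma(s))\gamma'(s),\gamma'(s)\rangle-\langle\grad f(\gamma(s)),\bar{\nabla}_{\gamma'(s)}\gamma'(s)\rangle$ is correct, but it is a genuinely different decomposition from the paper's, which differentiates the \emph{square} $(\xi')^2=\grad f^T\underline{U}_d\grad f$ and never isolates $\bar{\nabla}_{\gamma'}\gamma'$. Your route is conceptually cleaner, and your bound on the first term is fine. The gap is in the second term. You assert, implicitly, that $\|\bar{\nabla}_{\gamma'(s)}\gamma'(s)\|_2$ is controlled by $\|\underline{\dot U}_{\gamma(s)}\|_2$; that is not justified and is not obviously true. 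Writing $h(s)=\|\underline{G}_d(\gamma(s))\|_2$ and differentiating $\gamma'=-\underline{G}_d/h$, the acceleration contains the term $-\underline{U}_d\,\nabla\nabla f\,\gamma'/h$, which lies inside $\mathrm{col}(\underline{V}_d)$ and whose $\gamma'^{\perp}$-component is generically nonzero (when the eigenvalues $\underline{\lambda}_{d+1},\dots,\underline{\lambda}_q$ are distinct) and of order $O(1/h)$. So the curvature of the reparametrized subspace-constrained gradient flow need not stay bounded as the flow approaches $\underline{R}_d$, and the naive product bound $|\langle\grad f,\bar{\nabla}_{\gamma'}\gamma'\rangle|\leq\|\underline{U}_d^\perp\grad f\|_2\,\|\bar{\nabla}_{\gamma'}\gamma'\|_2$ would diverge. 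The obstacle you flag at the end (Euclidean versus covariant derivative of $\gamma'$) is a real bookkeeping point but secondary.

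What rescues your argument is not a bound on the full $\|\bar{\nabla}_{\gamma'}\gamma'\|_2$ but on the specific component that actually contributes. You correctly observe that $\langle\grad f,\bar{\nabla}_{\gamma'}\gamma'\rangle=\langle\underline{U}_d^\perp\grad f,\bar{\nabla}_{\gamma'}\gamma'\rangle$ (since $\underline{G}_d$ is parallel to $\gamma'$ and $\bar{\nabla}_{\gamma'}\gamma'\perp\gamma'$); the missing step is to compute the $\underline{U}_d^\perp$-component of $\bar{\nabla}_{\gamma'}\gamma'$ explicitly. Because $\underline{U}_d^\perp\underline{U}_d=0$, the divergent $\underline{U}_d\nabla\nabla f\gamma'/h$ piece is annihilated, and (using $\underline{U}_d^\perp\underline{\dot U}\,\underline{U}_d^\perp=0$ from idempotence) one finds $\underline{U}_d^\perp\bar{\nabla}_{\gamma'}\gamma'=-\underline{U}_d^\perp\underline{\dot U}\,\underline{G}_d/h$, whose norm is $\leq\|\underline{\dot U}\|_2$ because the $\|\underline{G}_d\|_2$ factor in the numerator cancels the $h$ in the denominator. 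Only then can one invoke the Davis--Kahan bound and condition (\underline{A3}) to obtain the $\underline{\beta}_0/2$ bound. The paper's route performs this cancellation automatically: after simplifying $\grad f^T\underline{\dot U}\grad f=2\nabla f^T\underline{\dot U}\,\underline{G}_d$ via idempotence and dividing by $2\xi'=2\|\underline{G}_d\|_2$, the $\|\underline{G}_d\|_2$ factor is already sitting in the numerator before any norm bound is applied. So your approach works, but it requires this extra explicit cancellation step that the proposal omits and whose necessity it does not anticipate.
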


\begin{proof}[Proof of Lemma~\ref{quad_behave_Dir}]
	The proof is adopted from Lemma 3 in \cite{Non_ridge_est2014}.
	
	(a) The first property $\xi(0)=0$ is obvious from the definition \eqref{SCGF_Density}. Then,
	\begin{align*}
	\xi'(s) = -\grad f(\gamma(s))^T \gamma'(s) &= \frac{\nabla f(\gamma(s))^T \underline{G}_d(\gamma(s))}{\norm{\underline{G}_d(\gamma(s))}_2} \quad \text{ by \eqref{arc_length_deri}}\\
	&= \frac{\nabla f(\gamma(s))^T \underline{V}_d(\gamma(s)) \underline{V}_d(\gamma(s))^T \nabla f(\gamma(s))}{\norm{\underline{G}_d(\gamma(s))}_2}\\
	&= \norm{\underline{G}_d(\gamma(s))}_2,
	\end{align*}
	since $\underline{V}_d(\gamma(s))^T \underline{V}_d(\gamma(s)) = \bm{I}_{q-d}$ for all $\gamma(s) \in \Omega_q$. By the definition of $\underline{R}_d$ in \eqref{Dir_ridge_app}, $\underline{G}_d(\gamma(s))=0$ when $\gamma(s)\in \underline{R}_d$. Thus, $\xi'(0)=0$ and $\xi(s)$ is non-decreasing in $s$.\\
	
	\noindent (b) Note that 
	\begin{align*}
	\left(\xi'(s) \right)^2 = \norm{\underline{G}_d(\gamma(s))}_2^2 &=\nabla f(\gamma(s))^T \underline{V}_d(\gamma(s)) \underline{V}_d(\gamma(s))^T \nabla f(\gamma(s)) \\
	&= \left[ \grad f(\gamma(s)) \right]^T \underline{U}(\gamma(s)) \left[ \grad f(\gamma(s)) \right].
	\end{align*}
	Differentiating both sides of the equation, we have that
	\begin{align*}
	2\xi'(s) \xi''(s) &= 2\gamma'(s)^T \mathcal{H} f(\gamma(s)) \underline{U}(\gamma(s)) \left[\grad f(\gamma(s)) \right] \\
	&\quad + \left[ \grad f(\gamma(s)) \right]^T \underline{\dot{U}}(\gamma(s)) \left[ \grad f(\gamma(s)) \right].
	\end{align*}
	Since $\underline{U}(\gamma(s)) \cdot \underline{U}(\gamma(s)) = \underline{U}(\gamma(s))$ (idempotent), we have that $\underline{\dot{U}}(\gamma(s)) = \underline{U}(\gamma(s)) \underline{\dot{U}}(\gamma(s)) + \underline{\dot{U}}(\gamma(s)) \underline{U}(\gamma(s))$, and hence the second term on the right-hand side of the above equation becomes
	\begin{align*}
	&\left[ \grad f(\gamma(s)) \right]^T \underline{\dot{U}}_{\gamma(s)}(\gamma(s)) \left[ \grad f(\gamma(s)) \right]\\ 
	&= \nabla f(\gamma(s))^T \underline{U}(\gamma(s))\underline{\dot{U}}_{\gamma(s)}(\gamma(s)) \nabla f(\gamma(s)) + \nabla f(\gamma(s))^T \underline{\dot{U}}_{\gamma(s)}(\gamma(s)) \underline{U}(\gamma(s)) \nabla f(\gamma(s))\\
	&= 2\nabla f(\gamma(s))^T \underline{\dot{U}}(\gamma(s)) \underline{G}_d(\gamma(s))
	\end{align*}
	Thus,
	$$2\xi'(s)\xi''(s) = 2\gamma'(s)^T \mathcal{H} f(\gamma(s)) \underline{G}_d(\gamma(s)) + 2\nabla f(\gamma(s))^T \underline{\dot{U}}(\gamma(s)) \underline{G}_d(\gamma(s)).$$
	By (a) and \eqref{arc_length_deri}, we conclude that
	\begin{equation}
	\label{SCGF_Den_Expression}
	\xi''(s) = -\frac{\underline{G}_d(\gamma(s))^T \mathcal{H} f(\gamma(s)) \cdot \underline{G}_d(\gamma(s))}{\norm{\underline{G}_d(\gamma(s))}_2^2} + \frac{\nabla f(\gamma(s))^T \underline{\dot{U}}(\gamma(s)) \underline{G}_d(\gamma(s))}{\norm{\underline{G}_d(\gamma(s))}_2}.
	\end{equation}
	Now, we will bound the two terms in \eqref{SCGF_Den_Expression}, respectively. As for the first term $-\frac{\underline{G}_d(\gamma(s))^T \mathcal{H} f(\gamma(s)) \cdot \underline{G}_d(\gamma(s))}{\norm{\underline{G}_d(\gamma(s))}_2^2}$, we notice that $\underline{G}_d(\gamma(s))$ is in the column space of $\underline{V}_d(\gamma(s))$. Hence,
	$$\underline{G}_d(\gamma(s))^T \mathcal{H} f(\gamma(s)) \cdot \underline{G}_d(\gamma(s)) = \underline{G}_d(\gamma(s))^T \left[\underline{V}_d(\gamma(s)) \Lambda_{\underline{R}_d}(\gamma(s)) \underline{V}_d(\gamma(s))^T \right] \underline{G}_d(\gamma(s)),$$
	where $\Lambda_{\underline{R}_d}(\gamma(s)) = \Diag\left[\underline{\lambda}_{d+1}(\gamma(s)),...,\underline{\lambda}_q(\gamma(s)) \right]$. Therefore, from condition (\underline{A2}),
	\begin{align*}
	\frac{\underline{G}_d(\gamma(s))^T \mathcal{H} f(\gamma(s)) \cdot \underline{G}_d(\gamma(s))}{\norm{\underline{G}_d(\gamma(s))}_2^2} &= \frac{\underline{G}_d(\gamma(s))^T \left[\underline{V}_d(\gamma(s)) \Lambda_{\underline{R}_d}(\gamma(s)) \underline{V}_d(\gamma(s))^T \right] \underline{G}_d(\gamma(s))}{\norm{\underline{G}_d(\gamma(s))}_2^2}\\
	&\leq \lambda_{\max}\left[\underline{V}_d(\gamma(s)) \Lambda_{\underline{R}_d}(\gamma(s)) \underline{V}_d(\gamma(s))^T \right] \leq -\underline{\beta}_0
	\end{align*}
	and consequently,
	$$-\frac{\underline{G}_d(\gamma(s))^T \mathcal{H} f(\gamma(s)) \cdot \underline{G}_d(\gamma(s))}{\norm{\underline{G}_d(\gamma(s))}_2^2} \geq \underline{\beta}_0.$$
	As for the second term $\frac{\nabla f(\gamma(s))^T \underline{\dot{U}}_{\gamma(s)} \underline{G}_d(\gamma(s))}{\norm{\underline{G}_d(\gamma(s))}_2}$, we notice that $\underline{U}(\gamma(s)) + \underline{U}^{\perp}(\gamma(s)) = \bm{I}_{q+1}$, where $\underline{U}^{\perp} \equiv \underline{U}_d^{\perp}$, and $\underline{U}(\gamma(s)) \cdot \underline{G}_d(\gamma(s)) =\underline{G}_d(\gamma(s))$. Then,
	\begin{align*}
	&\nabla f(\gamma(s))^T \underline{\dot{U}}(\gamma(s)) \underline{G}_d(\gamma(s))\\ 
	&= \nabla f(\gamma(s))^T \underline{U}(\gamma(s)) \underline{\dot{U}}(\gamma(s)) \underline{G}_d(\gamma(s)) + \nabla f(\gamma(s))^T \underline{U}^{\perp}(\gamma(s)) \underline{\dot{U}}(\gamma(s)) G_d(\gamma(s))\\
	&= \nabla f(\gamma(s))^T \underline{U}(\gamma(s)) \underline{\dot{U}}(\gamma(s)) \underline{U}(\gamma(s)) \cdot \underline{G}_d(\gamma(s)) \\
	&\quad + \nabla f(\gamma(s))^T \underline{U}^{\perp}(\gamma(s)) \underline{\dot{U}}(\gamma(s)) \underline{U}(\gamma(s)) \cdot \underline{G}_d(\gamma(s)).
	\end{align*}
	However, $\left|\nabla f(\gamma(s))^T \underline{U}(\gamma(s)) \underline{\dot{U}}(\gamma(s)) \underline{U}(\gamma(s)) \cdot \underline{G}_d(\gamma(s)) \right|=0$. To see this, note that $\underline{U}(\gamma(s)) \cdot \underline{U}(\gamma(s)) = \underline{U}(\gamma(s))$ and it implies that
	\begin{align*}
	& \quad \underline{U}(\gamma(s)) \underline{\dot{U}}(\gamma(s)) + \underline{\dot{U}}(\gamma(s))\underline{U}(\gamma(s)) = \underline{\dot{U}}(\gamma(s))\\
	&\implies \underline{U}(\gamma(s)) \underline{\dot{U}}(\gamma(s)) \underline{U}(\gamma(s)) + \underline{\dot{U}}(\gamma(s)) \underline{U}(\gamma(s)) = \underline{\dot{U}}(\gamma(s)) \underline{U}(\gamma(s)),
	\end{align*}
	showing that $\underline{U}(\gamma(s)) \underline{\dot{U}}_{\gamma(s)} \underline{U}(\gamma(s))=\bm{0}$. To bound $\nabla f(\gamma(s))^T \underline{U^{\perp}}(\gamma(s)) \underline{\dot{U}}(\gamma(s)) \underline{U}(\gamma(s)) \cdot \underline{G}_d(\gamma(s))$, we proceed as follows. As before, we let $E_s = \frac{d}{ds} \mathcal{H} f(\gamma(s)) = \bar{\nabla} \mathcal{H} f(\gamma(s)) \cdot \gamma'(s)$. Then, by the Davis-Kahan theorem (Lemma~\ref{Davis_K} here),
	\begin{align*}
	&\left|\nabla f(\gamma(s))^T \underline{U}^{\perp}(\gamma(s)) \underline{\dot{U}}(\gamma(s)) \underline{U}(\gamma(s)) \cdot \underline{G}_d(\gamma(s)) \right| \\
	&= \lim_{t\to 0} \frac{\Big|\nabla f(\gamma(s))^T \underline{U}^{\perp}(\gamma(s))\left[\underline{U}(\mathcal{H} f(\gamma(s)) + tE_s) - \underline{U}(\mathcal{H} f(\gamma(s))) \Big] \underline{U}(\gamma(s)) \cdot \underline{G}_d(\gamma(s)) \right|}{t}\\
	&\leq \norm{\underline{U}^{\perp}(\gamma(s)) \nabla f(\gamma(s))}_2 \cdot \lim_{t\to 0} \frac{\norm{\underline{U}(\mathcal{H} f(\gamma(s)) + tE_s) - \underline{U}(\mathcal{H} f(\gamma(s)))}_2}{t} \cdot \norm{\underline{G}_d(\gamma(s))}_2\\
	&\leq \frac{\sqrt{2}\norm{\underline{U}^{\perp}(\gamma(s)) \nabla f(\gamma(s))}_2 \cdot \norm{E_s}_F \cdot \norm{\underline{G}_d(\gamma(s))}_2}{\underline{\beta}_0}.
	\end{align*}
	Note that $\norm{E_s}_F\leq \norm{\bar{\nabla} \mathcal{H} f(\gamma(s))}_{\max} \cdot \norm{\gamma'(s)}_2 \leq q^{\frac{3}{2}} \norm{\nabla^3 f(\gamma(s))}_{\max}$, because $\norm{\gamma'(s)}_2=1$. Thus, from condition (\underline{A3}),
	$$\frac{\left|\nabla f(\gamma(s))^T \underline{\dot{U}}(\gamma(s)) \underline{G}_d(\gamma(s)) \right|}{\norm{\underline{G}_d(\gamma(s))}_2} \leq \frac{\sqrt{2}q^{\frac{3}{2}} \norm{\nabla^3 f(\gamma(s))}_{\max} \norm{\underline{U}^{\perp}(\gamma(s)) \nabla f(\gamma(s))}_2}{\underline{\beta}_0} \leq \frac{\underline{\beta}_0}{2}.$$
	Therefore, $\xi''(s) \geq \underline{\beta}_0-\frac{\underline{\beta}_0}{2} = \frac{\underline{\beta}_0}{2}$.\\
	
	\noindent(c) For some $0\leq \tilde{s} \leq s$,
	$$\xi(s) = \xi(0) + s \xi'(0) + \frac{s^2}{2} \xi''(\tilde{s}) = \frac{s^2}{2} \xi''(\tilde{s}) \geq \frac{\underline{\beta}_0 s^2}{4}$$
	by (a) and (b). As $\gamma$ is parametrized by arc length, we conclude that
	$$\xi(s)-\xi(0) \geq \frac{\underline{\beta}_0}{4} s^2 \geq \frac{\underline{\beta}_0}{4} \norm{\gamma(0)-\gamma(s)}_2^2.$$
	The result follows.
\end{proof}

The statement (c) in Lemma~\ref{quad_behave_Dir} is known as the quadratic growth condition in the optimization literature \citep{anitescu2000degenerate,drusvyatskiy2018error}. Under conditions (\underline{A1-3}), such a quadratic growth of the subspace constrained gradient flow $\xi$ lifted onto the directional density $f$ enables us to quantify the stability of directional ridges under small perturbations on the directional density and develop the linear convergence of the (directional) SCGA algorithms on $\Omega_q$.

\subsection{Proof of Theorem~\ref{Dir_ridge_stability}}

We now show that if two directional densities $f$ and $\tilde{f}$ are close, their corresponding ridges $\underline{R}_d$ and $\underline{\tilde{R}}_d$ are also close. We will use, for instance, $\underline{\tilde{G}}_d$ and $\underline{\tilde{U}}_d$, to refer to the principal (Riemannian) gradient and projection matrix with its columns as the eigenvectors corresponding to the smallest $q-d$ eigenvalues of the (Riemannian) Hessian $\mathcal{H} \tilde{f}$ with the tangent space of $\Omega_q$ defined by $\tilde{f}$. 

\begin{customthm}{4.1}
	Suppose that conditions (\underline{A1-3}) hold for the directional density $f$ and that condition (\underline{A1}) holds for $\tilde{f}$. When $\norm{f-\tilde{f}}_{\infty,3}^*$ is sufficiently small,
	\begin{enumerate}[label=(\alph*)]
		\item conditions (\underline{A2-3}) holds for $\tilde{f}$.
		\item $\Haus(\underline{R}_d, \underline{\tilde{R}}_d) = O\left(\norm{f-\tilde{f}}_{\infty,2}^* \right)$.
		\item $\mathtt{reach}(\underline{\tilde{R}}_d) \geq \min\left\{\underline{\rho}/2, \frac{\min\left\{\underline{\beta}_1, 1\right\}^2}{\underline{A}_2 \left(\norm{f}_{\infty}^{(3)} + \norm{f}_{\infty}^{(4)} \right)} \right\} + O\left(\norm{f-\tilde{f}}_{\infty,3}^* \right)$ for a constant $\underline{A}_2>0$.
	\end{enumerate}
\end{customthm}

\begin{proof}[Proof of Theorem~\ref{Dir_ridge_stability}] 
	Our arguments are modified from the proof of Theorem 4 in \cite{Non_ridge_est2014} as well as Proposition 4 and Theorem 5 in \cite{YC2020}.\\
	
	\noindent(a) We write the spectral decompositions of $\mathcal{H} f$ and $\mathcal{H} \tilde{f}$ as
	$$\mathcal{H} f = V \Lambda V^T \quad \text{ and } \quad \mathcal{H} \tilde{f} = \tilde{V} \tilde{\Lambda} \tilde{V}.$$
	By Weyl's Theorem (Theorem 4.3.1 in \cite{HJ2012}), we know that
	$$|\lambda_j - \tilde{\lambda}_j| \leq \norm{\mathcal{H}f -\mathcal{H} \tilde{f}}_2 \leq q \norm{f-\tilde{f}}_{\infty,2}^*,$$
	where we recall that there are at most $q$ nonzero eigenvalues of the Riemannian Hessian $\mathcal{H}f(\bm{x})$ on $\Omega_q$.
	Thus, $\tilde{f}$ satisfies condition (\underline{A2}). Moreover, since condition (\underline{A3}) depends only on the first and third order derivative of $f$, they hold for $\tilde{f}$ when $\norm{f-\tilde{f}}_{\infty,3}^*$ is small enough.\\
	
	\noindent(b) We present two methods based on two different flows to prove this statement and comment their pros and cons in Remark~\ref{remark_flows}.\\
	\textbf{Method A}: By the Davis-Kahan theorem (Lemma~\ref{Davis_K} and \eqref{Davis_Kahan_ineq}), 
	\begin{align*}
	\norm{\underline{U}_d(\bm{x})- \underline{\tilde{U}}_d(\bm{x})}_2 &= \norm{\underline{V}_d(\bm{x}) \underline{V}_d(\bm{x})^T - \underline{\tilde{V}}_d(\bm{x}) \underline{\tilde{V}}_d(\bm{x})^T}_2 \\
	&\leq \frac{\sqrt{2}\norm{\mathcal{H} f(\bm{x}) - \mathcal{H} \tilde{f}(\bm{x})}_F}{\underline{\beta}_0} \\
	&\leq \frac{\sqrt{2}q\norm{f-\tilde{f}}_{\infty,2}^*}{\underline{\beta}_0}
	\end{align*}
	for any $\bm{x}\in \Omega_q$. Then, given that $\norm{\underline{\tilde{V}}_d(\bm{x}) \underline{\tilde{V}}_d(\bm{x})^T}_2=1$,
	\begin{align*}
	&\norm{\underline{G}_d(\bm{x}) - \underline{\tilde{G}}_d(\bm{x})}_2 \\
	&= \norm{\underline{V}_d(\bm{x}) \underline{V}_d(\bm{x})^T \nabla f(\bm{x}) - \underline{\tilde{V}}_d(\bm{x}) \underline{\tilde{V}}_d(\bm{x})^T \nabla \tilde{f}(\bm{x})}_2\\
	&\leq \norm{\left[\underline{V}_d(\bm{x}) \underline{V}_d(\bm{x})^T - \underline{\tilde{V}}_d(\bm{x}) \underline{\tilde{V}}_d(\bm{x})^T \right] \nabla f(\bm{x})}_2 + \norm{\underline{\tilde{V}}_d(\bm{x}) \underline{\tilde{V}}_d(\bm{x})^T \left[\nabla f(\bm{x}) -\nabla \tilde{f}(\bm{x}) \right]}_2\\
	&\leq \frac{\sqrt{2}q\norm{f-\tilde{f}}_{\infty,2}^*}{\underline{\beta}_0} \cdot \norm{\nabla f(\bm{x})}_2 + \sqrt{q} \norm{f-\tilde{f}}_{\infty,1}^*.
	\end{align*}
	Therefore, by the differentiability of $f$ from (\underline{A1}) and the compactness of $\Omega_q$, we obtain from the above calculations that
	$$\sup_{\bm{x}\in \Omega_q} \norm{\underline{G}_d(\bm{x}) - \underline{\tilde{G}}_d(\bm{x})}_2 \leq C_1 \norm{f-\tilde{f}}_{\infty,2}^*$$
	for some constant $C_1 >0$ that only depends on the dimension $q$.\\ Now, let $\tilde{\bm{x}} \in \underline{\tilde{R}}_d$. Then, $\norm{\underline{\tilde{G}}_d(\tilde{\bm{x}})}_2 =0$, and $\norm{\underline{G}_d(\tilde{\bm{x}})}_2 \leq C_1 \norm{f-\tilde{f}}_{\infty,2}^*$. Let $\gamma$ be the subspace constrained gradient ascent flow through $\tilde{\bm{x}}$ as defined in Section~\ref{Sec:SCGF} so that $\gamma(s)=\tilde{\bm{x}}$ for some $s$. Note that $\gamma(0) \in \underline{R}_d$. From property (a) of Lemma~\ref{quad_behave_Dir}, we have that $\xi'(s)=\norm{\underline{G}_d(\tilde{\bm{x}})}_2$. Moreover, by Taylor's theorem,
	$$C_1 \norm{f -\tilde{f}}_{\infty,2}^* \geq \norm{\underline{G}_d(\tilde{\bm{x}})}_2 = \xi'(s) = \xi'(0) + s\xi''(u)$$
	for some $u$ between $0$ and $s$. Since $\xi'(0)=0$, from property (b) of Lemma~\ref{quad_behave_Dir}, 
	$$C_1 \norm{f -\tilde{f}}_{\infty,2}^* \geq s\xi''(u) \geq \frac{s\underline{\beta}_0}{2},$$
	and consequently, $d_g(\gamma(0),\tilde{\bm{x}}) \leq s\leq \frac{2C_1}{\underline{\beta}_0} \norm{f -\tilde{f}}_{\infty,2}^*$, where $d_g(\bm{x},\bm{y})$ denotes the geodesic distance between $\bm{x}$ and $\bm{y}$ on $\Omega_q$. Therefore,
	$$d_E(\tilde{\bm{x}}, \underline{R}_d) \leq \norm{\gamma(0)-\tilde{\bm{x}}}_2 \leq d_g(\gamma(0),\tilde{\bm{x}}) \leq \frac{2C_1}{\underline{\beta}_0} \norm{f -\tilde{f}}_{\infty,2}^*.$$
	Now let $\bm{x}\in \underline{R}_d$. The same argument shows that $d_E(\bm{x}, \underline{\tilde{R}}_d) \leq \frac{2C_1}{\underline{\beta}_0} \norm{f -\tilde{f}}_{\infty,2}^*$ for some constant $C_2>0$ because conditions (\underline{A1-3}) hold for $\tilde{f}$.\\
	As a result, $\Haus(\underline{R}_d,\underline{\tilde{R}}_d) \leq \frac{2C_1}{\underline{\beta}_0} \norm{f -\tilde{f}}_{\infty,2}^* =O\left(\norm{f -\tilde{f}}_{\infty,2}^* \right)$.
	
	\textbf{Method B}: Since we are only required to bound the maximum Euclidean distance between $\underline{R}_d$ and $\underline{\tilde{R}}_d$, \emph{i.e.}, $\Haus(\underline{R}_d, \underline{\tilde{R}}_d)$, we may view $\underline{R}_d$ and $\underline{\tilde{R}}_d$ as solution manifolds in $\mathbb{R}^{q+1}$ and tentatively ignore the manifold constraint $\underline{R}_d, \underline{\tilde{R}}_d \subset \Omega_q$. Define $h(\bm{x})=\norm{\underline{V}_d(\bm{x})^T \nabla f(\bm{x})}_2= \sqrt{\nabla f(\bm{x})^T \underline{V}_d(\bm{x}) \underline{V}_d(\bm{x})^T \nabla f(\bm{x})}$. Given that $\underline{M}(\bm{x}) = \nabla\left[\underline{V}_d(\bm{x})^T\nabla f(\bm{x}) \right]^T$, the gradient of $h(\bm{x})$,
	\begin{equation}
	\label{grad_of_h}
	\nabla h(\bm{x}) = \frac{\nabla\left[\underline{V}_d(\bm{x})^T\nabla f(\bm{x}) \right]^T \underline{V}_d(\bm{x})^T \nabla f(\bm{x})}{\norm{\underline{V}_d(\bm{x})^T \nabla f(\bm{x})}_2} = \frac{\underline{M}(\bm{x}) \underline{V}_d(\bm{x})^T \nabla f(\bm{x})}{\norm{\underline{V}_d(\bm{x})^T \nabla f(\bm{x})}_2},
	\end{equation}
	is a vector in $\mathbb{R}^{q+1}$. Let $\bm{z}\in \underline{\tilde{R}}_d$. We define a flow $\phi_{\bm{z}}: \mathbb{R} \to \mathbb{R}^{q+1}$ such that
	$$\phi_{\bm{z}}(0)=\bm{z}, \quad \phi_{\bm{z}}'(t) = -\nabla h(\phi_{\bm{z}}(t)).$$
	It can be argued by Theorem 7 in \cite{YC2020} that $\phi_{\bm{z}}(\infty) \in \underline{R}_d$ when $\bm{z}\in \underline{R}_d\oplus \delta_0$ for some small $\delta_0 >0$. In addition, we can always choose $\norm{f-\tilde{f}}_{\infty,3}^*$ to be small enough so that $\underline{\tilde{R}}_d \subset \underline{R}_d \oplus \delta_0$. By Theorem 3.39 in \cite{Irwin2001}, $\phi_{\bm{z}}(t)$ is uniquely defined because the gradient $\nabla h(\bm{z})$ is well-defined for all $\bm{z}\notin \underline{R}_d$. We can also reparametrize $\phi_{\bm{z}}(t)$ by arc length as:
	$$\gamma_{\bm{z}}(0)=\bm{z}, \quad \gamma_{\bm{z}}'(s) = -\frac{\nabla h(\gamma_{\bm{z}}(s))}{\norm{\nabla h(\gamma_{\bm{z}}(s))}_2}.$$
	Let $\mathcal{S}_{\bm{z}}=\inf\left\{s>0: \gamma_{\bm{z}}(s)\in \underline{R}_d \right\}$ be the terminal time/arc-length point and $\gamma_{\bm{z}}(\mathcal{S}_{\bm{z}}) \in \underline{R}_d$ be the destination of $\gamma_{\bm{z}}$ on $\underline{R}_d$. The above argument also demonstrates that the flows $\phi_{\bm{z}}$ or $\gamma_{\bm{z}}$ converge to the manifold $\underline{R}_d$ from the normal direction of $\underline{R}_d$, because we can write 
	\begin{align*}
	&\gamma_{\bm{z}}'(\mathcal{S}_{\bm{z}})=-\frac{\nabla h(\gamma_{\bm{z}}(\mathcal{S}_{\bm{z}}))}{\norm{\nabla h(\gamma_{\bm{z}}(\mathcal{S}_{\bm{z}}))}_2} = \sum\limits_{k=d+1}^q \bm{a}_k \cdot \underline{\bm{m}}_k(\gamma_{\bm{z}}(\mathcal{S}_{\bm{z}})) \\
	&\quad \text{ with } \quad \bm{a}_k = -\frac{\bm{e}_{k-d}^T \underline{V}_d(\gamma_{\bm{z}}(\mathcal{S}_{\bm{z}})) \nabla f(\gamma_{\bm{z}}(\mathcal{S}_{\bm{z}}))}{\norm{\underline{M}(\gamma_{\bm{z}}(\mathcal{S}_{\bm{z}})) \underline{V}_d(\gamma_{\bm{z}}(\mathcal{S}_{\bm{z}})) \nabla f(\gamma_{\bm{z}}(\mathcal{S}_{\bm{z}}))}_2}
	\end{align*}
	and the column space of $\underline{M}(\gamma_{\bm{z}}(\mathcal{S}_{\bm{z}}))$ spans the normal space of $\underline{R}_d$ at $\gamma_{\bm{z}}(\mathcal{S}_{\bm{z}}) \in \underline{R}_d$.\\
	The goal now is to bound $\mathcal{S}_{\bm{z}}$ because its length must be greater or equal to $\norm{\bm{z}-\pi_{\underline{R}_d}(\bm{z})}_2$. We then define $\vartheta_{\bm{z}}(s) = h(\gamma_{\bm{z}}(s)) -h(\gamma_{\bm{z}}(\mathcal{S}_{\bm{z}})) = h(\gamma_{\bm{z}}(s))$. Differentiating $\vartheta_{\bm{z}}(s)$ with respect to $s$ leads to
	\begin{align}
	\label{zeta_bound}
	\begin{split}
	\vartheta_{\bm{z}}'(s) = \frac{d}{ds} h(\gamma_{\bm{z}}(s)) &= \left[\nabla h(\gamma_{\bm{z}}(s)) \right]^T \gamma_{\bm{z}}'(s)\\
	&= -\norm{\nabla h(\gamma_{\bm{z}}(s))}_2\\
	&= - \frac{\norm{\underline{M}(\gamma_{\bm{z}}(s)) \underline{V}_d(\gamma_{\bm{z}}(s))^T \nabla f(\gamma_{\bm{z}}(s))}_2}{\norm{\underline{V}_d(\gamma_{\bm{z}}(s))^T \nabla f(\gamma_{\bm{z}}(s))}_2}\\
	&\leq -\lambda_{\min}\left(\underline{M}(\gamma_{\bm{z}}(s))^T \underline{M}(\gamma_{\bm{z}}(s)) \right) \leq -\tilde{\beta}_1
	\end{split}
	\end{align}
	by (d) in Lemma~\ref{Dir_norm_reach_prop}. (Note that $0<\tilde{\beta}_1\leq \underline{\beta}_1$ because $\lambda_{\min}(\underline{M}(\bm{x})^T \underline{M}(\bm{x})) \geq \underline{\beta}_1$ and by the continuity of $\lambda_{\min}(\underline{M}(\bm{y})^T \underline{M}(\bm{y}))$, we can always choose $\delta_0>0$ such that $\lambda_{\min}(\underline{M}(\bm{y})^T \underline{M}(\bm{y}))\geq \tilde{\beta}_1$ for all $\bm{y}\in \underline{R}_d \oplus \delta_0$.)
	As $\bm{z}\in \underline{\tilde{R}}_d$, by the proof of \textbf{Method A}, we know that
	\begin{align*}
	C_1\norm{f-\tilde{f}}_{\infty,2}^* \geq \norm{\underline{G}_d(\bm{z})} &= \norm{\underline{V}_d(\bm{z})^T \nabla f(\bm{z})}_2\\
	&= h(\gamma_{\bm{z}}(0)) - h(\gamma_{\bm{z}}(\mathcal{S}_{\bm{z}})) \quad \text{ since }  h(\gamma_{\bm{z}}(\mathcal{S}_{\bm{z}}))=0\\
	&= \vartheta_{\bm{z}}(0) -\vartheta_{\bm{z}}(\mathcal{S}_{\bm{z}}) \quad \text{ since } \vartheta_{\bm{z}}(\mathcal{S}_{\bm{z}})=0 \text{ and }  \vartheta_{\bm{z}}(0) = h(\gamma_{\bm{z}}(0))\\
	&= -\mathcal{S}_{\bm{z}} \vartheta_{\bm{z}}'(\mathcal{S}_{\bm{z}}^*) \quad \text{ by the mean value theorem}\\
	&\geq \mathcal{S}_{\bm{z}} \tilde{\beta}_1 \quad \text{ by \eqref{zeta_bound}},
	\end{align*}
	where $\mathcal{S}_{\bm{z}}^*$ is some value between $0$ and $\mathcal{S}_{\bm{z}}$. Hence, $\mathcal{S}_{\bm{z}} \leq \frac{C_1}{\tilde{\beta}_1} \norm{f-\tilde{f}}_{\infty,2}^*= O\left(\norm{f-\tilde{f}}_{\infty,2}^* \right)$, which is independent of $\bm{z}\in \underline{\tilde{R}}_d$. This implies that 
	$$\sup_{\bm{z}\in \underline{\tilde{R}}_d} d_E(\bm{z}, \underline{R}_d) \leq \frac{C_1}{\tilde{\beta}_1} \norm{f-\tilde{f}}_{\infty,2}^*= O\left(\norm{f-\tilde{f}}_{\infty,2}^* \right).$$
	We can exchange the role of $\underline{R}_d$ and $\underline{\tilde{R}}_d$ and apply the same argument to show that
	$$\sup_{\bm{x}\in \underline{R}_d} d_E(\bm{x}, \underline{\tilde{R}}_d) = O\left(\norm{f-\tilde{f}}_{\infty,2}^* \right).$$
	In total, this leads to the conclusion that $\Haus(\underline{R}_d,\underline{\tilde{R}}_d) = O\left(\norm{f-\tilde{f}}_{\infty,2}^* \right)$.\\
	
	\noindent(c) By (h) in Lemma~\ref{Dir_norm_reach_prop}, the reach of $\underline{R}_d$ has a lower bound, $\min\left\{\underline{\rho}/2, \frac{\min\left\{\underline{\beta}_1,1\right\}^2}{\underline{A}_2\left(\norm{f}_{\infty}^{(3)} + \norm{f}_{\infty}^{(4)} \right)} \right\}$. Note that $\underline{\rho}$ and $\underline{\beta}_1$ depend on the first three order derivatives of $f$. Thus, the lower bound for the reach of $\underline{\tilde{R}}_d$ will be identical to the one for $\underline{R}_d$ with an error rate $O\left(\norm{f-\tilde{f}}_{\infty,3}^* \right)$.
\end{proof}

Note that for the stability of directional ridges, one can relax the condition (\underline{A1}) by requiring $f$ to be $\beta$-H{\"o}lder with $\beta \geq 3$.

\begin{remark}
	\label{remark_flows}
	We apply two different methods to establish the stability theorem of directional density ridges. {\bf Method A} utilizes the subspace constrained gradient flow constructed in Section~\ref{Sec:SCGF} and its quadratic behavior (Lemma~\ref{quad_behave_Dir}), while {\bf Method B} defines a normal flow to the ridge $\underline{R}_d$ induced by the column space of $\underline{M}(\bm{x})$. Each of these two flows has its pros and cons. The subspace constrained gradient flow aligns more coherently with our directional SCMS algorithm (Algorithm~\ref{Algo:Dir_SCMS}) to identify the (estimated) directional ridge from data, because it relies only on the first and second order derivatives of the (estimated) density $f$. Nevertheless, the subspace constrained gradient flow does not necessarily converge to $\underline{R}_d$ in the optimal direction, that is, the normal direction to $\underline{R}_d$. This can be seen from the explicit formula \eqref{Dir_normal_rows_all} of $\underline{M}(\bm{x})$, which spans the normal space of $\underline{R}_d$. The normal flow 
	$$\phi_{\bm{z}}(0)=\bm{z}, \quad \phi_{\bm{z}}'(t) = -\frac{\underline{M}(\phi_{\bm{z}}(t)) \underline{V}_d(\phi_{\bm{z}}(t))^T \nabla f(\phi_{\bm{z}}(t))}{\norm{\underline{V}_d(\phi_{\bm{z}}(t))^T \nabla f(\phi_{\bm{z}}(t))}_2}$$ 
	defined in {\bf Method B}, however, converges to $\underline{R}_d$ in its normal direction by construction. In general, the normal flow tends to the ridge $\underline{R}_d$ faster than the subspace constrained gradient flow, but it may be complicated to compute in any practical ridge-finding task due to its involvement with third order derivatives of the (estimated) density $f$. Recently, \cite{qiao2021algorithms} presented explicit formulae for finding density ridges via such a normal flow and its discrete gradient descent approximation. Additionally, they defined a smoothed version of the ridgeness function that also circumvents the computations of third order derivatives of $f$. 
\end{remark}

\section{Proofs of Proposition~\ref{Dir_SCMS_prop}, Proposition~\ref{SCGA_Dir_conv}, and Theorem~\ref{LC_SCGA_Dir}}
\label{App:Proofs_Dir_SCMS}

\begin{customprop}{4.3}
	Assume that the directional kernel $L$ is non-increasing, twice continuously differentiable, and convex with $L(0) <\infty$. Given the directional KDE $\hat{f}_h(\bm{x})=\frac{c_{L,q}(h)}{n}\sum\limits_{i=1}^n L\left(\frac{1-\bm{x}^T\bm{X}_i}{h^2} \right)$ and the directional SCMS sequence $\big\{\underline{\hat{\bm{x}}}^{(t)} \big\}_{t=0}^{\infty} \subset \Omega_q$ defined by \eqref{Dir_SCMS_update_new} or \eqref{Dir_SCMS_fixed_point_new}, the following properties hold:
	\begin{enumerate}[label=(\alph*)]
		\item The estimated density sequence $\left\{\hat{f}_h(\underline{\hat{\bm{x}}}^{(t)}) \right\}_{t=0}^{\infty}$ is non-decreasing and thus converges.
		\item $\lim\limits_{t\to \infty} \norm{\hat{\underline{V}}_d(\underline{\hat{\bm{x}}}^{(t)})^T \nabla \hat{f}_h(\underline{\hat{\bm{x}}}^{(t)})}_2=0$.
		\item If the kernel $L$ is also strictly decreasing on $[0,\infty)$, then $\lim\limits_{t\to \infty} \norm{\underline{\hat{\bm{x}}}^{(t+1)} - \underline{\hat{\bm{x}}}^{(t)}}_2=0$.
	\end{enumerate}
\end{customprop}

\begin{proof}[Proof of Proposition~\ref{Dir_SCMS_prop}]
	(a) The sequence $\left\{\hat{f}_h(\underline{\hat{\bm{x}}}^{(t)}) \right\}_{t=0}^{\infty}$ is bounded if the kernel $L$ is non-increasing with $L(0)<\infty$. Hence, it suffices to show that it is non-decreasing. The convexity and differentiability of kernel $L$ imply that 
	\begin{equation}
	\label{convex_kernel}
	L(x_2) - L(x_1) \geq L'(x_1) \cdot (x_2-x_1)
	\end{equation}
	for all $x_1,x_2 \in [0,\infty)$. Then, with $\nabla \hat{f}_h(\bm{x}) = -\frac{c_{L,q}(h)}{nh^2} \sum\limits_{i=1}^n \bm{X}_i L'\left(\frac{1-\bm{x}^T\bm{X}_i}{h^2} \right)$ and the iterative formula \eqref{Dir_SCMS_fixed_point_new} in the main paper, we derive that
	\begin{align*}
	&\hat{f}_h(\underline{\hat{\bm{x}}}^{(t+1)}) -\hat{f}_h(\underline{\hat{\bm{x}}}^{(t)}) \\
	&= \frac{c_{L,q}(h)}{n} \sum_{i=1}^n \left[L\left(\frac{1-\bm{X}_i^T\underline{\hat{\bm{x}}}^{(t+1)}}{h^2} \right) - L\left(\frac{1-\bm{X}_i^T\underline{\hat{\bm{x}}}^{(t)}}{h^2} \right) \right]\\
	&\geq \frac{c_{L,q}(h)}{nh^2} \sum_{i=1}^n L'\left(\frac{1-\bm{X}_i^T\underline{\hat{\bm{x}}}^{(t)}}{h^2} \right) \bm{X}_i^T \left(\underline{\hat{\bm{x}}}^{(t)} -\underline{\hat{\bm{x}}}^{(t+1)} \right)\\
	&= \nabla \hat{f}_h(\underline{\hat{\bm{x}}}^{(t)})^T \left(\underline{\hat{\bm{x}}}^{(t+1)} -\underline{\hat{\bm{x}}}^{(t)} \right)\\
	&= \nabla \hat{f}_h(\underline{\hat{\bm{x}}}^{(t)})^T \left[\frac{\hat{\underline{V}}_d(\underline{\hat{\bm{x}}}^{(t)}) \hat{\underline{V}}_d(\underline{\hat{\bm{x}}}^{(t)})^T \nabla \hat{f}_h(\underline{\hat{\bm{x}}}^{(t)}) + \norm{\nabla \hat{f}_h(\underline{\hat{\bm{x}}}^{(t)})}_2 \cdot \underline{\hat{\bm{x}}}^{(t)}}{\norm{\hat{\underline{V}}_d(\underline{\hat{\bm{x}}}^{(t)}) \hat{\underline{V}}_d(\underline{\hat{\bm{x}}}^{(t)})^T \nabla \hat{f}_h(\underline{\hat{\bm{x}}}^{(t)}) + \norm{\nabla \hat{f}_h(\underline{\hat{\bm{x}}}^{(t)})}_2 \cdot \underline{\hat{\bm{x}}}^{(t)}}_2} -\underline{\hat{\bm{x}}}^{(t)}\right]\\
	&\stackrel{\text{(i)}}{=} \frac{\norm{\hat{\underline{V}}_d(\underline{\hat{\bm{x}}}^{(t)})^T \nabla \hat{f}_h(\underline{\hat{\bm{x}}}^{(t)})}_2^2}{\sqrt{\norm{\hat{\underline{V}}_d(\underline{\hat{\bm{x}}}^{(t)})^T \nabla \hat{f}_h(\underline{\hat{\bm{x}}}^{(t)})}_2^2 + \norm{\nabla \hat{f}_h(\underline{\hat{\bm{x}}}^{(t)})}_2^2}}\\
	&\quad + \frac{\left[\norm{\nabla \hat{f}_h(\underline{\hat{\bm{x}}}^{(t)})}_2 - \sqrt{\norm{\hat{\underline{V}}_d(\underline{\hat{\bm{x}}}^{(t)})^T \nabla \hat{f}_h(\underline{\hat{\bm{x}}}^{(t)})}_2^2 + \norm{\nabla \hat{f}_h(\underline{\hat{\bm{x}}}^{(t)})}_2^2} \right] \cdot \nabla \hat{f}_h(\underline{\hat{\bm{x}}}^{(t)})^T \underline{\hat{\bm{x}}}^{(t)}}{\sqrt{\norm{\hat{\underline{V}}_d(\underline{\hat{\bm{x}}}^{(t)})^T \nabla \hat{f}_h(\underline{\hat{\bm{x}}}^{(t)})}_2^2 + \norm{\nabla \hat{f}_h(\underline{\hat{\bm{x}}}^{(t)})}_2^2}}\\
	&\stackrel{\text{(ii)}}{=} \frac{\norm{\hat{\underline{V}}_d(\underline{\hat{\bm{x}}}^{(t)})^T \nabla \hat{f}_h(\underline{\hat{\bm{x}}}^{(t)})}_2^2}{\sqrt{\norm{\hat{\underline{V}}_d(\underline{\hat{\bm{x}}}^{(t)})^T \nabla \hat{f}_h(\underline{\hat{\bm{x}}}^{(t)})}_2^2 + \norm{\nabla \hat{f}_h(\underline{\hat{\bm{x}}}^{(t)})}_2^2}}\\
	&\quad \times \frac{\left[\norm{\nabla \hat{f}_h(\underline{\hat{\bm{x}}}^{(t)})}_2 + \sqrt{\norm{\hat{\underline{V}}_d(\underline{\hat{\bm{x}}}^{(t)})^T \nabla \hat{f}_h(\underline{\hat{\bm{x}}}^{(t)})}_2^2 + \norm{\nabla \hat{f}_h(\underline{\hat{\bm{x}}}^{(t)})}_2^2} - \nabla \hat{f}_h(\underline{\hat{\bm{x}}}^{(t)})^T \underline{\hat{\bm{x}}}^{(t)} \right]}{\norm{\nabla \hat{f}_h(\underline{\hat{\bm{x}}}^{(t)})}_2 + \sqrt{\norm{\hat{\underline{V}}_d(\underline{\hat{\bm{x}}}^{(t)})^T \nabla \hat{f}_h(\underline{\hat{\bm{x}}}^{(t)})}_2^2 + \norm{\nabla \hat{f}_h(\underline{\hat{\bm{x}}}^{(t)})}_2^2}}\\
	&\stackrel{\text{(iii)}}{\geq} \frac{\norm{\hat{\underline{V}}_d(\underline{\hat{\bm{x}}}^{(t)})^T \nabla \hat{f}_h(\underline{\hat{\bm{x}}}^{(t)})}_2^2}{\norm{\nabla \hat{f}_h(\underline{\hat{\bm{x}}}^{(t)})}_2 + \sqrt{\norm{\hat{\underline{V}}_d(\underline{\hat{\bm{x}}}^{(t)})^T \nabla \hat{f}_h(\underline{\hat{\bm{x}}}^{(t)})}_2^2 + \norm{\nabla \hat{f}_h(\underline{\hat{\bm{x}}}^{(t)})}_2^2}}\\
	&\stackrel{\text{(iv)}}{\geq} \frac{\norm{\hat{\underline{V}}_d(\underline{\hat{\bm{x}}}^{(t)})^T \nabla \hat{f}_h(\underline{\hat{\bm{x}}}^{(t)})}_2^2}{\left(1+\sqrt{2} \right)\cdot \norm{\nabla \hat{f}_h(\underline{\hat{\bm{x}}}^{(t)})}_2}\\
	&\geq 0,
	\end{align*}
	where we use the orthogonality between $\hat{\underline{V}}_d(\underline{\hat{\bm{x}}}^{(t)}) \hat{\underline{V}}_d(\underline{\hat{\bm{x}}}^{(t)})^T$ and $\underline{\hat{\bm{x}}}^{(t)}$ in (i), multiply $\norm{\nabla \hat{f}_h(\underline{\hat{\bm{x}}}^{(t)})}_2 + \sqrt{\norm{\hat{\underline{V}}_d(\underline{\hat{\bm{x}}}^{(t)})^T \nabla \hat{f}_h(\underline{\hat{\bm{x}}}^{(t)})}_2^2 + \norm{\nabla \hat{f}_h(\underline{\hat{\bm{x}}}^{(t)})}_2^2}$ to both the numerators and denominators of the two summands to obtain (ii), leverage the fact that $\norm{\nabla \hat{f}_h(\underline{\hat{\bm{x}}}^{(t)})}_2 \geq \nabla \hat{f}_h(\underline{\hat{\bm{x}}}^{(t)})^T \underline{\hat{\bm{x}}}^{(t)}$ in (iii), use the inequality $\norm{\hat{\underline{V}}_d(\underline{\hat{\bm{x}}}^{(t)})^T \nabla \hat{f}_h(\underline{\hat{\bm{x}}}^{(t)})}_2 \leq \norm{\nabla \hat{f}_h(\underline{\hat{\bm{x}}}^{(t)})}_2$ in (iv). It thus completes the proof of (a).\\
	
	\noindent (b) Our derivation in (a) already shows that
	$$\norm{\hat{\underline{V}}_d(\underline{\hat{\bm{x}}}^{(t)})^T \nabla \hat{f}_h(\underline{\hat{\bm{x}}}^{(t)})}_2 \leq \sqrt{\left(1+\sqrt{2} \right) \cdot \norm{\nabla \hat{f}_h(\underline{\hat{\bm{x}}}^{(t)})}_2 \cdot \left[\hat{f}_h(\underline{\hat{\bm{x}}}^{(t+1)}) - \hat{f}_h(\underline{\hat{\bm{x}}}^{(t)}) \right]}.$$
	Notice that, on the one hand, the differentiability of kernel $L$ and the compactness of $\Omega_q$ imply that $\norm{\nabla \hat{f}_h(\bm{x})}_2 \leq B_{h,L}$ for all $\bm{x}\in \Omega_q$, where $B_{h,L}>0$ only depends on the bandwidth $h$ and kernel $L$. On the other hand, our argument in (a) already proves the convergence of $\left\{\hat{f}_h(\underline{\hat{\bm{x}}}^{(t)}) \right\}_{t=0}^{\infty}$. Therefore,
	$$\norm{\hat{\underline{V}}_d(\underline{\hat{\bm{x}}}^{(t)})^T \nabla \hat{f}_h(\underline{\hat{\bm{x}}}^{(t)})}_2 \leq \sqrt{\left(1+\sqrt{2} \right) B_{h,L} \cdot \left[\hat{f}_h(\underline{\hat{\bm{x}}}^{(t+1)}) - \hat{f}_h(\underline{\hat{\bm{x}}}^{(t)}) \right]} \to 0$$
	as $t\to \infty$. The result follows.\\
	
	\noindent (c) Given the iterative formula \eqref{Dir_SCMS_fixed_point_new} in the main paper, we deduce that
	\begin{align*}
	&\norm{\underline{\hat{\bm{x}}}^{(t+1)} - \underline{\hat{\bm{x}}}^{(t)}}_2^2 \\
	&= \norm{\frac{\hat{\underline{V}}_d(\underline{\hat{\bm{x}}}^{(t)}) \hat{\underline{V}}_d(\underline{\hat{\bm{x}}}^{(t)})^T \nabla \hat{f}_h(\underline{\hat{\bm{x}}}^{(t)}) + \norm{\nabla \hat{f}_h(\underline{\hat{\bm{x}}}^{(t)})}_2 \cdot \underline{\hat{\bm{x}}}^{(t)}}{\norm{\hat{\underline{V}}_d(\underline{\hat{\bm{x}}}^{(t)}) \hat{\underline{V}}_d(\underline{\hat{\bm{x}}}^{(t)})^T \nabla \hat{f}_h(\underline{\hat{\bm{x}}}^{(t)}) + \norm{\nabla \hat{f}_h(\underline{\hat{\bm{x}}}^{(t)})}_2 \cdot \underline{\hat{\bm{x}}}^{(t)}}_2} -\underline{\hat{\bm{x}}}^{(t)}}_2^2\\
	&\stackrel{\text{(i)}}{=} \frac{\norm{\hat{\underline{V}}_d(\underline{\hat{\bm{x}}}^{(t)}) \hat{\underline{V}}_d(\underline{\hat{\bm{x}}}^{(t)})^T \nabla \hat{f}_h(\underline{\hat{\bm{x}}}^{(t)}) + \left[\norm{\nabla \hat{f}_h(\underline{\hat{\bm{x}}}^{(t)})}_2 - \sqrt{\norm{\hat{\underline{V}}_d(\underline{\hat{\bm{x}}}^{(t)})^T \nabla \hat{f}_h(\underline{\hat{\bm{x}}}^{(t)})}_2^2 + \norm{\nabla \hat{f}_h(\underline{\hat{\bm{x}}}^{(t)})}_2^2}\right] \underline{\hat{\bm{x}}}^{(t)}}_2^2}{\norm{\hat{\underline{V}}_d(\underline{\hat{\bm{x}}}^{(t)})^T \nabla \hat{f}_h(\underline{\hat{\bm{x}}}^{(t)})}_2^2 + \norm{\nabla \hat{f}_h(\underline{\hat{\bm{x}}}^{(t)})}_2^2}\\
	&\stackrel{\text{(ii)}}{=} \frac{2\norm{\hat{\underline{V}}_d(\underline{\hat{\bm{x}}}^{(t)})^T \nabla \hat{f}_h(\underline{\hat{\bm{x}}}^{(t)})}_2^2}{\norm{\hat{\underline{V}}_d(\underline{\hat{\bm{x}}}^{(t)})^T \nabla \hat{f}_h(\underline{\hat{\bm{x}}}^{(t)})}_2^2 + \norm{\nabla \hat{f}_h(\underline{\hat{\bm{x}}}^{(t)})}_2^2}\\
	&\quad + \underbrace{\frac{2\norm{\nabla \hat{f}_h(\underline{\hat{\bm{x}}}^{(t)})}_2^2 - 2\norm{\nabla \hat{f}_h(\underline{\hat{\bm{x}}}^{(t)})}_2 \cdot \sqrt{\norm{\hat{\underline{V}}_d(\underline{\hat{\bm{x}}}^{(t)})^T \nabla \hat{f}_h(\underline{\hat{\bm{x}}}^{(t)})}_2^2 + \norm{\nabla \hat{f}_h(\underline{\hat{\bm{x}}}^{(t)})}_2^2} }{\norm{\hat{\underline{V}}_d(\underline{\hat{\bm{x}}}^{(t)})^T \nabla \hat{f}_h(\underline{\hat{\bm{x}}}^{(t)})}_2^2 + \norm{\nabla \hat{f}_h(\underline{\hat{\bm{x}}}^{(t)})}_2^2}}_{\leq 0}\\
	&\leq \frac{2\norm{\hat{\underline{V}}_d(\underline{\hat{\bm{x}}}^{(t)})^T \nabla \hat{f}_h(\underline{\hat{\bm{x}}}^{(t)})}_2^2}{\norm{\hat{\underline{V}}_d(\underline{\hat{\bm{x}}}^{(t)})^T \nabla \hat{f}_h(\underline{\hat{\bm{x}}}^{(t)})}_2^2 + \norm{\nabla \hat{f}_h(\underline{\hat{\bm{x}}}^{(t)})}_2^2},
	\end{align*}
	where we leverage the orthogonality between $\hat{\underline{V}}_d(\underline{\hat{\bm{x}}}^{(t)}) \hat{\underline{V}}_d(\underline{\hat{\bm{x}}}^{(t)})^T \nabla \hat{f}_h(\underline{\hat{\bm{x}}}^{(t)})$ and $\underline{\hat{\bm{x}}}^{(t)}$ to obtain (i) and (ii). Under the assumption that the kernel $L$ is strictly decreasing and (twice) continuously differentiable, we know that $\norm{\nabla \hat{f}_h(\bm{x})}_2$ is lower bounded away from 0 on $\Omega_q$. Therefore, with the result in (b), the above calculation indicates that
	$$\norm{\underline{\hat{\bm{x}}}^{(t+1)} - \underline{\hat{\bm{x}}}^{(t)}}_2 \leq \frac{\sqrt{2}\norm{\hat{\underline{V}}_d(\underline{\hat{\bm{x}}}^{(t)})^T \nabla \hat{f}_h(\underline{\hat{\bm{x}}}^{(t)})}_2}{\sqrt{\norm{\hat{\underline{V}}_d(\underline{\hat{\bm{x}}}^{(t)})^T \nabla \hat{f}_h(\underline{\hat{\bm{x}}}^{(t)})}_2^2 + \norm{\nabla \hat{f}_h(\underline{\hat{\bm{x}}}^{(t)})}_2^2}} \to 0$$
	as $t\to \infty$. The result follows.
\end{proof}

\begin{remark}
	The conditions imposed on kernel $L$ in Proposition~\ref{Dir_SCMS_prop} is satisfied by some commonly used kernels, such as the von Mises kernel $L(r)=e^{-r}$. However, they can be further relaxed. On the one hand, it is sufficient to assume that the kernel $L$ is twice continuously differentiable except for finitely many points on $[0,\infty)$. On the other hand, as long as the kernel $L$ satisfies $C_{L,q} =-\frac{\int_0^{\infty} L'(r) r^{\frac{q}{2}-1} dr}{\int_0^{\infty} L(r) r^{\frac{q}{2}-1} dr}>0$ and the true directional density $f$ is positive almost everywhere on $\Omega_q$, Lemma~\ref{norm_tot_grad} demonstrates that $\norm{\nabla \hat{f}_h(\bm{x})}_2\to \infty$ with probability tending to 1 when $h\to 0$ and $nh^q\to \infty$. Therefore, our upper bound on $\norm{\underline{\hat{\bm{x}}}^{(t+1)} - \underline{\hat{\bm{x}}}^{(t)}}_2$ in our proof of (c) will be asymptotically valid for all $t\geq 0$, even without the strict decreasing property of kernel $L$. Under such relaxation, our conclusions in Proposition~\ref{Dir_SCMS_prop} are applicable to directional SCMS algorithms with other kernels that have bounded supports on $[0,\infty)$.
\end{remark}

\begin{customprop}{4.4}[Convergence of the SCGA Algorithm on $\Omega_q$]
		For any SCGA sequence $\big\{\underline{\bm{x}}^{(t)} \big\}_{t=0}^{\infty} \subset \Omega_q$ defined by \eqref{SCGA_manifold_update} with $0<\underline{\eta} < \frac{2}{q\norm{\mathcal{H}f}_{\infty}^{(2)}}$, the following properties hold:
		\begin{enumerate}[label=(\alph*)]
			\item Under condition (\underline{A1}), the objective function sequence $\big\{f(\underline{\bm{x}}^{(t)}) \big\}_{t=0}^{\infty}$ is non-decreasing and thus converges.
			
			\item Under condition (\underline{A1}), $\lim_{t\to\infty} \norm{\underline{V}_d(\underline{\bm{x}}^{(t)})^T \grad f(\underline{\bm{x}}^{(t)})}_2 = \lim_{t\to\infty} d_g\left(\underline{\bm{x}}^{(t+1)}, \underline{\bm{x}}^{(t)} \right)=0$.
			
			\item Under conditions (\underline{A1-3}), $\lim_{t\to\infty} d_g\left(\underline{\bm{x}}^{(t)},\underline{R}_d \right)=0$ whenever $\underline{\bm{x}}^{(0)} \in \underline{R}_d\oplus \underline{r}_1$ with the convergence radius $\underline{r}_1$ satisfying
			$$0< \underline{r}_1 < \min\left\{\underline{\rho}/2, \frac{\min\left\{\underline{\beta}_1, 1 \right\}^2}{\underline{A}_2 \left(\norm{f}_{\infty}^{(3)} + \norm{f}_{\infty}^{(4)} \right)}, 2\sin\left(\frac{\underline{\beta}_1}{2\underline{A}_4(f)} \right) \right\},$$
			where $\underline{A}_2$ is a constant defined in (h) of Lemma~\ref{Dir_norm_reach_prop} while $\underline{A}_4(f) >0$ is a quantity depending on both the dimension $q$ and the functional norm $\norm{f}_{\infty,4}^*$ up to the fourth-order (partial) derivatives of $f$.
		\end{enumerate}
	\end{customprop}
	
	\begin{proof}[Proof of Proposition~\ref{SCGA_Dir_conv}]
		The proof is similar to our arguments in Proposition~\ref{SCGA_conv}. For the completeness, we still delineate the detailed steps because the proof requires some nontrivial techniques, such as parallel transports and line integrals, on general Riemannian manifolds.\\
		
		\noindent (a) We first derive the following property of the objective function $f$ supported on $\Omega_q$, which is a counterpart of \emph{Fact 1} in the proof of Proposition~\ref{SCGA_conv}.
		
		$\bullet$ \emph{Property 1}. Given (\underline{A1}), the function $f$ is $q\norm{\mathcal{H}f}_{\infty}^{(2)}$-smooth on $\Omega_q$, that is, $\grad f$ is $q\norm{\mathcal{H}f}_{\infty}^{(2)}$-Lipschitz.\\
		This property follows easily from the differentiability of $f$ guaranteed by condition (\underline{A1}) and Theorem 4.34 in \cite{Lee2018Riem_man} that
		\begin{align}
		\label{L_smooth_man1}
		\begin{split}
		\norm{\grad f(\bm{y}) -\Gamma_{\bm x}^{\bm y}\left(\grad f(\bm{x}) \right)}_2 &\leq \norm{\mathcal{H} f(\tilde{\bm{y}})}_2 \cdot \norm{\Exp_{\bm{y}}^{-1}(\bm{x})}_2\\
		&\leq q\norm{\mathcal{H}f}_{\infty}^{(2)} \cdot \norm{\Exp_{\bm{y}}^{-1}(\bm{x})}_2
		\end{split}
		\end{align}
		for any $\bm{x},\bm{y} \in \Omega_q$, where $\tilde{\bm{y}}$ lies on the geodesic curve $\varphi:[0,1]\to \Omega_q$ with $\varphi(0)=\bm{x}, \varphi(1)=\bm{y}$, and $\varphi'(0)=\Exp_{\bm{x}}^{-1}(\bm{y})$. Then, 
		\begin{align}
		\label{L_smooth_manifold}
		\begin{split}
		&\left|f(\bm{y}) -f(\bm{x}) - \langle \grad f(\bm{x}), \Exp_{\bm{x}}^{-1}(\bm{y}) \rangle \right| \\
		&\stackrel{\text{(i)}}{=} \left|\int_0^1 \langle \grad f(\varphi(t)), \varphi'(t) \rangle dt - \langle \grad f(\bm{x}), \Exp_{\bm{x}}^{-1}(\bm{y}) \rangle \right|\\
		&\stackrel{\text{(ii)}}{=} \left|\int_0^1 \left\langle \Gamma_{\varphi(t)}^{\bm{x}}\left(\grad f(\varphi(t)) \right), \Gamma_{\varphi(t)}^{\bm{x}}\left(\varphi'(t)\right) \right\rangle dt - \langle \grad f(\bm{x}), \Exp_{\bm{x}}^{-1}(\bm{y}) \rangle \right|\\
		&\stackrel{\text{(iii)}}{=} \left|\int_0^1 \left\langle \Gamma_{\varphi(t)}^{\bm{x}}\left(\grad f(\varphi(t)) \right) -\grad f(\bm{x}), \Exp_{\bm x}^{-1}(\bm{y}) \right\rangle \right|\\
		&\leq \int_0^1 \norm{\Gamma_{\varphi(t)}^{\bm{x}}\left(\grad f(\varphi(t)) \right) -\grad f(\bm{x})}_2 \cdot \norm{\Exp_{\bm x}^{-1}(\bm{y})}_2 dt\\
		&\stackrel{\text{(iv)}}{\leq} q \norm{\mathcal{H}f}_{\infty}^{(2)} \norm{\Exp_{\bm x}^{-1}(\bm{y})}_2 \int_0^1 d_g(\varphi(t),\bm{x}) \,dt\\
		&\leq q \norm{\mathcal{H}f}_{\infty}^{(2)} \norm{\Exp_{\bm x}^{-1}(\bm{y})}_2 \int_0^1 \norm{t \cdot \Exp_{\bm x}^{-1}(\bm{y})}_2 \,dt\\
		&= \frac{q\norm{\mathcal{H}f}_{\infty}^{(2)}}{2} \norm{\Exp_{\bm{x}}^{-1}(\bm{y})}_2^2,
		\end{split}
		\end{align}
		where the equality (i) follows from the fundamental theorem for line integrals (Theorem 11.39 in \citealt{Lee2012}), equality (ii) utilizes the isometric property of parallel transports, and inequality (iv) follows from \eqref{L_smooth_man1}. Moreover, since the velocity of the geodesic $\varphi$ is always constant, we deduce that $\Gamma_{\varphi(t)}^{\bm{x}}\left(\varphi'(t) \right) = \varphi'(0)=\Exp_{\bm x}^{-1}(\bm{y})$ and the equality (iii) follows.
		We will make use of the following direction of the inequality \eqref{L_smooth_manifold}:
		\begin{equation}
		f(\bm{y}) -f(\bm{x}) -\left\langle \grad f(\bm{x}), \Exp_{\bm{x}}^{-1}(\bm{y}) \right\rangle \geq -\frac{q\norm{\mathcal{H}f}_{\infty}^{(2)}}{2} \norm{\Exp_{\bm{x}}^{-1}(\bm{y})}_2^2.
		\end{equation}
		Moreover, when $0< \underline{\eta} < \frac{2}{q\norm{\mathcal{H}f}_{\infty}^{(2)}}$,
		\begin{align*}
		&f(\underline{\bm{x}}^{(t+1)}) -f(\underline{\bm{x}}^{(t)})\\
		&= f\left(\Exp_{\underline{\bm{x}}^{(t)}}\left(\underline{\eta}\cdot  \underline{V}_d(\underline{\bm{x}}^{(t)}) \underline{V}_d(\underline{\bm{x}}^{(t)})^T \grad f(\underline{\bm{x}}^{(t)}) \right) \right) - f(\underline{\bm{x}}^{(t)})\\
		&\geq \left\langle \grad f(\underline{\bm{x}}^{(t)}),\, \eta \underline{V}_d(\underline{\bm{x}}^{(t)}) \underline{V}_d(\underline{\bm{x}}^{(t)})^T \grad f(\underline{\bm{x}}^{(t)}) \right\rangle  \\
		&\quad - \frac{q\norm{\mathcal{H}f}_{\infty}^{(2)}}{2} \cdot \eta^2 \norm{\underline{V}_d(\underline{\bm{x}}^{(t)})^T \grad f(\underline{\bm{x}}^{(t)})}_2^2\\
		&= \underline{\eta} \left(1- \frac{q\norm{\mathcal{H}f}_{\infty}^{(2)} \underline{\eta}}{2} \right) \norm{\underline{V}_d(\underline{\bm{x}}^{(t)})^T \grad f(\underline{\bm{x}}^{(t)})}_2^2 \geq 0,
		\end{align*}
		showing that the objective function $f$ is non-decreasing along the SCGA path $\big\{\underline{\bm{x}}^{(t)} \big\}_{t=0}^{\infty}$ on $\Omega_q$. Given the compactness of $\Omega_q$ and the differentiability of $f$, we know that the sequence $\left\{f(\underline{\bm{x}}^{(t)}) \right\}_{t=0}^{\infty}$ is bounded. Thus, it converges.\\
		
		\noindent (b) From (a), we know that when $0< \underline{\eta} < \frac{2}{q\norm{\mathcal{H}f}_{\infty}^{(2)}}$, 
		\begin{align*}
		f(\underline{\bm{x}}^{(t+1)}) -f(\underline{\bm{x}}^{(t)}) &\geq \underline{\eta} \left(1- \frac{q\norm{\mathcal{H}f}_{\infty}^{(2)} \underline{\eta}}{2} \right) \norm{\underline{V}_d(\underline{\bm{x}}^{(t)})^T \grad f(\underline{\bm{x}}^{(t)})}_2^2 \\
		&= \left(\frac{2-q\norm{\mathcal{H}f}_{\infty}^{(2)} \underline{\eta}}{2\underline{\eta}} \right) d_g\left(\underline{\bm{x}}^{(t+1)}, \underline{\bm{x}}^{(t)} \right)^2.
		\end{align*}
		Since the sequence $\left\{f(\underline{\bm{x}}^{(t)}) \right\}_{t=0}^{\infty}$ converges, it follows that 
		$$\lim_{t\to\infty} \norm{\underline{V}_d(\underline{\bm{x}}^{(t)})^T \grad f(\underline{\bm{x}}^{(t)})}_2 = 0 \quad \text{ and } \quad \lim_{t\to\infty} d_g\left(\underline{\bm{x}}^{(t+1)}, \underline{\bm{x}}^{(t)} \right)=0.$$
		Recall from \eqref{Geo_Eu_dist_eq} that $\norm{\underline{\bm{x}}^{(t)} - \underline{\bm{x}}^{(t+1)}}_2=2\sin\left(\frac{d_g\left(\underline{\bm{x}}^{(t+1)}, \underline{\bm{x}}^{(t)} \right)}{2} \right)$, so $\lim_{t\to\infty}\norm{\underline{\bm{x}}^{(t)} - \underline{\bm{x}}^{(t+1)}}_2 =0$ as well.\\
		
		\noindent (c) Given condition (\underline{A2}) and the fact that $\underline{r}_1 < \underline{\rho}/2$, we know that 
		$$\bm{x}\in (\underline{R}_d \oplus \underline{r}_1) \cap \Omega_q \text{ and } \norm{\underline{V}_d(\bm{x})^T \grad f(\bm{x})}_2=0 \quad \text{ if and only if } \quad \bm{x}\in \underline{R}_d.$$
		Let $\underline{\bm{z}}^{(t)}=\pi_{\underline{R}_d}(\underline{\bm{x}}^{(t)})$ be the projection of $\underline{\bm{x}}^{(t)} \in \Omega_q$ in the SCGA sequence onto the directional ridge $\underline{R}_d$. Since $\underline{r}_1 \leq \mathtt{reach}(\underline{R}_d)$ by (h) of Lemma~\ref{Dir_norm_reach_prop}, $\underline{\bm{z}}^{(t)}$ is well-defined when $\underline{\bm{x}}^{(t)}\in \left(\underline{R}_d\oplus \underline{r}_1 \right) \cap \Omega_q$. Recall from \eqref{Dir_normal_rows} that the column space of
		\begin{align*}
		&\left(\bm{I}_{q+1} -\bm{z}^{(t)} \left(\bm{z}^{(t)} \right)^T \right)\underline{M}(\underline{\bm{z}}^{(t)}) \\
		&= \left(\bm{I}_{q+1} -\bm{z}^{(t)} \left(\bm{z}^{(t)} \right) \right)^T \nabla \left[ \underline{V}_d(\underline{\bm{z}}^{(t)})^T \nabla f(\underline{\bm{z}}^{(t)}) \right]^T \in \mathbb{R}^{(q+1)\times (q-d)}
		\end{align*}
		coincides with the normal space of $\underline{R}_d$ within the tangent space $T_{\underline{\bm{z}}^{(t)}}$. We define a geodesic $\varphi:[0,1] \to \Omega_q$ with $\varphi(0)=\underline{\bm{z}}^{(t)}, \varphi(1) = \underline{\bm{x}}^{(t)}, \varphi'(0)=\Exp_{\underline{\bm{z}}^{(t)}}^{-1}(\underline{\bm{x}}^{(t)})$ and calculate that
		\begin{align*}
		&\norm{\underline{V}_d(\underline{\bm{x}}^{(t)})^T \nabla f(\underline{\bm{x}}^{(t)})}_2 \\
		&= \norm{\underline{V}_d(\underline{\bm{x}}^{(t)})^T \nabla f(\underline{\bm{x}}^{(t)}) - \underbrace{\underline{V}_d(\underline{\bm{z}}^{(t)})^T \nabla f(\underline{\bm{z}}^{(t)})}_{=0}}_2\\
		&= \norm{\int_0^1 \left\langle \nabla \left[ \underline{V}_d(\underline{\bm{z}}^{(t)})^T \nabla f(\underline{\bm{z}}^{(t)}) \right]^T, \varphi'(\epsilon) \right\rangle d\epsilon}_2 \quad \text{ by Theorem 11.39 in \cite{Lee2012}}\\
		&\stackrel{\text{(i)}}{=} \norm{\left\langle \underline{M}(\varphi(0)), \varphi'(0) \right\rangle + \int_0^1 \left[\left\langle \underline{M}(\varphi(\epsilon)), \varphi'(\epsilon) \right\rangle - \left\langle \Gamma_{\varphi(0)}^{\varphi(\epsilon)}\left( \underline{M}(\varphi(0)) \right), \Gamma_{\varphi(0)}^{\varphi(\epsilon)}\left(\varphi'(0) \right) \right\rangle \right] d\epsilon}_2 \\
		&\stackrel{\text{(ii)}}{\geq} \norm{\underline{M}(\underline{\bm{z}}^{(t)})^T  \Exp_{\underline{\bm{z}}^{(t)}}^{-1}(\underline{\bm{x}}^{(t)})}_2 - \norm{\int_0^1 \left\langle \underline{M}(\varphi(\epsilon)) - \Gamma_{\varphi(0)}^{\varphi(\epsilon)}\left( \underline{M}(\varphi(0)) \right), \varphi'(\epsilon) \right\rangle d\epsilon}\\
		&\geq \norm{\underline{M}(\underline{\bm{z}}^{(t)})^T\left(\bm{I}_{q+1}-\underline{\bm{z}}^{(t)} \left(\underline{\bm{z}}^{(t)}\right)^T \right) \cdot  \Exp_{\underline{\bm{z}}^{(t)}}^{-1}(\underline{\bm{x}}^{(t)})}_2 \\
		&\quad - \int_0^1 \norm{\underline{M}(\varphi(\epsilon)) - \Gamma_{\varphi(0)}^{\varphi(\epsilon)}\left( \underline{M}(\varphi(0)) \right)}_2\cdot\norm{\varphi'(\epsilon)}_2  d\epsilon\\
		&\geq \norm{\left[\left(\bm{I}_{q+1}-\underline{\bm{z}}^{(t)} \left(\underline{\bm{z}}^{(t)}\right)^T \right)\underline{M}(\underline{\bm{z}}^{(t)}) \right]^T \Exp_{\underline{\bm{z}}^{(t)}}^{-1}(\underline{\bm{x}}^{(t)})}_2 \\
		&\quad - \norm{\Exp_{\underline{\bm{z}}^{(t)}}^{-1}(\underline{\bm{x}}^{(t)})}_2\int_0^1 \sup_{\epsilon \in [0,1]} \norm{\bar{\nabla} \underline{M}\left(\varphi(\epsilon) \right) }_2 \cdot \epsilon\cdot \norm{\Exp_{\underline{\bm{z}}^{(t)}}^{-1}(\underline{\bm{x}}^{(t)})}_2 d\epsilon \\
		& \quad \text{ by Theorem 4.34 in \cite{Lee2018Riem_man}}\\
		&\stackrel{\text{(iii)}}{\geq} \underline{\beta}_1 \cdot d_g\left(\underline{\bm{x}}^{(t)}, \underline{\bm{z}}^{(t)}\right) - \frac{\underline{A}_4(f)}{2} \cdot d_g\left(\underline{\bm{x}}^{(t)}, \underline{\bm{z}}^{(t)}\right)^2 \\
		&= d_g\left(\underline{\bm{x}}^{(t)}, \underline{\bm{z}}^{(t)}\right) \left(\underline{\beta}_1 - \frac{\underline{A}_4(f)}{2}\cdot d_g\left(\underline{\bm{x}}^{(t)}, \underline{\bm{z}}^{(t)}\right) \right)\\
		&\stackrel{\text{(iv)}}{\geq} \frac{\underline{\beta}_1}{2} \cdot d_g\left(\underline{\bm{x}}^{(t)}, \underline{\bm{z}}^{(t)}\right)
		\end{align*}
		where we utilize the isometric properties of parallel transports in (i), note that the velocity of geodesic is constant, \emph{i.e.}, $\varphi'(0)=\varphi'(\epsilon)$ for any $\epsilon\in [0,1]$ to obtain (ii), leverage (d) of Lemma~\ref{Dir_norm_reach_prop} to deduce (iii) and use the fact that $d_g\left(\underline{\bm{x}}^{(t)}, \underline{\bm{z}}^{(t)}\right) \leq \frac{\underline{\beta}_1}{\underline{A}_4(f)}$ when $\underline{\bm{x}}^{(t)} \in \left(\underline{R}_d \oplus \underline{r}_1 \right) \cap \Omega_q$ in the inequality (iv). In particular for the inequality (iii), $\left(\bm{I}_{q+1}-\underline{\bm{z}}^{(t)} \left(\underline{\bm{z}}^{(t)}\right)^T \right)\underline{M}(\underline{\bm{z}}^{(t)})$ is a full column rank matrix and $\Exp_{\underline{\bm{z}}^{(t)}}^{-1}(\underline{\bm{x}}^{(t)})$ lies within the column space of $\left(\bm{I}_{q+1}-\underline{\bm{z}}^{(t)} \left(\underline{\bm{z}}^{(t)}\right)^T \right)\underline{M}(\underline{\bm{z}}^{(t)})$. Since the nonzero singular values of $\left(\bm{I}_{q+1}-\underline{\bm{z}}^{(t)} \left(\underline{\bm{z}}^{(t)}\right)^T \right)\underline{M}(\underline{\bm{z}}^{(t)})$ are lower bounded by $\underline{\beta}_1>0$, it follows that $$\norm{\left[\left(\bm{I}_{q+1}-\underline{\bm{z}}^{(t)} \left(\underline{\bm{z}}^{(t)}\right)^T \right)\underline{M}(\underline{\bm{z}}^{(t)}) \right]^T \Exp_{\underline{\bm{z}}^{(t)}}^{-1}(\underline{\bm{x}}^{(t)})}_2 \geq \underline{\beta}_1 d_g\left(\underline{\bm{x}}^{(t)}, \underline{\bm{z}}^{(t)}\right).$$ 
		In addition, we also know that $\underline{A}_4(f)>0$ comes from the supremum norm of $\bar{\nabla}\underline{M}(\bm{x})$ 
		over the geodesic connecting $\underline{\bm{x}}^{(t)}$ and $\underline{\bm{z}}^{(t)}$ with $\bar{\nabla}$ being the Riemannian connection, which in turn depends on the uniform functional norm $\norm{f}_{\infty,4}^*$ of the partial derivatives of $f$ up to the fourth order. By (b), we deduce that 
		$$\lim_{t\to\infty} d_g\left(\underline{\bm{x}}^{(t)}, \underline{\bm{z}}^{(t)}\right) =d_g\left(\underline{\bm{x}}^{(t)}, \underline{R}_d \right) =0.$$
		The results follow.
\end{proof}

The nonzero curvature structure of the unit (hyper-sphere) $\Omega_q$, on which the objective function (or density) $f$ lies, induces an extra challenge in establishing the linear convergence of population and sample-based SCGA algorithms. Some useful techniques used in analyzing non-asymptotic convergence of first-order methods in $\mathbb{R}^{q+1}$, such as the law of cosines and linearizations of the objective function, would fail on $\Omega_q$ \citep{Geo_Convex_Op2016}. Therefore, we first introduce a practical trigonometric distance bound for the Alexandrov space \citep{Burago1992} with its sectional curvature bounded from below.

\begin{lemma}[Lemma 5 in \cite{Geo_Convex_Op2016}; see also \cite{SGD_Riem2013}]
	\label{trigo_inequality}
	If $a,b,c$ are the sides (\emph{i.e.}, side lengths) of a geodesic triangle in an Alexandrov space with sectional curvature lower bounded by $\kappa$, and $A$ is the angle between sides $b$ and $c$, then
	\begin{equation}
	\label{trigo_inequality_eq}
	a^2 \leq \frac{\sqrt{|\kappa|} c}{\tanh(\sqrt{|\kappa|}c)} b^2 + c^2 -2bc \cos(A).
	\end{equation}
\end{lemma}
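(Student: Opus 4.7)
The plan is to reduce the inequality to the model space of constant sectional curvature $\kappa$ via a triangle comparison theorem and then verify it by direct trigonometric analysis in that model space. First, I would invoke Toponogov's triangle comparison theorem for Alexandrov spaces with sectional curvature bounded below by $\kappa$: given a geodesic triangle with sides $b, c$ meeting at angle $A$ and opposite side $a$, the comparison triangle in the model space $M_\kappa^2$ of constant curvature $\kappa$ with the same $b, c, A$ has opposite side $\tilde{a}$ satisfying $a \leq \tilde{a}$. Consequently it suffices to establish
\[
\tilde a^2 \leq \zeta(\kappa, c)\, b^2 + c^2 - 2bc\cos A, \qquad \zeta(\kappa, c) := \frac{\sqrt{|\kappa|}\,c}{\tanh(\sqrt{|\kappa|}\,c)},
\]
purely in $M_\kappa^2$.

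Next I would split into cases. For $\kappa \geq 0$ the model-space triangle is no fatter than its Euclidean counterpart, so $\tilde a^2 \leq b^2 + c^2 - 2bc\cos A$; combining this with $\zeta(\kappa,c) \geq 1$ (since $\tanh x < x$ for $x>0$) gives the bound immediately. The essential case is $\kappa < 0$. By rescaling $(a,b,c) \mapsto (\sqrt{|\kappa|}a, \sqrt{|\kappa|}b, \sqrt{|\kappa|}c)$ I may assume $\kappa = -1$, reducing to
\[
\tilde a^2 \leq \frac{c}{\tanh c}\,b^2 + c^2 - 2bc\cos A,
\]
with the hyperbolic law of cosines $\cosh \tilde a = \cosh b \cosh c - \sinh b \sinh c \cos A$ available.

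I would then view $\tilde a = \tilde a(b)$ as implicitly defined by $b$ with $c, A$ held fixed and introduce the auxiliary function
\[
F(b) := \frac{c}{\tanh c}\,b^2 + c^2 - 2bc\cos A - \tilde a(b)^2.
\]
The boundary values are clean: $\tilde a(0)=c$ gives $F(0)=0$, and differentiating the law of cosines at $b=0$ yields $\tilde a'(0)=-\cos A$, hence $F'(0)=0$. A second differentiation of $\sinh\tilde a \cdot \tilde a' = \sinh b\cosh c - \cosh b \sinh c\cos A$ produces $\sinh\tilde a\cdot \tilde a'' = \cosh\tilde a\bigl(1 - (\tilde a')^2\bigr)$, so
\[
F''(b) = 2\cdot\frac{c}{\tanh c} - 2(\tilde a')^2 - 2\tilde a\coth\tilde a\bigl(1-(\tilde a')^2\bigr),
\]
which reduces the problem to comparing $\tilde a \coth \tilde a$ with $c \coth c$ along the geodesic $b \mapsto \tilde a(b)$. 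The main obstacle is precisely this last step — establishing the correct sign/magnitude uniformly in $b$ — because $\tilde a$ oscillates between values larger and smaller than $c$ as $A$ varies.

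The cleanest route I would take to finish is a Jacobi-field/Rauch comparison along the geodesic realizing the side $b$. In $M_{-1}^2$ the squared distance function $d_p^2$ from a fixed point $p$ has Hessian equal to the identity in the radial direction and equal to $\sqrt{-\kappa}\, r\coth(\sqrt{-\kappa}\,r) = r\coth r$ in the transverse directions at distance $r$ from $p$; see the standard Jacobi-field formula on constant-curvature spaces. Integrating along the segment of length $b$ from the apex of angle $A$ toward the opposite vertex (which lies at distance $c$ from the apex, so $r\coth r \leq c\coth c$ is monotone-controlled) and applying the fundamental theorem of calculus twice yields
\[
\tilde a^2 \leq c^2 - 2bc\cos A + \bigl(c\coth c\bigr)\, b^2,
\]
which is exactly the claim after recognizing $c\coth c = c/\tanh c$. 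Undoing the rescaling restores $\sqrt{|\kappa|}c/\tanh(\sqrt{|\kappa|}c)$ and completes the proof. This Jacobi-field packaging avoids the delicate direct verification $F''\geq 0$ and makes transparent why the coefficient $\zeta(\kappa, c)$ arises from the Hessian of the squared distance function in the model space.
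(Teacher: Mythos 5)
Your overall strategy is right: reduce to the constant--curvature model $M_\kappa^2$ via the Toponogov hinge comparison, rescale to $\kappa=-1$, and control the second derivative of $t\mapsto\tfrac12 d_R^2(\gamma(t))$ along the side of length $b$ issuing from the apex $P$ of the angle $A$ (with $R$ the vertex opposite $b$). This is essentially the argument sketched in Zhang and Sra. But the step where you claim to close the proof has a real gap. You write that the integration works because the endpoint "lies at distance $c$ from the apex, so $r\coth r\le c\coth c$ is monotone-controlled." This asserts $r(t):=d_R(\gamma(t))\le c$ for $t\in[0,b]$, which is false: for $A>\pi/2$ the geodesic immediately recedes from $R$, so $r(t)>c$ for $t>0$, and $r\coth r$ (being increasing) exceeds $c\coth c$. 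You in fact flagged exactly this problem in your penultimate paragraph ("$\tilde a$ oscillates between values larger and smaller than $c$"), and the switch to the Jacobi-field/Hessian packaging does not avoid it — it is the same comparison in different clothing.

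The missing ingredient is that you must keep the \emph{directional} Hessian, not replace it by its largest eigenvalue. Along the geodesic,
\[
\frac{d^2}{dt^2}\Bigl(\tfrac12 d_R^2(\gamma(t))\Bigr)=\cos^2\phi(t)+\sin^2\phi(t)\,r(t)\coth r(t),
\]
where $\phi(t)$ is the angle between $\gamma'(t)$ and the radial direction from $R$. The hyperbolic Clairaut relation $\sinh r(t)\sin\phi(t)=\sinh c\sin A$ holds along $\gamma$, so $\sin^2\phi(t)=\sinh^2 c\sin^2 A/\sinh^2 r(t)$ and the expression equals $1+\sinh^2 c\,\sin^2 A\cdot h(r(t))$ with $h(r)=(r\coth r-1)/\sinh^2 r$. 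One checks (e.g.\ by differentiating and reducing to $\sinh s - s\cosh s<0$) that $h$ is strictly decreasing; since $r(t)\ge r_{\min}$ where $\sinh r_{\min}=\sinh c\sin A\le\sinh c$, this gives
\[
1+\sinh^2 c\sin^2 A\,h(r(t))\le 1+\sinh^2 r_{\min}\,h(r_{\min})=r_{\min}\coth r_{\min}\le c\coth c,
\]
and only now does the double integration yield the claimed bound. Without this attenuation of the transverse contribution by $\sin^2\phi$, the asserted bound on the Hessian simply does not hold pointwise along the segment, so your proof as written does not go through. (For context, the paper itself does not prove this lemma; it cites Zhang and Sra and Bonnabel, so there is no internal proof against which to compare.)
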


The sketching proof of Lemma \ref{trigo_inequality} can be founded in Lemma 5 of \cite{Geo_Convex_Op2016}. Note that the sectional curvature $\kappa=1$ on $\Omega_q$. We inherit the notation in \cite{Geo_Convex_Op2016} and denote $\frac{\sqrt{|\kappa|} c}{\tanh(\sqrt{|\kappa|}c)}$ by $\zeta(\kappa, c)$ for the curvature dependent quantity in the inequality \eqref{trigo_inequality_eq}. One can show by differentiating $\zeta(\kappa, c)$ with respect to $c$ that $\zeta(\kappa, c)$ is strictly increasing and greater than 1 for any $c>0$ and fixed $\kappa\neq 0$. With Lemma~\ref{trigo_inequality} in hand, we are able to state a straightforward corollary indicating an important relation between two consecutive points in the SCGA sequence $\big\{\underline{\bm{x}}^{(t)}\big\}_{t=0}^{\infty}$ on $\Omega_q$ defined by \eqref{SCGA_manifold_update}:
\begin{equation}
\label{SCGA_manifold_update_app}
\underline{\bm{x}}^{(t+1)} = \Exp_{\underline{\bm{x}}^{(t)}}\left(\underline{\eta} \cdot \underline{V}_d(\underline{\bm{x}}^{(t)}) \underline{V}_d(\underline{\bm{x}}^{(t)})^T \grad f(\underline{\bm{x}}^{(t)}) \right).
\end{equation}

\begin{corollary}
	\label{Riemannian_tri_update}
	For any point $\bm{y}, \underline{\bm{x}}^{(t)}$ in a geodesically convex set on $\Omega_q$, the update in \eqref{SCGA_manifold_update_app} satisfies
	\begin{align*}
	&2\underline{\eta}\left\langle \underline{V}_d(\underline{\bm{x}}^{(t)}) \underline{V}_d(\underline{\bm{x}}^{(t)})^T \nabla f(\underline{\bm{x}}^{(t)}), \Exp_{\underline{\bm{x}}^{(t)}}^{-1}(\bm{y}) \right\rangle \\
	&\leq d_g\left(\underline{\bm{x}}^{(t)}, \bm{y}\right)^2 - d_g\left(\underline{\bm{x}}^{(t+1)}, \bm{y} \right)^2 + \zeta\left(1,d_g(\underline{\bm{x}}^{(t)}, \bm{y}) \right) \cdot \underline{\eta}^2 \norm{\underline{V}_d(\underline{\bm{x}}^{(t)})^T \grad f(\underline{\bm{x}}^{(t)})}_2^2,
	\end{align*}
	where $d_g(\bm{x},\bm{y}) = \sqrt{\left\langle \Exp_{\bm{x}}^{-1}(\bm{y}), \Exp_{\bm{x}}^{-1}(\bm{y}) \right\rangle} = \norm{\Exp_{\bm{x}}^{-1}(\bm{y})}_2$ is the geodesic distance between $\bm{x}$ and $\bm{y}$ on $\Omega_q$.
\end{corollary}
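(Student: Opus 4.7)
The plan is to apply Lemma~\ref{trigo_inequality} directly to the geodesic triangle formed by the three points $\underline{\bm{x}}^{(t)}$, $\underline{\bm{x}}^{(t+1)}$, $\bm{y}$ on $\Omega_q$, whose sectional curvature is $\kappa = 1$. First I would label the three sides of this triangle as $a = d_g(\underline{\bm{x}}^{(t+1)}, \bm{y})$, $b = d_g(\underline{\bm{x}}^{(t)}, \underline{\bm{x}}^{(t+1)})$, and $c = d_g(\underline{\bm{x}}^{(t)}, \bm{y})$, so that the angle $A$ between $b$ and $c$ is the angle at the vertex $\underline{\bm{x}}^{(t)}$. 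The working assumption that $\bm{y}$ and $\underline{\bm{x}}^{(t)}$ lie in a geodesically convex set (together with a small enough step size $\underline{\eta}$ so that $\underline{\bm{x}}^{(t+1)}$ also lies in it) guarantees that the relevant geodesics and the logarithmic maps are well-defined and unique, which is what makes the trigonometric distance bound applicable.

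Next I would translate the three sides and the angle into expressions involving the SCGA update. By definition of the exponential map and the iteration \eqref{SCGA_manifold_update_app}, the tangent vector from $\underline{\bm{x}}^{(t)}$ pointing along the geodesic to $\underline{\bm{x}}^{(t+1)}$ is $\Exp_{\underline{\bm{x}}^{(t)}}^{-1}(\underline{\bm{x}}^{(t+1)}) = \underline{\eta}\cdot \underline{V}_d(\underline{\bm{x}}^{(t)}) \underline{V}_d(\underline{\bm{x}}^{(t)})^T \grad f(\underline{\bm{x}}^{(t)})$, so that
\[
b = \underline{\eta} \norm{\underline{V}_d(\underline{\bm{x}}^{(t)})^T \grad f(\underline{\bm{x}}^{(t)})}_2, \qquad b^2 = \underline{\eta}^2 \norm{\underline{V}_d(\underline{\bm{x}}^{(t)})^T \grad f(\underline{\bm{x}}^{(t)})}_2^2.
\]
The tangent vector from $\underline{\bm{x}}^{(t)}$ pointing along the geodesic to $\bm{y}$ is $\Exp_{\underline{\bm{x}}^{(t)}}^{-1}(\bm{y})$, whose norm is $c = d_g(\underline{\bm{x}}^{(t)},\bm{y})$. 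Since the Riemannian inner product coincides with the ambient Euclidean inner product on $T_{\underline{\bm{x}}^{(t)}}$ (see Section~\ref{Sec:DG_sphere}), the angle term becomes $bc \cos(A) = \underline{\eta}\left\langle \underline{V}_d(\underline{\bm{x}}^{(t)}) \underline{V}_d(\underline{\bm{x}}^{(t)})^T \grad f(\underline{\bm{x}}^{(t)}),\, \Exp_{\underline{\bm{x}}^{(t)}}^{-1}(\bm{y}) \right\rangle$. Here I would use the fact that the columns of $\underline{V}_d(\underline{\bm{x}}^{(t)})$ are orthogonal to $\underline{\bm{x}}^{(t)}$, so that $\underline{V}_d(\underline{\bm{x}}^{(t)})^T \grad f(\underline{\bm{x}}^{(t)}) = \underline{V}_d(\underline{\bm{x}}^{(t)})^T (\bm{I}_{q+1} - \underline{\bm{x}}^{(t)}(\underline{\bm{x}}^{(t)})^T) \nabla f(\underline{\bm{x}}^{(t)}) = \underline{V}_d(\underline{\bm{x}}^{(t)})^T \nabla f(\underline{\bm{x}}^{(t)})$, which reconciles the appearance of $\nabla f$ on the LHS of the claimed inequality with $\grad f$ on the RHS.

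Finally, plugging these identifications into Lemma~\ref{trigo_inequality} with $\kappa = 1$ yields
\[
d_g(\underline{\bm{x}}^{(t+1)}, \bm{y})^2 \leq \zeta(1, d_g(\underline{\bm{x}}^{(t)}, \bm{y}))\cdot \underline{\eta}^2 \norm{\underline{V}_d(\underline{\bm{x}}^{(t)})^T \grad f(\underline{\bm{x}}^{(t)})}_2^2 + d_g(\underline{\bm{x}}^{(t)}, \bm{y})^2 - 2\underline{\eta}\left\langle \underline{V}_d(\underline{\bm{x}}^{(t)}) \underline{V}_d(\underline{\bm{x}}^{(t)})^T \nabla f(\underline{\bm{x}}^{(t)}),\, \Exp_{\underline{\bm{x}}^{(t)}}^{-1}(\bm{y}) \right\rangle,
\]
and a simple rearrangement produces the stated inequality. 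The proof is essentially a one-step invocation of the comparison result, so there is no real obstacle; the only piece requiring care is confirming that the ambient inner product correctly encodes $bc\cos(A)$ for the tangent vectors at $\underline{\bm{x}}^{(t)}$ and that the geodesic-convexity hypothesis is sufficient for the exponential/logarithmic maps to be unique at the three vertices.
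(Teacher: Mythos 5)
Your proof is correct and takes essentially the same approach as the paper: both apply Lemma~\ref{trigo_inequality} to the geodesic triangle $\triangle\,\underline{\bm{x}}^{(t)}\,\underline{\bm{x}}^{(t+1)}\,\bm{y}$ with the same labeling of sides $a,b,c$ and angle $A$, identify $b^2$ and $bc\cos(A)$ from the SCGA update, and rearrange. The additional remark reconciling $\nabla f$ with $\grad f$ via orthogonality of the columns of $\underline{V}_d(\underline{\bm{x}}^{(t)})$ to $\underline{\bm{x}}^{(t)}$ is a correct observation that the paper uses implicitly.
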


\begin{proof}[Proof of Corollary~\ref{Riemannian_tri_update}] 
	Recall that the (population) SCGA iterative formula on $\Omega_q$ is given by $\underline{\bm{x}}^{(t+1)} = \Exp_{\underline{\bm{x}}^{(t)}}\left(\underline{\eta} \cdot \underline{V}_d(\underline{\bm{x}}^{(t)}) \underline{V}_d(\underline{\bm{x}}^{(t)})^T \grad f(\underline{\bm{x}}^{(t)}) \right)$. Note that for the geodesic triangle $\triangle \underline{\bm{x}}^{(t)}\underline{\bm{x}}^{(t+1)}\bm{y}$ with $\bm{y}\in \Omega_q$, we have that
	$$d_g(\underline{\bm{x}}^{(t)}, \underline{\bm{x}}^{(t+1)}) = \underline{\eta}\norm{\underline{V}_d(\underline{\bm{x}}^{(t)}) \underline{V}_d(\underline{\bm{x}}^{(t)})^T \grad f(\underline{\bm{x}}^{(t)})}_2 = \underline{\eta} \norm{\underline{V}_d(\underline{\bm{x}}^{(t)})^T \grad f(\underline{\bm{x}}^{(t)})}_2$$
	and 
	\begin{align*}
	&d_g(\underline{\bm{x}}^{(t)}, \underline{\bm{x}}^{(t+1)}) \cdot d_g(\underline{\bm{x}}^{(t)},\bm{y}) \cdot \cos\left(\angle\underline{\bm{x}}^{(t+1)}\underline{\bm{x}}^{(t)} \bm{y} \right) \\
	&= \underline{\eta}\left\langle \underline{V}_d(\underline{\bm{x}}^{(t)}) \underline{V}_d(\underline{\bm{x}}^{(t)})^T \grad f(\underline{\bm{x}}^{(t)}),\, \Exp_{\underline{\bm{x}}^{(t)}}^{-1}(\bm{y}) \right\rangle.
	\end{align*}
	By letting $a=\overline{\underline{\bm{x}}^{(t+1)}\bm{y}}, b=\overline{\underline{\bm{x}}^{(t+1)}\underline{\bm{x}}^{(t)}}, c=\overline{\underline{\bm{x}}^{(t)}\bm{y}}$, and $A=\angle\underline{\bm{x}}^{(t+1)}\underline{\bm{x}}^{(t)}\bm{y}$ in Lemma~\ref{trigo_inequality}, we obtain that
	\begin{align*}
	d_g(\underline{\bm{x}}^{(t+1)},\bm{y})^2 &\leq \zeta\left(1,d_g(\underline{\bm{x}}^{(t)},\bm{y})\right) \cdot \underline{\eta}^2 \norm{\underline{V}_d(\underline{\bm{x}}^{(t)})^T \grad f(\underline{\bm{x}}^{(t)})}_2^2 \\
	&\quad + d_g(\underline{\bm{x}}^{(t)},\bm{y})^2 -2\underline{\eta} \left\langle V_d(\underline{\bm{x}}^{(t)}) V_d(\underline{\bm{x}}^{(t)})^T \grad f(\underline{\bm{x}}^{(t)}),\, \Exp_{\underline{\bm{x}}^{(t)}}^{-1}(\bm{y}) \right\rangle.
	\end{align*}
	Some rearrangements will yield the final display.
\end{proof}

Note that $\left(\underline{R}_d \oplus \underline{\rho} \right) \cap \Omega_q$ in our conditions (\underline{A2-3}) is a geodesically convex set, where the minimal geodesic between two points in the set $\left(\underline{R}_d \oplus \underline{\rho} \right) \cap \Omega_q$ always lies within the set. Hence, Corollary~\ref{Riemannian_tri_update} is applicable to our interested SCGA algorithm initialized within $\left(\underline{R}_d \oplus \underline{\rho} \right) \cap \Omega_q$.

\begin{customthm}{4.6}[Linear Convergence of the SCGA Algorithm on $\Omega_q$]
	Assume conditions (\underline{A1-4}) throughout the theorem.
	\begin{enumerate}[label=(\alph*)]
		\item \textbf{Q-Linear convergence of $d_g(\underline{\bm{x}}^{(t)}, \underline{\bm{x}}^*)$}: Consider a convergence radius $\underline{r}_2>0$ satisfying 
		\begin{align*}
		0<\underline{r}_2 &\leq \min\Bigg\{\underline{\rho}/2, \frac{\underline{\beta}_1^2}{\underline{A}_2\left(\norm{f}_{\infty}^{(3)} + \norm{f}_{\infty}^{(4)} \right)}, \frac{\underline{\beta}_1}{\underline{A}_4(f)},\\
		& 2\sin\Bigg[\frac{3\underline{\beta}_0}{8q \left(12\norm{\mathcal{H}f}_{\infty}^{(2)}\underline{\beta}_2^2 \arcsin\left(\underline{\rho}/2 \right) + \sqrt{q}\norm{f}_{\infty}^{(3)} \right)}\Bigg] \Bigg\},
		\end{align*}
		where $\underline{A}_2>0$ is the constant defined in (h) of Lemma~\ref{Dir_norm_reach_prop} and $\underline{A}_4(f) >0$ is a quantity defined in (c) of Proposition~\ref{SCGA_Dir_conv} that depends on both the dimension $q$ and the functional norm $\norm{f}_{\infty,4}^*$ up to the fourth-order (partial) derivatives of $f$. Whenever $0 < \underline{\eta} \leq \min\left\{\frac{4}{\underline{\beta}_0}, \frac{1}{q\norm{\mathcal{H}f}_{\infty}^{(2)} \cdot \zeta(1,\underline{\rho})} \right\}$ and the initial point $\underline{\bm{x}}^{(0)} \in \text{Ball}_{q+1}(\underline{\bm{x}}^*, \underline{r}_2) \cap \Omega_q$ with $\underline{\bm{x}}^* \in \underline{R}_d$, we have that
		$$d_g(\underline{\bm{x}}^{(t)}, \underline{\bm{x}}^*) \leq \underline{\Upsilon}^t \cdot d_g(\underline{\bm{x}}^{(0)}, \underline{\bm{x}}^*) \quad \text{ with } \quad \underline{\Upsilon}= \sqrt{1-\frac{\underline{\beta}_0\underline{\eta}}{4}}.$$
		\item \textbf{R-Linear convergence of $d_g(\underline{\bm{x}}^{(t)}, \underline{R}_d)$}: Under the same radius $\underline{r}_2>0$ in (a), we have that whenever $0 < \underline{\eta} \leq \min\left\{\frac{4}{\underline{\beta}_0}, \frac{1}{q\norm{\mathcal{H}f}_{\infty}^{(2)} \cdot \zeta(1,\underline{\rho})} \right\}$ and the initial point $\underline{\bm{x}}^{(0)} \in \text{Ball}_{q+1}(\underline{\bm{x}}^*, \underline{r}_2) \cap \Omega_q$ with $\underline{\bm{x}}^* \in \underline{R}_d$,
		$$d_g(\underline{\bm{x}}^{(t)}, \underline{R}_d) \leq \underline{\Upsilon}^t \cdot d_g(\underline{\bm{x}}^{(0)}, \underline{\bm{x}}^*) \quad \text{ with } \quad \underline{\Upsilon}=\sqrt{1-\frac{\underline{\beta}_0\underline{\eta}}{4}}.$$
	\end{enumerate}
	We further assume (D1-2) in the rest of statements. Suppose that $h \to 0$ and $\frac{nh^{q+4}}{|\log h|} \to \infty$.
	\begin{enumerate}[label=(c)]
		\item \textbf{Q-Linear convergence of $d_g(\hat{\underline{\bm{x}}}^{(t)}, \underline{\bm{x}}^*)$}: Under the same radius $\underline{r}_2>0$ and $\underline{\Upsilon}=\sqrt{1-\frac{\underline{\beta}_0\underline{\eta}}{4}}$ in (a), we have that
		$$d_g(\hat{\underline{\bm{x}}}^{(t)}, \underline{\bm{x}}^*) \leq \underline{\Upsilon}^t \cdot d_g(\hat{\underline{\bm{x}}}^{(0)}, \underline{\bm{x}}^*) + O(h^2) + O_P\left(\sqrt{\frac{|\log h|}{nh^{q+4}}} \right)$$
		with probability tending to 1 whenever $0 < \underline{\eta} \leq \min\left\{\frac{4}{\underline{\beta}_0}, \frac{1}{q\norm{\mathcal{H}f}_{\infty}^{(2)} \cdot \zeta(1,\underline{\rho})} \right\}$ and the initial point $\hat{\underline{\bm{x}}}^{(0)} \in \text{Ball}_{q+1}(\underline{\bm{x}}^*, \underline{r}_2) \cap \Omega_q$ with $\underline{\bm{x}}^* \in \underline{R}_d$.
	\end{enumerate}
	\begin{enumerate}[label=(d)]
		\item \textbf{R-Linear convergence of $d_g(\hat{\underline{\bm{x}}}^{(t)}, \underline{R}_d)$}: Under the same radius $\underline{r}_2>0$ and $\underline{\Upsilon}=\sqrt{1-\frac{\underline{\beta}_0 \underline{\eta}}{4}}$ in (a), we have that
		$$d_g(\hat{\underline{\bm{x}}}^{(t)}, \underline{R}_d) \leq \underline{\Upsilon}^t \cdot d_g(\hat{\underline{\bm{x}}}^{(0)}, \underline{\bm{x}}^*) + O(h^2) + O_P\left(\sqrt{\frac{|\log h|}{nh^{q+4}}} \right)$$
		with probability tending to 1 whenever $0 < \underline{\eta} \leq \min\left\{\frac{4}{\underline{\beta}_0}, \frac{1}{q\norm{\mathcal{H}f}_{\infty}^{(2)} \cdot \zeta(1,\underline{\rho})} \right\}$ and the initial point $\hat{\underline{\bm{x}}}^{(0)} \in \text{Ball}_{q+1}(\underline{\bm{x}}^*, \underline{r}_2) \cap \Omega_q$ with $\underline{\bm{x}}^* \in \underline{R}_d$.
	\end{enumerate}
\end{customthm}

\begin{proof}[Proof of Theorem~\ref{LC_SCGA_Dir}]
	The proof is similar to our argument in Theorem~\ref{SCGA_LC}, except that the objective function $f$ is supported on a nonlinear manifold $\Omega_q$ here. The key arguments are credited to Corollary~\ref{Riemannian_tri_update}. We first recall the following two properties.\\
	$\bullet$ \emph{Property 1}. Given (\underline{A1}), the function $f$ is $q\norm{\mathcal{H}f}_{\infty}^{(2)}$-smooth on $\Omega_q$, that is, $\grad f$ is $q\norm{\mathcal{H}f}_{\infty}^{(2)}$-Lipschitz.\\
	$\bullet$ \emph{Property 2}. Given conditions (\underline{A1-3}), we know that $\norm{\underline{V}_d(\underline{\bm{x}}^{(t)})^T \grad f(\underline{\bm{x}}^{(t)})}_2 > 0 \text{ for any } \underline{\bm{x}}^{(t)} \in \big(\text{Ball}_{q+1}(\underline{\bm{x}}^*, \underline{r}_2) \cap \Omega_q \big) \setminus \underline{R}_d$ and 
	$$f(\underline{\bm{x}}^*) - f\left(\Exp_{\underline{\bm{x}}^{(t)}}\left(\frac{1}{q\norm{\mathcal{H}f}_{\infty}^{(2)}} \cdot \underline{V}_d(\underline{\bm{x}}^{(t)}) \underline{V}_d(\underline{\bm{x}}^{(t)})^T \grad f(\underline{\bm{x}}^{(t)}) \right) \right) \geq 0$$
	for any $\underline{\bm{x}}^{(t)} \in \text{Ball}_{q+1}(\underline{\bm{x}}^*, \underline{r}_2) \cap \Omega_q$ with $\underline{\bm{x}}^* \in \underline{R}_d$.
	
	\emph{Property 1} has been established in the proof of Proposition~\ref{SCGA_Dir_conv}, indicating that the objective function sequence $\left\{f(\underline{\bm{x}}^{(t)}) \right\}_{t=0}^{\infty}$ is non-decreasing when $0<\underline{\eta} < \frac{2}{q\norm{\mathcal{H}f}_{\infty}^{(2)}}$. \emph{Property 2} is a natural corollary by Proposition~\ref{SCGA_Dir_conv}, because $\underline{\bm{x}}^{(t)} \in \text{Ball}(\underline{\bm{x}}^*, \underline{r}_2) \cap \Omega_q$ and
	$$f\left(\Exp_{\underline{\bm{x}}^{(t)}}\left(\frac{1}{q\norm{\mathcal{H}f}_{\infty}^{(2)}} \cdot \underline{V}_d(\underline{\bm{x}}^{(t)}) \underline{V}_d(\underline{\bm{x}}^{(t)})^T \grad f(\underline{\bm{x}}^{(t)}) \right) \right)$$
	is the objective function value after one-step SCGA iteration on $\Omega_q$ with step size $\frac{1}{q\norm{\mathcal{H}f}_{\infty}^{(2)}}$. The iteration will move $\underline{\bm{x}}^{(t)}$ closer to the directional ridge $\underline{R}_d$. With the help of these two properties, we start the proofs of (a-d).\\
	
	\noindent (a) We first prove the following claim using Lemma~\ref{quad_bound_prop_Dir}: for all $t\geq 0$ and $\underline{\bm{x}}^{(0)} \in \text{Ball}_{q+1}(\underline{\bm{x}}^*, \underline{r}_2) \cap \Omega_q$,
	\begin{align}
		\label{SC_claim_Dir}
		\begin{split}
		f(\underline{\bm{x}}^*) -f(\underline{\bm{x}}^{(t)}) &\leq \left\langle \underline{V}_d(\underline{\bm{x}}^{(t)}) \underline{V}_d(\underline{\bm{x}}^{(t)})^T \grad f(\underline{\bm{x}}^{(t)}), \Exp_{\underline{\bm{x}}^{(t)}}^{-1}(\underline{\bm{x}}^*) \right\rangle - \frac{\underline{\beta}_0}{4}\cdot  d_g(\underline{\bm{x}}^*, \underline{\bm{x}}^{(t)})^2 + \underline{\epsilon}_t,
		\end{split}
		\end{align}
		where $\underline{\epsilon}_t = \left[2q\norm{\mathcal{H}f}_{\infty}^{(2)}\underline{\beta}_2^2 \arcsin\left(\underline{\rho}/2\right) + \frac{q^{\frac{3}{2}} \norm{f}_{\infty}^{(3)}}{6} \right] =o\left(d_g(\underline{\bm{x}}^*, \underline{\bm{x}}^{(t)})^2 \right)$.
		By the differentiability of $f$ ensured by condition (\underline{A1}) and Taylor's theorem on $\Omega_q$, we deduce that
		\begin{align*}
		&f(\underline{\bm{x}}^*) -f(\underline{\bm{x}}^{(t)}) \\
		&\leq \left\langle \grad f(\underline{\bm{x}}^{(t)}), \Exp_{\underline{\bm{x}}^{(t)}}^{-1}(\underline{\bm{x}}^*) \right\rangle + \frac{1}{2} \Exp_{\underline{\bm{x}}^{(t)}}^{-1}(\underline{\bm{x}}^*)^T \left[\mathcal{H} f(\underline{\bm{x}}^{(t)}) \right] \Exp_{\underline{\bm{x}}^{(t)}}^{-1}(\underline{\bm{x}}^*) \\
		&\quad + \frac{q^{\frac{3}{2}}\norm{f}_{\infty}^{(3)}}{6} \cdot \norm{\Exp_{\underline{\bm{x}}^{(t)}}^{-1}(\underline{\bm{x}}^*)}_2^3\\
		&\stackrel{\text{(i)}}{=} \left\langle \underline{V}_d(\underline{\bm{x}}^{(t)}) \underline{V}_d(\underline{\bm{x}}^{(t)})^T \grad f(\underline{\bm{x}}^{(t)}), \Exp_{\underline{\bm{x}}^{(t)}}^{-1}(\underline{\bm{x}}^*) \right\rangle + \left\langle \underline{U}_d^{\perp}(\underline{\bm{x}}^{(t)}) \grad f(\underline{\bm{x}}^{(t)}), \Exp_{\underline{\bm{x}}^{(t)}}^{-1}(\underline{\bm{x}}^*) \right\rangle\\
		&\quad + \frac{1}{2} \Exp_{\underline{\bm{x}}^{(t)}}^{-1}(\underline{\bm{x}}^*)^T \left(\underline{V}_{\diamond}(\underline{\bm{x}}^{(t)}), \underline{V}_d(\underline{\bm{x}}^{(t)}) \right) \begin{pmatrix}
		0 & & & \\
		& \underline{\lambda}_1(\underline{\bm{x}}^{(t)}) & & \\
		& & \ddots & \\
		& & & \underline{\lambda}_q(\underline{\bm{x}}^{(t)})
		\end{pmatrix}
		\begin{pmatrix}
		\underline{V}_{\diamond}(\underline{\bm{x}}^{(t)})\\
		\underline{V}_d(\underline{\bm{x}}^{(t)})
		\end{pmatrix}
		\Exp_{\underline{\bm{x}}^{(t)}}^{-1}(\underline{\bm{x}}^*)\\
		&\quad + \frac{q^{\frac{3}{2}}\norm{f}_{\infty}^{(3)}}{6} \cdot d_g(\underline{\bm{x}}^*, \underline{\bm{x}}^{(t)})^3 \\
		&\stackrel{\text{(ii)}}{\leq} \left\langle \underline{V}_d(\underline{\bm{x}}^{(t)}) \underline{V}_d(\underline{\bm{x}}^{(t)})^T \grad f(\underline{\bm{x}}^{(t)}), \Exp_{\underline{\bm{x}}^{(t)}}^{-1}(\underline{\bm{x}}^*) \right\rangle +  \frac{\underline{\beta}_0}{4}\cdot d_g\left(\underline{\bm{x}}^{(t)},\underline{\bm{x}}^* \right)^2 \\
		&\quad + \frac{\max\left\{0, \underline{\lambda}_1(\underline{\bm{x}}^{(t)})\right\}}{2} \norm{\underline{U}_d^{\perp}(\underline{\bm{x}}^{(t)}) \Exp_{\underline{\bm{x}}^{(t)}}^{-1}(\underline{\bm{x}}^*)}_2^2 - \frac{\underline{\beta}_0}{2} \norm{\underline{V}_d(\underline{\bm{x}}^{(t)})^T \Exp_{\underline{\bm{x}}^{(t)}}^{-1}(\underline{\bm{x}}^*)}_2^2\\
		&\quad + \frac{q^{\frac{3}{2}}\norm{f}_{\infty}^{(3)}}{6} \cdot d_g(\underline{\bm{x}}^*, \underline{\bm{x}}^{(t)})^3 \\
		&\stackrel{\text{(iii)}}{\leq} \left\langle \underline{V}_d(\underline{\bm{x}}^{(t)}) \underline{V}_d(\underline{\bm{x}}^{(t)})^T \grad f(\underline{\bm{x}}^{(t)}), \Exp_{\underline{\bm{x}}^{(t)}}^{-1}(\underline{\bm{x}}^*) \right\rangle + \frac{\underline{\beta}_0}{4}\cdot d_g\left(\underline{\bm{x}}^{(t)},\underline{\bm{x}}^* \right)^2\\
		&\quad + \frac{\left(\underline{\beta}_0+\max\left\{0, \underline{\lambda}_1(\underline{\bm{x}}^{(t)})\right\} \right)}{2} \norm{\underline{U}_d^{\perp}(\underline{\bm{x}}^{(t)}) \Exp_{\underline{\bm{x}}^{(t)}}^{-1}(\underline{\bm{x}}^*)}_2^2 - \frac{\underline{\beta}_0}{2} \norm{\Exp_{\underline{\bm{x}}^{(t)}}^{-1}(\underline{\bm{x}}^*)}_2^2 \\
		&\quad + \frac{q^{\frac{3}{2}}\norm{f}_{\infty}^{(3)}}{6} \cdot d_g(\underline{\bm{x}}^*, \underline{\bm{x}}^{(t)})^3 \\
		&\stackrel{\text{(iv)}}{\leq} \left\langle \underline{V}_d(\underline{\bm{x}}^{(t)}) \underline{V}_d(\underline{\bm{x}}^{(t)})^T \grad f(\underline{\bm{x}}^{(t)}), \Exp_{\underline{\bm{x}}^{(t)}}^{-1}(\underline{\bm{x}}^*) \right\rangle- \frac{\underline{\beta}_0}{4}\cdot d_g\left(\underline{\bm{x}}^{(t)},\underline{\bm{x}}^* \right)^2\\
		&\quad +\frac{\left(\underline{\beta}_0+\max\left\{0, \underline{\lambda}_1(\underline{\bm{x}}^{(t)})\right\} \right)}{2} \cdot \underline{\beta}_2^2 \cdot d_g\left(\underline{\bm{x}}^{(t)},\underline{\bm{x}}^* \right)^4 + \frac{q^{\frac{3}{2}}\norm{f}_{\infty}^{(3)}}{6} \cdot d_g(\underline{\bm{x}}^*, \underline{\bm{x}}^{(t)})^3 \\
		&\stackrel{\text{(v)}}{\leq} \left\langle \underline{V}_d(\underline{\bm{x}}^{(t)}) \underline{V}_d(\underline{\bm{x}}^{(t)})^T \grad f(\underline{\bm{x}}^{(t)}), \Exp_{\underline{\bm{x}}^{(t)}}^{-1}(\underline{\bm{x}}^*) \right\rangle - \frac{\underline{\beta}_0}{4} \cdot d_g(\underline{\bm{x}}^*, \underline{\bm{x}}^{(t)})^2 \\
		&\quad + \left[2q\norm{\mathcal{H}f}_{\infty}^{(2)}\underline{\beta}_2^2 \arcsin\left(\underline{\rho}/2\right) + \frac{q^{\frac{3}{2}} \norm{f}_{\infty}^{(3)}}{6} \right]d_g(\underline{\bm{x}}^*, \underline{\bm{x}}^{(t)})^3,
		\end{align*}
		where we leverage the equality $\bm{I}_{q+1} = \underline{V}_d(\underline{\bm{x}}^{(t)}) \underline{V}_d(\underline{\bm{x}}^{(t)})^T + \underline{U}_d^{\perp}(\underline{\bm{x}}^{(t)})$ in (i) and (iii), use conditions (\underline{A2}) and (\underline{A4}) that $\underline{\lambda}_q(\underline{\bm{x}}^{(t)}) \leq \cdots \leq \underline{\lambda}_{d+1}(\underline{\bm{x}}^{(t)})<-\underline{\beta}_0$ and 
		$$\left\langle \underline{U}_d^{\perp}(\underline{\bm{x}}^{(t)}) \grad f(\underline{\bm{x}}^{(t)}), \Exp_{\underline{\bm{x}}^{(t)}}^{-1}(\underline{\bm{x}}^*) \right\rangle \leq \frac{\underline{\beta}_0}{4}\cdot d_g\left(\underline{\bm{x}}^{(t)},\underline{\bm{x}}^* \right)^2$$ 
		in (ii), apply the quadratic bound for $\norm{\underline{U}_d^{\perp}(\underline{\bm{x}}^{(t)}) \Exp_{\underline{\bm{x}}^{(t)}}^{-1}(\underline{\bm{x}}^*)}_2$ in condition (\underline{A4}) to obtain (iv), and leverage the facts that $\max\left\{\underline{\beta}_0, 0, \underline{\lambda}_1(\underline{\bm{x}}^{(t)})\right\} \leq q \norm{\mathcal{H}f}_{\infty}^{(2)}$ and $d_g(\underline{\bm{x}}^*, \underline{\bm{x}}^{(t)}) \leq 2\arcsin\left(\underline{\rho}/2 \right)$ when $\norm{\underline{\bm{x}}^{(t)} - \underline{\bm{x}}^*}_2 \leq \underline{\rho}$ in (v); recall \eqref{Geo_Eu_dist_eq}. Our claim \eqref{SC_claim_Dir} is thus proved.
	
	In addition, given \emph{Property 2} and any $\underline{\bm{x}}^{(t)} \in \underline{R}_d \oplus \underline{r}_2$, we derive that
	\begin{align*}
	&f(\underline{\bm{x}}^{(t)}) - f(\underline{\bm{x}}^*) \\
	&\leq f(\underline{\bm{x}}^{(t)}) - f(\underline{\bm{x}}^*) + f(\underline{\bm{x}}^*) - f\left(\Exp_{\underline{\bm{x}}^{(t)}}\left(\frac{1}{q\norm{\mathcal{H}f}_{\infty}^{(2)}} \cdot \underline{V}_d(\underline{\bm{x}}^{(t)}) \underline{V}_d(\underline{\bm{x}}^{(t)})^T \grad f(\underline{\bm{x}}^{(t)}) \right) \right)\\
	&= -\left[f\left(\Exp_{\underline{\bm{x}}^{(t)}}\left(\frac{1}{q\norm{\mathcal{H}f}_{\infty}^{(2)}} \cdot \underline{V}_d(\underline{\bm{x}}^{(t)}) \underline{V}_d(\underline{\bm{x}}^{(t)})^T \grad f(\underline{\bm{x}}^{(t)}) \right) \right) - f(\underline{\bm{x}}^{(t)}) \right]\\
	&\leq -\Bigg[\left\langle \grad f(\underline{\bm{x}}^{(t)}),\, \frac{1}{q\norm{\mathcal{H}f}_{\infty}^{(2)}} \underline{V}_d(\underline{\bm{x}}^{(t)}) \underline{V}_d(\underline{\bm{x}}^{(t)})^T \grad f(\underline{\bm{x}}^{(t)}) \right\rangle \\
	&\quad \quad - \frac{q\norm{\mathcal{H}f}_{\infty}^{(2)}}{2} \cdot \norm{\frac{1}{q\norm{\mathcal{H}f}_{\infty}^{(2)}}\underline{V}_d(\underline{\bm{x}}^{(t)}) \underline{V}_d(\underline{\bm{x}}^{(t)})^T \grad f(\underline{\bm{x}}^{(t)})}_2^2 \Bigg]\\
	&= - \frac{1}{2q\norm{\mathcal{H}f}_{\infty}^{(2)}} \norm{\underline{V}_d(\underline{\bm{x}}^{(t)})^T \grad f(\underline{\bm{x}}^{(t)})}_2^2,
	\end{align*}
	where we apply \eqref{L_smooth_manifold} to obtain the inequality. This indicates that
	\begin{equation}
	\label{SC_grad_Dir_bound}
	\norm{\underline{V}_d(\underline{\bm{x}}^{(t)})^T \grad f(\underline{\bm{x}}^{(t)})}_2^2 \leq 2q\norm{\mathcal{H}f}_{\infty}^{(2)} \left[f(\underline{\bm{x}}^*) -f(\underline{\bm{x}}^{(t)}) \right]
	\end{equation}
	for any $\underline{\bm{x}}^{(t)} \in \underline{R}_d \oplus \underline{r}_3$. Therefore, by Corollary~\ref{Riemannian_tri_update}, we obtain that
	\begin{align*}
	&d_g(\underline{\bm{x}}^{(t+1)}, \underline{\bm{x}}^*) \\
	& \stackrel{\text{(i)}}{\leq} d_g(\underline{\bm{x}}^{(t)}, \underline{\bm{x}}^*) -2\underline{\eta} \left\langle \underline{V}_d(\underline{\bm{x}}^{(t)}) \underline{V}_d(\underline{\bm{x}}^{(t)})^T \grad f(\underline{\bm{x}}^{(t)}), \Exp_{\underline{\bm{x}}^{(t)}}^{-1}(\underline{\bm{x}}^*) \right\rangle \\
	&\quad + \zeta\left(1,\underline{\rho} \right) \cdot \underline{\eta}^2 \norm{\underline{V}_d(\underline{\bm{x}}^{(t)})^T \grad f(\underline{\bm{x}}^{(t)})}_2^2\\
	&\stackrel{\text{(ii)}}{\leq} d_g(\underline{\bm{x}}^{(t)}, \underline{\bm{x}}^*)^2 + 2\underline{\eta}\Bigg[f(\underline{\bm{x}}^{(t)}) -f(\underline{\bm{x}}^*) - \frac{\underline{\beta}_0}{4} \cdot d_g(\underline{\bm{x}}^*, \underline{\bm{x}}^{(t)})^2 \\
	&\hspace{35mm} + \left(2q\norm{\mathcal{H}f}_{\infty}^{(2)}\underline{\beta}_2^2 \arcsin\left(\underline{\rho}/2\right) + \frac{q^{\frac{3}{2}} \norm{f}_{\infty}^{(3)}}{6} \right) d_g(\underline{\bm{x}}^*, \underline{\bm{x}}^{(t)})^3 \Bigg]\\
	&\quad + \zeta\left(1,\underline{\rho} \right) \cdot \underline{\eta}^2 \cdot 2q \norm{\mathcal{H}f}_{\infty}^{(2)} \left[f(\underline{\bm{x}}^*) - f(\underline{\bm{x}}^{(t)}) \right] \\
	&\stackrel{\text{(iii)}}{\leq} \left(1-\frac{\underline{\beta}_0\underline{\eta}}{4} \right) \cdot d_g(\underline{\bm{x}}^*, \underline{\bm{x}}^{(t)})^2 - 2\underline{\eta} \left[1- \underline{\eta} \cdot\zeta\left(1,\underline{\rho}\right) \cdot q\norm{\mathcal{H}f}_{\infty}^{(2)} \right] \cdot \underbrace{\left[f(\underline{\bm{x}}^*) - f(\underline{\bm{x}}^{(t)}) \right]}_{\geq 0}\\
	&\leq \left(1-\frac{\underline{\beta}_0\underline{\eta}}{4} \right) \cdot d_g(\underline{\bm{x}}^*, \underline{\bm{x}}^{(t)})^2
	\end{align*}
	whenever $0< \underline{\eta} \leq \min\left\{\frac{4}{\underline{\beta}_0}, \frac{1}{q\norm{\mathcal{H}f}_{\infty}^{(2)} \cdot\zeta\left(1,\underline{\rho}\right)} \right\}$, where we utilize Corollary~\ref{Riemannian_tri_update} and the monotonicity of $\zeta(1,c)$ with respect to $c$ in (i), apply \eqref{SC_claim_Dir} and \eqref{SC_grad_Dir_bound} to obtain (ii), and use the choice of $\underline{r}_2$ to argue that 
	\begin{align*}
	&\left(2q\norm{\mathcal{H}f}_{\infty}^{(2)}\underline{\beta}_2^2 \arcsin\left(\underline{\rho}/2\right) + \frac{q^{\frac{3}{2}} \norm{f}_{\infty}^{(3)}}{6} \right) d_g(\underline{\bm{x}}^*, \underline{\bm{x}}^{(t)})^3 \\
	&\leq \left(2q\norm{\mathcal{H}f}_{\infty}^{(2)}\underline{\beta}_2^2 \arcsin\left(\underline{\rho}/2\right) + \frac{q^{\frac{3}{2}} \norm{f}_{\infty}^{(3)}}{6} \right) d_g(\underline{\bm{x}}^*, \underline{\bm{x}}^{(t)})^2 \cdot 2 \arcsin\left(\underline{r}_2/2 \right)\\
	&\leq \frac{\underline{\beta}_0}{8} \cdot d_g(\underline{\bm{x}}^*, \underline{\bm{x}}^{(t)})^2
	\end{align*}
	in (iii). By telescoping, we conclude that when $0< \underline{\eta} \leq \min\left\{\frac{4}{\underline{\beta}_0}, \frac{1}{q\norm{\mathcal{H}f}_{\infty}^{(2)} \cdot\zeta\left(1,\underline{\rho}\right)} \right\}$ and $\underline{\bm{x}}^{(0)} \in \underline{R}_d \oplus \underline{r}_2$, 
	$$d_g(\underline{\bm{x}}^*, \underline{\bm{x}}^{(t)}) \leq \left(1-\frac{\underline{\beta}_0\underline{\eta}}{4} \right)^{\frac{t}{2}} d_g(\underline{\bm{x}}^*, \underline{\bm{x}}^{(0)}).$$
	The result follows.\\
	
	\noindent (b) The result follows obviously from (a) and the fact that $d_g(\underline{\bm{x}}^{(t)}, \underline{R}_d) \leq d_g(\underline{\bm{x}}^{(t)}, \underline{\bm{x}}^*)$ for all $t\geq 0$.\\
	
	\noindent (c) The proof is logically similar to the proof of (c) in Theorem~\ref{SCGA_LC}. We write the spectral decompositions of $\mathcal{H} f(\bm{x})$ and $\mathcal{H} \hat{f}_h(\bm{x})$ as:
	$$\mathcal{H} f(\bm{x}) = \underline{V}(\bm{x}) \underline{\Lambda}(\bm{x}) \underline{V}(\bm{x})^T \quad \text{ and } \quad \mathcal{H} \hat{f}_h(\bm{x}) = \hat{\underline{V}}(\bm{x}) \hat{\underline{\Lambda}}(\bm{x}) \hat{\underline{V}}(\bm{x})^T.$$
	By Weyl's theorem (Theorem 4.3.1 in \citealt{HJ2012}) and uniform bounds \eqref{Dir_KDE_unif_bound},
	\begin{align*}
	|\underline{\lambda}_j(\bm{x}) - \hat{\underline{\lambda}}_j(\bm{x})| &\leq \norm{\mathcal{H} f(\bm{x}) - \mathcal{H} \hat{f}_h(\bm{x})}_2\\
	&\leq q \norm{f(\bm{x}) - \hat{f}_h(\bm{x})}_{\infty}^{(2)}\\
	&= O(h^2) + O_P\left(\sqrt{\frac{|\log h|}{nh^{q+4}}} \right).
	\end{align*}
	Thus, $\hat{f}_h$ will satisfy conditions (\underline{A2}) with high probability when $h$ is sufficiently small and $\frac{nh^{q+4}}{|\log h|}$ is sufficiently large. According to Davis-Kahan theorem (Lemma~\ref{Davis_K} here), uniform bounds \eqref{Dir_KDE_unif_bound}, and the continuity of exponential maps, we have that
	\begin{align*}
	&d_g\left(\Exp_{\bm{y}}\left(\underline{\eta} \cdot \hat{\underline{V}}_d(\bm{y}) \hat{\underline{V}}_d(\bm{y})^T \grad \hat{f}_h(\bm{y}) \right), \, \Exp_{\bm{y}}\left(\underline{\eta} \cdot \underline{V}_d(\bm{y}) \underline{V}_d(\bm{y})^T \grad f(\bm{y}) \right) \right)\\
	&\leq \underline{\eta} C_3 \norm{\hat{\underline{V}}_d(\bm{y}) \hat{\underline{V}}_d(\bm{y})^T \grad \hat{f}_h(\bm{y}) - \underline{V}_d(\bm{y}) \underline{V}_d(\bm{y})^T \grad f(\bm{y})}_2\\
	&\leq \underline{\eta} C_3 \norm{\hat{\underline{V}}_d(\bm{y}) \hat{\underline{V}}_d(\bm{y})^T \left[\grad \hat{f}_h(\bm{y})- \grad f(\bm{y}) \right]}_2 \\
	&\quad + \underline{\eta} C_3 \norm{\left[\hat{\underline{V}}_d(\bm{y}) \hat{\underline{V}}_d(\bm{y})^T  - \underline{V}_d(\bm{y}) \underline{V}_d(\bm{y})^T \right] \grad f(\bm{y})}_2\\
	&\stackrel{\text{(i)}}{\leq} \underline{\eta} C_3 \norm{\grad \hat{f}_h(\bm{y})- \grad f(\bm{y})}_2 + \underline{\eta} C_3 \cdot \frac{\norm{\mathcal{H}f(\bm{y}) - \mathcal{H}\hat{f}_h(\bm{y})}_2 \cdot \norm{\grad f(\bm{y})}_2}{\underline{\beta}_0}\\
	&\leq \underline{\eta} C_3 \sqrt{q} \norm{\hat{f}_h -f}_{\infty}^{(1)} + \underline{\eta} C_3 \cdot \frac{q\norm{\hat{f}_h-f}_{\infty}^{(2)} \sqrt{q+1}\norm{f}_{\infty}^{(1)}}{\underline{\beta}_0}\\
	&\equiv \underline{\epsilon}_{n,h}= O(h^2) + O_P\left(\sqrt{\frac{|\log h|}{nh^{q+4}}}\right)
	\end{align*}
	for any $\bm{y}\in \text{Ball}_{q+1}(\underline{\bm{x}}^*, \underline{r}_2) \cap \Omega_q$, where we utilize the Davis-Kahan theorem and $\norm{\hat{\underline{V}}_d(\bm{y}) \hat{\underline{V}}_d(\bm{y})^T}_2=1$ in (i). Hence, when $h \to 0$ and $\frac{nh^{q+4}}{|\log h|} \to \infty$,
	\begin{align}
	\label{Exp_proj_grad_diff_bd}
	\begin{split}
	&d_g\left(\Exp_{\bm{y}}\left(\underline{\eta} \cdot \hat{\underline{V}}_d(\bm{y}) \hat{\underline{V}}_d(\bm{y})^T \grad \hat{f}_h(\bm{y}) \right), \, \Exp_{\bm{y}}\left(\underline{\eta} \cdot \underline{V}_d(\bm{y}) \underline{V}_d(\bm{y})^T \grad f(\bm{y}) \right) \right)\\
	&\leq \underline{\epsilon}_{n,h}= O(h^2) + O_P\left(\sqrt{\frac{|\log h|}{nh^{q+4}}}\right) \leq (1-\Upsilon) \cdot 2\arcsin\left(\underline{r}_2/2 \right)
	\end{split}
	\end{align}
	with probability tending to 1.\\
	We now claim that $d_g(\hat{\underline{\bm{x}}}^{(t)}, \underline{\bm{x}}^*) \leq 2\arcsin\left(\underline{r}_2/2 \right)$ and 
	\begin{equation}
	\label{claim_Dir_sam_SCGA}
	d_g(\hat{\underline{\bm{x}}}^{(t+1)}, \underline{\bm{x}}^*) \leq \Upsilon \cdot d_g(\hat{\underline{\bm{x}}}^{(t)}, \underline{\bm{x}}^*) + \underline{\epsilon}_{n,h}
	\end{equation}
	for all $t\geq 0$. We again prove this claim by induction on the iteration number. Note that when $t=1$, we derive that
	\begin{align*}
	&d_g(\hat{\underline{\bm{x}}}^{(1)}, \underline{\bm{x}}^*) \\
	&= d_g\left(\Exp_{\hat{\underline{\bm{x}}}^{(0)}}\left(\underline{\eta} \cdot \hat{\underline{V}}_d(\hat{\underline{\bm{x}}}^{(0)}) \hat{\underline{V}}_d(\hat{\underline{\bm{x}}}^{(0)})^T \grad \hat{f}_h(\hat{\underline{\bm{x}}}^{(0)}) \right),\, \underline{\bm{x}}^* \right)\\
	&\stackrel{\text{(i)}}{\leq} d_g\left(\Exp_{\hat{\underline{\bm{x}}}^{(0)}}\left(\underline{\eta}\cdot \underline{V}_d(\hat{\underline{\bm{x}}}^{(0)}) \underline{V}_d(\hat{\underline{\bm{x}}}^{(0)})^T \grad f(\hat{\underline{\bm{x}}}^{(0)}) \right),\, \underline{\bm{x}}^* \right) \\
	&\quad + d_g\left(\Exp_{\hat{\underline{\bm{x}}}^{(0)}}\left(\underline{\eta} \cdot \hat{\underline{V}}_d(\hat{\underline{\bm{x}}}^{(0)}) \hat{\underline{V}}_d(\hat{\underline{\bm{x}}}^{(0)})^T \grad \hat{f}_h(\hat{\underline{\bm{x}}}^{(0)}) \right),\, \Exp_{\hat{\underline{\bm{x}}}^{(0)}}\left(\underline{\eta} \cdot \underline{V}_d(\hat{\underline{\bm{x}}}^{(0)}) \underline{V}_d(\hat{\underline{\bm{x}}}^{(0)})^T \grad f(\hat{\underline{\bm{x}}}^{(0)}) \right) \right)\\
	&\stackrel{\text{(ii)}}{\leq} \Upsilon \cdot d_g(\hat{\underline{\bm{x}}}^{(0)}, \underline{\bm{x}}^*) + \underline{\epsilon}_{n,h},
	\end{align*}
	where we apply the triangle inequality in (i) and leverage the result in (a) and \eqref{Exp_proj_grad_diff_bd} to obtain (ii). The triangle inequality is valid in this context because the geodesic measures the minimal distance between two points on $\Omega_q$. Moreover, by the choice of $\hat{\underline{\bm{x}}}^{(0)}$ and \eqref{Exp_proj_grad_diff_bd}, we are ensured that $d_g(\hat{\underline{\bm{x}}}^{(1)}, \underline{\bm{x}}^*) \leq 2\arcsin\left(\underline{r}_2/2 \right)$. In the induction from $t\mapsto t+1$, we suppose that $d_g(\hat{\underline{\bm{x}}}^{(t)}, \underline{\bm{x}}^*) \leq 2\arcsin\left(\underline{r}_2/2 \right)$ and the claim \eqref{claim_Dir_sam_SCGA} holds at iteration $t$. The same argument then implies that the claim \eqref{claim_Dir_sam_SCGA} holds for iteration $t+1$ and that $d_g(\hat{\underline{\bm{x}}}^{(t+1)}, \underline{\bm{x}}^*) \leq 2\arcsin\left(\underline{r}_2/2 \right)$. The claim \eqref{claim_Dir_sam_SCGA} is thus verified. \\
	Now, given that $\Upsilon= \sqrt{1-\frac{\underline{\beta}_0\underline{\eta}}{4}} <1$, we iterate the claim \eqref{claim_Dir_sam_SCGA} to show that
	\begin{align*}
	d_g(\hat{\underline{\bm{x}}}^{(t)}, \underline{\bm{x}}^*) & \leq \Upsilon \cdot d_g(\hat{\underline{\bm{x}}}^{(t-1)}, \underline{\bm{x}}^*) + \underline{\epsilon}_{n,h}\\
	&\leq \Upsilon\left[\Upsilon \cdot d_g(\hat{\underline{\bm{x}}}^{(t-2)}, \underline{\bm{x}}^*) + \underline{\epsilon}_{n,h}\right] + \underline{\epsilon}_{n,h}\\
	&\leq \Upsilon^t \cdot d_g(\hat{\underline{\bm{x}}}^{(0)}, \underline{\bm{x}}^*) + \left[\sum_{s=0}^{t-1} \Upsilon^s \right] \underline{\epsilon}_{n,h}\\
	&\leq \Upsilon^t \cdot d_g(\hat{\underline{\bm{x}}}^{(0)}, \underline{\bm{x}}^*) + \frac{\underline{\epsilon}_{n,h}}{1-\Upsilon}\\
	&= \Upsilon^t \cdot d_g(\hat{\underline{\bm{x}}}^{(0)}, \underline{\bm{x}}^*) + O(h^2) + O_P\left(\sqrt{\frac{|\log h|}{nh^{q+4}}} \right),
	\end{align*}
	where the fourth inequality follows by summing the geometric series, and the last equality is due to our notation $\underline{\epsilon}_{n,h} = O(h^2) + O_P\left(\sqrt{\frac{|\log h|}{nh^{q+4}}} \right)$. It completes the proof.\\
	
	\noindent (d) The result follows directly from (c) and the inequality $d_g(\hat{\underline{\bm{x}}}^{(t)}, \underline{R}_d) \leq d_g(\hat{\underline{\bm{x}}}^{(t)}, \underline{\bm{x}}^*)$ for all $t\geq 0$.
\end{proof}

\end{appendix}

\end{document}